\documentclass[11pt]{article}
\usepackage[margin=0.9in]{geometry}
\usepackage[T1]{fontenc}
\usepackage{lmodern}
\usepackage{amsmath,amssymb,amsthm,mathtools,booktabs}
\usepackage[expansion=false]{microtype}
\usepackage[numbers,square,sort&compress]{natbib}
\usepackage{graphicx}
\usepackage{caption}
\usepackage{tikz}
\usetikzlibrary{arrows.meta,positioning}
\usepackage[hidelinks]{hyperref}
\newtheorem{theorem}{Theorem}[section]
\newtheorem{lemma}[theorem]{Lemma}
\newtheorem{proposition}[theorem]{Proposition}
\newtheorem{corollary}[theorem]{Corollary}
\theoremstyle{definition}
\newtheorem{definition}[theorem]{Definition}
\newtheorem{assumption}[theorem]{Assumption}
\newtheorem{example}[theorem]{Example}
\newtheorem{algo}{Algorithm}
\theoremstyle{remark}

\newcommand{\E}{\mathbb E}
\newcommand{\OPT}{\operatorname{OPT}}
\newcommand{\ALG}{\operatorname{ALG}}
\newcommand{\LP}{\mathrm{LP}}
\newcommand{\Q}{\mathcal Q}
\newcommand{\Pb}{\bar P}

\newcommand{\Reg}{\mathrm{Reg}}
\newcommand{\Lbar}{\bar L}

\newcommand{\Work}{\mathsf{t}}
\newcommand{\Pflat}{P^{\mathrm{flat}}}
\newcommand{\Pspread}{P^{\mathrm{spread}}}

\begin{document}
\title{Planned Restlessness: Delayed Meta Bandits with Budgets}
\author{Sudipto Guha\thanks{Email:{\tt contactsudiptoguha@gmail.com}}}
\date{}
\maketitle

\begin{abstract}
We study episodic finite-horizon Bayesian multi-armed bandits, also known as meta-bandits. Each arm’s quality is drawn afresh in every episode from an unknown recurring prior. Outcomes arrive after known delays, and plays are capped per episode and in total. A play earns reward and provides information about both the current quality and the recurring prior. The learner must therefore explore, use, and generalize. Existing posterior-sampling methods establish learning guarantees for specified sampling and exploration rules.

We formulate a common planning optimization over learning trajectories. Its LP relaxation values an experiment through its current reward, resource consumption and continuation: the value of future decisions from the learning state it produces. Dual prices separate planning into arm-local programs and certify every admissible policy, including hierarchical Thompson sampling. Execution turns the relaxed plan into a feasible policy by preserving the promised learning transitions or accounting for their loss. The framework can represent risk-sensitive objectives and price purchased observations and buyout options.

Delayed observations create planned restlessness: an arm's information changes while it is not being played, through reports of its own previous plays arriving on a known calendar. Planning and execution must track when information arrives and which decisions can still use it. For stationary priors, independent arms, finite reward alphabets, bounded report lag and nonbinding budgets, posterior sampling over the recurring priors bounds the price of not knowing them by $O(\sqrt{N})$ over $N$ episodes, with the other parameters fixed. For a single undelayed episode with independent arms capped at $T/X$ plays each, we obtain an LP-relative approximation factor $1+(2\pi X)^{-1/2}+O(X^{-1})$, improving previous factors, with a matching lower bound for the leading term.
\end{abstract}

\section{Introduction}\label{sec:intro}
We develop a common planning framework for episodic bandits over unknown recurring priors, incorporating delayed observations, hard per-arm budgets and per-episode caps, and risk-sensitive objectives beyond expected reward.
In an episodic bandit or a meta-bandit problem
there are $n$ unknown arms and a learner uses these arms over $N$
episodes, each with at most $T$ decision steps.
Each episode draws fresh arm qualities from the recurring, initially unknown prior distribution.
A single episode is a finite-horizon multi-armed bandit
problem.
At each step, the learner plays an arm, earns reward and generates an observation. The observation provides information about the current episode’s arm quality and, across episodes, its recurring prior. Therefore
the typical multi-armed bandit tradeoff
of explore/use now expands to \emph{explore, use, and generalize}.
Meta-bandits
formalize this transfer/generalization by learning shared priors
\citep{kveton2021meta,basu2021,hong2022} using variants of Thompson sampling.
Other examples include
cold-start dynamic pricing \citep{bastani2022meta}, cloud management \citep{qiu2024flash}, database tuning
across hardware environments \citep{zhang2021restune}, program
synthesis \citep{ellis2021dreamcoder}, and LLM routing
\citep{zeng2026metarouter}. 
The analyzed MetaTS policy supplements Thompson sampling with a
forced play of every arm in each task
\citep[Section~4.1]{kveton2021meta}.
HierTS avoids forced exploration in its sequential Gaussian analysis,
but its concurrent guarantee again prescribes initial exploratory
actions \citep[Sections~6.1 and~6.3]{hong2022}.
These methods share information through a hierarchical posterior,
while their guarantees establish the learning produced by specified
sampling and exploration rules.
They do not explicitly optimize which observations are worth acquiring
through their contribution to later feasible decisions.
Even with independent arms, optimizing each episode separately can lose linearly in $N$ by
never collecting observations valuable to later episodes
(Examples~\ref{ex:price-test} and~\ref{ex:spread}).
We formulate a common planning objective that accounts for current
reward, resource consumption and the continuation of learning.
Its execution guarantee then accounts for the information and reward
actually realized under the hard constraints.

Budgets are natural in distributed applications of episodic bandits.
Budgeted learning is NP-hard even for a single episode with independent
arms \citep{GM07}.
An unexpectedly long episode, or a favorable draw of an arm’s episode quality, can make immediate use attractive enough to exhaust the arm’s remaining budget. Those plays then become unavailable for learning and reward in later episodes. Per-episode caps limit this drawdown.
Routing, coding, tuning and
pricing also motivate risk-sensitive objectives beyond expected reward.
Risk-sensitive objectives introduce a further structural difficulty.
Mean--variance criteria need not satisfy Bellman's principle of
optimality, and maximizing expected return subject to a variance bound
is NP-hard in finite-horizon Markov decision processes
\citep{mannor2011meanvariance}.
The objective depends on the return distribution across possible
histories. Evaluating risk conditional on a sampled episode quality
can discard variation that determines which action is preferable
(Example~\ref{ex:risk}).
Planning over outcome distributions retains this variation:
fixed outcome scores enter as linear coefficients, while risk
functionals act on the resulting law
(Subsection~\ref{sec:beyond}).

Delays couple decisions within and across episodes. Concurrent Bayesian
meta-bandits \citep{hong2022} analyze feedback withheld within a batch
of decisions, and distributed optimization also motivates delayed
feedback \citep{agarwal2011distributed}. Delayed observation in learning
dates back to \citet{anderson1964}.
Surprisingly, it was shown by \citet{eick1988} that
in the presence of delays an optimal policy can use a known arm and later return to an
unknown arm. Thus the optimal policy of a delayed finite horizon multi-armed bandit problem 
is not a familiar stopping time problem \citep{gittins1974dynamic}, creating a foundational divergence 
from standard multi-armed bandits.  A late-arriving observation changes an arm's information even when the arm is not being played.
This violates the standard rested-bandit assumption, and such arms are restless \citep{whittle1988}.
\citet{caro2010} study no-crossover delays.
General restless bandits have no known tractable algorithms, and their decision
versions are PSPACE-hard \citep{papadimitriou1999}. Here the delayed observations result from our own earlier plays
and arrive on a calendar fixed in advance. We call this \emph{planned restlessness}. 

Budgets constrain resource use on every realized path; delays constrain
the information available at each decision; risk criteria specify how
the reward distribution is evaluated.
Under this common model, we distinguish the losses due to unavailable information from the loss incurred by the algorithm.
Let $V_e$ be a policy's reward in episode $e$ and $V_A=\E\sum_eV_e$ its expected total.
Write $V_{\mathrm{state}}$ for the optimum when the recurring prior is
revealed and each episode's qualities are revealed at its start,
$V_{\mathrm{prior}}$ for the
optimum when only the recurring prior is revealed, and
$V_{\mathrm{learn}}$ for the Bayes-optimal learner with our information.
Then the overall loss decomposes as:
\begin{equation}
 V_{\mathrm{state}}-V_A
 =(V_{\mathrm{state}}-V_{\mathrm{prior}})
 +(V_{\mathrm{prior}}-V_{\mathrm{learn}})
 +(V_{\mathrm{learn}}-V_A).
 \label{eq:core}
\end{equation}
The gap $V_{\mathrm{state}}-V_{\mathrm{prior}}$ is the cost of uncertainty about each episode's fresh qualities. It can remain linear in $N$ even when the recurring law is known (Example~\ref{ex:paid-knowledge}). The gap $V_{\mathrm{prior}}-V_{\mathrm{learn}}$ is the price of learning that law, and the largest fee worth paying to have it revealed before learning begins (Section~\ref{sec:cost-content}). Under stationarity, independent arms, finite reward alphabets, bounded report lag and nonbinding budgets, this price is $O(\sqrt N)$, and hence vanishes per episode (Proposition~\ref{prop:ps-price}).
The gap $V_{\mathrm{learn}}-V_A$ is the algorithm's loss under the
same information and constraints. It measures how well the algorithm
uses its information; Section~\ref{sec:unroll} controls this loss for
the learning-LP learner.
Information is valuable only while the decisions it can improve remain available, and it is necessary that
all three benchmarks face the same budgets, caps and report calendar. 

We formulate a linear programming (LP) relaxation over learning trajectories.
We assume independence across arms in this paper which permits separate arm programs coupled by expected episode capacities.
The LP is written \emph{ex ante}, over all possible observation records.
Given the incoming belief, exchangeability and the fixed report calendar
determine how each stopping record leads to the arm's next learning
state: its belief about the recurring prior, remaining budget and
pending reports. Histories with the same sufficient state share these
transitions and can be represented by one node. The transitions are
specified before any particular outcome is observed; the policy
determines the probability mass flowing through them. Current reward,
resource use and continuation can therefore be accounted for linearly.
Pricing the shared episode capacities separates the arms into local
programs on their state DAGs
(Figure~\ref{fig:experiment-continuation}). Their values are computed
by backward induction, but their policies must be executed forward,
echoing Kierkegaard's distinction between understanding backward and
living forward.\footnote{S{\o}ren Kierkegaard, journal entry JJ:167
(1843), also numbered \emph{Pap.}~IV~A~164:
``Det er ganske sandt, hvad Philosophien siger, at Livet maa
forstaaes bagl{\ae}nds. Men derover glemmer man den anden
S{\ae}tning, at det maa leves forl{\ae}nds.''
The connection to Nietzsche's ``Modern Restlessness,''
\emph{Human, All Too Human}~I, \S~285 (1878),
does not escape our attention. The observer must act within the process it seeks to understand. Its actions determine both the evidence from which it learns and the possibilities from which it continues.}

Execution proceeds \emph{ex post}. The learner is an observer within
the system it controls: its belief about the arms is itself part of
the state being planned. Each play earns reward, spends budget and
creates a report on the known calendar. The policy determines which
evidence is acquired, when it becomes available, and which decisions
remain feasible when it arrives. The belief from which the next
episode is planned is therefore an output of the current policy.
Budgets, delays and risk criteria constrain or evaluate this same
process. The optimization trajectory is also a learning trajectory.

This coupling already drives the feedback restless bandits of
\citet{GMS10}: the recurrent state process has a known transition law
and evolves independently of the actions, while plays determine
observations and future beliefs.
Their approximation guarantees exploit further structure in the
belief transitions.
\citet{liu2010indexability} establish indexability
and derive closed-form indices for a related multichannel-access model.
Meta-bandits extend this information problem
across episodes: a play informs both the current quality and the
recurring prior, and its value depends on the future decisions that
its report can improve. Here \emph{planned restlessness} supplies the
structure for planning this evolution: reports from chosen plays
change an arm's learning state while other arms are played, on a
known calendar. We plan the evolution of beliefs, budgets and pending
reports jointly; execution must preserve the promised learning
transitions or account for their loss.
This action--information coupling is also central to dual control
\citep{klenske2016dual}.

An observed outcome selects a branch already represented in the plan.
If a local experiment is
completed under the specified report calendar, it produces the
promised law of next learning states. The next local program is
valued under that premise. But the LP fits the arms' workloads into an
episode only in expectation, and the hard horizon may prevent
completion. An unfinished experiment leaves the learner without
some of the evidence used to value future decisions. It changes the
decision problem from which the learner must continue.
Execution must therefore preserve the learning transitions on which
continuation was valued, or charge for their loss and continue from
the information and resources actually available.\footnote{This is 
the \emph{Leap of LP}, echoing Kierkegaard's distinction between
understanding backward and living forward.
For C.~S.~Peirce, testing a conditional prediction includes producing
or finding the conditions under which it applies
(Lowell Lectures, 1903, Lecture~VII).
John Dewey treats inquiry as the directed transformation of the
situation under investigation
(\emph{Logic: The Theory of Inquiry}, 1938, Chapter~VI).
Here execution must establish the informational conditions under
which continuation was valued.}
Planning and execution thus have a predictor--corrector structure.
The LP supplies the value and transition certificate; the executor
enforces pathwise feasibility. In the classic Mizuno--Todd--Ye interior-point method,
the predictor reduces the duality gap and the corrector restores
centrality while retaining that reduction \citep{mizuno1993adaptive}.
Here the quantity to protect is the value of future learning.
Preserving continuation while approximating current reward lets the
episode guarantees compose without multiplying their reward losses
across episodes (Equation~\ref{eq:coordinate-composition}).

\paragraph{The challenge of a proxy.}
The Bayes-optimal learner solves a joint program over the product of
the arms' state spaces. An algorithm therefore targets
$V_{\mathrm{prior}}$ through a proxy $V_{\mathrm{proxy}}$, the value of
the decision problem it actually solves:
\begin{equation}\label{eq:proxy-core}
 V_{\mathrm{state}}-V_A
 =(V_{\mathrm{state}}-V_{\mathrm{prior}})
 +(V_{\mathrm{prior}}-V_{\mathrm{proxy}})
 +(V_{\mathrm{proxy}}-V_A).
\end{equation}
The second term is the choice of proxy and the third is how much of it
the algorithm realizes; $V_{\mathrm{proxy}}=V_{\mathrm{learn}}$ recovers
\eqref{eq:core}. The learning LP has value $V_{\LP}\ge V_{\mathrm{learn}}$:
it keeps the learner's information but relaxes shared capacity and fast
feedback. Its second term,
$(V_{\mathrm{prior}}-V_{\mathrm{learn}})-(V_{\LP}-V_{\mathrm{learn}})$,
is signed, so the LP can value a learning program before we prove how
much of it can be realized. Its third term is the leap from the LP
certificate to a feasible learning policy.
Section~\ref{sec:unroll} bounds this term; a guarantee
$V_A\ge\alpha V_{\LP}-\varepsilon$ also gives
$V_A\ge\alpha V_{\mathrm{learn}}-\varepsilon$, with no ordering between
$V_{\LP}$ and $V_{\mathrm{prior}}$.

For posterior sampling, consider instead
$V_{\mathrm{proxy}}=V_{\mathrm{TS}}$, the reward of hierarchical
Thompson sampling (HierTS) told the recurring prior.
Meta-regret against a prior-informed Thompson-sampling rule measures
$V_{\mathrm{TS}}-V_A$ \citep{bastani2022meta};
the allocation loss $V_{\mathrm{prior}}-V_{\mathrm{TS}}$ remains.
HierTS shares information across episodes \citep{hong2022}.
Its sampled-mean rule chooses the feasible arm with largest sampled
episode mean, using the reports received.
Even with the recurring priors known, this rule loses at least
$NT/32$ under delays and $NT/12$ under budgets and caps
(Examples~\ref{ex:delay-motivation} and~\ref{ex:budget-motivation}).
Thus $V_{\mathrm{prior}}=V_{\mathrm{learn}}$ can coexist with a
linear allocation loss.
Evaluating risk conditional on a sampled episode quality also fails
for mean--variance, CVaR and downside objectives
(Example~\ref{ex:risk}). Appendix~\ref{app:examples} gives these
comparisons, their calculations and the corresponding planning solutions.
Bayes-regret analyses bound the complete loss against the best action
for each realized task in their settings
\citep{kveton2021meta,basu2021,hong2022}.
The LP's prices and continuation values give a common certificate:
every admissible policy's shortfall splits into unused capacity and
Bellman slacks. Section~\ref{sec:certificates} uses it to compare
planning and execution, including HierTS.

\paragraph{Contributions.}
\begin{itemize}
\item \emph{The Learning LP and risk generalizations.}
For general bounded reward distributions, we formulate a relaxation
over learning trajectories. It decomposes into a backward cascade of
the single-arm LPs of \citet{GM13} (GM13 hereafter), linked by
continuation values (Theorem~\ref{lem:perarm}). For explicit finite
state graphs, its size is polynomial in their total size. Episode
weights set the tradeoff between current reward and the later
decisions improved by learning (Section~\ref{sec:weights}).
Bounded outcome scores enter linearly. For risk functionals, we identify the
conditions for arm decomposition and execution.

\item \emph{Feasible execution.}
Full completion uses a time reserve whose leading lower bound
matches as the caps grow; protecting selected reports avoids that reserve
(Section~\ref{sec:unroll}). For a single undelayed episode, caps at
most $T/X$ improve GM13's factor $2$ to
$1+(2\pi X)^{-1/2}+O(X^{-1})$, with a matching leading lower bound.
Fixed-order execution also permits up to $K$ distinct arms per step
(Theorem~\ref{thm:k-lanes}).
With arbitrary caps and delays at most $\delta$, the factor tends to
$2$ whenever $\delta\log\delta=o(T)$.

\item \emph{Learning-value stability and verified execution.}
The episode dual bounds planning-value error by the weighted
total-variation error in observable experiment laws, with sharp
constant $1$. Protected reports transfer this comparison through
execution, paying the single-episode factor once. Short probes give
an executable $O(N^{2/3})$ stationary comparison; the same argument
permits drift (Section~\ref{sec:rates}, Appendix~\ref{app:moving}).

\item \emph{Information prices and predictive checks.}
Capacity prices bound a report's value in the learning LP; repricing
gives an exact nonnegative correction to its fixed-price valuation.
Purchased reports and prior buyout enter the local programs.
The price $V_{\mathrm{prior}}-V_{\mathrm{learn}}$ is $O(\sqrt N)$ under
Proposition~\ref{prop:ps-price}'s assumptions, with other parameters
fixed. A common dual certificate evaluates every admissible policy,
including HierTS. Its reported innovations separate allocation loss
from predictive error and support sequential model checks
(Section~\ref{sec:cost}).
\end{itemize}

Section~\ref{sec:model} gives the model and learning LP;
Section~\ref{sec:unroll} develops execution and its extension to
outcome scores. Section~\ref{sec:rates} proves stability and learning
comparisons. Section~\ref{sec:cost} prices information and develops
the policy certificates and predictive checks.
Appendix~\ref{app:examples} collects the illustrative examples and
their planning solutions. The remaining appendices give execution
proofs, statistical calculations and information-price bounds.

\subsection{Other related work}\label{sec:related}

\paragraph{Learning priors.}
\citet{simchowitz2021} bound the sensitivity of Bayesian decision
rules to prior misspecification and derive meta-learning guarantees
from prior estimation.
\citet{azizi2023meta} study meta-learning for best-arm identification,
using simple regret as the task objective.

\paragraph{Restless bandits and relaxations.}
Lagrangian relaxation separates weakly coupled dynamic programs
by pricing their shared resources
\citep{whittle1988,adelman2008}.
\citet{liu2010indexability} establish indexability for two-state
Markov channels, and \citet{caro2010} study no-crossover response delays.
\citet{jung2019} analyze posterior sampling for unknown restless
dynamics in resetting episodes, relative to a chosen model-to-policy
map. \citet{jiang2023restless} study unknown transition and reward
laws with unobserved states, using a planning oracle.
Paid state probes also appear in \citet[Section~7]{GMS10}.

\paragraph{Delayed and partial feedback.}
\citet{joulani2013} give reductions from delayed-feedback learning
to algorithms for immediate feedback.
\citet{kalkanli2021} analyze Thompson sampling with updates at
batch boundaries.
Impatient Bandits combines partial observations and delayed rewards
through a predictive model
\citep{mcdonald2023,zhang2025}.

\section{Model and the learning LP}\label{sec:model}\label{sec:promise}\label{sec:learn}

\subsection{Model}

There are $n$ arms, indexed by $i\in\{1,\ldots,n\}$, and $N$
episodes, indexed by $e\in\{1,\ldots,N\}$. The planning horizon $N$
and positive integer episode lengths $T_1,\ldots,T_N$ are fixed and
known before planning. Write $\mathbf T=(T_1,\ldots,T_N)$ and
$T=\max_eT_e$.
At each step the learner plays one arm or waits. Waiting earns zero,
uses no arm budget and allows scheduled reports to arrive.
The LP below allows unequal known lengths. The execution and rate
statements use the uniform case $T_e=T$ unless stated otherwise.

Arm $i$ has an episode quality $\theta_i^{(e)}$ in a standard Borel
parameter space $\Theta_i$. Given that quality, its successive rewards
have a known law $K_i(\theta_i^{(e)},dy)$ on $[0,1]$, with mean
$\mu_i(\theta)=\int y\,K_i(\theta,dy)$.
The quality specifies a reward distribution: taking $\Theta_i$ to
be the probability measures on $[0,1]$ and
$K_i(\theta,\cdot)=\theta$ includes every bounded reward law.
Successive plays consume an outcome tape $(Y_{i,j}^{(e)})_{j\ge1}$,
which may be sampled before execution. A play earns its outcome
immediately, but the learner observes it only when its report arrives.
The policy's random coins are independent of the arms.

\begin{assumption}[Independent prior sequences across arms]\label{ass:drift}
Arm $i$ has a random sequence of priors
$\mathcal P_i=(P_i^{(e)})_{e\ge1}$ on $\Theta_i$, with known law
$\Q_{1,i}$. These sequences are independent across arms.
Conditional on them, the qualities are independent with
$\theta_i^{(e)}\sim P_i^{(e)}$. Conditional on the qualities, tape
entries are independent with law $K_i(\theta_i^{(e)},\cdot)$.
\end{assumption}

The learner knows the meta-prior $\Q_1=\bigotimes_i\Q_{1,i}$, but not
the realized priors. In the stationary case $P_i^{(e)}=P_i$ for every
episode. There are then two levels of exchangeability: episodes are
i.i.d.\ given the recurring prior, and plays within an episode are
i.i.d.\ given its quality. Conversely, invariance of each arm's
infinite outcome array under permutations of episodes and of plays
within each episode gives this hierarchy by the hierarchical
de~Finetti representation \citep{austin2014hierarchical}.
The model requires no parametric or conjugate family.

Arm $i$ has a total play budget $B_i$ and a per-episode cap $L_i$,
both nonnegative integers.
Write $B=(B_i)_i$ and $\Lbar=\max_iL_i\le T$.
If its remaining budget is $b$, it can make at most
$\min\{L_i,T_e,b\}$ plays in episode $e$. Both limits hold on every
realized path.

Arm $i$ has known delay $\delta_i\ge1$. A play at step $t$ reports
before the decision at $t+\delta_i$ if that step lies in the same
episode; $\delta_i=1$ is undelayed feedback. In episode $e$, the arm
is \emph{fast} if $\delta_i<T_e$ and \emph{slow} otherwise.
A fast arm's remaining outcomes report at the next episode start.
A slow arm's episode-$e$ batch reports at a known boundary
$\mathrm{rep}_i(e)\in\{e+1,\ldots,e+g\}$, where the integer $g\ge1$
bounds the report lag in episodes. Reports from earlier episodes
arrive only at boundaries. The calendar is fixed in advance and
independent of outcomes and within-episode scheduling.
For $g=1$, all reports arrive before the next episode; otherwise
queues can remain pending. Time between episodes is outside the
reward horizon. Write $\delta=\max_i\delta_i$ when a common delay
bound is used.

\subsection{Policies and states}

A policy is \emph{delay-respecting} if it uses only reported outcomes
and independent randomness. It is \emph{admissible} if it also plays
at most one arm per step and respects all caps and budgets. Let $H_{e-1}$
be the history available at episode $e$'s start, including reports
delivered there. Write $V_e$ for its reward and $\OPT_N(B)$ for the
optimal expected reward over the $N$ episodes.

A \emph{local policy} $\pi$ specifies one arm's plays within one
episode; we also call it an experiment. An \emph{arm program} $\sigma_i$ chooses its local policies
over all episodes from that arm's own reports and independent coins.
At the start of episode $e$, write
$\Q_{e,i}=\mathcal L(\mathcal P_i\mid H_{e-1})$ for arm $i$'s
posterior over its prior sequence, $b_{e,i}$ for its remaining budget,
and $\mathcal D_{e,i}$ for its pending reports with their episode
labels and delivery dates. The queue records the plays awaiting
reports, not their unobserved outcomes. These objects determine the
conditional joint law of the arm's future outcomes and reports. The arm's boundary state is
$S_{e,i}=(\Q_{e,i},b_{e,i},\mathcal D_{e,i})$.
Let $\mathcal S_{e,i}$ be the
possible boundary states of this arm in episode $e$, so
$S_{e,i}\in\mathcal S_{e,i}$;
initially $S_{1,i}=(\Q_{1,i},B_i,\varnothing)$. In a calculation for a
fixed arm and episode, we write $S=(\Q,b,\mathcal D)$ for a possible
state and suppress the arm index on its rewards and transition kernels.

\begin{definition}[Planned restlessness]\label{def:planned}
An arm is \emph{planned-restless} if, while other arms are played, its
information changes only through reports of its own earlier plays, by a
delivery rule fixed in advance. A report arrives a known offset after
its play if that step lies in the same episode. Otherwise it arrives at
a boundary that depends only on the play's episode. Neither the rule nor
the boundary depends on outcomes or on how arms are interleaved.
\end{definition}

Every arm in this model is planned-restless. Its conditional reward
law stays fixed within an episode. Completion preserves the law of
its next boundary state under interleaving
(Lemma~\ref{lem:placement}).

For planning, fast arms get immediate access to their current
outcomes; this is the \emph{delay relaxation}. Slow arms use
nonadaptive local policies. Each arm program still respects its cap,
budget and boundary report calendar. Execution enforces the physical
delays.

\subsection{Single-arm policies}\label{sec:local-policies}

Use the posterior states of \citet[Sections~2.1 and~3.1]{GM13}.
At a state $u$, let $r(u)$ be the expected reward of a play and
$p(u,v)$ its transition probabilities. The state records the plays
made, remaining budget and reports. It may be a complete observable
history or a sufficient statistic preserving conditional future laws
and feasible actions. The play count makes the local graph acyclic.
The root has no predecessors. At each playable node $u$,
$p(u,v)\ge0$ and $\sum_vp(u,v)=1$.
For a slow arm it has no outcome-dependent branches before handoff.
Bayesian updating gives $r(u)=\sum_vp(u,v)r(v)$, including the
predictive mean at terminal states.

For a finite graph rooted at $u_0$, let $f(u)$, $w(u)$ and $z(u)$
be the probability assigned to reaching, playing at and stopping at $u$.
Then
\begin{equation}\label{eq:local-flows}
 f(u) \leq \mathbf1_{\{u=u_0\}}+\sum_vp(v,u)w(v), \qquad f(u)\leq 1, \qquad
 w(u)+z(u)=f(u),\qquad f,w,z\ge0.
\end{equation}
The reach inequality permits unused probability. In particular,
$0\le w(u),z(u)\le f(u)\le1$.
Set $w(u)=0$ where the cap or remaining budget forbids a play.

\begin{proposition}[Local policy representation]\label{prop:lawfree}
Every allowed local policy gives a feasible solution of
\eqref{eq:local-flows}. Conversely, every feasible solution has a
completion $(\bar f,w,\bar z)$ that preserves the play probabilities
and satisfies equality in the reach constraints. Playing with
probability $w(u)/\bar f(u)$ realizes this completion. Its expected
reward, workload and stopping probabilities are
\begin{equation}\label{eq:flow-linear}
 R(\pi)=\sum_u r(u)w(u),\qquad
 \Work(\pi)=\sum_uw(u),\qquad
 \Pr\{\text{stop at }u\}=\bar z(u).
\end{equation}
\end{proposition}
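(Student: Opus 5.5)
The proof has two directions, and both rest on the acyclic structure of the local graph. Because the play count makes the graph a DAG, we can process nodes in a topological order from the root $u_0$.

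\emph{Forward direction.} Fix an allowed local policy $\pi$, possibly randomized, using only the information encoded in the states. Define
\begin{itemize}
\item $f(u)$ as the probability that the trajectory reaches $u$;
\item $w(u)$ as the probability that it reaches $u$ and plays there;
\item $z(u)=f(u)-w(u)$ as the probability that it reaches $u$ and stops.
\end{itemize}
A non-root node can be entered only by a play at a predecessor $v$. Given a play at $v$, the next state is $u$ with probability $p(v,u)$, since the state is sufficient for the conditional outcome law. Hence $f(u)=\mathbf1_{\{u=u_0\}}+\sum_v p(v,u)w(v)$ holds with equality. The bounds $f\le1$ and nonnegativity are immediate. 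Plays forbidden by the cap or budget have $w(u)=0$ because the policy is allowed.

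\emph{Converse.} Given a feasible $(f,w,z)$, define $\bar f$ recursively in topological order by
\[
\bar f(u)=\mathbf1_{\{u=u_0\}}+\sum_v p(v,u)w(v),
\]
and set $\bar z(u)=\bar f(u)-w(u)$. The key inequality is $w(u)\le f(u)\le\bar f(u)$, which gives $\bar z\ge0$. We also need $\bar f\le1$. For this, induct over the levels of the DAG (by play count) that the total play mass satisfies $\sum_{u\text{ at level } k+1}\bar f(u)=\sum_{v\text{ at level } k}w(v)\le\sum_{v\text{ at level } k}\bar f(v)\le\cdots\le1$, using $\sum_u p(v,u)=1$. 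More simply, $\bar f(u)$ is bounded by the total mass entering $u$, which is at most the mass at the root.

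Now consider the randomized rule: at $u$, play with probability $w(u)/\bar f(u)$, taken to be $0$ when $\bar f(u)=0$. Show by induction in topological order that its reach probability equals $\bar f(u)$. The root is reached with probability $1=\bar f(u_0)$. If the reach probabilities of all predecessors equal $\bar f$, then the play probability at each predecessor $v$ is $\bar f(v)\cdot w(v)/\bar f(v)=w(v)$, so the reach probability of $u$ equals $\sum_v p(v,u)w(v)=\bar f(u)$. The cap and budget are respected, since $w=0$ wherever a play is forbidden.

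\emph{Linear formulas.} These follow directly. The expected reward is $\sum_u r(u)\Pr\{\text{play at }u\}$, because the conditional expected reward of a play from state $u$ is $r(u)$ (tower property, using sufficiency of $u$). The workload is $\sum_u\Pr\{\text{play at }u\}=\sum_u w(u)$. The stopping probability at $u$ is $\bar f(u)-w(u)=\bar z(u)$.

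The main obstacle is the sufficiency step: showing that a general history-dependent policy induces transition probabilities exactly $p(v,u)$ and per-play rewards $r(v)$. If states are coarser than histories, the policy's behaviour within a state may depend on the finer history. We handle this by conditioning: the outcome law given the full history depends only on the state, so aggregating over histories within a state preserves both identities. Randomization is independent of the arm, so it does not disturb this argument. A secondary subtlety is the case $\bar f(u)=0$, where the convention $0/0=0$ is needed.
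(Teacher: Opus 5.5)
Your proposal is correct and follows the paper's proof. Policy probabilities satisfy the reach constraints with equality; the completion $\bar f(u)=\mathbf 1_{\{u=u_0\}}+\sum_v p(v,u)w(v)$, $\bar z=\bar f-w$ dominates $(f,z)$; induction over the DAG shows that playing with probability $w/\bar f$ realizes it; and summing gives the linear formulas. The only difference is that you prove $\bar f\le 1$ separately by a level-sum argument, whereas the paper reads it off the realization (arrival probabilities are probabilities), so your extra step is redundant but harmless.
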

\begin{proof}
A policy's probabilities satisfy the reach constraints with equality.
For a feasible LP solution, put
\[
 \bar f(u)=\mathbf1_{\{u=u_0\}}+\sum_vp(v,u)w(v),\qquad
 \bar z(u)=\bar f(u)-w(u).
\]
Then $\bar f\ge f$ and $\bar z\ge z$. Induction over the DAG shows
that playing with probability $w/\bar f$ gives arrival probabilities
$\bar f$, play probabilities $w$ and stopping probabilities $\bar z$.
Thus $\bar f\le1$ and $\sum_u\bar z(u)=1$.
Summing play contributions proves \eqref{eq:flow-linear}.
Choices at zero-probability states are arbitrary.
\end{proof}

Policy and handoff calculations below use completed solutions; we
write their probabilities as $f,w,z$.

For a finite reward alphabet $\mathcal Y_i$, the likelihood of an
ordered outcome history $h=(y_1,\ldots,y_j)$ under an episode prior
$P$ is
\begin{equation}\label{eq:likelihood}
 \ell_i(P;h)=\int_{\Theta_i}\prod_{t=1}^j
 K_i(\theta,\{y_t\})\,P(d\theta).
\end{equation}
It is linear in $P$ and satisfies
$\ell_i(P;h)=\sum_y\ell_i(P;hy)$.
Suppress the arm index. Let $A(h)$ be the probability, over a local
policy's coins, of making the plays in $h$ when supplied those
outcomes, and $B(h)$ the probability of also making the next play.
Then $A(\varnothing)=1$, $A(hy)=B(h)$ and $0\le B(h)\le A(h)$.
The reach, play and terminal probabilities are respectively
\begin{equation}\label{eq:history-weights}
 A(h)\ell(P;h),\qquad B(h)\ell(P;h),\qquad
 (A(h)-B(h))\ell(P;h).
\end{equation}
These identities hold for delayed policies because each decision uses
only reported outcomes and independent coins. In particular,
\[
 R^P(\pi)=\sum_h B(h)\sum_{y\in\mathcal Y}y\ell(P;hy),\qquad
 \Work^P(\pi)=\sum_hB(h)\ell(P;h).
\]

A depth-$k$ commit policy makes at most $k$ adaptive plays and then
fixes its remaining play count.
\begin{corollary}[Information in an experiment]\label{cor:tree}
\textup{(i)} The stopped outcome law, reward and workload use only
likelihoods at histories reached with positive weight.
\textup{(ii)} Given the policy weights, the stopped law determines
these likelihoods.
\textup{(iii)} Fixed recurring Bernoulli priors with the same first $m$ moments are
indistinguishable by local policies capped at $m$ plays per episode.
\textup{(iv)} A depth-$k$ commit policy with cap $L$ has reward and
workload determined by the law of $\min\{L,k+1\}$ observations;
its continuation can require all $L$ observations.
\end{corollary}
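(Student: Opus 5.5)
The plan is to read everything off the history-weight identities \eqref{eq:history-weights}, treating each of the four parts as a short consequence.

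\textbf{Part (i).} The stopped outcome law puts mass $(A(h)-B(h))\ell(P;h)$ on each terminal history $h$. The reward and workload formulas after \eqref{eq:history-weights} sum $B(h)\,y\,\ell(P;hy)$ and $B(h)\ell(P;h)$. Each term vanishes unless the history carrying the likelihood is reached with positive weight. For reward this is $hy$, with $A(hy)=B(h)>0$; for workload it is $h$, with $A(h)\ge B(h)>0$. So only likelihoods at positively reached histories enter.

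\textbf{Part (ii).} The policy weights are known, so I invert the terminal masses. At a reached history $h$, the likelihood is recovered by recursion from the leaves upward using $\ell(P;h)=\sum_y\ell(P;hy)$. If $B(h)<A(h)$, the terminal mass at $h$ divided by $A(h)-B(h)$ gives $\ell(P;h)$ directly. If $B(h)=A(h)>0$, every continuation $hy$ is reached with weight $B(h)$, and I sum the already recovered $\ell(P;hy)$. Histories are finite because the cap bounds their length, so the induction runs from maximal length downward.

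\textbf{Part (iii).} For a Bernoulli kernel with $K(\theta,\{1\})=\theta$, we have $\ell(P;h)=\int\theta^{s}(1-\theta)^{j-s}P(d\theta)$, where $s$ is the number of ones in $h$ and $j=|h|\le m$. Expanding $(1-\theta)^{j-s}$ binomially expresses this as a linear combination of the moments of $P$ of order at most $j$. Hence two priors with equal first $m$ moments give identical likelihoods on all histories of length at most $m$. By (i), the stopped laws of every episode then coincide, and because the prior is fixed and recurring, so does the joint law across episodes. Any capped local policy, even one adapting across episodes, therefore sees the same observation law, giving indistinguishability.

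\textbf{Part (iv).} A depth-$k$ commit policy has $B(h)$ nonzero only on histories of length at most $k$ plus deterministic extensions. On those extensions $B$ depends only on the first $\min\{k,|h|\}$ outcomes. Summing the reward over the committed tail, $\sum_{y_{k+1},\ldots}$ of $\ell(P;h')$ times the mean, collapses by the marginalization identity to likelihoods of length at most $k+1$. This uses exchangeability: each committed play's expected reward is the predictive mean given the first $k$ outcomes. The workload collapses the same way to length at most $k$. Together these give dependence on the law of $\min\{L,k+1\}$ observations.

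For the continuation claim I will give an example: a Bernoulli case where the next boundary state depends on the full stopped record of $L$ outcomes. Then two priors that agree on $k+1$ moments but differ in the $L$-th moment produce different next-state laws, by (ii) applied to the full tree.

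The main obstacle is part (ii) in the presence of histories with $B(h)=A(h)$ but unreached children. I will handle these by noting that children of a reached node with $B(h)>0$ are all reached with weight $B(h)$, since outcomes are supplied and the policy cannot prune by outcome before playing.
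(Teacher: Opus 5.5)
Your proposal is correct and follows the paper's argument. Parts (i), (iii) and the reward--workload half of (iv) are read off \eqref{eq:history-weights} as the paper does, with the committed tail collapsed to its length times one further outcome.

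For (ii), the paper skips your leaf-to-root case split. It sums the stopped law over all extensions of $h$ to get the reach probability $A(h)\ell(P;h)$ and divides by $A(h)>0$, so the obstacle you anticipate never arises. For the continuation example, take the two priors to agree on their first $L-1$ moments rather than only $k+1$. That version shows all $L$ observations can be needed, not merely more than $k+1$.
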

\begin{proof}
Equation~\eqref{eq:history-weights} gives (i). The stopped law gives
each prefix probability; division by $A(h)>0$ gives (ii).
For Bernoulli rewards a length-$j$ likelihood is
$\int\theta^s(1-\theta)^{j-s}P(d\theta)$, a polynomial moment
of degree $j$, proving (iii). For (iv), commitment fixes the tail
length from at most $k$ outcomes. Conditional on that prefix and
the quality, its expected reward is unchanged by replacing the tail
by its length times one further outcome. Its full outcome law still
uses all $L$ observations.
\end{proof}

\begin{proposition}[History likelihood]\label{prop:history}
In a global observable history $h$, let $h_{i,u}$ list the outcomes
reported from arm $i$ and episode $u$. For finite alphabets,
\begin{equation}\label{eq:history-law}
 \Pr_\pi(H=h)=w_\pi(h)\prod_i\ell_i(h),\qquad
 \ell_i(h)=\E_{\Q_{1,i}}\prod_u\ell_i(P_i^{(u)};h_{i,u}),
\end{equation}
where $w_\pi(h)$ depends on the policy and reported history but is
independent of the latent prior sequences. Unreported outcomes are
marginalized.
\end{proposition}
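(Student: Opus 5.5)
The plan is to write the probability of the global observable history $h$ as a product of the policy's conditional decision probabilities and the conditional report probabilities. I then integrate out the latent layers in the order tape, qualities, prior sequences, using Assumption~\ref{ass:drift}.

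\textbf{Step 1 (chain rule along the history).} Order the events of $h$ in calendar time: decisions at each step and report deliveries at steps or boundaries. Each decision depends only on reported outcomes and independent coins, because the policy is delay-respecting. Its conditional probability given the past observable record is therefore a function of $h$ alone. Let $w_\pi(h)$ be the product of these decision factors. The delivery calendar is fixed in advance and outcome-independent (Definition~\ref{def:planned}). Hence \emph{which} play reports \emph{when} is a deterministic function of the play times already recorded in $h$, and contributes no further factor. The remaining factor is the probability that the reported outcome values are those listed in $h$. The entries actually consumed by the plays in $h$ are determined by the decisions, and the coins are independent of the arms. So, jointly over the latent variables, this factor is $\Pr\{Y_{i,j}^{(u)}=(h_{i,u})_j \text{ for all reported } (i,u,j)\}$. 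Unreported plays are summed over the alphabet, and by $\ell_i(P;h)=\sum_y\ell_i(P;hy)$ they drop out.

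\textbf{Step 2 (integrating the latent layers).} Condition on the prior sequences and qualities. Tape entries are independent with law $K_i(\theta_i^{(u)},\cdot)$, giving $\prod_{i,u}\prod_t K_i(\theta_i^{(u)},\{(h_{i,u})_t\})$. The qualities are conditionally independent with $\theta_i^{(u)}\sim P_i^{(u)}$, so integrating each one yields $\prod_{i,u}\ell_i(P_i^{(u)};h_{i,u})$ by \eqref{eq:likelihood}. Finally, the sequences $\mathcal P_i$ are independent across arms with laws $\Q_{1,i}$. The expectation therefore factors into $\prod_i\E_{\Q_{1,i}}\prod_u\ell_i(P_i^{(u)};h_{i,u})=\prod_i\ell_i(h)$.

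\textbf{Main obstacle.} The delicate point is justifying Step 1's separation: that $w_\pi(h)$ truly does not depend on the latent sequences. This requires that the within-episode reports a decision sees are exactly those in $h$. It also requires that plays whose outcomes are still pending influence later decisions only through the recorded play times, not through their values. I would argue this by induction over steps. At each step I would show that the joint probability of the observable prefix and the full latent configuration equals (policy factor of the prefix) $\times$ (probability of the reported outcome values under the latent configuration). The inductive step then has two cases: a decision, which multiplies by a coin probability measurable in the prefix, and a delivery, which adds one reported coordinate of the tape. Marginalizing the unreported tape entries at the end, by finiteness of the alphabet, completes the argument.
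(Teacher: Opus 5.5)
Your proposal is correct and follows the paper's own proof: the policy contributes a coin-averaged factor that depends only on the reported history, and the fixed calendar contributes no outcome-dependent factor. Integrating each episode quality then gives \eqref{eq:likelihood}, and independence across arms gives the product. Your step-by-step induction, which amounts to observing that $\{H=h\}$ is the product of an event on the coins and an event on the tape coordinates, and your explicit marginalization of unreported outcomes, spell out what the paper compresses into a few sentences.
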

\begin{proof}
Fix the reported history. The policy's choices contribute a factor
obtained by averaging over its independent coins. Conditional on the
prior sequences, integrating each episode quality gives its
likelihood in \eqref{eq:likelihood}. Independence across arms gives
the product in \eqref{eq:history-law}. The fixed report calendar
contributes no outcome-dependent factor.
\end{proof}

\begin{lemma}[The conditional episode]\label{lem:posterior}
Given $H_{e-1}$, the posterior factorizes as
$\Q_e=\bigotimes_i\Q_{e,i}$, and the episode has independent
predictive priors $\Pb_{e,i}=\E[P_i^{(e)}\mid H_{e-1}]$.
Writing $\Pb_e=(\Pb_{e,i})_i$, every episode policy satisfies
$\E[V_e\mid H_{e-1}]=V^{\Pb_e}(\pi_e)$.
\end{lemma}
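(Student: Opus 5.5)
The plan is to read both claims off the history likelihood of Proposition~\ref{prop:history}. I first treat finite alphabets and then pass to general bounded rewards by a regular conditional probability argument.

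\emph{Step 1: factorization of the posterior.} By \eqref{eq:history-law}, the joint law of the prior sequences $(\mathcal P_i)_i$ and the history $H_{e-1}=h$ has density, with respect to $\Q_1=\bigotimes_i\Q_{1,i}$, proportional to
\[
 w_\pi(h)\prod_i\prod_u\ell_i(P_i^{(u)};h_{i,u}).
\]
This is obtained by conditioning on the prior sequences before integrating, exactly as in the proof of Proposition~\ref{prop:history}. The factor $w_\pi(h)$ does not depend on the latent sequences, so it cancels in Bayes' rule. The remaining factor is a product of functions each of a single $\mathcal P_i$, and $\Q_1$ is a product measure. Hence the posterior is the product of the measures
\[
 \Q_{e,i}(d\mathcal P_i)\propto\prod_u\ell_i(P_i^{(u)};h_{i,u})\,\Q_{1,i}(d\mathcal P_i),
\]
which gives $\Q_e=\bigotimes_i\Q_{e,i}$.

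\emph{Step 2: independent predictive priors.} The next task is to show that the episode-$e$ qualities are independent given $H_{e-1}$, with $\theta_i^{(e)}\sim\Pb_{e,i}$. Episode $e$ has had no plays before its start, so its qualities enter no likelihood factor in $h$. I apply the same computation to the extended vector $(\mathcal P_i,\theta_i^{(e)})_i$. Given the prior sequences, the qualities $\theta_i^{(e)}$ are independent of the history and distributed as $P_i^{(e)}$. The joint posterior is therefore $\bigotimes_i\bigl[\Q_{e,i}(d\mathcal P_i)\,P_i^{(e)}(d\theta_i)\bigr]$. Integrating out $\mathcal P_i$ gives independent marginals $\E[P_i^{(e)}\mid H_{e-1}]=\Pb_{e,i}$.

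\emph{Step 3: the conditional episode value.} Fix $H_{e-1}$ and an episode policy $\pi_e$. Within episode $e$, its decisions use only reports from episode $e$, the history $H_{e-1}$, reports of earlier episodes arriving at boundaries (none arrive inside the episode), and independent coins. Given the qualities, tape entries are independent of everything earlier. So conditionally on $H_{e-1}$, the episode is an ordinary $n$-arm bandit with independent qualities drawn from $\Pb_{e,i}$ and tapes with kernels $K_i$. Its expected reward is by definition $V^{\Pb_e}(\pi_e)$. Concretely, \eqref{eq:history-weights} applied per arm, together with the product form of Step~2, gives within-episode history probabilities $w\prod_i\ell_i(\Pb_{e,i};\cdot)$. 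Rewards are linear in these probabilities, which yields the identity.

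\emph{Main obstacle.} The delicate point is the general (non-finite-alphabet) case, together with making sure that pending reports cannot smuggle in dependence. For the latter, a report of arm $i$ carries likelihood only for $\mathcal P_i$, so the factorization survives. For general alphabets, I would replace $\ell_i$ by densities with respect to $\prod K_i$-dominating measures. Where no such measure exists, I would argue via the standard Borel disintegration: conditional independence of the arms' blocks given the history follows from the product structure of the joint law together with the $\mathcal P$-free policy kernel.
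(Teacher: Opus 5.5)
Your proposal is correct and follows essentially the same route as the paper. The paper's proof also rests on three points: the policy factor $w_\pi(h)$ cancels in Bayes' rule, giving independent arm updates; the tape likelihood is linear in $P_i^{(e)}$, so averaging gives $\Pb_{e,i}$; and earlier reports arrive only at boundaries. Your extension of the computation to the pairs $(\mathcal P_i,\theta_i^{(e)})_i$ makes explicit the conditional independence of the current qualities, which the paper leaves implicit.
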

\begin{proof}
The policy factor in \eqref{eq:history-law} cancels in Bayes' rule,
leaving independent updates of the arm meta-priors. The new tape law
is linear in $P_i^{(e)}$, so averaging gives $\Pb_{e,i}$.
Reports from earlier episodes arrive only at boundaries.
\end{proof}

The predictive priors suffice for current-episode reward. Continuation
requires the full beliefs $\Q_{e,i}$, which determine how new reports
change later decisions (Example~\ref{ex:spread}). For general
observation spaces these statements use conditional kernels and
integrals. Finite LPs require a supplied finite state representation.

\subsection{The continuation}\label{sec:continuation}

Fix arm $i$, an episode $e$ and its incoming boundary state $S$.
We suppress the arm index throughout this calculation. Each arm has
$N$ episode layers. In its DAG, the episode-$e$ local policy moves
probability mass to possible
starting nodes of episode $e+1$. Different reports can give different
posterior beliefs about the recurring prior; the plays also change
the remaining budget and the queue. These are the states from which
the arm's program continues.

\begin{definition}[Planned handoff]\label{def:handoff}
For an allowed local policy $\pi$, the \emph{planned handoff}
$\mathcal T_e^\pi(S,\cdot)$ is the law of the arm's state at the start
of episode $e+1$ when $\pi$ runs alone and to completion (undelayed if
fast, nonadaptively if slow), and the reports due at
that boundary are delivered.
\end{definition}

On a finite state graph, write $H_e(S,u,v)$ for the probability that
stopping at local node $u$ leads to next-episode starting node $v$.
This transition deducts the plays, updates the queue and delivers the
reports due at the boundary. The incoming state and local node
determine these probabilities; the policy determines its stopping
probabilities $z(u)$. Thus the mass arriving at $v$ is
\begin{equation}\label{eq:boundary-flow}
 q(v)=\sum_u z(u)H_e(S,u,v)=\mathcal T_e^\pi(S,v).
\end{equation}
This is the boundary flow joining successive episode DAGs.

Assign a future value $F(v)$ to each next-episode starting node.
The current policy's \emph{continuation contribution} is the expected
value of the state it produces:
\begin{equation}\label{eq:continuation-flow}
  C_F(\pi)=\sum_v q(v)F(v)=\sum_u z(u)c_F(u),\qquad c_F(u)=\sum_v H_e(S,u,v)F(v).
\end{equation}
At a stopping history, the policy weight cancels in Bayes' rule.
Its posterior, remaining budget, queue and conditional law of reports
due at the boundary therefore fix $H_e$ independently of the policy.
Thus both the handoff and the continuation in
\eqref{eq:continuation-flow} are linear in terminal probabilities,
even when $F$ is nonlinear in the posterior.
Figure~\ref{fig:experiment-continuation} shows these two operations.
For general observation spaces the sums are integrals against the
handoff kernels.

\begin{figure}[htbp]
\centering
\begin{tikzpicture}[>=Latex,font=\small,x=1cm,y=0.9cm,
 dot/.style={circle,draw,fill=white,minimum size=5.5mm,inner sep=0pt},
 small dot/.style={circle,draw,fill=white,minimum size=2.5mm,inner sep=0pt},
 handoff/.style={->,draw=blue!65!black,semithick}]
\filldraw[fill=black!2,draw=black!25,rounded corners]
 (-0.65,-2.05) rectangle (4.55,2.05);
\filldraw[fill=blue!2,draw=black!25,rounded corners]
 (7.60,-2.05) rectangle (13.20,2.05);
\node at (1.95,2.35) {Episode $e$: local policy flows};
\node at (10.40,2.35) {Episode $e+1$: possible starting nodes};
\draw[dashed,black!45] (6.15,-2.15)--(6.15,2.15);
\node[align=center,font=\footnotesize,fill=white] at (6.15,2.05)
 {Boundary\\reports};
\node[dot] (root) at (0,0) {$u_0$};
\node[below=4pt of root] {$S$};
\node[small dot] (b1) at (1.35,0.65) {};
\node[small dot] (b2) at (1.35,-0.65) {};
\node[dot,label=below:{$z(u_1)$}]
 (u1) at (3.30,1.25) {$u_1$};
\node[dot,label=below:{$z(u_2)$}]
 (u2) at (3.30,0) {$u_2$};
\node[dot,label=below:{$z(u_3)$}]
 (u3) at (3.30,-1.25) {$u_3$};
\draw[->] (root)--(b1);
\draw[->] (root)--(b2);
\draw[->] (b1)--(u1);
\draw[->] (b1)--(u2);
\draw[->] (b2)--(u2);
\draw[->] (b2)--(u3);
\foreach \j/\y in {1/1.25,2/0,3/-1.25}{
 \node[dot,label=below:{$q(v_\j),\ F(v_\j)$}]
   (v\j) at (8.65,\y) {$v_\j$};
 \node[small dot] (a\j) at (10.65,\y+0.28) {};
 \node[small dot] (b\j) at (10.65,\y-0.28) {};
 \node[small dot] (c\j) at (12.55,\y) {};
 \draw[->] (v\j)--(a\j);
 \draw[->] (v\j)--(b\j);
 \draw[->] (a\j)--(c\j);
 \draw[->] (b\j)--(c\j);
}
\draw[handoff] (u1)--node[above,font=\footnotesize,fill=white,inner sep=1.5pt]
 {$H_e(S,u_1,v_1)$} (v1);
\draw[handoff] (u1)--(v2);
\draw[handoff] (u2)--(v2);
\draw[handoff] (u2)--(v3);
\draw[handoff] (u3)--(v3);
\draw[->,semithick] (0,3.05)--node[above]
 {Probability mass moves forward} (12.55,3.05);
\draw[->,semithick] (12.55,-2.65)--node[below]
 {Future values are computed backward} (0,-2.65);
\end{tikzpicture}
\caption{One arm across an episode boundary. Shown are selected stopping
nodes $u$ of the current local DAG and starting nodes $v$ of the next.
The handoff sends stopping mass $z(u)$ through the fixed probabilities
$H_e(S,u,v)$, giving arrival masses $q(v)$. The continuation is
$\sum_v q(v)F(v)$. Each next starting node carries its posterior,
remaining budget and pending reports.}
\label{fig:experiment-continuation}
\end{figure}

For a fast arm with a finite reward alphabet and no earlier report
pending, the handoff from a terminal history $h$ is deterministic:
\begin{equation}\label{eq:handoff}
 S_h^+=(\Q_h^+,b-|h|,\varnothing),\qquad
 \Q_h^+(d\mathcal P_i)\ \propto\
 \ell_i(P_i^{(e)};h)\,\Q(d\mathcal P_i).
\end{equation}
The policy's action probabilities cancel from Bayes' rule, so the
state attached to $h$ is fixed independently of its flow weight.
Pending reports can make the handoff random, as represented by the
kernel in \eqref{eq:boundary-flow}. The next subsection derives the
arm's future values by pricing the shared episode capacities.

\subsection{The learning LP and its prices}\label{sec:across}

The learning LP has $N$ episode layers for each arm. A layer contains
a local program for every possible incoming boundary state. Stopping
in layer $e$ sends probability mass to starting states in layer
$e+1$. At fixed capacity prices, their values are computed backward
from episode $N$. The prices are chosen jointly across episodes.
Thus the cascade has depth $N$; its size also counts the states and
transitions in each layer. Each local block is the single-arm LP of
GM13, with its terminal values supplied by the next episode.

Let $R_e(\sigma_i)$ and $\Work_e(\sigma_i)$ be arm $i$'s expected
reward and number of plays in episode $e$. Each arm program respects
its cap and total budget on every path. For known capacities
$\mathbf C=(C_1,\ldots,C_N)$ with $0<C_e\le T_e$, define
\begin{equation}\label{eq:lpmeta}
 \LP_N(\mathbf C,B)=\sup_{(\sigma_i)}\sum_{e=1}^N\sum_iR_e(\sigma_i)
 \quad\text{subject to}\quad
 \sum_i\Work_e(\sigma_i)\le C_e\quad(e=1,\ldots,N).
\end{equation}
Only these shared capacity constraints hold in expectation.
For uniform lengths and capacities, write $\LP_N(C,B)$ for
$\mathbf C=(C,\ldots,C)$. The subscript $N$ records the number of
episode layers; $\LP(P)$ below denotes a single-episode relaxation.

\begin{proposition}[Relaxation across episodes]\label{prop:relax}
\textup{(i)} $\OPT_N(B)\le\LP_N(\mathbf T,B)$.
\textup{(ii)} For $0<s\le1$,
$\LP_N(s\mathbf C,B)\ge s\LP_N(\mathbf C,B)$; in particular,
$\LP_N(C,B)\ge(C/T)\LP_N(T,B)$ in the uniform case.
\textup{(iii)} Pricing the episode capacities separates the arms;
fixing the future values makes each local step an LP.
\end{proposition}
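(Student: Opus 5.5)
For (i), I will take an optimal admissible policy (or one within $\varepsilon$ of $\OPT_N(B)$) and project it onto each arm to obtain arm programs $\sigma_i$ with the same per-episode rewards and workloads. Fix arm $i$. Its arm program in episode $e$ will be obtained by simulating the global policy, with the other arms' outcomes drawn from their own independent prior sequences using internal coins. Arm $i$'s own outcomes are supplied by its reports, and the delay relaxation gives immediate access for a fast arm. By Assumption~\ref{ass:drift}, the simulated other arms have exactly their true joint law. By Proposition~\ref{prop:history}, the policy factor does not depend on the latent priors, so the simulated run has the same law as the real one.

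This simulation makes arm $i$'s decisions depend only on its own observations and independent coins. The realized cap and budget hold on every path, because the global policy obeys them. For a slow arm, the global policy sees no current-episode reports from it. Its plays in episode $e$ are therefore a randomized function of boundary information and the other arms' outcomes, which are simulated here. Hence it is a mixture of nonadaptive local policies, and the flows \eqref{eq:local-flows} are convex, so such mixtures are admissible.

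Summing over arms, at most one arm is played per step, so $\sum_i\Work_e(\sigma_i)\le T_e$ pathwise, and hence in expectation. The total reward is unchanged, which gives the bound. The main obstacle is checking that the delay relaxation only enlarges the feasible set. Giving a fast arm earlier access to its own outcomes must not alter the law of the information it receives at boundaries. I will argue this by noting that a delayed policy can ignore reports it already holds, so it is itself a relaxed policy.

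For (ii), I will take any feasible family $(\sigma_i)$ for $\mathbf C$ and define $\sigma_i^s$. At the start, $\sigma_i^s$ flips an independent coin for each arm with probability $s$. On heads it runs $\sigma_i$; on tails it never plays. This scales every reward and workload by exactly $s$, and caps and budgets still hold pathwise. The result is feasible for $s\mathbf C$ and has value $s$ times the original. The uniform statement follows with $s=C/T$.

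For (iii), I will Lagrangify the capacity constraints with prices $\lambda_e\ge0$. This gives
\[\LP_N(\mathbf C,B)\le\sum_e\lambda_eC_e+\sum_i\sup_{\sigma_i}\sum_e\bigl(R_e(\sigma_i)-\lambda_e\Work_e(\sigma_i)\bigr),\]
which separates over arms. For fixed prices, each arm's problem is solved backward on its episode DAG. The quantity $F_{e+1}$ is the value of the next layer. At episode $e$, using Proposition~\ref{prop:lawfree} and \eqref{eq:continuation-flow}, the objective $\sum_u(r(u)-\lambda_e)w(u)+\sum_u z(u)c_{F_{e+1}}(u)$ is linear in $(f,w,z)$ under the linear constraints \eqref{eq:local-flows}. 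Each local step is therefore an LP.
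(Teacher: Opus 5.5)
Your proposal is correct and follows the paper's proof. For (i) you project an admissible policy onto each arm, with the other arms' latent variables, tapes and the policy coins serving as independent randomness; the equality of laws follows directly from the product structure in Assumption~\ref{ass:drift}, so Proposition~\ref{prop:history} is not needed. For (ii) you retain each whole arm program independently with probability $s$, and for (iii) you price the capacity rows and solve each local step as an LP, as in Theorem~\ref{lem:perarm}, while your remarks on slow arms as mixtures of nonadaptive play counts and on fast arms ignoring outcomes they already hold spell out what the paper's one-line projection argument leaves implicit.
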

\begin{proof}
(i) Project an admissible policy onto each arm. The other arms' latent
variables and tapes, together with the policy coins, supply randomness
independent of that arm. Its projected program preserves reward,
workload, caps, budget and boundary reports under relaxed fast
feedback. The expected workloads sum to at most $T_e$ in episode $e$.
(ii) Retain each entire arm program independently with probability
$s$ and otherwise make no plays of that arm. Reward and every
episode's expected workload are multiplied by $s$.
(iii) Price the capacity constraints as below.
\end{proof}

The conditional reward law is fixed within an episode. The
relaxation therefore uses one capacity constraint per episode,
without a within-episode time index \citep[Claim~1]{GM13}.

\paragraph{The LP on states.}
Join each arm's local graphs through their handoff kernels. Let
$\mathcal V_i$ be its acyclic graph over episodes $1,\ldots,N$,
$o_i$ its initial node, and $e(u)$ the episode of node $u$.
Write $p_i(u,v)$ for play transitions and $h_i(u,v)$ for handoffs to
the next episode. Each allowed play and each handoff has total
outgoing probability one. Handoffs after episode $N$ enter a terminal state
of value zero. Its incoming mass need not be included as a variable.
The finite LP is
\begin{equation}\label{eq:lp-state}
\begin{aligned}
 \LP_N(\mathbf C,B)=\max\ &\sum_i\sum_{u\in\mathcal V_i}r_i(u)w_i(u)\\
 \text{s.t.}\quad
 f_i(u)& \leq \mathbf1_{\{u=o_i\}}+
 \sum_v\bigl[p_i(v,u)w_i(v)+h_i(v,u)z_i(v)\bigr],\\
 f_i(u) & \leq 1, \\
 w_i(u)+z_i(u)&=f_i(u),\qquad f_i,w_i,z_i\ge0,\\
 \sum_i\sum_{u:e(u)=e}w_i(u)&\le C_e\qquad(e=1,\ldots,N).
\end{aligned}
\end{equation}
There is one unit of initial mass per arm; later episode roots
receive handoff mass from the preceding layer.
Complete a feasible solution in topological order: keep $w_i$,
set $\bar f_i$ to the full initial and incoming probability, using
the completed handoffs, and set $\bar z_i=\bar f_i-w_i$.
Induction gives $\bar f_i\ge f_i$ and $\bar z_i\ge z_i$.
The completed arm program has the same play probabilities, reward
and workload. All its boundary-state distributions have total mass one.
Forbidden plays have $w_i(u)=0$. The state records the remaining
budget and current play count, so these restrictions enforce both
pathwise limits. Only the capacity constraints couple the arms.

Let $\lambda_e\ge0$ be the dual multiplier of episode $e$'s capacity
constraint, and write $\lambda=(\lambda_1,\ldots,\lambda_N)$.
For fixed prices, $U^\lambda_{e,i}(S)$ is the maximum expected
remaining reward of arm $i$ from state $S$, less $\lambda_j$ for
each play in episode $j\ge e$. Put $U^\lambda_{N+1,i}=0$.

\begin{theorem}[Dual and episode decomposition]\label{prop:dual-meta}\label{lem:perarm}
For finite state graphs, the learning LP \eqref{eq:lp-state} has dual
\begin{equation}\label{eq:dual-meta}
 \LP_N(\mathbf C,B)=\min_{\lambda\ge0}\left\{
 \sum_{e=1}^N\lambda_eC_e+
 \sum_iU^\lambda_{1,i}(\Q_{1,i},B_i,\varnothing)\right\}.
\end{equation}
At fixed prices, the nonnegative node values $v_i(u)$ satisfy
\begin{equation}\label{eq:state-dual}
 v_i(u)\ge r_i(u)-\lambda_{e(u)}+\sum_vp_i(u,v)v_i(v),\qquad
 v_i(u)\ge\sum_vh_i(u,v)v_i(v),
\end{equation}
with the play inequality omitted where playing is forbidden.
Their smallest feasible values are computed by a backward cascade
of $N$ single-episode LP layers, one local LP per incoming state.

Fix arm $i$ and $\lambda$, suppress these indices in $U_e$, and put
$U_{N+1}=0$. Root the episode-$e$ local graph at $S$. For
$f,w,z$ satisfying \eqref{eq:local-flows} and next-state
probabilities $q$ from \eqref{eq:boundary-flow},
\begin{equation}\label{eq:bellman-meta}
 U_e(S)=\max_{f,w,z}\left\{
 \sum_u(r(u)-\lambda_e)w(u)+\sum_{S'}q(S')U_{e+1}(S')\right\}.
\end{equation}
The first sum is priced current reward; the second is continuation.
With future values fixed, put
$c_e(S,u)=\sum_{S'}H_e(S,u,S')U_{e+1}(S')$.
This is GM13's priced single-arm LP \citep[Section~3.4]{GM13},
with stopping value $c_e(S,u)$ and recursion
\begin{equation}\label{eq:play-or-stop}
 v(u)=\max\left\{c_e(S,u),\quad
 r(u)-\lambda_e+\sum_{u'}p(u,u')v(u')\right\}.
\end{equation}
Only stopping is allowed at forbidden plays, and $U_e(S)=v(u_0)$.
A slow arm chooses a nonadaptive play count. Each local maximizer
can be completed without loss, so $q$ is a probability distribution.
\end{theorem}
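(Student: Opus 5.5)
The plan is to derive everything from finite LP duality applied to \eqref{eq:lp-state}. Then I will show that, at fixed prices, the inner minimization over node values is solved by backward induction, and that this induction is exactly the GM13 priced local LP with stopping values.

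\emph{Step 1 (full dual).} Attach multipliers $v_i(u)\ge0$ to the reach inequalities $f_i(u)\le\mathbf1_{\{u=o_i\}}+\sum_v[\cdots]$. Attach $\lambda_e\ge0$ to the capacity constraints. The constraints $f_i(u)\le1$ are redundant after completion, so I drop them: completion keeps $w_i$ and yields mass-one boundary distributions, and dropping them changes neither the primal value nor feasibility. I then eliminate $f_i=w_i+z_i$. The primal variables are now $w_i(u),z_i(u)\ge0$, and their dual constraints are exactly \eqref{eq:state-dual}. The coefficient of $w_i(u)$ gives $v_i(u)\ge r_i(u)-\lambda_{e(u)}+\sum_vp_i(u,v)v_i(v)$, and the coefficient of $z_i(u)$ gives the handoff inequality. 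Where playing is forbidden, $w_i(u)$ is absent, so the play inequality is omitted. The dual objective is $\sum_e\lambda_eC_e+\sum_iv_i(o_i)$. The primal is feasible (take $w=0$) and bounded, since each $w_i(u)\le1$ on a finite graph. Strong LP duality therefore gives equality with the minimum of this objective over $(\lambda,v)$.

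\emph{Step 2 (inner minimization).} For fixed $\lambda$, the dual constraints decouple across arms. For each arm they form a system of Bellman inequalities on a DAG, and the pointwise smallest feasible solution is the backward recursion $v_i(u)=\max\{\sum_vh_i(u,v)v_i(v),\,r_i(u)-\lambda_{e(u)}+\sum_vp_i(u,v)v_i(v)\}$. Terminal values are zero after episode $N$, and nonnegativity holds automatically because stopping at the terminal gives $0$. By induction in reverse topological order, any feasible $v$ dominates this recursion. By the standard finite-horizon DP argument, the recursion equals the maximal priced reward $U^\lambda_{e(u),i}$ from $u$. Hence minimizing $v_i(o_i)$ yields $U^\lambda_{1,i}(\Q_{1,i},B_i,\varnothing)$, which gives \eqref{eq:dual-meta}.

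\emph{Step 3 (episode cascade).} I order the recursion by episode layers. Within layer $e$ and an incoming root $S$, the values $U_{e+1}$ at next-episode roots are already fixed. The handoff term at a local node $u$ is then $\sum_{S'}H_e(S,u,S')U_{e+1}(S')=c_e(S,u)$, which is \eqref{eq:play-or-stop}. I then apply LP duality once more to the single-episode flow LP \eqref{eq:local-flows} with objective $\sum_u(r(u)-\lambda_e)w(u)+\sum_uz(u)c_e(S,u)$. Its dual is the same recursion, which is GM13's priced LP. Using \eqref{eq:continuation-flow}, the second sum equals $\sum_{S'}q(S')U_{e+1}(S')$, and this establishes \eqref{eq:bellman-meta} with $U_e(S)=v(u_0)$. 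For slow arms, the local graph has no outcome branches before handoff, so the local choice reduces to a nonadaptive play count.

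\emph{Step 4 (completion).} Finally I apply Proposition~\ref{prop:lawfree} to each local maximizer. Completion keeps $w$ and raises $z$ to $\bar z$, and the added stopping mass earns $c_e(S,u)\ge0$. The objective therefore does not decrease, and $q$ becomes a probability distribution.

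The main obstacle I expect is the bookkeeping in Step 1. The $f\le1$ constraints must be justified as redundant, or else carried with their own multipliers and shown to be zero at the optimum. I also need the handoff coefficients to land correctly in the $z$-constraints when the pending-report kernels are random. I would try the redundancy route first, using the completion argument stated just before the theorem.
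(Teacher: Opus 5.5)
Your proposal is correct and follows essentially the paper's proof: you drop the redundant $f\le1$ bounds, eliminate $f=w+z$, read off \eqref{eq:state-dual} from the multipliers, obtain the smallest node values by backward induction and identify them with $U^\lambda$, and use $c_e(S,u)\ge0$ for completion. The paper instead justifies redundancy directly from acyclicity and unit initial mass, and gets \eqref{eq:bellman-meta} from \eqref{eq:flow-linear} and \eqref{eq:continuation-flow} rather than from a second local duality step, but these differences are cosmetic.
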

\begin{proof}
The acyclic graph and unit initial mass imply $f_i(u)\le1$, so
these bounds are redundant. Eliminate $f_i=w_i+z_i$ from the reach
inequalities. Their nonnegative multipliers $v_i(u)$ and the
capacity multipliers $\lambda_e$ give \eqref{eq:state-dual} and
objective $\sum_e\lambda_eC_e+\sum_iv_i(o_i)$.

At fixed future values, \eqref{eq:flow-linear} and
\eqref{eq:continuation-flow} give \eqref{eq:bellman-meta}.
Stopping at $u$ gives $c_e(S,u)$; playing gives the other term in
\eqref{eq:play-or-stop}. Backward induction on each local DAG and
then on episodes gives the smallest node values and identifies
$v_i(o_i)$ with $U^\lambda_{1,i}$ at its initial state.
Since $c_e(S,u)\ge0$, stopping dominates discarding probability,
which proves completion. Finite LP duality gives \eqref{eq:dual-meta};
complementary slackness supplies mixtures of maximizing arm programs
that satisfy the capacities and attain the primal value.
\end{proof}

The prices are optimized jointly across episodes. At fixed prices,
backward induction assigns values to next-episode starting states;
the local LP chooses their probability masses. A zero-play policy
still receives reports due at the boundary and retains its future
options. Every nonnegative price vector gives an upper bound through
\eqref{eq:dual-meta}.

\begin{proposition}[Size]\label{prop:size}
Let $V_*=\sum_i|\mathcal V_i|$ and let $E_*$ count the nonzero play
and handoff transitions. Given their probabilities and reward
coefficients, the LP has $O(V_*+N)$ variables and constraints and
$O(V_*+E_*+N)$ nonzero coefficients. Fixed-price backward induction
takes $O(V_*+E_*)$ operations.
For stationary priors, fixed finite reward alphabets, fixed caps and
fixed report lag, a sufficient-state graph has size polynomial in
$N$ and the budgets.
\end{proposition}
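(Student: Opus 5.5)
The first part is a direct count on the finite LP \eqref{eq:lp-state}. Each node $u\in\mathcal V_i$ carries three variables $f_i(u),w_i(u),z_i(u)$. It also carries one reach inequality, one bound $f_i(u)\le1$ (redundant by Theorem~\ref{prop:dual-meta}, but harmless to count), and one equality $w_i+z_i=f_i$. The $N$ capacity constraints add $N$ rows. This gives $O(V_*+N)$ variables and constraints. For nonzeros, each reach inequality has one coefficient for $f_i(u)$, one for the indicator, and one for each nonzero incoming play transition $p_i(v,u)$ or handoff $h_i(v,u)$. Summed over $u$, this is $O(V_*+E_*)$. Each equality row contributes three entries, and the capacity rows together contain each $w_i(u)$ exactly once. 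The total is therefore $O(V_*+E_*+N)$. For the fixed-price backward induction, I process each $\mathcal V_i$ in reverse topological order using \eqref{eq:play-or-stop}. I evaluate $c_e(S,u)$ by summing over the outgoing handoff edges, and the play branch by summing over the outgoing play edges. Each node and edge is touched a constant number of times, giving $O(V_*+E_*)$. The $N$ prices are read once per node.

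The second part is the polynomial sufficient-state bound, and the work is choosing the state representation. Under stationarity, the belief $\Q_{e,i}$ about $P_i$ depends on the reported history only through its likelihood \eqref{eq:handoff}. By Proposition~\ref{prop:history} and exchangeability, this likelihood is a product over reported episodes of $\ell_i(P_i;h_{i,u})$. Within an episode, $\ell_i(P;h)$ depends only on the count vector of $h$, since plays are i.i.d.\ given the quality. Across episodes, the product is invariant under permuting episodes. Hence a sufficient statistic for the belief is the multiset of per-episode count vectors. Each count vector has total at most $L_i$ over an alphabet of size $|\mathcal Y_i|$, so there are at most $M=\binom{L_i+|\mathcal Y_i|}{|\mathcal Y_i|}$ types, a constant once the alphabet and cap are fixed. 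A multiset of at most $N$ episodes over $M$ types is an occupation vector with at most $(N+1)^{M}$ possibilities, which is polynomial in $N$.

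The remaining components are bounded similarly. The budget takes $B_i+1$ values. The queue $\mathcal D_{e,i}$ records at most $g$ pending episode batches, each with a play count in $\{0,\ldots,L_i\}$ and a calendar-determined delivery date, giving at most $(L_i+1)^g$ possibilities, a constant. The within-episode local node adds the current in-episode count vector, at most $M$ options. Multiplying over $N$ episode layers gives $|\mathcal V_i|=O\bigl(N\cdot(N+1)^M(B_i+1)(L_i+1)^gM\bigr)$. Each node has at most $|\mathcal Y_i|$ play successors, and a bounded number of handoff successors: the reports delivered at a boundary are at most $g$ batches with constantly many outcome count vectors each. Hence $E_*$ is polynomial too.

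The main obstacle is checking that this compressed state is genuinely sufficient for the delay relaxation and the handoff kernels. Concretely, $H_e(S,u,\cdot)$ and $r(u)$ must be functions of the compressed state alone. For a fast arm this follows from the deterministic handoff \eqref{eq:handoff}. With pending slow batches, the conditional law of the delivered counts given the current belief and queue must itself depend only on the multiset statistic and the pending play counts. I would argue this from the likelihood factorization together with the fact that the calendar is outcome-independent.
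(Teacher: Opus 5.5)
Your proposal is correct and follows the paper's proof: a direct count of three variables and constantly many rows per vertex, one coefficient per transition and one capacity row per episode, and then the count vector of completed-episode histograms (at most $\binom{N+M_i}{M_i}$ of them, equivalently your $(N+1)^{M}$), together with the budget and the pending batch counts, as the polynomial stationary sufficient state. The sufficiency step you flag closes as you indicate: each pending batch is a whole slow-arm episode with its own fresh quality and no report yet delivered, so given $P_i$ its histogram has law $\mathbf q_c(P_i)$ independently of everything already reported.
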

\begin{proof}
There are three variables and a constant number of constraints per vertex, one capacity
row per episode and one coefficient per transition. Backward induction
visits each edge once.
For an alphabet of size $d_i$, there are
$M_i=\binom{\Lbar+d_i}{d_i}$ outcome histograms of at most $\Lbar$
plays. Write $\ell_i(P_i;\boldsymbol s)$ for the likelihood of any
ordering of histogram $\boldsymbol s$. If $n_{\boldsymbol s}$ completed episodes have histogram
$\boldsymbol s$, their likelihood is
$\prod_{\boldsymbol s}\ell_i(P_i;\boldsymbol s)^{n_{\boldsymbol s}}$.
Thus at most $\binom{N+M_i}{M_i}$ count vectors determine the
stationary posterior. Record also the budget and the counts and
delivery dates of at most $g-1$ pending batches. These contribute
polynomially many states at fixed lag.
\end{proof}

The bound concerns the supplied state graphs and their coefficients.
Continuous observations may require an infinite graph or a specified
finite approximation.

With one arm and $C_e\ge\min\{L_1,T_e\}$ in every episode, the
capacity constraints are redundant and $\lambda_e=0$ is optimal.
The program gives the Bayes-optimal learner over the allowed local
policies. Slow-arm policies are physically admissible; fast-arm
policies use the delay relaxation unless feedback is undelayed.

\subsection{Weighting use and generalization}\label{sec:weights}

The planner can value current use and later learning differently.
Fix episode weights $\eta_e\ge0$ and bounded terminal values
$\Psi_i\ge0$ on arm $i$'s boundary states after episode $N$.
The terminal value may represent deployment after learning.
The weighted LP $\LP_N^\eta(\mathbf C,B)$ retains the final states
$t$ and maximizes, under the same constraints,
\begin{equation}\label{eq:weighted-objective}
 \sum_i\left[\sum_{u\in\mathcal V_i}
 \eta_{e(u)}r_i(u)w_i(u)+\sum_t\Psi_i(t)f_i(t)\right].
\end{equation}
Unit weights and zero terminal values recover \eqref{eq:lp-state}.

\begin{proposition}[Weighted learning LP]\label{prop:weights}
For finite state graphs the dual has the form \eqref{eq:dual-meta}, with
reward coefficient $\eta_er(u)$ in episode $e$ and terminal value
$U_{N+1}=\Psi_i$ for arm $i$.
\end{proposition}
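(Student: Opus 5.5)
The plan is to repeat the proof of Theorem~\ref{prop:dual-meta} with the objective coefficients changed, and to check that the terminal values enter exactly where $U_{N+1}=0$ entered before. The weighted LP keeps every constraint of \eqref{eq:lp-state}. It adds, for each arm $i$, final states $t$ reached by handoffs after episode $N$. Their reach inequalities $f_i(t)\le\sum_v h_i(v,t)z_i(v)$ are now retained. The objective \eqref{eq:weighted-objective} replaces $r_i(u)$ by $\eta_{e(u)}r_i(u)$ and adds $\sum_t\Psi_i(t)f_i(t)$. As before, the acyclic structure with unit initial mass makes the bounds $f_i\le1$ redundant. I would eliminate $f_i=w_i+z_i$ at nonterminal nodes and keep $f_i(t)$ as a variable at final states.

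Next, attach multipliers $v_i(u)\ge0$ to the reach inequalities, including those at final states, and $\lambda_e\ge0$ to the capacity rows. Reading off dual constraints column by column gives three families. The column $w_i(u)$ gives $v_i(u)\ge\eta_{e(u)}r_i(u)-\lambda_{e(u)}+\sum_vp_i(u,v)v_i(v)$. The column $z_i(u)$ gives $v_i(u)\ge\sum_vh_i(u,v)v_i(v)$, where for $e(u)=N$ the sum runs over final states. The column $f_i(t)$ gives $v_i(t)\ge\Psi_i(t)$. The dual objective is $\sum_e\lambda_eC_e+\sum_iv_i(o_i)$, and finite LP duality gives equality with the primal optimum.

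For the decomposition, fix $\lambda$ and minimize over $v$. The smallest feasible values are computed backward. At final states they are $v_i(t)=\Psi_i(t)$; this step requires $\Psi_i\ge0$, so that the sign constraint $v_i\ge0$ is not binding. The layers $e=N,\ldots,1$ then follow from the recursion \eqref{eq:play-or-stop}, with $r$ replaced by $\eta_er$ and stopping value $c_e(S,u)=\sum_{S'}H_e(S,u,S')U_{e+1}(S')$, starting from $U_{N+1}=\Psi_i$. Backward induction identifies $v_i(o_i)$ with the fixed-price value $U^\lambda_{1,i}$ of the weighted objective, which yields the form \eqref{eq:dual-meta}.

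Completion still holds: every stopping value $c_e(S,u)$ is nonnegative because $\Psi_i\ge0$ and $\eta_e\ge0$. Stopping therefore weakly dominates discarding probability, so $q$ remains a probability distribution. This part is routine. The only step needing care is the treatment of final states as variables rather than an absorbing zero-value sink, so that $\Psi_i$ enters as a dual lower bound and not as a constant.
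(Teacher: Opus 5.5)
Your proposal is correct and follows the paper's route: the paper's proof simply applies Theorem~\ref{lem:perarm} with reward coefficients $\eta_{e(u)}r_i(u)$ and terminal values $\Psi_i$, and you carry out that substitution explicitly, including the dual row $v_i(t)\ge\Psi_i(t)$ that takes the place of $U_{N+1}=0$. One minor point: nonnegativity of the stopping values needs only $\Psi_i\ge0$, since the zero-play option already gives $U_{e+1}\ge0$.
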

\begin{proof}
Apply Theorem~\ref{lem:perarm} with these coefficients and terminal
values.
\end{proof}

To separate early use from later learning, fix a boundary $e_0<N$.
Let $V_{\le e_0}$ and $V_{>e_0}$ be the early and late values,
including the terminal payoff in the latter. Write $\LP_N^\eta$
for the scalar-weight objective $V_{\le e_0}+\eta V_{>e_0}$.
Its value is convex and piecewise linear in $\eta$; between
breakpoints its slope is the selected program's late value.
Equivalently, $\eta$ prices a feasible lower target $v_{\rm late}$:
\begin{equation}\label{eq:late-target-price}
 \max\{V_{\le e_0}:V_{>e_0}\ge v_{\rm late}\}
 =\min_{\eta\ge0}
 \{\LP_N^\eta(\mathbf C,B)-\eta v_{\rm late}\}.
\end{equation}
Both sides use the original capacity and arm constraints.

For $\eta>0$, the episode-$e_0$ local objective is
$R_\pi-\lambda_{e_0}W_\pi+\eta C_F(\pi)$, with the late program
valued at unit weight and prices $\lambda_e/\eta$ for $e>e_0$.
Thus one experiment improves on another exactly when
$\Delta R+\eta\Delta C>\lambda_{e_0}\Delta W$.
The weight values the later decisions improved by the experiment;
the capacity price charges for the plays it requires.
At $\eta=0$ and $e_0=1$, the objective values only the current episode.

\paragraph{A specified safe alternative.}
Some examples add an always-available action with deterministic
reward $a\in(0,1]$ per step and no binding play budget. It competes
with reward plays; ordinary waiting still earns zero.
In that extension, eliminating the safe action adds
$a\sum_eT_e$ to the objective and replaces each unknown-arm reward
coefficient $r_i(u)$ by $r_i(u)-a$. The total play price is then
$a$ plus the nonnegative capacity multiplier, or $a\eta_e$ plus
that multiplier for weighted rewards. We use $\LP_N$ and $\OPT_N$
for the specified instance, including this action when present.
The general execution and program statements below use zero-reward
waiting; examples specifying $a>0$ include this safe alternative.
Thinning the arm programs leaves the safe action available, so
Proposition~\ref{prop:relax}(ii) still holds.

Example~\ref{ex:price-test} makes the distinction exact: the learning
LP pays for a test that an episode planner never makes, avoiding a
$\Theta(N)$ loss. The information price is constant in that instance.
Episode weights also price multi-play experiments and purchased
reports through their later value (Examples~\ref{ex:spread}
and~\ref{ex:paid-knowledge}).

\paragraph{Execution of weighted plans.}
For weights in $[0,1]$, Theorems~\ref{thm:meta}
and~\ref{thm:promise-less} use $\E\sum_e\eta_eV_e$, with the
planned value weighted alike. A terminal value depending on the
retained state has its planned expectation whenever execution
preserves that state's law. If the sum of terminal values is bounded
by $\bar\Psi$, a completion failure adds at most
$\bar\Psi q_{\rm fail}$ to the failure charge.
The policy certificate of Theorem~\ref{thm:common-certificate}
uses the same weighted coefficients. The rate statements use unit
weights.

The relaxation fits workloads only in expectation. The next section
enforces the uniform hard horizon while preserving the learning
transitions used to value future decisions.

\section{From the LP to a policy}\label{sec:unroll}\label{sec:execmeta}

Planned restlessness lets us interleave completed arm programs
without changing the learning states they pass to the next episode.
Execution must fit those plays into the hard horizon. We separate
current reward, continuation and resource use, then give two ways to
realize the planned learning trajectory. One completes all prescribed
plays using a reserve and charges failed completion. The other
protects only the reports used by later programs and truncates
reward-only tails. The single-episode bound quantifies that reward
loss.
All input arm programs are extracted from completed LP solutions
as in Proposition~\ref{prop:lawfree}. Waiting earns zero throughout
the execution statements.

\paragraph{Reward, continuation and resource use.}
Fix an episode $e$, the incoming states of all arms, a nonnegative
future value function $F=\sum_iF_i$, where $F_i$ values arm $i$'s next
state, and the planned local policies $\pi=(\pi_{e,i})_i$. Write
\[
 R_\pi=\E[\text{current reward}],\qquad
 W_\pi=\E[\text{plays}],\qquad
 C_F(\pi)=\E[F(S^\pi)]
\]
for the episode as a whole: the reward of all arms, their plays,
and the value of their next states. 
Let $\widehat\pi$ be the feasible joint policy produced by executing
the planned local policies. Its expected reward and workload are
defined as above. Its continuation is evaluated at the state retained
by each arm's program: the data it conditions on, its budget ledger
and its report queue.
Physical plays
and budgets enter only the hard limits. An $(\alpha_e,\beta_e)$
realization preserves reward and continuation if
\begin{equation}\label{eq:coordinate-bound}
 R_{\widehat\pi}\ge\alpha_e R_\pi,\qquad
 C_F(\widehat\pi)\ge\beta_e C_F(\pi),
 \qquad 0\le\alpha_e,\beta_e\le1.
\end{equation}

If $W_{\widehat\pi}\le\rho W_\pi$ with $\rho\ge0$, the coordinate bounds give,
for $\eta,\lambda\ge0$,
\begin{equation}\label{eq:coordinate-priced}
 R_{\widehat\pi}+\eta C_F(\widehat\pi)-\lambda W_{\widehat\pi}
 \ge\alpha_eR_\pi+\eta\beta_eC_F(\pi)-\rho\lambda W_\pi.
\end{equation}
For $\alpha_e>0$, factoring out $\alpha_e$ reprices continuation
at $\eta\beta_e/\alpha_e$ and plays at $\rho\lambda/\alpha_e$.
Thus reward and workload retention compare priced values at
different prices. GM13's exchangeability argument gives the
following reward--workload bound before subtracting any price.

\begin{lemma}[Reward truncation under any true prior]\label{lem:truncation}
Let $\pi$ be a nonanticipating single-arm policy, possibly designed
using a prior $\widehat P$. Couple it to a nonanticipating truncation
$\pi^-$ on the same episode quality, outcome tape and random coins.
Write $\Work^P(\pi)=\E_P N_\pi$ for expected play consumption.
Suppose that, on every path,
$bN_\pi\le N_{\pi^-}\le N_\pi$ for a fixed $b\in[0,1]$.
Then every true prior $P$ on the quality space $\Theta_i$ satisfies
\[
 R^P(\pi^-)\ge bR^P(\pi),\qquad
 \Work^P(\pi^-)\le\Work^P(\pi).
\]
The conclusion also holds conditional on external randomness
independent of the arm, with $b$ fixed by that randomness.
\end{lemma}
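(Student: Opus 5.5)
The workload inequality is immediate, so the plan concentrates on the reward bound. Because $N_{\pi^-}\le N_\pi$ holds on every path, taking expectations under any $P$ gives $\Work^P(\pi^-)\le\Work^P(\pi)$ at once.

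For the reward, we condition on the episode quality $\theta$ and on the policy coins. We then use the exchangeability argument of GM13: given $\theta$, the tape entries $Y_{i,j}$ are i.i.d.\ with mean $\mu(\theta)$. Write the reward of $\pi$ as $\sum_{j\ge1}Y_j\mathbf 1\{N_\pi\ge j\}$. Since $\pi$ is nonanticipating, the event $\{N_\pi\ge j\}$ is determined by $Y_1,\dots,Y_{j-1}$ and the coins. It is therefore independent of $Y_j$ given $\theta$. Hence
\[
\E[\text{reward of }\pi\mid\theta]=\mu(\theta)\,\E[N_\pi\mid\theta],
\]
which is Wald's identity. The sum is finite because the cap bounds $N_\pi\le L$.

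The same identity applies to $\pi^-$, and this requires $\pi^-$ to be nonanticipating on the same tape. That is exactly the coupling hypothesis, and the decision to make the $j$-th play may also use $\pi$'s coins. Combining the identity with the pathwise bound $N_{\pi^-}\ge bN_\pi$ gives
\[
\E[\text{reward of }\pi^-\mid\theta]=\mu(\theta)\,\E[N_{\pi^-}\mid\theta]\ge b\,\mu(\theta)\,\E[N_\pi\mid\theta].
\]
Here we use $\mu(\theta)\ge0$, which holds because rewards lie in $[0,1]$. Integrating over $\theta\sim P$ then yields $R^P(\pi^-)\ge bR^P(\pi)$. The prior $P$ enters only through this last integration, and the design prior $\widehat P$ enters only through the policy's map from histories to actions. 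So the conclusion holds for every true prior $P$.

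The main obstacle is justifying the independence of $\{N\ge j\}$ from $Y_j$ when the policy is delayed or its truncation uses side information. Nonanticipation must be read as "the decision to make play $j$ is measurable with respect to $(Y_1,\dots,Y_{j-1})$, the coins, and external randomness independent of the arm". This reading holds for delay-respecting policies a fortiori, since they see at most those outcomes.

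For the conditional version, we fix the external randomness independent of the arm, for example other arms' tapes and the interleaving. Independence preserves the conditional i.i.d.\ law of the tape given $\theta$. The argument above then goes through verbatim, with $b$ measurable with respect to that randomness.
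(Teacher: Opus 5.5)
Your proof is correct and follows essentially the same route as the paper. Nonanticipation gives the latent-mean (Wald) identity $R^P(\sigma)=\E_P[\mu_i(\theta)N_\sigma]$ for both $\pi$ and $\pi^-$; multiplying the pathwise bound $N_{\pi^-}\ge bN_\pi$ by $\mu_i(\theta)\ge0$ and integrating then gives the reward bound, and the workload bound and the conditional version are handled as in the paper.
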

\begin{proof}
Fixing the quality, outcome tape and policy coins fixes both play
counts. The hypothesis says that the truncated policy retains at
least a fraction $b$ of the original policy's plays on every such run.

For either policy $\sigma$, the decision to make play $j$ uses only
earlier outcomes and independent coins. Conditional on $\theta$,
its reward $Y_j$ therefore has mean $\mu_i(\theta)$ independently
of that decision. Summing over the possible plays gives
\[
 R^P(\sigma)
 =\sum_{j=1}^{L_i}\E_P[
   \mathbf1_{\{N_\sigma\ge j\}}Y_j]
 =\E_P[\mu_i(\theta)N_\sigma].
\]
Now multiply $N_{\pi^-}\ge bN_\pi$ by $\mu_i(\theta)\ge0$
and take expectations. This proves the reward bound;
$N_{\pi^-}\le N_\pi$ gives the workload bound.
The argument also holds conditional on independent external
randomness and does not depend on the design prior.
\end{proof}

The next-state law determines how episode reward guarantees
compose. Let $R_e$ and $\widehat R_e$ be the conditional expected
rewards of the planned and realized episodes, and put
$J_{\rm plan}=\sum_e\E R_e$. If execution reproduces the planned next-state
law at every boundary, reward guarantees with fixed factors
$\alpha_e$ add:
\begin{equation}\label{eq:coordinate-composition}
 \E\sum_e\widehat R_e\ge\sum_e\alpha_e\E R_e
 \ge(\min_e\alpha_e)J_{\rm plan},
\end{equation}
Matching the incoming-state laws lets us sum the episode expectations.
The reward factor is
therefore paid once. The constructions below preserve the planned
states, or charge the event on which completion fails.

\subsection{Delayed execution and the next learning state}\label{sec:executors}\label{sec:execution}\label{sec:handoff}

\begin{definition}[Executor]\label{def:executor}
A map $\pi\mapsto\pi^{\mathrm x}$ is an \emph{executor} with parameters
$(\epsilon,w)$ if, for every allowed local policy $\pi$ making $m$ plays:
\begin{enumerate}
\renewcommand{\theenumi}{\roman{enumi}}
\renewcommand{\labelenumi}{(\theenumi)}
\item\label{item:executor-tape} It supplies $\pi$ its outcomes in tape
order, preserving the law of its decisions and planned data.
\item\label{item:executor-work} Its number $X$ of plays satisfies
$m\le X\le\min\{\text{cap},(1+\epsilon)m\}$ on every path.
\item\label{item:executor-prefix} After at most $w$ of its own steps,
each remaining own step is a play.
\item\label{item:executor-interleaving} Its actions are determined by
the arm's tape, its coins and $\pi$. Interleaving changes only when
those actions occur.
\end{enumerate}
\end{definition}

Direct execution is an executor with $(\epsilon,w)=(0,0)$ for
undelayed arms and for slow arms' nonadaptive local policies.
For delayed fast arms, buffered execution makes nonadaptive blocks, waits for their
reports, and supplies outcomes to the planned policy in tape order.
Extra outcomes are excluded from its planned data.

\begin{definition}[Buffered execution]\label{def:buffered-execution}
Let $\pi$ be an undelayed policy for arm $i$ and
$\epsilon\in(0,1]$. The buffered executor $A_\epsilon(\pi)$
makes actual plays and supplies their reported outcomes to $\pi$,
at most once each and in play order.

Suppose $\pi$ has consumed all $q$ outcomes generated so far and
requests another play. Make a nonadaptive block of
\[
 b=\min\{\max\{1,\lceil\epsilon q\rceil\},L_i-q\}
\]
plays, one per step. Wait for their reports. Supply the outcomes to
$\pi$ one at a time until it stops, commits, or requests another
block. If it commits inside a block, credit unused block plays
toward its committed tail. Make any remaining tail plays without
waiting for further reports. Extra plays beyond the original policy
are excluded from its planned data.

For a general policy, switch modes once
$q\ge\lceil\delta_i/\epsilon\rceil$. Make up to $\delta_i$
further plays and enter the delay-free mode of
\citet[Lemma~19]{GM13}. At each subsequent request, supply the
oldest unused outcome. Also make a new play on that step if the
cap has not been reached. Stop making new plays when $\pi$ stops
or the cap is exhausted.
\end{definition}

\begin{lemma}[Buffered executor, prefix bounds proved in Appendix~\ref{app:buffered-prefix}]\label{lem:buffered-executor}
$A_\epsilon(\pi)$ reproduces the law of $\pi$'s decision path and
satisfies $R(A_\epsilon(\pi))\ge R(\pi)$. On every path it makes
at most $\min\{L_i,(1+\epsilon)m\}$ plays, where $m$ is the number
of plays made by $\pi$.

For a commit policy, the prefix consists of its own play and wait
steps before the committed plays. For a general policy, it also
includes the plays that start delay-free mode. Its length is at most
\[
 W_i=\begin{cases}
 \delta_iB_\epsilon(k)+(1+\epsilon)k+1,
     &\text{depth-}k\text{ commit},\\
 5(\delta_i+1)(1+\ln(\delta_i+1))/\epsilon,
     &\text{general},
 \end{cases}
\]
where $B_\epsilon(k)=
\min\{k,(3+1.45\ln(1+\epsilon k))/\epsilon\}$.
After the prefix, each remaining own step is a play.
Hence $A_\epsilon$ is an executor with parameters $(\epsilon,\max_iW_i)$.
\end{lemma}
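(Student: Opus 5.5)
The proof has four parts, taken in order: law preservation, reward, the play count on every path, and the prefix bounds. Each part follows the two modes of Definition~\ref{def:buffered-execution} separately.

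\emph{Law and reward.} The executor hands $\pi$ actual outcomes one at a time, in tape order, and each only after its report has arrived. So every decision of $\pi$ uses the same outcome prefix and the same coins as in undelayed execution, and $\pi$'s decision path has the same law. Extra outcomes from block overshoot are never supplied, so the planned data agree as well. For the reward, couple $A_\epsilon(\pi)$ with $\pi$ on a common quality and tape. On every path the executor makes at least as many plays as $\pi$. Applying the argument of Lemma~\ref{lem:truncation} with $b=1$, with the roles of the two policies reversed, then gives $R(A_\epsilon(\pi))=\E[\mu(\theta)X]\ge\E[\mu(\theta)m]=R(\pi)$. That argument needs each play decision to be nonanticipating, which holds because the executor's decisions use only reported outcomes and independent coins.

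\emph{Play count.} The cap is enforced explicitly: blocks are truncated at $L_i-q$, and delay-free mode stops at the cap. In block mode, a new block is requested only after $\pi$ has consumed all $q$ outcomes generated so far, so $m\ge q+1$. The overshoot from the final block is at most $\max\{1,\lceil\epsilon q\rceil\}-1\le\epsilon q\le\epsilon m$. Committed tails reuse the unused block plays, so they add no overshoot. After the mode switch, the up to $\delta_i$ lookahead plays are at most $\epsilon q\le\epsilon m$, since $q\ge\lceil\delta_i/\epsilon\rceil$. From then on, each step adds one new play and consumes one old outcome, so the surplus stays fixed.

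\emph{Prefix bounds.} This is the main obstacle. For a depth-$k$ commit policy, the wait steps come only from blocks, and each block costs at most $\delta_i$ waiting steps. The block sizes grow geometrically: from $q$ the next size is $\max\{1,\lceil\epsilon q\rceil\}$, so $q$ roughly multiplies by $1+\epsilon$ once $q\ge1/\epsilon$. Reaching $k$ adaptive plays therefore takes at most about $1/\epsilon$ unit blocks, plus $\ln(1+\epsilon k)/\ln(1+\epsilon)\le1.45\ln(1+\epsilon k)/\epsilon$ geometric blocks, using $\ln(1+\epsilon)\ge\epsilon/1.45$ for $\epsilon\le1$. The ceilings add a small constant number of further blocks, which I expect to absorb the constant $3$. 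The count is also trivially at most $k$. This gives $B_\epsilon(k)$ blocks. Adding the at most $(1+\epsilon)k+1$ plays in the prefix gives $W_i$.

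For a general policy, the prefix ends once delay-free mode is entered, and after that every step is a play. Before the switch, $q$ grows to $Q=\lceil\delta_i/\epsilon\rceil$. The same block count applies with $k=Q$, giving $O(\ln(1+\delta_i)/\epsilon)$ blocks, each with delay $\delta_i$. Adding $Q+\delta_i$ plays, the sum is bounded by $5(\delta_i+1)(1+\ln(\delta_i+1))/\epsilon$ with slack. I would fix the constants $3$, $1.45$ and $5$ by a crude bookkeeping of the ceilings in Appendix~\ref{app:buffered-prefix}.

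Finally, the executor's actions depend only on the arm's tape, its coins and $\pi$; interleaving shifts only when those actions happen. Together with the play-count and prefix bounds, this verifies items (i)--(iv) of Definition~\ref{def:executor} with parameters $(\epsilon,\max_iW_i)$.
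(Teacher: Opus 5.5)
Your proposal is correct and follows the paper's proof step for step: tape-order supply preserves the decision law; the latent-mean identity of Lemma~\ref{lem:truncation} together with $X\ge m$ gives the reward; the overshoot bounds $b-1<\epsilon q$ in block mode and $\delta_i\le\epsilon q$ at the switch give the play count; and the prefix bound comes from counting singleton blocks, then geometric blocks using $\ln(1+\epsilon)\ge\epsilon\ln2$, each with its wait. The bookkeeping you defer does close. The two extra blocks are absorbed by $2\le2/\epsilon$. In the general case the pre-switch plays can reach about $(1+\epsilon)\lceil\delta_i/\epsilon\rceil\le2\delta_i/\epsilon+3$ rather than your $Q$, and the constant $5$ still covers this.
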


\begin{proof}
Couple the two policies on the same quality, tape and coins. Supplying
outcomes in tape order reproduces $\pi$'s decisions. Every supplied
outcome comes from a physical play already made, so the executor makes
at least the $m$ plays requested by $\pi$. If $\pi$ stops in a block
started after $q$ outcomes, it has requested at least $q+1$ plays and
leaves at most $b-1<\epsilon q$ unused plays, unless $b=1$, when
there is no excess. At the switch to delay-free mode, the unused buffer
is at most $\delta_i\le\epsilon q$. Unused block plays count toward a
committed tail, and every block respects the cap. Thus
$m\le X\le\min\{L_i,(1+\epsilon)m\}$. The latent-mean identity in
Lemma~\ref{lem:truncation} gives $R(A_\epsilon(\pi))\ge R(\pi)$.
The actions use only this arm's tape, coins and own progress.

After the singleton blocks, each completed block grows the number of
available outcomes by a factor of at least $1+\epsilon$. Counting
these blocks and their waits gives the stated prefix bounds; the
calculation is in Appendix~\ref{app:buffered-prefix}. After commitment
or the switch to delay-free mode, each remaining own step is a play.
\end{proof}

\begin{lemma}[Placement invariance]\label{lem:placement}
Fix arm $i$'s incoming state in episode $e$. Couple its standalone
and interleaved executions on the same qualities, tapes and coins,
using the same executor. If the interleaved execution completes
within the episode, both executions earn the same reward and retain
the same state at the start of episode $e+1$.
\end{lemma}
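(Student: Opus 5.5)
The plan is a pathwise coupling argument. It uses the executor axioms of Definition~\ref{def:executor} and the fixed report calendar of Definition~\ref{def:planned}. Fix the qualities, the arm's outcome tape $(Y_{i,j}^{(e)})_j$ and its executor and policy coins. These determine the standalone run completely. In the interleaved run, the other arms' plays and the scheduler can only change \emph{when} arm $i$'s actions occur, by item~(\ref{item:executor-interleaving}). The goal is to show that the sequence of arm $i$'s own actions (play, wait, stop), the outcomes it consumes and the information it conditions on are identical in the two runs. Here ``own steps'' is indexed by the arm's internal clock rather than global time.

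The key step is an induction on the arm's own action index $k$. Suppose the first $k-1$ own actions coincide in both runs. Then the $j$-th play consumes tape entry $Y_{i,j}^{(e)}$ in both, because plays consume the tape in order. Next I compare the reports available at own step $k$. A play reports $\delta_i$ global steps later within the episode. Interleaving can only insert other arms' steps between arm $i$'s own steps, so the global time elapsed between two own steps is at least the own-clock time. Hence every report available at own step $k$ standalone is also available interleaved.

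The delicate point is that the interleaved run may have \emph{more} reports available. I would handle this through item~(\ref{item:executor-tape}) and the buffered construction of Definition~\ref{def:buffered-execution}. The executor supplies outcomes to $\pi$ only in tape order, at block boundaries, and after waiting for the block's reports. Its decision therefore depends on the set of reported outcomes from completed blocks, not on the arrival time. Waits are executor-own steps that end when the block's reports are in, so extra elapsed time only shortens or skips waits. I would count waits as idle own time rather than as actions. This requires reading item~(\ref{item:executor-interleaving}) as saying the action sequence excluding waits is determined by tape, coins and $\pi$. With that reading, the sequence of plays and the supplied-outcome sequence agree, which closes the induction. Slow arms and direct execution are immediate, since their actions are nonadaptive or undelayed.

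Given equal play sequences, reward equality follows: each play earns its tape outcome immediately, and the interleaved run completes within the episode by hypothesis, so both runs earn $\sum_{j\le X}Y_{i,j}^{(e)}$. For the next state $S_{e+1,i}=(\Q,b,\mathcal D)$:
\begin{itemize}
\item the budget deduction is the common play count $X$;
\item the retained planned data are the same outcomes in both runs, with extra outcomes excluded in both;
\item the pending queue is fixed by the play counts, together with the boundary delivery rule. For a fast arm, all remaining outcomes report at the next episode start. For a slow arm, delivery is at $\mathrm{rep}_i(e)$.
\end{itemize}
None of these depends on interleaving, by Definition~\ref{def:planned}. The posterior update \eqref{eq:handoff}, or the kernel \eqref{eq:boundary-flow}, is a function of this data alone, since policy weights cancel in Bayes' rule (Proposition~\ref{prop:history}). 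So the retained states coincide.

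The main obstacle is the report-availability step. I must make sure the executor never exploits earlier-than-planned reports, since that would change its decisions. I would resolve this by leaning on the block-and-wait structure and tape-order supply.
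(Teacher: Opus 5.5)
Your proof is correct and takes essentially the same route as the paper: condition~\ref{item:executor-interleaving} of Definition~\ref{def:executor} fixes the ordered plays and supplied outcomes under interleaving, and the boundary-aligned calendar of Definition~\ref{def:planned} makes the retained belief, budget and queue agree. Your extra check that earlier-arriving reports cannot change the buffered executor's decisions is a verification the paper folds into the executor axiom and Lemma~\ref{lem:buffered-executor}, not a different argument.
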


\begin{proof}
Condition~\ref{item:executor-interleaving} preserves the executor's
ordered plays and supplied observations under interleaving. The same
tape therefore gives the same rewards and retained data.
Under the report calendar of Definition~\ref{def:planned}, reports
from earlier episodes arrive only at boundaries, and any reports
still pending at the next boundary have delivery dates independent
of placement. The retained belief, remaining budget and report
queue consequently agree. The fixed within-episode reward law is
essential (Example~\ref{rem:within}).
\end{proof}

Fix an arm order for the episode. At each step, the
\emph{priority scheduler} plays the first executor requesting a
play, and waits if none does. Report delays elapse while other
arms are played.

\begin{lemma}[Handoff under completion]\label{lem:handoff}
Run arm programs $(\sigma_i)_i$ through an executor with parameters
$(\epsilon,w)$ under the priority scheduler, in any fixed order. Couple
this run with the $n$ independent standalone runs on the same qualities,
tapes and coins. Let $X_{e,i}$ be arm $i$'s standalone plays in episode
$e$ and $S_e=\sum_iX_{e,i}$. On the event
\[
 \mathcal E_{\le e}=\{S_u\le T-w\text{ for all }u\le e\},
\]
the two runs make the same plays of unknown arms, earn the same rewards
from them, and reach the same planned data, budgets and queues at the
start of episode $e+1$.
\end{lemma}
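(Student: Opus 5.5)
We argue by induction on episodes $u=1,\ldots,e$, carrying the joint claim: on $\mathcal E_{\le u}$, the interleaved and standalone runs enter episode $u+1$ with identical planned data, budgets and queues for every arm. The base case is the common initial state $(\Q_{1,i},B_i,\varnothing)$. Since $\mathcal E_{\le u}\subseteq\mathcal E_{\le u-1}$, the inductive hypothesis applies on the smaller event. Each arm program $\sigma_i$ then selects the same local policy $\pi_{u,i}$ in both runs, because its choice depends only on its own retained reports and coins. It therefore suffices to treat a single episode $u$ with fixed, common incoming states.

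Within episode $u$, the key step is to show that the priority scheduler completes every executor by step $T$ when $S_u\le T-w$. By Definition~\ref{def:executor}\ref{item:executor-interleaving}, each executor's sequence of actions (plays and waits, as functions of its own tape, coins and supplied reports) is the same as in its standalone run; interleaving only delays when those actions occur. I would bound the finishing time by charging every step of the interleaved run. A step at which some arm plays is charged to that play, and there are at most $S_u$ such steps because executors make exactly their standalone plays. At a step where all arms wait, every unfinished executor is in a wait. By \ref{item:executor-prefix}, each arm waits only during its prefix of at most $w$ of its own steps. I would then argue that the idle steps number at most $w$ in total.

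This idle-step bound is the main obstacle. Report delays elapse in real time, so an arm that spends many real steps queued behind higher-priority arms has its pending reports arrive meanwhile, and it needs no further waiting for them. My first attempt is a potential argument. At an idle step, take the last arm $j$ in the fixed order that is still unfinished. Because its reports mature in real time, its remaining required wait is at most its standalone remaining wait, and each idle step reduces that wait by one. A step spent playing other arms only advances its clock further. Every idle step therefore reduces by one the remaining prefix wait of the arm whose prefix extends latest, and that quantity starts at most $w$. This gives at most $w$ idle steps. If this charging fails for general executors, I would fall back on the real-time clock: an executor's own step count never exceeds the real elapsed time, so its prefix ends by real time $w$ plus the plays of the other arms interleaved before then. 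Either way the total is at most $S_u+w\le T$, so all arms complete within the episode.

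Given completion, Lemma~\ref{lem:placement} applied arm by arm yields the conclusion. The interleaved and standalone runs make the same plays of unknown arms, earn the same rewards, and retain the same beliefs, budgets and queues at the start of episode $u+1$. This closes the induction. The reports delivered at the boundary are the same because they depend only on the calendar of Definition~\ref{def:planned}, not on placement.
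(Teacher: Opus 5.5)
Your proof is correct, and its outer structure matches the paper's: induction on episodes, identical local policies from identical incoming states, a completion argument, then Lemma~\ref{lem:placement}. The completion step takes a different route.

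The paper argues arm by arm. Arm $i$'s waits advance in calendar time, only higher-priority plays can block it, and after at most $w$ own steps every unblocked step is a play. This yields the pathwise inequality \eqref{eq:priority-plays}, $M_{e,i}\ge\min\{X_{e,i},(T-w-\sum_{j<i}X_{e,j})^+\}$, which equals $X_{e,i}$ on $S_e\le T-w$. Your fallback is essentially this argument, with all other arms' plays in place of the higher-priority ones.

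Your primary route counts globally instead: steps with a play number at most $S_u$, and idle steps number at most $w$. The idle bound is cleanest if you charge every idle step to the arm that finishes last in calendar time. At an idle step that arm is unfinished and, since nothing is played, it is not requesting a play. The step is therefore one of its own waits, and it has at most $w$ of those. Charging each idle step to ``the last unfinished arm in the fixed order'' is also valid, but it needs exactly this observation to make the charges sum to at most $w$.

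Each route buys something different. Yours proves completion for any work-conserving scheduler and never uses the priority order. The paper's uses the order to obtain a per-arm guarantee that still says something when $S_e>T-w$. That partial-completion bound is what Theorem~\ref{thm:promise-less} and the truncation analysis of Lemma~\ref{lem:execution} later reuse.

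Both routes rest on the same reading of condition~\ref{item:executor-prefix} in the interleaved timeline: interleaving does not lengthen an arm's own waits. You justify this by noting that reports mature in calendar time.
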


\begin{proof}
Induct on $e$. Suppose the runs agree at the start of episode $e$. Then
they choose the same local policies. Index the arms in priority order
and let $M_{e,i}$ be the plays the scheduler makes of arm $i$. Plays of
higher-priority arms do not depend on arm $i$, and there are at most
$\sum_{j<i}X_{e,j}$ of them. Arm $i$'s waiting steps advance in calendar
time, and a higher-priority play blocks it only when it requests a play. By
condition~\ref{item:executor-prefix}, after at most $w$ own steps, every unblocked step is a play until
the policy ends. Hence
\begin{equation}\label{eq:priority-plays}
 M_{e,i}\ge\min\Bigl\{X_{e,i},\Bigl(T-w-\sum_{j<i}X_{e,j}\Bigr)^+\Bigr\}.
\end{equation}
On $S_e\le T-w$ the right side equals $X_{e,i}$ for every $i$, whatever
the order, so every executed policy completes.
Lemma~\ref{lem:placement} then gives the same rewards and the same
boundary states.
\end{proof}

Completion preserves the standalone arm programs' data, budgets
and queues. In particular, extra executor plays earn reward, but their outcomes
are ignored for learning: using them would change the transition law
valued by the LP. The boundary-aligned calendar is essential to this
handoff: a report from an earlier episode arriving partway through a
local policy would insert a posterior update into its decision graph,
which Lemma~\ref{lem:placement} does not cover.

\subsection{The learner and its completion guarantee}\label{sec:alg1}\label{sec:selfverify}

\begin{algo}[The learning-LP learner]\label{alg:learner}
Choose an executor for each arm: buffered execution $A_\epsilon$ for delayed fast
arms, and direct execution otherwise. Let $w<T$ bound every prefix
($\max_iW_i\le w$), with $\epsilon=w=0$ when no fast arm is delayed.
Choose a reserve $0\le s<T-w$ and plan with
\[
 C=\frac{T-w-s}{1+\epsilon},\qquad
 B_i'=\begin{cases}
 B_i,&B_i\le1/\epsilon,\\
 \lfloor B_i/(1+\epsilon)\rfloor,&B_i>1/\epsilon,
 \end{cases}
\]
with $B'=B$ when $\epsilon=0$.
\begin{enumerate}
\item Complete a feasible solution of $\LP_N(C,B')$ and extract
its arm programs $(\sigma_i)_i$. Their
objective $F$ is the \emph{value certificate}.
\item In episode $e$, let $\pi_{e,i}$ be the local policy of $\sigma_i$
given arm $i$'s planned history. Execute it with the chosen executor and
cap $\min\{L_i,\text{remaining budget}\}$. Interleave all arms with the
priority scheduler in any fixed order. No admission coins are needed:
the reduced capacity already reserves time.
\item At each boundary, update $\sigma_i$ using only the reported
outcomes of $\pi_{e,i}$'s own plays. Pending outcomes stay in the queue.
Extra executor plays earn reward, but their outcomes are excluded from
the planned data.
\item If some local policy does not complete, the prediction fails.
Switch to any admissible recovery for later episodes, built from the
actual reported data, budgets and queues. Idling is
admissible; so is re-solving the learning LP from the actual state.
\end{enumerate}
\end{algo}

\begin{theorem}[Self-verification]\label{thm:meta}
Under Assumption~\ref{ass:drift}, Algorithm~\ref{alg:learner} respects
every cap and budget on every path. Let its arm programs have objective
$F$. Run them standalone through the executor, let $X_{e,i}$ be arm
$i$'s workload in episode $e$, and let $\mathcal E=\mathcal E_{\le N}$ be
the completion event of Lemma~\ref{lem:handoff}. Any certified bound
$q_{\mathrm{fail}}\ge\Pr(\mathcal E^c)$ on a failed prediction gives
\[
 \E\sum_eV_e\ge F
 -n\Lbar Nq_{\mathrm{fail}}.
\]
This holds for every admissible recovery after a failed prediction.
If no arm has positive cap, the learner waits and $F=0$.

For a uniform completion certificate, put
$\Lambda_{\mathrm{exec}}=\log(n\Lbar N^2)$ for $n\Lbar>0$ and choose
\[
 s=\sqrt{2\Lbar T\Lambda_{\mathrm{exec}}}
   +\tfrac23\Lbar\Lambda_{\mathrm{exec}}<T-w.
\]
Then the failure charge is at most $1$, and
\[
 \E\sum_eV_e\ge F-1.
\]
For an optimal solution, put
$\omega=1-C/T=((w+s)/T+\epsilon)/(1+\epsilon)$. Then
\[
 \E\sum_eV_e\ge(1-\omega)\OPT_N(B')-1.
\]
\end{theorem}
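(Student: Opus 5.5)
The argument has three parts: admissibility, the completion-event comparison, and the concentration calculation that fixes the reserve.

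\textbf{Admissibility.} Caps hold because each executor runs with cap $\min\{L_i,\text{remaining budget}\}$ and the scheduler plays at most one arm per step. Any recovery is admissible by assumption. For budgets, the plan uses $B_i'$, and the executor makes at most $(1+\epsilon)m$ plays on every path. If $B_i>1/\epsilon$, then $(1+\epsilon)\lfloor B_i/(1+\epsilon)\rfloor\le B_i$. If $B_i\le1/\epsilon$, then at most $\epsilon B_i\le1$ extra play is possible; to exclude it I would use $\lceil\epsilon q\rceil$-blocks with $q<B_i\le1/\epsilon$, which gives $b=1$, so there is no excess. Delay-respect holds because decisions use only reported outcomes.

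\textbf{The comparison on the completion event.} Couple the executed run with the $n$ independent standalone runs on the same qualities, tapes and coins. On $\mathcal E$, Lemma~\ref{lem:handoff} shows that the executed plays coincide with the standalone ones for all $e\le N$. By Lemma~\ref{lem:buffered-executor}, the standalone executed programs earn at least the planned rewards in expectation: $R(A_\epsilon(\pi))\ge R(\pi)$ episode by episode, and the planned handoffs are preserved. So $\E[\sum_eV_e^{\mathrm{standalone}}]\ge F$. Write
\[
\E\sum_eV_e\ge\E\Bigl[\mathbf 1_{\mathcal E}\sum_eV_e^{\mathrm{st}}\Bigr]\ge F-\E\Bigl[\mathbf 1_{\mathcal E^c}\sum_eV_e^{\mathrm{st}}\Bigr],
\]
where the first inequality uses that rewards are nonnegative, which covers any recovery, including idling. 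Since each standalone episode reward is at most $\sum_iL_i\le n\Lbar$ and rewards lie in $[0,1]$, the charge is at most $n\Lbar N\Pr(\mathcal E^c)$. For $n\Lbar=0$ the learner waits and $F=0$.

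\textbf{The uniform certificate.} This is the main calculation. Conditional on the incoming states, the per-arm standalone workloads $X_{e,i}$ in episode $e$ are independent across arms and lie in $[0,L_i]$. Their expected sum is at most $(1+\epsilon)C=T-w-s$, using the executor factor and the capacity constraint in expectation. The variance is at most $\Lbar(T-w-s)\le\Lbar T$.

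The obstacle is that the capacity constraint holds only unconditionally, in expectation over the incoming states, not conditionally. I would handle this with a martingale (Freedman) bound on $S_e-\E[S_e\mid H_{e-1}]$ and apply it per episode. If that fails, I would instead bound $\Pr(S_e>T-w)$ directly, using Bernstein's inequality for the sum of independent bounded workloads around the unconditional mean. Bernstein then gives $\Pr(S_e>T-w)\le\exp(-s^2/(2(\Lbar T+\Lbar s/3)))\le1/(n\Lbar N^2)$ for the stated $s$, with $\Lambda_{\mathrm{exec}}=\log(n\Lbar N^2)$. A union bound over the $N$ episodes gives $q_{\mathrm{fail}}\le1/(n\Lbar N)$, so the charge is at most $1$.

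\textbf{Optimal solution.} Finally, for an optimal solution, $F=\LP_N(C,B')\ge(C/T)\LP_N(T,B')\ge(1-\omega)\OPT_N(B')$ by Proposition~\ref{prop:relax}(i),(ii). The identity for $\omega$ follows by expanding $C/T$.
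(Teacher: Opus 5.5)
Your proposal follows the paper's proof in all three parts. It uses the same budget argument for $B'$, including singleton blocks when $B_i\le1/\epsilon$. It couples to the standalone runs in the same way, with the charge $n\Lbar N\Pr(\mathcal E^c)$ coming from $V_e^\circ\le S_e\le n\Lbar$. The Bernstein-plus-union-bound calculation is the same, and so is the use of Proposition~\ref{prop:relax} for the optimal-solution bound.

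The one thing to correct is the ``obstacle'' in the uniform-certificate step. It does not exist, and your first remedy would not remove it if it did.

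\begin{itemize}
\item Each $X_{e,i}$ depends only on arm $i$'s own standalone run: its prior sequence, qualities, tape and coins, since $\sigma_i$ and its executor read only arm $i$'s reports. So $X_{e,1},\ldots,X_{e,n}$ are independent \emph{unconditionally}, with $\E S_e\le(1+\epsilon)C=T-w-s$ and $\sum_i\operatorname{Var}(X_{e,i})\le\Lbar\,\E S_e$.
\item Bernstein therefore applies directly to $S_e$. This is exactly the paper's argument, and it is your fallback.
\item Your opening sentence is false as written. It claims that, conditional on the incoming states, the expected sum is at most $(1+\epsilon)C$. But the LP bounds only the unconditional expected workload.
\item For the same reason, a Freedman bound on $S_e-\E[S_e\mid H_{e-1}]$ cannot work. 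On some histories $\E[S_e\mid H_{e-1}]$ may exceed $T-w$. Concentration about a conditional mean says nothing about how often that happens.
\end{itemize}

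Drop the conditional framing and the Freedman route, keep the unconditional Bernstein bound, and the proof is complete.
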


\begin{proof}
Direct execution uses the planned plays. Buffered execution inflates their
count by at most $1+\epsilon$. For $B_i>1/\epsilon$,
$(1+\epsilon)B_i'\le B_i$; for $B_i\le1/\epsilon$, it uses singleton
blocks and makes exactly the planned plays. These budget bounds
hold on every path. The executor respects each local cap, and recovery
respects the actual remaining resources. Run each arm program
standalone on its own qualities, tape and coins. These runs are
independent across arms and retain the planned data law. Write
$V_{e,i}^\circ$ for their rewards and
$S_e=\sum_iX_{e,i}$ for their physical plays. Then
\[
 \E\sum_iV_{e,i}^\circ\ge\sum_iR_e(\sigma_i),\qquad
 \E S_e\le(1+\epsilon)\sum_i\Work_e(\sigma_i)\le(1+\epsilon)C.
\]
Lemma~\ref{lem:handoff} couples the scheduled run to these runs on
$\mathcal E$. Put
$V_e^\circ=\sum_iV_{e,i}^\circ$. Its expectation sums to at least
$F$, and $V_e^\circ\le S_e\le n\Lbar$.
The scheduled reward agrees with it on $\mathcal E$ and is
nonnegative off it. Hence
\[
 \E\sum_eV_e\ge F
 -n\Lbar N\Pr(\mathcal E^c).
\]
This proves the certificate bound for every admissible recovery.

For each episode
$\E S_e\le(1+\epsilon)C=T-w-s$. 
Put $\mu_e=\E S_e\le T-w-s$. The standalone workloads are
independent across arms, with
$\sum_i\operatorname{Var}(X_{e,i})\le\Lbar\mu_e$.
Bernstein's inequality gives
\[
 \Pr(S_e>T-w)
 \le \exp\left\{-\frac{s^2}
 {2\Lbar\mu_e+(2/3)\Lbar s}\right\}
 \le e^{-\Lambda_{\mathrm{exec}}}.
\]
A union bound yields
$\Pr(\mathcal E^c)\le Ne^{-\Lambda_{\mathrm{exec}}}$ and failure charge
at most $1$. Finally,
$F=\LP_N(C,B')\ge(C/T)\OPT_N(B')$ by
Proposition~\ref{prop:relax},
$C/T=1-\omega\ge1-(w+s)/T-\epsilon$.
These give the optimal-solution bound.
\end{proof}

For $\Lbar=T/X$, the capacity loss tends to zero when
$\Lambda_{\mathrm{exec}}=o(X)$, $w=o(T)$ and $\epsilon=o(1)$.
The following lower bound shows that the leading reserve is
unavoidable in the worst case when every prescribed experiment
must complete.

\paragraph{A matching lower bound for full completion.}\label{sec:reserve-lower}
\begin{theorem}[Necessary reserve for full completion, proved in Appendix~\ref{app:reserve-tail}]
\label{thm:reserve-lower}
Fix integers $m\ge2$, $n\ge m$ and $N\ge1$. There are $n$ independent
undelayed one-shot arms with cap $\Lbar=L=m^2$, known stationary priors and
nonbinding budgets $B_i=NL$.
For planning capacity $C=nm$ and episode horizon $T=C+s$, where
$s\ge0$ is an integer, every optimal solution of $\LP_N(C,B)$
prescribes, almost surely, one probe of each arm and continuation to
its cap after success, stopping after failure.

Let $s_{\min}$ be the smallest integer reserve for which all these
policies can complete in all $N$ episodes with probability at least
$1-\delta$, where $0<\delta\le1/2$. Put
\[
 h=\log(N/\delta),\qquad v=\frac{nm}{(m+1)^2}.
\]
Along any sequence with $h\to\infty$ and $h=o(v)$,
\begin{equation}\label{eq:necessary-reserve}
 s_{\min}=(1+o(1))(m-1)\sqrt{2nm\log(N/\delta)}.
\end{equation}
At $T=C+s_{\min}$ this is
\[
 s_{\min}=(1+o(1))\left(1-\frac1{\sqrt L}\right)
 \sqrt{2LT\log(N/\delta)}.
\]
In particular, if also $m\to\infty$, the leading constant is
$\sqrt2$, matching the uniform reserve of Theorem~\ref{thm:meta}.
\end{theorem}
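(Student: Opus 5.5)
We construct the instance explicitly and then reduce the completion requirement to a binomial tail. Each arm $i$ is a ``one-shot'' arm: its episode quality is $\theta\in\{0,1\}$ with $\Pr(\theta=1)=p=1/(m+1)$, and rewards are deterministic given $\theta$ (reward $\theta$ each play). A single probe therefore reveals the quality.

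\textbf{Step 1: the planned policies.} Take budgets $B_i=NL$, which never bind, and a stationary known prior, so the posterior over the recurring prior is trivial and every episode is the same single-episode problem. The local policy ``probe, continue to the cap $L=m^2$ after success, stop after failure'' has the following values:
\begin{itemize}
\item expected reward $pL$;
\item expected workload $1+p(L-1)=1+(m^2-1)/(m+1)=m$.
\end{itemize}
Thus $n$ such arms use exactly $C=nm$ in expectation.

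By the dual decomposition of Theorem~\ref{lem:perarm}, it suffices to exhibit a price $\lambda$ at which this policy is the unique local maximizer with positive priced value. The unique maximizer must also beat every mixture and every partial-continuation policy, so that capacity is tight and complementary slackness forces it almost surely.

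With reward $1$ per play after success, the priced value of continuing after success is $(1-\lambda)(L-1)>0$ for $\lambda<1$. After failure, continuing has value $-\lambda<0$. The probe itself has value
\[
p-\lambda+p(1-\lambda)(L-1)=pL-\lambda m .
\]
Choosing $\lambda$ slightly below $pL/m=m/(m+1)$ makes the probe strictly profitable, because the optimum-value LP is linear in per-arm flows and every arm is identical. The remaining gap is to make optimality strict, which we get by perturbing the arm rewards slightly if needed. The per-arm capacity constraint then holds with equality, so every optimal solution runs the full policy on every arm. Otherwise it would leave capacity unused while the priced value is positive, or use a fractional probe that a symmetric exchange improves.

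\textbf{Step 2: reduction to a tail bound.} Undelayed direct execution has $w=0$, so completion in episode $e$ holds iff $S_e=\sum_iX_{e,i}\le T=nm+s$. Here $X_{e,i}=1+(L-1)\xi_i$ with $\xi_i\sim\mathrm{Bernoulli}(p)$ i.i.d. Hence
\[
S_e=n+(m^2-1)K,\qquad K\sim\mathrm{Bin}(n,p).
\]
Episodes are independent, so all complete with probability $(1-q)^N$, where $q=\Pr(S_e>T)$. The requirement $(1-q)^N\ge1-\delta$ is equivalent to $q=(1+o(1))\,\delta'/N$, with $\log(1/q)=h+O(1)$. Therefore $s_{\min}$ is determined by the $(1-q)$-quantile of $K$:
\[
s_{\min}=(m^2-1)\bigl(k_q-np\bigr)+O(m^2),
\]
where $k_q$ is the smallest integer with $\Pr(K>k_q)\le q$.

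\textbf{Step 3: moderate-deviation quantile (the main obstacle).} We need
\[
k_q-np=(1+o(1))\sqrt{2\,np(1-p)\,h}
\]
under $h\to\infty$, $h=o(\mathrm{Var}\,K)$. For the upper bound, Bernstein (or Chernoff) suffices. For the lower bound, we use a matching binomial lower tail in the moderate-deviation regime, obtained by Cramér's theorem with an exponential tilt, or directly by Stirling-type bounds on the binomial point mass at $np+x$. These give
\[
\log\Pr(K>np+x)=-(1+o(1))\frac{x^2}{2np(1-p)}\qquad\text{for } x=o(np(1-p)).
\]
The delicate points are twofold. First, the condition $h=o(v)$ must give $x=o(\mathrm{Var})$ and makes the $O(m^2)$ integer rounding negligible. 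Second, the quantile must be uniform along the sequence.

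\textbf{Step 4: assembling the constants.} We have $np(1-p)=nm/(m+1)^2=v$. Multiplying,
\[
(m^2-1)\sqrt{2vh}=(m-1)(m+1)\sqrt{2nmh}/(m+1)=(m-1)\sqrt{2nmh},
\]
which is \eqref{eq:necessary-reserve}. The rounding term $O(m^2)$ is $o$ of the leading term since $\sqrt{vh}\to\infty$ with $nm\ge m^2$.

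For the second form, $s_{\min}=o(T)$ gives $T=(1+o(1))nm$. Then
\[
(m-1)\sqrt{nm}=(1-1/m)\sqrt{m^2\,nm}\approx(1-1/\sqrt L)\sqrt{LT}.
\]
As $m\to\infty$ the constant $\sqrt2$ matches the $\sqrt{2\Lbar T\Lambda_{\mathrm{exec}}}$ reserve of Theorem~\ref{thm:meta}.
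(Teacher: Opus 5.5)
Your proposal is correct and follows the paper's proof essentially step for step. It uses the same $p=1/(m+1)$ one-shot instance, a capacity price in $(0,m/(m+1))$ (the paper takes $\lambda=1/2$) at which tight expected capacity forces every optimum onto the strict priced maximizer, the exact full-completion probability $(1-\beta(s))^N$, and a Chernoff/Bernstein-plus-tilting binomial moderate-deviation estimate inverted to the quantile. Your hedge about perturbing rewards is unnecessary and would change the instance: strictness already holds at any such $\lambda$, by the three strict inequalities you wrote down.
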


For the failure budget $\delta=1/(n\Lbar N)$ used above,
$\log(N/\delta)=\Lambda_{\mathrm{exec}}$. Along this construction,
when $\Lambda_{\mathrm{exec}}=o(T/\Lbar)$, the lower bound is
$1-1/\sqrt{\Lbar}$ times the leading uniform reserve, up to a
$1+o(1)$ factor. The leading constants match when also
$\Lbar\to\infty$.

The lower-bound construction also admits a feasible policy that collects every informative
observation and approaches the LP value without a reserve.
Each arm's episode mean is $1$ with probability $p=1/(m+1)$
and $0$ otherwise. The first play reveals it; later plays supply
no further information. If
$Z\sim\operatorname{Binomial}(n,p)$, the prescribed work is
$S=n+(L-1)Z$, with $\E S=C=nm$, and the LP's per-episode value is
exactly $npL$, at both capacities $C$ and $T=C+s$. Probe all $n$ arms
first, then spend the remaining steps on successful arms. This
feasible policy earns
\[
 V_{\mathrm{probe}}=npL-\E(S-T)^+.
\]
Even with no reserve, $\operatorname{Var}(S)=(m-1)^2nm$ gives
\[
 \frac{V_{\mathrm{probe}}}{npL}
 \ge1-\frac{1-m^{-2}}2\sqrt{\frac mn},
\]
because $\E(S-C)^+=\tfrac12\E|S-C|\le\tfrac12\sqrt{\operatorname{Var}(S)}$.
The lower-bound regime forces $n/m\to\infty$, so this ratio tends to
$1$. Protecting the first reports preserves all the information while
allowing reward-only tails to be truncated. We next bound that reward
loss, then extend this separation to learning across episodes.

\subsection{Single-episode execution and the effect of caps}\label{sec:constants}\label{sec:plan}\label{app:known}\label{app:known-execution}

The remaining ingredient bounds reward lost at the horizon when
continuation has already been protected. For known, independent recurring
priors $P=(P_1,\ldots,P_n)$, write $\OPT_\delta(P)$ and $\OPT(P)$ for the delayed and
undelayed single-episode optima.

For an undelayed single-arm policy $\pi_i$ of cap $L_i$, write
$R(\pi_i)$ and $\Work(\pi_i)$ for its expected reward and plays.
GM13's relaxation is
\[
  \LP(P)=\max_{(\pi_i)}\quad\sum_iR(\pi_i)\qquad
 \text{subject to}\quad\sum_i\Work(\pi_i)\le T.
\]
Every local cap holds on every path. Shared time holds only in
expectation. The local policies may require more than $T$
plays on a realized joint path.
Write $\LP^{(k)}(P)$ for this relaxation restricted to
depth-$k$ commit policies.

Projecting a feasible policy onto its arms, as in
Proposition~\ref{prop:relax}(i), and relaxing all within-episode
delays gives
\begin{equation}\label{eq:episode-upper}
 \OPT_\delta(P)\le\LP(P),\qquad \OPT(P)\le\LP(P).
\end{equation}
The projection preserves reward, workload and caps and ignores
outcomes until the original policy would receive them
\citep[Claim~1]{GM13}.

\begin{theorem}[Cap-sensitive undelayed execution, proved in Appendix~\ref{app:cap-execution}]\label{thm:budget}
Fix local policies with independent coins, caps $L_i\le T/X$ for
$X\ge1$, and total expected workload at most $T$. Let their value be
$
 J=\sum_iR(\pi_i)
$.
Execute them in decreasing order of expected reward per expected
play, and idle at unused steps. Truncate at the horizon.
For integer $X$, let $Z_X\sim\operatorname{Poisson}(X)$.
This undelayed policy earns at least $c_XJ$, where 
\[
c_X = \max\Bigl\{\tfrac12,\;1-\Pr(Z_X=X) - \Pr(Z_X\ge X)/(2X)\Bigr\},
\]
and $1/c_X=1+(2\pi X)^{-1/2}+O(X^{-1})$.
\end{theorem}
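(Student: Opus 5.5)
The plan is to reduce the executed reward to a one-dimensional functional of the earlier arms' workload, and then find the worst case over workload laws. Index the arms in decreasing order of density $\rho_i=R(\pi_i)/\Work(\pi_i)$. Let $N_i$ be arm $i$'s standalone play count and $S_{<i}=\sum_{j<i}N_j$. Under the priority scheduler with truncation at $T$, arm $i$ makes $\min\{N_i,(T-S_{<i})^+\}$ plays. Since $N_i\le L_i\le L:=T/X$, this count is at least $b_iN_i$ with
\[
 b_i=g(S_{<i}),\qquad g(s)=\min\{1,(T-s)^+/L\}.
\]
The quantity $S_{<i}$ is built from the other arms' qualities, tapes and coins. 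It is therefore external randomness independent of arm $i$, as the conditional form of Lemma~\ref{lem:truncation} requires. That lemma then gives
\[
 \widehat J\ge\sum_iR(\pi_i)\,\E g(S_{<i}).
\]
Taking $L_i=L$ for all arms is the worst case, because a smaller cap only enlarges $b_i$. The lower bound $c_X\ge\tfrac12$ is the GM13 Markov-type bound and drops out of this same inequality.

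Next I would remove the dependence on the densities. Write $R(\pi_i)=\rho_i\Work(\pi_i)$ and apply Abel summation in the density order. Then $\widehat J/J$ is a convex combination, with the worst case at equal densities. It therefore suffices to lower-bound the workload-weighted average
\[
 \frac1{\sum_i\Work(\pi_i)}\sum_i\Work(\pi_i)\,\E g(S_{<i})
\]
over all independent $[0,L]$-valued $N_j$ with total mean at most $T$. Refining each arm into many small pieces of mean $dt$ should turn this average into
\[
 \frac1T\int_0^T\E g(S_t)\,dt,
\]
where $S_t$ is the workload accumulated up to mean $t$.

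The main obstacle is the extremal step. I must show that, among sums of independent $[0,L]$ variables with a given mean, $\E g$ is minimized by the compound-Poisson limit $S_t=L\cdot\mathrm{Poisson}(t/L)$. The difficulty is that $g$ is neither convex nor concave. My first attempt would write $Lg(s)=(T-s)^+-(T-L-s)^+$, a difference of convex functions. I would then exploit that each increment lies in $[0,L]$: replacing an increment by its two-point extreme law on $\{0,L\}$ increases the convex-order spread while keeping the lattice aligned with the kink at $T-L$. On the lattice $L\mathbb Z$, $g$ only takes the values $1$ (for $s\le T-L$) and $0$ (for $s\ge T$), which makes the comparison tractable.

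Once reduced to Poisson, I would set $Z_t\sim\mathrm{Poisson}(t/L)$ and integrate over $t\in[0,T]$, so that $t/L$ runs over $[0,X]$. Integrating $\Pr(Z_t\le X-1)$ in $t$ via the Poisson–Gamma identity should give exactly
\[
 1-\Pr(Z_X=X)-\Pr(Z_X\ge X)/(2X).
\]
The asymptotic $1/c_X=1+(2\pi X)^{-1/2}+O(X^{-1})$ then follows from Stirling's formula, which gives $\Pr(Z_X=X)=(2\pi X)^{-1/2}(1+O(1/X))$, together with $\Pr(Z_X\ge X)/(2X)=O(1/X)$.
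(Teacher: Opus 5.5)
Your reduction is sound. The pathwise bound $\min\{N_i,(T-S_{<i})^+\}\ge g(S_{<i})N_i$, the conditional use of Lemma~\ref{lem:truncation}, the triangle bound giving $\tfrac12$, and your Abel-summation step (the paper's weighted Chebyshev step) all go through. The gap is the extremal step. It is not merely unproved; it is false as stated.

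\emph{Pointwise in the mean $t$.} Take many one-play deterministic arms, so $S_t=t$. As $t\uparrow T$, $\E g(S_t)=(T-t)/L\to0$. The compound-Poisson value instead tends to $\Pr(Z_X\le X-1)$, which is bounded away from $0$. A mean-preserving spread to $\{0,L\}$ lowers $\E g$ only while the mass stays below $T$, since $g$ is concave on $(-\infty,T]$ and flat beyond. Lattice alignment is also lost as soon as earlier increments are not multiples of $L$.

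\emph{For the integrated functional.} The compound Poisson is not the minimizer either:
\begin{itemize}
\item At $X=1$, the deterministic path gives $\tfrac12<1-e^{-1}$.
\item At $X=2$, take first many arms with $N\in\{0,L\}$, tiny success probability and total mean $L$, and then $L$ one-play arms. In the continuum limit your functional equals $1-\tfrac34e^{-1}\approx0.724$, below the compound-Poisson value $1-2e^{-2}\approx0.729$.
\end{itemize}

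Your closing computation also does not yield $c_X$. By the Poisson--Gamma identity,
\[
 \frac1X\int_0^X\Pr(Z_u\le X-1)\,du=\frac{\E\min\{Z_X,X\}}{X}=1-\Pr(Z_X=X).
\]
This is the greedy factor of Proposition~\ref{prop:greedy-phase2}, and the $X=2$ example shows it is not a valid lower bound for your functional.

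The missing idea is to avoid the non-convexity of $g$ by a pathwise Riemann-sum bound in workload rather than in mean. With $h=1-g$ nondecreasing and $S=\sum_iN_i$,
\[
 \sum_iN_ih(S_{<i})\le\int_0^Sh(y)\,dy=:H(S).
\]
Here $H$ is convex, so only the law of the total $S$ matters. Independence gives $\sum_i\tau_i\E h(S_{<i})=\E\sum_iN_ih(S_{<i})$. Each $N_i\in[0,L]$ is dominated in convex order by $L$ times a Bernoulli variable, hence by $L$ times a Poisson variable. Therefore $S\le_{\mathrm{cx}}LZ_{\nu_0}$ with $\nu_0=\sum_i\tau_i/L\le X$. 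Convexity of $u\mapsto\E H_X(Z_u)$, which vanishes at $0$, handles total mean below $T$.

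For integer $X$,
\[
 \E H_X(Z_X)=\tfrac12\Pr(Z_X\ge X)+X\Pr(Z_X=X).
\]
The term $\Pr(Z_X\ge X)/(2X)$ in $c_X$ is exactly the Riemann-sum overshoot when a jump of size $L$ crosses $[T-L,T]$. Your mean-time integral omits that charge, which is why it produced the wrong constant.
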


With the safe action of Section~\ref{sec:weights}, fill unused steps
with it: its expected reward is at least
$a(T-\sum_i\Work(\pi_i))$, so the total is at least
$c_X\sum_iR(\pi_i)+a(T-\sum_i\Work(\pi_i))
\ge c_X[aT+\sum_i(R(\pi_i)-a\Work(\pi_i))]$, since $c_X\le1$.

The order is fixed before execution. For any arm, the time consumed
by higher-priority arms is independent of its tape. The scheduling
bound \eqref{eq:priority-plays} and Lemma~\ref{lem:truncation} turn
the available time into a reward guarantee. The total workload is a
sum of independent variables bounded by their caps. A Poisson
comparison bounds its excess and gives $c_X$; the calculation is in
Appendix~\ref{app:cap-execution}. Thus smaller caps improve retention
of reward that may be cut at the horizon.

\begin{corollary}[Delayed execution, proved in Appendix~\ref{app:cap-execution}]\label{cor:delayed}
Take the feasible local policies of Theorem~\ref{thm:budget}. Choose
$\epsilon\in(0,1]$ and $\gamma\in(0,1)$ with
$\max_iW_i\le\gamma T$. Admit each transformed policy independently
with probability $(1-\gamma)/(1+\epsilon)$. Order admitted policies by
expected reward per expected play. At each step, run the
highest-priority policy requesting a play, otherwise idle.
The policy earns at least $\kappa J$, where
\[
 \kappa=\frac{1-\gamma}{1+\epsilon}\,c_{X,\gamma}.
\]
The finite-cap factor $c_{X,\gamma}$ is given in
\eqref{eq:finite-retention} in Appendix~\ref{app:cap-execution}.
It is at least $(1-\gamma)/2$. For fixed integer $X$,
$c_{X,\gamma}\to c_X$ as $\gamma\to0$.
\end{corollary}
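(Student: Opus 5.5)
The plan is to reduce delayed execution to the undelayed bound of Theorem~\ref{thm:budget} in three stages: buffering, thinning, and a prefix-aware scheduling bound.

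\emph{Buffering.} Replace each $\pi_i$ by its buffered executor $A_\epsilon(\pi_i)$. By Lemma~\ref{lem:buffered-executor}, it reproduces $\pi_i$'s decision law, has $R(A_\epsilon(\pi_i))\ge R(\pi_i)$, and makes at most $(1+\epsilon)$ times as many plays on every path. It also has a prefix of at most $W_i\le\gamma T$ own steps, after which every own step is a play.

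\emph{Thinning.} Admit each transformed policy independently with probability $s=(1-\gamma)/(1+\epsilon)$. The admitted expected reward is then at least $sJ$. The admitted expected workload is at most $s(1+\epsilon)T=(1-\gamma)T$, which leaves $\gamma T$ of slack for the prefixes.

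\emph{Scheduling.} Run the priority scheduler in decreasing order of reward per play. The order is fixed in advance, and admission coins and tapes are independent across arms. Hence, for arm $i$, the work of higher-priority arms is independent of arm $i$'s tape.

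By \eqref{eq:priority-plays} with $w=\gamma T$, arm $i$ completes at least
\[
\min\bigl\{X_i,(T-\gamma T-\textstyle\sum_{j<i}X_j)^+\bigr\}
\]
of its plays. Conditional on the higher-priority work, this is a pathwise truncation of the buffered policy. Lemma~\ref{lem:truncation}, applied conditionally on that external randomness, converts the retained fraction of plays into a retained fraction of reward. We then follow the proof of Theorem~\ref{thm:budget} exactly, with two changes:
\begin{itemize}
\item the effective horizon is $(1-\gamma)T$ instead of $T$;
\item the workloads are independent, bounded by caps $(1+\epsilon)L_i\le(1+\epsilon)T/X$, with total mean at most $(1-\gamma)T$.
\end{itemize}
The Poisson comparison then yields a retention factor $c_{X,\gamma}$, which we define in \eqref{eq:finite-retention}. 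Multiplying by the admission factor gives $\kappa=s\,c_{X,\gamma}$.

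\emph{Main obstacle.} The hardest step is setting up the normalization so that the Poisson/exchange argument from Theorem~\ref{thm:budget} goes through with the reduced horizon. The buffered caps are inflated by $1+\epsilon$, which the scaling $(1-\gamma)T/(1+\epsilon)$ must absorb. I would measure the horizon in units of $(1-\gamma)T$, so that the ratio of caps to horizon is at most $1/(X(1-\gamma))$ up to the $1+\epsilon$ factor, and define $c_{X,\gamma}$ as the resulting expression.

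\emph{Crude bound.} For the bound $c_{X,\gamma}\ge(1-\gamma)/2$, I would reuse the factor-$\tfrac12$ argument of GM13. Ordering by density ensures that the reward of any prefix of arms whose expected work fits in half the horizon is retained. The reduced usable horizon $(1-\gamma)T$ costs the extra factor $1-\gamma$.

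\emph{Limit as $\gamma\to0$.} For fixed integer $X$, the Poisson parameter and the threshold in $c_{X,\gamma}$ depend continuously on $\gamma$ and reduce to those of $c_X$ at $\gamma=0$. Continuity of the Poisson tail in its mean then gives $c_{X,\gamma}\to c_X$.
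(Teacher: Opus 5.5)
Your architecture is the paper's: Lemma~\ref{lem:execution} with common cap $L=T/X$. You buffer, admit with probability $\alpha=(1-\gamma)/(1+\epsilon)$, order by density, and truncate pathwise at the effective horizon $(1-\gamma)T$ via \eqref{eq:priority-plays}. You then apply Lemma~\ref{lem:truncation} conditionally on the higher-priority tapes, and take the better of the GM13 triangle bound and the Poisson convex-order bound.

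The gap is in the step you single out as the main obstacle, and it changes the constant. Buffering does not inflate caps. Lemma~\ref{lem:buffered-executor} bounds the executor's plays by $\min\{L_i,(1+\epsilon)m\}$ on every path: blocks are clipped at $L_i-q$, and delay-free mode stops at the cap. So the admitted standalone workloads satisfy $\mathsf N_i\le L=T/X$ pathwise. The factor $1+\epsilon$ enters only the expected workload, where admission cancels it exactly: with $\tau_i=\E\mathsf N_i$ and $\nu=(1-\gamma)X$, you get $\sum_i\tau_i\le\alpha(1+\epsilon)T=(1-\gamma)T=\nu L$. The chain $\mathsf N_i\le_{\rm cx}L\cdot\operatorname{Bernoulli}(\tau_i/L)\le_{\rm cx}L\cdot\operatorname{Poisson}(\tau_i/L)$ then gives $\rho_P((1-\gamma)X)$, which is free of $\epsilon$.

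With your caps $(1+\epsilon)T/X$, the same argument gives Poisson mean $(1-\gamma)X/(1+\epsilon)$. That is a valid but weaker, $\epsilon$-dependent factor. As $\gamma\to0$ at fixed $\epsilon$ it tends to $\rho_P(X/(1+\epsilon))$, not to $c_X$. Your ``resulting expression'' is therefore not the $c_{X,\gamma}=\max\{(1-\gamma)/2,\rho_P((1-\gamma)X)\}$ of \eqref{eq:finite-retention}, and your limit paragraph contradicts your own normalization. At $X=1$ an inflated cap would also exceed $T$, breaking the condition $L\le T$ needed below.

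Your crude-bound sketch also does not describe the GM13 step correctly. Arms whose preceding expected work fits in half the horizon are not fully retained, even in expectation. The actual argument sets $S_i=\sum_{j<i}\mathsf N_j$ and $\phi_i=\min\{1,((1-\gamma)T-S_i)^+/L\}$. The cap $L\le T$ gives $\phi_i\ge1-\gamma-S_i/T$, so arm $i$ keeps at least $R_i\bigl(1-\gamma-\sum_{j<i}\tau_j/T\bigr)$, where $R_i$ is its admitted reward. With nonincreasing densities $R_i/\tau_i$, Chebyshev's sum inequality gives $\sum_iR_i\sum_{j<i}\tau_j\le\frac12\bigl(\sum_iR_i\bigr)\sum_i\tau_i\le\frac{(1-\gamma)T}{2}\sum_iR_i$. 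Hence the retention is at least $(1-\gamma)/2$. With these two corrections your proof coincides with the paper's.
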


When no report arrives within an episode, local policies are nonadaptive.
Example~\ref{ex:slow} gives the exact planning solution and a
$T/16$ loss for sampled-mean Thompson sampling even with the prior known.

GM13 retains half the LP value for general caps. Small caps
concentrate the independent workloads and improve the factor. The
bound applies to every feasible solution.
For noninteger $X$, use $c_{X,0}$ from
\eqref{eq:finite-retention}. The leading square-root loss is necessary:

\begin{proposition}[The square-root loss is necessary, proved in Appendix~\ref{app:cap-execution}]\label{prop:gap}
For every integer $X\ge1$, there are undelayed one-shot instances
with $L_i=T/X$ such that
\[
 \frac{\OPT(P)}{\LP(P)}
 \le1-\Pr\{\operatorname{Poisson}(X)=X\}+o(1)
\]
as $T\to\infty$ through multiples of $X$.
\end{proposition}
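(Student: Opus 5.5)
We plan to construct one-shot instances in which the LP promises reward that any feasible policy must lose at the horizon with probability about $\Pr\{\operatorname{Poisson}(X)=X\}$. Take $n$ arms, each with cap $L=T/X$ and nonbinding budget. Arm $i$ has episode mean $1$ with probability $p$ and $0$ otherwise, and the first play reveals the mean (a one-shot arm, as in Theorem~\ref{thm:reserve-lower}). Choose $n\to\infty$ with $np L=T$, i.e.\ $np=X$; equivalently, add a ``free'' probe component so that the probe cost is negligible. To make probes cost nothing, we replace probing by a vanishing-cost device. Let $n=\Theta(T^2)$ arms with $p=X/n$. Each probe takes one step, but the LP can set the probe rate so that total expected work is $n+ (L-1)npZ$, and the $n$ probes dominate. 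A cleaner alternative keeps probes affordable by letting the number of \emph{good} arms be $Z\sim\operatorname{Binomial}(n,p)\approx\operatorname{Poisson}(X)$ while the LP probes only a vanishing fraction of time. We will therefore use a two-level design: a small number $n$ of arms with $np=X$, where each good arm yields reward $L$ over its cap, probing consumes $n=o(T)$ steps (take $n=o(T)$ but $n\to\infty$), and a safe filler is absent.

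\textbf{LP value.} The local policy ``probe, continue to cap on success'' has workload $1+(L-1)p$ and reward $pL$. The total workload is $n+X(L-1)\cdot(1+o(1))\le T$ after shrinking $p$ slightly, i.e.\ $np=X(1-o(1))$ absorbing the $n=o(T)$ probes. Then $\LP(P)\ge npL=T(1-o(1))$. Conversely $\LP(P)\le T$ trivially, since each play earns at most $1$.

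\textbf{OPT bound.} Any feasible policy earns at most $\E\min\{LZ,T\}$ plus probe rewards, which are $O(n)=o(T)$ because only good arms pay. This holds because the reward comes only from plays of good arms, there are $Z$ good arms each capped at $L$, and the horizon is $T$. With $L=T/X$ this gives $\OPT\le (T/X)\E\min\{Z,X\}+o(T)$, where $Z\to\operatorname{Poisson}(X)$ in law as $n\to\infty$ by the Poisson limit of binomials. Hence
\[
\frac{\OPT(P)}{\LP(P)}\le \frac{\E\min\{Z_X,X\}}{X}+o(1).
\]
The final step is the identity $\E(X-Z_X)^+=X\Pr(Z_X=X)$ for Poisson, which follows from $\E[Z\mathbf 1_{Z\le X}]=X\Pr(Z\le X-1)$. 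This gives $\E\min\{Z,X\}=X-\E(X-Z)^+=X(1-\Pr(Z_X=X))$, which is the claimed bound.

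\textbf{Main obstacle.} We must verify that the probe overhead and the binomial-to-Poisson error are both $o(1)$ relative to $T$, uniformly as $T\to\infty$ through multiples of $X$. Choosing $n=\lfloor\sqrt T\rfloor$ handles both: probe cost $n/T\to0$, and the total-variation distance between binomial and Poisson is at most $p\cdot np=X^2/n\to0$. The functional $\min\{z,X\}/X$ is bounded, so this error transfers directly.
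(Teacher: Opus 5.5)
Your proposal is correct and follows the paper's proof: one-shot $\{0,1\}$ arms with probe-then-continue local policies, $np\to X$ with only $o(T)$ probe steps so that $\LP(P)\ge(1-o(1))T$, the clairvoyant bound $\OPT(P)\le L\,\E\min\{Z,X\}$, the binomial-to-Poisson limit, and the identity $\E\min\{Z_X,X\}=X(1-\Pr\{Z_X=X\})$. The differences are cosmetic. You take $n\approx\sqrt T$ and $p\approx X/\sqrt T$, whereas the paper takes $p=T^{-1/2}$ and $n=\lfloor T/(1+p(L-1))\rfloor\approx X\sqrt T$. You also derive the identity from the lower tail $\E(X-Z_X)^+$ rather than the upper tail, which is equivalent since $\E Z_X=X$. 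You should delete the abandoned $n=\Theta(T^2)$ detour at the start.
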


\begin{proposition}[The factor $2$ is tight, proved in Appendix~\ref{app:cap-execution}]\label{prop:gap-two}
For every $T\ge2$, there is an undelayed one-shot instance with
$L_i=T$ such that $\OPT(P)/\LP(P)\le1/2+1/T$.
\end{proposition}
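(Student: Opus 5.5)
The plan is to use the one-shot probe-and-exploit construction that also drives the lower bound of Theorem~\ref{thm:reserve-lower}, in the regime of a very small success probability. Take $n$ identical independent undelayed arms with cap $L_i=T$. Each arm's quality is $1$ with probability $p$ and $0$ otherwise; a good arm pays $1$ on every play and a bad arm pays $0$. The first play therefore reveals the quality. Choose $p=1/T^2$ and $n\ge T$ arms. The proof bounds $\LP(P)$ from below and $\OPT(P)$ from above, then compares the two.

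\emph{LP lower bound.} For a single arm, consider the local policy ``probe once; if the outcome is $1$, continue to the cap; otherwise stop.'' Its reward is $pT$ and its workload is $1+p(T-1)$. Take $k=T/(1+p(T-1))\le T\le n$ arms, using a fractional last arm by the thinning of Proposition~\ref{prop:relax}(ii). This is feasible for the time constraint and gives
\[
 \LP(P)\ \ge\ \frac{pT^2}{1+p(T-1)}.
\]
Only this lower bound is needed, so there is no need to show this policy is LP-optimal.

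\emph{OPT upper bound.} This step needs the most care. Reward can come only from plays of good arms. Let $\tau$ be the step at which a good arm is first played. Then the total reward is at most $T-\tau+1$ on $\{\tau\le T\}$ and $0$ otherwise. I would argue that $\tau$ stochastically dominates a $\operatorname{Geometric}(p)$ variable. A play at a step before $\tau$ is either of an arm already known to be bad, which is useless, or the first play of a fresh arm. Independent of the past, a fresh arm is good with probability $p$. Hence the probability that the first success occurs at step $t$ is at most $p(1-p)^{t-1}\le p$. This gives
\[
 \OPT(P)\ \le\ \sum_{t=1}^T p\,(T-t+1)\ =\ \frac{pT(T+1)}2 .
\]
To make this rigorous I would use the latent-mean identity from Lemma~\ref{lem:truncation}: the reward of a play at step $t$ has conditional mean equal to the quality of the arm played, given the decision to play it. Summing over steps then bounds the reward by the expected number of steps from the first good play onward.

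\emph{Comparison.} Dividing the two bounds gives
\[
 \frac{\OPT(P)}{\LP(P)}\le\frac{(T+1)(1+p(T-1))}{2T}
 \le\frac12\Bigl(1+\frac1T\Bigr)\Bigl(1+\frac1T\Bigr)^{\!\!\!\!\!\!\phantom{x}}\cdot\frac{1+pT}{1+1/T}.
\]
With $p=1/T^2$ we have $1+pT=1+1/T$, so the ratio is at most $\tfrac12(1+1/T)^2=\tfrac12+\tfrac1T+\tfrac1{2T^2}$. This slightly exceeds the target. I would fix this by taking $p$ smaller, for instance $p=1/T^3$. Then the ratio is at most $\tfrac12(1+1/T)(1+1/T^2)\le\tfrac12+\tfrac1T$ for $T\ge2$, using $1/T^2+1/T^3\le 1/T$. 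The number of arms must then satisfy $n\ge T$, which is harmless.

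The main obstacle is making the stochastic-domination step for $\tau$ precise for fully adaptive policies, which may interleave arms arbitrarily. The key observation is that every step before $\tau$ that is not wasted is a first probe. Each such probe succeeds with probability exactly $p$, independent of the history, because the arms are independent and not yet touched. This gives $\Pr(\tau=t)\le p$ for every step $t$ directly, which is all the bound needs.
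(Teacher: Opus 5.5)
Your proof is correct and follows the paper's argument: a probe-then-continue-to-cap collection lower-bounds $\LP(P)$, and $\OPT(P)$ is bounded by $\E(T-\tau+1)^+$, where $\tau$ is the step of the first good play. The paper instead takes $n=T-1$ and $p=1/(T-1)^2$, so that probing every arm fills expected time exactly $T$, and it uses the pathwise bound $\tau\ge J$, with $J$ the geometric index of the first good probe. One small correction: $\Pr(\tau=t)\le p(1-p)^{t-1}$ is false for a policy that idles before its first probe, which gives $\Pr(\tau=t)=p$; however, you only use $\Pr(\tau=t)\le p$, and your last paragraph justifies that correctly.
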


At $X=1$ the factor $1/2$ is therefore exact for this relaxation. In
Proposition~\ref{prop:gap} the loss comes from successes competing for
time. Here it comes from the time spent finding one, which the single
time constraint does not see.

If every unknown arm is fast with delay at most $\delta$, put
$D_\delta=(\delta+1)(1+\ln(\delta+1))$. The additional delay loss is
$O(\sqrt{D_\delta/T})$ at fixed $X$, and the factor for arbitrary caps
tends to $2$ when $\delta\log\delta=o(T)$
(Corollary~\ref{cor:delay-regret}). Appendix~\ref{app:cap-execution}
gives the parameter calculation and the direct comparison with GM13.
These bounds price reward truncation. Full completion in
Theorem~\ref{thm:meta} instead pays the reserve and executor overhead.

\paragraph{Several plays per step.}
With up to $K\ge1$ distinct arms played per step, each arm still makes
at most one play per step. Write $\LP_K(P)$ for the same relaxation
with total expected workload at most $KT$; projecting a feasible
policy gives $\OPT_K(P)\le\LP_K(P)$.

\begin{theorem}[Fixed-order execution with $K$ plays per step, proved in Appendix~\ref{app:kplays}]\label{thm:k-lanes}
Fix undelayed local policies with independent coins, caps
$L_i\le L\le T$, value $J=\sum_iR(\pi_i)$ and expected workload
$W=\sum_i\Work(\pi_i)\le KT$. Order them by decreasing expected
reward per expected play. At each step play the at most $K$
highest-priority policies that have not stopped, and truncate at $T$.
With $\nu=T/L$, the expected reward is at least
\[
 \max\left\{1-\frac{W}{2KT},\ \rho_{P,K}(\nu)\right\}J,
\]
where, for $Z\sim\operatorname{Poisson}(K\nu)$,
\begin{equation}\label{eq:k-retention}
 \rho_{P,K}(\nu)=1-\frac{\E H_\nu(Z/K)}{\nu},\qquad
 H_\nu(x)=\int_0^x\min\{1,(y-\nu+1)^+\}\,dy.
\end{equation}
For $K=1$ and integer $\nu=X$, the uniform factor is $c_X$.
\end{theorem}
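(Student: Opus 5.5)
The plan is to follow the single-lane argument behind Theorem~\ref{thm:budget}, with the scheduling bound \eqref{eq:priority-plays} replaced by a $K$-lane version. Index the policies in priority order and write $X_i$ for arm $i$'s standalone plays and $S_{<i}=\sum_{j<i}X_j$.

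\emph{Blocking bound.} A step at which arm $i$ requests a play but is not served is a step at which $K$ higher-priority arms each make a play. There are therefore at most $S_{<i}/K$ such steps. An undelayed policy plays at every step at which it is served, until it stops. Hence its executed plays satisfy
\[
 M_i\ge\min\bigl\{X_i,(T-S_{<i}/K)^+\bigr\}.
\]
Because the order is fixed in advance, $S_{<i}$ is independent of arm $i$'s quality, tape and coins. I will apply Lemma~\ref{lem:truncation} conditionally on the higher-priority randomness, with a retention fraction $b_i$ fixed by that randomness.

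\emph{First bound.} Since $X_i\le L\le T$, on $a\le T$ we have $\min\{X_i,T-a\}\ge X_i(1-a/T)$. Taking $b_i=(1-S_{<i}/(KT))^+$ in Lemma~\ref{lem:truncation} gives retained reward at least $R_i(1-W_{<i}/(KT))$, where $W_{<i}=\sum_{j<i}\Work(\pi_j)$. The density order makes $R_i/\Work(\pi_i)$ decreasing while $W_{<i}$ increases in $i$. An Abel (Chebyshev) rearrangement then gives
\[
 \sum_iR_iW_{<i}\le\frac JW\sum_i\Work(\pi_i)W_{<i}\le\frac JW\cdot\frac{W^2}2 .
\]
This yields $1-W/(2KT)$.

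\emph{Poisson bound.} Using $X_i\le L$, take instead $b_i=1-g(S_{<i}/(KL))$ with
\[
 g(y)=\min\{1,(y-\nu+1)^+\}.
\]
This is valid because a deficit appears only once $S_{<i}/K>T-L$. The loss is then at most $\sum_iR_i\E g(S_{<i}/(KL))$. Since $\E g(S_{<i}/(KL))$ is nondecreasing in $i$, the same Abel step bounds this by $(J/W)\sum_i\Work(\pi_i)\E g(S_{<i}/(KL))$. Independence of $X_i$ and $S_{<i}$, together with monotonicity of $g$, gives
\[
 \Work(\pi_i)\E g(S_{<i}/(KL))=\E\bigl[X_ig(S_{<i}/(KL))\bigr]\le\E\int_{S_{<i}}^{S_{\le i}}g(y/(KL))\,dy .
\]
Summing over $i$ bounds the loss by $(J/W)\,KL\,\E H_\nu(S/(KL))$.

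\emph{Padding and Poisson comparison.} To reach $W=KT$, I append zero-reward dummy work at lowest priority. This changes neither the rewards of the real arms nor the validity of the ordering. The final step uses that $S/L$ is a sum of independent $[0,1]$-valued variables with mean $W/L=K\nu$. Such a sum is dominated in convex order by $\operatorname{Poisson}(K\nu)$, by the standard Hoeffding/Bernoulli-to-Poisson comparison, and $H_\nu$ is convex. Together these give the loss bound $J\,\E H_\nu(Z/K)/\nu$, that is, the factor $\rho_{P,K}(\nu)$.

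\emph{Main obstacle.} I expect the main obstacle to be making the convex-order comparison rigorous for the dummy-padded sum, whose last summand may be deterministic. It should reduce to splitting the dummy work into tiny Bernoulli pieces. A smaller check is that the $K=1$, $\nu=X$ specialization of $\max\{1/2,\rho_{P,1}(X)\}$ reproduces $c_X$ from Theorem~\ref{thm:budget}. This needs a direct evaluation of $\E H_X(Z)$ for integer $X$, where $H_X$ is piecewise linear with breakpoints at $X-1$ and $X$.
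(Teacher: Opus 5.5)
Your proposal is correct and follows the paper's proof essentially step for step: the same $K$-lane blocking bound $M_i\ge\min\{X_i,(T-S_{<i}/K)^+\}$, the same Chebyshev-sum triangle bound, and the same integral-plus-convex-order Poisson comparison. The only difference is how you handle $W<KT$. The paper avoids padding by noting that $u\mapsto\E H_\nu(Z_u/K)$ is convex and vanishes at $0$, so $\E H_\nu(Z_{W/L}/K)\le\frac{W}{KT}\,\E H_\nu(Z_{K\nu}/K)$. Your zero-reward padding also works. The obstacle you flag is not a real one: a constant $c\in[0,1]$ is already dominated in convex order by $\operatorname{Bernoulli}(c)$ by Jensen, so no splitting into small pieces is needed.
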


The uniform factor $\max\{1/2,\rho_{P,K}(\nu)\}$ is nondecreasing
in $K$ and tends to $1-L/(2T)$. Example~\ref{ex:k-finding-time}
shows that this limit is tight relative to $\LP_K$; at $L=T$,
the worst-case factor is $1/2$ for every $K$.
These are undelayed single-episode bounds.

\subsection{Predicting less}\label{sec:promise-less}

We now preserve the planned learning transitions on every path.
Each arm program learns from a designated prefix of its episode's
plays. Execution completes these plays in a fixed window and may
truncate the remaining reward plays. The program ignores their
outcomes and charges their planned count to its budget ledger,
so truncation does not change the state from which it continues.

\begin{definition}[Prediction-limited local policy]\label{def:prediction-limited}
Fix an integer $r\ge0$. A local policy $\pi$ is
\emph{prediction-limited} at depth $r$ if it commits to its remaining
play count after at most $r$ adaptive plays; for a slow arm, the count
is fixed at the episode's start. If $N_\pi$ is its total play count,
the first $\min\{N_\pi,r\}$ plays are \emph{predicted} and the rest
are \emph{unpredicted}. Its next boundary state uses predicted reports
only, and its budget ledger deducts every planned play.
\end{definition}

Fix depths $r_{e,i}$ in advance. A \emph{prediction-limited arm
program} chooses its local policies from this class using its own
retained reports. Choose executors with parameters $(\epsilon,w)$
and comparison budgets $B'$ as in Algorithm~\ref{alg:learner}.
Choose a window $T_{\mathrm p}<T$ satisfying
$w+(1+\epsilon)\sum_i r_{e,i}\le T_{\mathrm p}$ in every episode,
and put $C_2=T-T_{\mathrm p}$. The restricted learning LP is
\[
 \LP_N^{[r]}(C_2,B)=\sup_{(\sigma_i)}\sum_{e,i}R_e(\sigma_i)
 \quad\text{s.t.}\quad\sum_i\Work^{\mathrm u}_e(\sigma_i)\le C_2
 \quad(e=1,\ldots,N),
\]
over these arm programs with budgets $B$. Here
$\Work^{\mathrm u}_e(\sigma_i)$ is arm $i$'s expected number of
unpredicted plays in episode $e$. The window accommodates the
predicted plays and executor overhead on every path; the constraint
limits expected unpredicted work to the remaining $C_2$ steps.

Each commitment specifies an expected reward, a workload and a
distribution of retained next states. Its probability weights these
quantities linearly. Pricing the workload constraint at $\lambda_e$
separates the arms: each local program maximizes current reward plus
continuation, less $\lambda_e$ times its unpredicted workload.

\begin{algo}[Predict less]\label{alg:promiseless}
Use the depths, window, executors and comparison budgets chosen above.
\begin{enumerate}
\item Complete a feasible solution of $\LP_N^{[r]}(C_2,B')$ and
extract its arm programs $(\sigma_i)_i$. Let $F$ be their objective.
\item \emph{Phase 1}, steps $1$ to $T_{\mathrm p}$. Run the predicted
plays through their executors under the priority scheduler. Wait
when no executor requests a play.
\item \emph{Phase 2}, steps $T_{\mathrm p}+1$ to $T$. Run the remaining
unpredicted plays arm by arm, in an order fixed before the run by
decreasing expected reward per expected play of their standalone blocks.
Wait when all blocks have ended, and truncate at $T$.
\item At each boundary, update each arm program from predicted
reports, retain those still pending, and deduct every planned play
from its budget ledger.
\end{enumerate}
Buffered execution may make unpredicted plays during phase~1;
phase~2 plays the rest.
\end{algo}

\begin{theorem}[Predict less]\label{thm:promise-less}
Under Assumption~\ref{ass:drift}, Algorithm~\ref{alg:promiseless}
respects every cap and budget on every path and preserves the planned
law of the retained learning states. Every predicted play is made in
phase~1. Put $X_2=C_2/\Lbar\ge1$ and
$c=c_{X_2,0}$. Then
\[
 \E\sum_eV_e\ge F-(1-c)\sum_{e,i}R^{\mathrm u}_e(\sigma_i)
 \ge cF,
\]
where $R^{\mathrm u}_e(\sigma_i)$ is the expected reward of arm $i$'s
unpredicted plays in episode $e$.
\end{theorem}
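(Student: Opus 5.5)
The argument separates the two phases. Phase~1 preserves the learning states on every path. Phase~2 loses only reward, and Theorem~\ref{thm:budget} prices that loss, applied conditionally episode by episode. Feasibility is checked first. Predicted plays and their executor overhead fit in the window by the choice $w+(1+\epsilon)\sum_ir_{e,i}\le T_{\mathrm p}$, since each arm's predicted count is at most $r_{e,i}$ pathwise. Caps are respected because executors satisfy condition~\ref{item:executor-work} of Definition~\ref{def:executor}. Budgets are respected because the ledger deducts every planned play, and the comparison budgets $B'$ absorb the $(1+\epsilon)$ inflation exactly as in the proof of Theorem~\ref{thm:meta}.

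\emph{Step 1: predicted plays complete and the state law is preserved.} Apply the priority-scheduler bound \eqref{eq:priority-plays} to the predicted portions only. Their standalone total is at most $(1+\epsilon)\sum_ir_{e,i}$, and after prefixes of length at most $w$ every unblocked step is a play. Hence the completion event of Lemma~\ref{lem:handoff}, restricted to predicted plays, holds deterministically. Every predicted play is therefore made in phase~1. The retained boundary state uses only predicted reports and the planned ledger, and it does not depend on whether unpredicted plays are truncated. Lemma~\ref{lem:placement} then shows that it coincides with the standalone state. Inducting over episodes, the realized law of retained states equals the planned law at every boundary. Consequently, as in \eqref{eq:coordinate-composition}, conditional episode guarantees can be summed with unconditional expectations.

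\emph{Step 2: phase-2 reward retention.} Condition on the incoming states and on the phase-1 run, and use the commitments fixed in phase~1. The unpredicted tails are committed play counts bounded by $\Lbar$, and their expected total is at most $C_2$ in expectation over the planned law. Some of these plays may already have been made by buffered execution during phase~1; that only helps. The tails are then run in fixed order by reward per play within $C_2$ steps. I treat each arm's unpredicted block as an undelayed local policy with cap $\Lbar\le C_2/X_2$, with independence across arms coming from independent tapes and coins. Theorem~\ref{thm:budget}, in the finite form $c_{X_2,0}$, then gives retained unpredicted reward at least $c\sum_iR^{\mathrm u}_e(\sigma_i)$. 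Predicted reward is kept in full. Summing over $e$ via Step~1 gives $\E\sum_eV_e\ge F-(1-c)\sum R^{\mathrm u}$, and $\sum R^{\mathrm u}\le F$ gives $\ge cF$.

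\emph{Main obstacle.} The main obstacle is justifying Theorem~\ref{thm:budget} in Step~2. That theorem needs total expected workload at most the horizon, and here the $C_2$ constraint holds only in expectation over incoming states. It also needs independence of the blocks, whereas the tail lengths are random and depend on predicted outcomes. I would handle this by applying Theorem~\ref{thm:budget} to the whole episode's local policies viewed from the episode start. Each arm's unpredicted play count is its own committed count, so the blocks are independent across arms given the incoming states. If the conditional expected workload exceeds $C_2$ in some states, I would use Lemma~\ref{lem:truncation} together with the Poisson bound's dependence on the mean through $c_{X,\gamma}$ monotonicity. The cleaner route is to note that the Lagrangian separation makes the bound linear: apply the retention argument to the mixture over states, for which the expected unpredicted work is at most $C_2$.
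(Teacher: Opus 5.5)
Your proposal is correct and follows the paper's proof. The structure matches step for step: the predicted prefixes complete on every path by the window condition and \eqref{eq:priority-plays}; placement invariance together with the planned ledger fixes the law of the retained states; and the single-episode factor $c_{X_2,0}$ is applied only to the unpredicted tails, with early executor plays kept in full. Of the two fixes in your last paragraph, the second is the one the paper relies on: apply the retention bound of Theorem~\ref{thm:budget} unconditionally to each episode's tails. These tails are independent across arms because the standalone arm runs are, each is bounded by $\Lbar$, and their unconditional expected total is at most $C_2$. Conditioning on the phase-1 run, as your Step~2 first suggests, would lose the expected-work bound state by state.
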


\begin{proof}
Let $X^{\mathrm p}_{e,i}$ be arm $i$'s standalone physical plays in
phase~1. A block is requested only before $r_{e,i}$ outcomes have
been supplied, so $X^{\mathrm p}_{e,i}\le(1+\epsilon)r_{e,i}$.
The window condition and \eqref{eq:priority-plays} therefore make
every predicted prefix complete by $T_{\mathrm p}$ on every path.
Couple to the standalone prefixes on the same qualities, tapes and
coins. Lemma~\ref{lem:placement} gives the same predicted reports,
committed tail counts and pending queue. The ledger deducts every
planned play, including truncated ones, so the retained states have
the planned law in every episode. The physical budget suffices by
the executor allowance in $B'$, and every executor respects its cap.

The remaining tails are nonadaptive capped blocks, independent
across arms in their standalone runs, with total expected work at
most $C_2$. The single-episode execution bound retains at least
$c$ of their expected reward. Unpredicted plays made early by the
executor are all kept. Write
$J_e=\sum_i(R^{\mathrm p}_{e,i}+R^{\mathrm u}_{e,i})$ for the
planned episode reward. Predicted plays retain their reward, hence
\[
 \E V_e\ge J_e-(1-c)\sum_iR^{\mathrm u}_{e,i}.
\]
The retained states have the planned law, so $\sum_eJ_e=F$.
Summing and using $\sum_{e,i}R^{\mathrm u}_{e,i}\le F$ proves the
claim. Continuation is preserved throughout; only unpredicted
reward pays the single-episode factor.
\end{proof}

For one-shot arms, the first report contains all the information
available from that arm in the episode.

\begin{corollary}[One-shot arms]\label{cor:oneshot}
Suppose every episode mean lies in $\{0,1\}$. Take $r_{e,i}=1$ and
$T_{\mathrm p}=w+\lceil(1+\epsilon)n\rceil$ with
$(T-T_{\mathrm p})/\Lbar\ge1$, and use an optimal restricted LP
solution. Then
\[
 \E\sum_eV_e\ge c\Bigl(1-\frac{T_{\mathrm p}}T\Bigr)\OPT_N(B').
\]
Without delayed fast arms, $T_{\mathrm p}=n$, $\epsilon=w=0$ and $B'=B$.
\end{corollary}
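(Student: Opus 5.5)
The plan is to apply Theorem~\ref{thm:promise-less} with depth $r_{e,i}=1$, and then to compare the restricted LP value $F=\LP_N^{[1]}(C_2,B')$ with $\OPT_N(B')$ through the scaling of Proposition~\ref{prop:relax}(ii). First I check the window condition. With $r_{e,i}=1$, each predicted prefix needs at most $(1+\epsilon)$ physical plays beyond the executor prefix $w$, so $w+(1+\epsilon)\sum_ir_{e,i}\le w+\lceil(1+\epsilon)n\rceil=T_{\mathrm p}$. The hypothesis $(T-T_{\mathrm p})/\Lbar\ge1$ gives $X_2\ge1$, so the theorem applies and yields $\E\sum_eV_e\ge cF$.

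The substantive step is to show $F\ge(1-T_{\mathrm p}/T)\OPT_N(B')$. The plan is to show that, for one-shot arms, restricting to depth~$1$ loses nothing relative to the unrestricted learning LP, and that only the unpredicted workload is constrained.

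For the first point, the episode mean lies in $\{0,1\}$, so the first report reveals it exactly. Conditional on that report, later outcomes carry no information about the current quality. The key claim is that they also carry no further information about the recurring prior. My reasoning is that, given $\theta\in\{0,1\}$, the outcome law $K_i(\theta,\cdot)$ is fixed (for mean $0$ or $1$ on $[0,1]$ it is a point mass), so the likelihood~\eqref{eq:likelihood} of later plays does not depend on $P$. Hence the posterior handoff~\eqref{eq:handoff} after the first play is unchanged by the rest of the episode, and every allowed local policy is equivalent to one that commits after one adaptive play. This gives the same reward, workload and next-state law, so $\LP_N^{[1]}$ with capacity on \emph{total} work equals $\LP_N$.

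For the second point, the restricted constraint bounds only unpredicted work by $C_2$. Take an optimal solution of $\LP_N(C_2,B')$, converted to depth-$1$ form; its total work is at most $C_2$, so its unpredicted work is too. Therefore $F\ge\LP_N(C_2,B')\ge(C_2/T)\LP_N(T,B')\ge(1-T_{\mathrm p}/T)\OPT_N(B')$, using Proposition~\ref{prop:relax}(ii) and then~(i). The budgets match because both LPs use $B'$, and the ledger in Definition~\ref{def:prediction-limited} deducts all planned plays, exactly as in the unrestricted program.

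The step I expect to be the main obstacle is the equivalence claim. A delayed fast arm in the unrestricted LP uses the delay relaxation, while the restricted program commits after one report; I need the depth-$1$ commitment to be expressible with the same handoff kernel $H_e$. For a slow arm, both classes are nonadaptive already. The argument has to be written carefully through Corollary~\ref{cor:tree}(iv): with $k=1$, reward and workload depend only on the law of two observations, and the continuation is determined by the first observation. The final remark follows by specialization: without delayed fast arms, direct execution gives $\epsilon=w=0$, hence $T_{\mathrm p}=n$ and $B'=B$.
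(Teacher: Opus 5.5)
Your proposal is correct and matches the paper's proof. Both arguments put every local policy in depth-$1$ commit form, because the first report determines all later outcomes. Both then obtain $\LP_N^{[1]}(T-T_{\mathrm p},B')\ge(1-T_{\mathrm p}/T)\LP_N(T,B')\ge(1-T_{\mathrm p}/T)\OPT_N(B')$ by thinning the arm programs, which is exactly Proposition~\ref{prop:relax}(ii) routed through your intermediate $\LP_N(C_2,B')$, and apply Theorem~\ref{thm:promise-less}.

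The obstacle you flag is not a real one. The restricted and unrestricted programs are planned under the same delay relaxation, so the depth-$1$ commitment uses the same handoff kernel, and the buffered executor alone handles the physical delay.
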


\begin{proof}
After its first report, a fast arm's outcomes are determined.
Replacing its local policy by depth-$1$ commit form preserves
reward, workload and the next learning state; slow arms are already
nonadaptive. In particular, $\LP^{(1)}(P)=\LP(P)$ for a single episode.
Retain each arm program independently with probability
$1-T_{\mathrm p}/T$. The total expected unpredicted workload is then
at most $T-T_{\mathrm p}$, so
\[
 \LP_N^{[1]}(T-T_{\mathrm p},B')\ge
 \Bigl(1-\frac{T_{\mathrm p}}T\Bigr)\LP_N(T,B')
 \ge\Bigl(1-\frac{T_{\mathrm p}}T\Bigr)\OPT_N(B').
\]
Apply Theorem~\ref{thm:promise-less}.
\end{proof}

Allocating the remaining plays by conditional mean gives a sharper
factor (Proposition~\ref{prop:greedy-phase2}); it is asymptotically
sharp for one-shot arms whose probes take $o(T)$ steps
(Corollary~\ref{cor:short-probe-gap}).

\subsection{Objectives beyond expected reward}\label{sec:beyond}

A fixed bounded score $\phi$ of a local policy's plays and outcomes
enters as a payoff coefficient. Use equality in the reach constraints
of \eqref{eq:local-flows} and \eqref{eq:lp-state}, so every outcome
receives its score; the node multipliers are then unrestricted in
sign. With the history weights of \eqref{eq:history-weights},
\begin{equation}\label{eq:score-flow}
 \E^P\phi=\sum_h(A(h)-B(h))\ell(P;h)\phi(h).
\end{equation}
For general observation spaces, integrate against the terminal-flow
measure. Downside penalties, failure indicators and squared loss
against a fixed target are examples. Scores additive across arms
preserve the dual decomposition. A score of their joint return
generally requires joint histories. The state must retain any running
reward total, or its conditional law while reports remain pending,
needed to evaluate the score.

A risk functional acts on the resulting distribution.
Mean--variance utility is convex in trajectory probabilities, so its
maximization is not an LP and is hard in general
\citep{mannor2011meanvariance}. CVaR has a threshold representation
whose fixed-threshold payoff is linear \citep{rockafellar2000cvar};
the threshold is an outer optimization parameter. Risk of joint
return can still couple the arms.

Direct completion preserves the scored trajectory law, with failures
charged by a bound on the score difference. Reward truncation also
applies to fixed nonnegative per-play scores; a general terminal
score or risk functional needs a corresponding execution bound.
The reward factors above therefore do not automatically extend to
every distributional objective. Example~\ref{ex:risk} shows why
conditioning on a sampled quality can discard variation decisive
for mean--variance, CVaR and downside decisions. Related
mean--variance and CVaR Thompson-sampling formulations already exist
\citep{zhu2020mvts,baudry2021cvarts}; the example concerns
conditioning on a sampled episode quality.

\section{Learning-value stability and verified execution}\label{sec:rates}\label{sec:rates-discussion}

An error in the experiment law changes both reward and expected
workload. The episode dual bounds these changes together. We then
transfer the comparison through the reports preserved by execution.

\subsection{Stability of the learning value}

The quantitative comparisons in this subsection assume finite reward
alphabets $\mathcal Y_i$, of sizes $d_i$. The model and execution
results allow general bounded reward laws. Fix adaptation depths $k_i$,
with $k_i=0$ for slow arms, and write $\LP^{(k)}$ for these arm-specific
classes, with $k=\max_i k_i$. Put $r_i=\min\{L_i,k_i+1\}$. We need the law of
an $r_i$-play experiment to compare its reward and workload.

For $r$ nonadaptive plays of arm $i$, let
$\boldsymbol s=(s_y)_{y\in\mathcal Y_i}$ be the outcome histogram,
with $|\boldsymbol s|=r$. Its law and distance are
\begin{equation}\label{eq:experiment-distance}
 \mathbf q_r(P)=\left(\binom r{\boldsymbol s} \int\prod_{y\in\mathcal
 Y_i}K_i(\theta,\{y\})^{s_y}P(d\theta) \right)_{|\boldsymbol s|=r},\qquad
 d_r(P,P')=\tfrac12\|\mathbf q_r(P)-\mathbf q_r(P')\|_1.
\end{equation}
There are $D_i=\binom{r_i+d_i-1}{d_i-1}$ possible histograms.
Conditional on a histogram, exchangeability makes the ordered outcomes
a uniform permutation under every prior. The ordered tape has the
same total variation distance. Write
$\|x-x'\|_{\rm TV}=\frac12\|x-x'\|_1$ also for arbitrary vectors.

\begin{lemma}[Reward and workload stability]\label{lem:value}
For a depth-$k_i$ commit policy of cap $L_i$, its reward and workload
each change by at most $L_i d_{r_i}(P,P')$ when $P$ is replaced by $P'$.
\end{lemma}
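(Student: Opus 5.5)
We would use Corollary~\ref{cor:tree}(iv) to write the reward and the workload of a depth-$k_i$ commit policy $\pi$ as expectations of bounded functions of the first $r_i=\min\{L_i,k_i+1\}$ outcomes on the tape. Couple $\pi$ to the ordered tape $(Y_1,\dots,Y_{r_i})$. The number of adaptive plays and the committed tail length are both determined by at most $k_i$ outcomes and the policy coins. So the total play count $N_\pi$ is a function $g(Y_1,\ldots,Y_{\min\{k_i,L_i\}},\text{coins})$ with values in $\{0,\ldots,L_i\}$. Hence $\Work^P(\pi)=\E_P g$, an expectation of a $[0,L_i]$-valued function of the first $r_i$ tape entries, averaged over independent coins.

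For the reward, the first $\min\{N_\pi,k_i\}$ plays are handled by the prefix outcomes themselves. For the tail, we apply the replacement used in Corollary~\ref{cor:tree}(iv). Conditional on the prefix and $\theta$, the expected tail reward equals the tail length times $\mu_i(\theta)$. This is the same as the conditional expectation of the tail length times one further outcome $Y_{k_i+1}$, since that outcome is independent of the commitment decision given $\theta$, as in the latent-mean identity of Lemma~\ref{lem:truncation}. Thus $R^P(\pi)=\E_P G$, where
\[
 G=\sum_{j\le\min\{N_\pi,k_i\}}Y_j+(N_\pi-k_i)^+Y_{k_i+1}.
\]
Here $G$ is a function of the first $r_i$ outcomes and the coins. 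When $L_i\le k_i$ there is no tail and $r_i=L_i$. Since rewards lie in $[0,1]$ and $N_\pi\le L_i$, we get $0\le G\le L_i$.

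Now both $R^P(\pi)$ and $\Work^P(\pi)$ have the form $\E_P\phi(Y_1,\ldots,Y_{r_i})$ with $\phi$ taking values in $[0,L_i]$, after averaging over the coins, which are independent of $P$. The standard bound $|\E_P\phi-\E_{P'}\phi|\le(\sup\phi-\inf\phi)\,\|\mathcal L_P-\mathcal L_{P'}\|_{\rm TV}$ then gives at most $L_i$ times the total variation distance between the ordered $r_i$-tape laws. As noted after \eqref{eq:experiment-distance}, exchangeability makes the ordered tape uniform over permutations given the histogram, under every prior. So this total variation equals $d_{r_i}(P,P')$, which proves the claim.

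The main obstacle is the reward representation when the commitment happens before $k_i$ plays, or when the cap binds. We will check that $G$ still reads only the first $r_i$ entries. If the policy commits after $j<k_i$ plays, we use $Y_{j+1}$ in place of $Y_{k_i+1}$ for the tail; either choice is valid by the latent-mean identity. We will also check that the range $[0,L_i]$, not $[0,2L_i]$, is preserved. This holds because the prefix and tail counts sum to $N_\pi\le L_i$ and each term is at most $1$ per counted play.
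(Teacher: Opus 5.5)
Your proposal is correct and is essentially the paper's proof. It pre-samples the first $r_i$ tape entries and the coins, which fix the workload, and replaces the committed tail's reward by its length times one unused outcome. The paper uses the next outcome after the commitment depth; you use the actual outcomes up to depth $k_i$ and then $Y_{k_i+1}$, and both are exact by the latent-mean identity. It then notes that both surrogates lie in $[0,L_i]$ and applies the range-times-total-variation bound on the ordered tape, whose distance equals $d_{r_i}(P,P')$ by exchangeability.
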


\begin{proof}
Pre-sample $r_i$ outcomes and the policy's coins. These determine its
adaptive prefix, commitment and workload. If it commits to $b$ further
plays at depth $j$, represent their reward by $b$ times the next unused
outcome. Conditional on the quality and prefix, this preserves expected
reward. The surrogate reward and workload both lie in $[0,L_i]$ and
use at most $r_i$ outcomes. Average over coins and the uniform ordering
given the histogram, and apply the range-times-total-variation bound.
\end{proof}

\begin{theorem}[Stability between fixed experiment laws]\label{prop:lawfree-stability}
For two product priors $P,P'$ and the same caps and adaptation
depths, put $D_{\rm exp}=\sum_iL_i d_{r_i}(P_i,P_i')$. Then
\[
 |\LP^{(k)}(P)-\LP^{(k)}(P')|\le D_{\rm exp}.
\]
The constant $1$ is sharp.
\end{theorem}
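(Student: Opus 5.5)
The natural route goes through the single-episode dual, since an error in the experiment law moves both reward and workload. By Theorem~\ref{lem:perarm}, specialized to one episode with zero continuation, $\LP^{(k)}(P)=\min_{\lambda\ge0}\{\lambda T+\sum_iU_i^\lambda(P_i)\}$, where $U_i^\lambda(P_i)=\max_\pi\{R^{P_i}(\pi)-\lambda\Work^{P_i}(\pi)\}$ ranges over depth-$k_i$ commit policies of cap $L_i$. A direct primal comparison instead uses Lemma~\ref{lem:value} twice and gives only $2D_{\rm exp}$ up to a rescaling of capacity. So I would bound the priced objective $R-\lambda\Work$ directly, with a range argument applied to the combined quantity.

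First, fix $\lambda$ and a policy $\pi$. As in Lemma~\ref{lem:value}, presample $r_i$ outcomes and the coins. The surrogate priced payoff of a run is then $g=\sum_{j}(y_j-\lambda)$ over the plays made, with committed tail plays represented by copies of the next unused outcome. Its expectation is $R-\lambda\Work$ under either prior, because the latent-mean identity of Lemma~\ref{lem:truncation} holds conditionally on the quality.

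For $\lambda\le1$, each term lies in $[-\lambda,1-\lambda]$, so $g$ takes values in an interval of length $L_i$. The range-times-total-variation bound, applied through the histogram law $\mathbf q_{r_i}$ and the uniform ordering, then gives
\[
 |(R-\lambda\Work)^{P_i}(\pi)-(R-\lambda\Work)^{P_i'}(\pi)|\le L_i d_{r_i}(P_i,P_i').
\]
For $\lambda\ge1$, every play has nonpositive priced reward, so $U_i^\lambda=0$ under both priors, attained by the zero-play policy. The minimization may therefore be restricted to $\lambda\in[0,1]$. Taking the maximum over $\pi$ gives $|U_i^\lambda(P_i)-U_i^\lambda(P_i')|\le L_id_{r_i}$ uniformly in $\lambda\in[0,1]$. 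Summing over arms and taking the minimum over $\lambda$ proves $|\LP^{(k)}(P)-\LP^{(k)}(P')|\le D_{\rm exp}$.

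The main obstacle is the range step. It needs $g$ to be a function of the presampled $r_i$-histogram and its ordering whose oscillation is at most $L_i$. This holds because $g=\sum_j(y_j-\lambda)$ with at most $L_i$ terms, each in an interval of length one. The one point to check is that the zero-play value, $0$, lies inside that interval, which it does.

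For sharpness, I would take one arm, $L=T$, $k=0$ and Bernoulli rewards. Let $P$ be the point mass at $1$ and $P'$ the point mass at $1-\varepsilon$. Then $\LP=T$ and $T(1-\varepsilon)$, with $r=1$, so $d_1(P,P')=\varepsilon$ and $D_{\rm exp}=T\varepsilon$. The difference equals $D_{\rm exp}$ exactly, so no constant below $1$ can hold.
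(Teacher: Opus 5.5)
Your proposal is correct and follows the paper's proof essentially step for step. Both arguments restrict the single-episode dual to $\lambda\in[0,1]$, note that the priced surrogate $\widetilde R_\pi-\lambda W_\pi$ lies in $[-\lambda L_i,(1-\lambda)L_i]$, an interval of length $L_i$, apply the range-times-total-variation bound, and then take the supremum over policies, sum over arms and minimize over the common price. Your sharpness instance (point masses at $1$ and $1-\varepsilon$, with $k=0$ so that $r=1$ and $d_1=\varepsilon$ exactly) is a one-parameter family that contains the paper's $0$-versus-$1$ example as the case $\varepsilon=1$.
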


\begin{proof}
For a fixed local policy, the surrogate reward in
Lemma~\ref{lem:value} satisfies
$0\le\widetilde R_\pi\le W_\pi\le L_i$ on each sampled tape and
coin sequence. For $0\le\lambda\le1$,
\[
 -\lambda L_i\le\widetilde R_\pi-\lambda W_\pi
 \le(1-\lambda)L_i.
\]
The interval has length $L_i$. The range-times-total-variation bound
therefore controls the change in the policy's priced value by
$L_i d_{r_i}(P_i,P_i')$. Maximizing over policies preserves this
bound. The episode dual is
\[
 \LP^{(k)}(P)=\min_{0\le\lambda\le1}
 \left\{\lambda T+\sum_i\sup_{\pi_i}
 \E_{P_i}[\widetilde R_{\pi_i}-\lambda W_{\pi_i}]\right\}.
\]
Prices above $1$ are unnecessary: all priced rewards are then
nonpositive, and each arm can make no plays. Summing the arm bounds
and minimizing over the common price proves the claim. One arm with
cap $T$, giving identically $0$ under one prior and identically $1$
under the other, attains equality.
\end{proof}

\begin{corollary}[Learning-value stability]\label{thm:learning-stability}
Fix an episode and available history $H$, with remaining budgets at
least the supplied caps. Let $P=(P_i)_i$ be the current recurring
priors and $\Pb$ their conditional predictive priors. Then
\begin{equation}\label{eq:posterior-spread-stability}
 \LP^{(k)}(\Pb)\ge\E[\LP^{(k)}(P)\mid H]
 -\sum_iL_i\E[d_{r_i}(P_i,\Pb_i)\mid H].
\end{equation}
For any $H$-measurable vectors $\hat{\mathbf q}_i\in\mathbb R^{D_i}$,
\begin{equation}\label{eq:learning-stability}
 \LP^{(k)}(\Pb)\ge\E[\LP^{(k)}(P)\mid H]
 -2\sum_iL_i\E\!\left[
 \|\mathbf q_{r_i}(P_i)-\hat{\mathbf q}_i\|_{\rm TV}\mid H\right].
\end{equation}
The vectors need not be realizable experiment laws.
\end{corollary}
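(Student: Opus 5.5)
The plan is to derive both inequalities from the stability theorem (Theorem~\ref{prop:lawfree-stability}) together with the linearity of experiment laws in the prior. First observe that $\mathbf q_r(P)$ is linear in $P$. Since $\Pb_i=\E[P_i\mid H]$ as a mixture, we get $\mathbf q_{r_i}(\Pb_i)=\E[\mathbf q_{r_i}(P_i)\mid H]$. This is the only place where conditioning on $H$ enters. The remaining budgets are at least the caps, so budgets do not bind within the episode, and $\LP^{(k)}$ is the single-episode relaxation evaluated at a product prior. Conditional on $H$, the priors $P_i$ are independent across arms by Lemma~\ref{lem:posterior}, although independence is not actually needed below.

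For \eqref{eq:posterior-spread-stability}, apply Theorem~\ref{prop:lawfree-stability} pointwise to the pair $(P,\Pb)$, where $\Pb$ is $H$-measurable and hence fixed given $H$. This gives
\[
\LP^{(k)}(\Pb)\ge \LP^{(k)}(P)-\sum_iL_i d_{r_i}(P_i,\Pb_i)
\]
for every realization of $P$. Taking conditional expectations given $H$ yields the claim directly. No convexity is used; it is just the Lipschitz bound averaged. One should check measurability of $P\mapsto\LP^{(k)}(P)$. It is a supremum or infimum of finitely parametrized continuous functions of the finitely many likelihoods by Corollary~\ref{cor:tree}, and hence is measurable.

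For \eqref{eq:learning-stability}, bound $d_{r_i}(P_i,\Pb_i)$ by the triangle inequality through $\hat{\mathbf q}_i$:
\[
d_{r_i}(P_i,\Pb_i)\le \|\mathbf q_{r_i}(P_i)-\hat{\mathbf q}_i\|_{\rm TV}+\|\hat{\mathbf q}_i-\mathbf q_{r_i}(\Pb_i)\|_{\rm TV}.
\]
The second term is $H$-measurable. Using $\mathbf q_{r_i}(\Pb_i)=\E[\mathbf q_{r_i}(P_i)\mid H]$ and convexity of the norm (conditional Jensen),
\[
\|\hat{\mathbf q}_i-\E[\mathbf q_{r_i}(P_i)\mid H]\|_{\rm TV}\le \E[\|\hat{\mathbf q}_i-\mathbf q_{r_i}(P_i)\|_{\rm TV}\mid H].
\]
Taking conditional expectations of the first term and adding the two pieces gives the factor $2$. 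Substituting into \eqref{eq:posterior-spread-stability} completes the argument. Jensen only needs $\hat{\mathbf q}_i$ to be a vector, not a realizable law, which matches the final remark of the statement.

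The main subtlety, rather than a real obstacle, is justifying the linearity identity $\mathbf q_r(\Pb)=\E[\mathbf q_r(P)\mid H]$ for a general standard Borel $\Theta_i$. This follows by Fubini, since each coordinate of $\mathbf q_r$ is the integral of a bounded measurable function of $\theta$ against $P$, and the conditional mean measure $\Pb_i$ is defined so that such integrals commute with $\E[\cdot\mid H]$. A second point to confirm is that Theorem~\ref{prop:lawfree-stability} applies with the supplied caps as effective play limits, which is exactly what the hypothesis on the remaining budgets ensures.
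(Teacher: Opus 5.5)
Your proposal is correct and matches the paper's proof: apply Theorem~\ref{prop:lawfree-stability} pointwise with $P'=\Pb$ and average given $H$, then use $\mathbf q_{r_i}(\Pb_i)=\E[\mathbf q_{r_i}(P_i)\mid H]$ with the triangle inequality and conditional Jensen to get the factor $2$. Your remarks on measurability, the Fubini justification of linearity, and non-binding budgets are sound details that the paper leaves implicit.
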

\begin{proof}
Apply Theorem~\ref{prop:lawfree-stability} with $P'=\Pb$ and average
over $P$ given $H$. Since
$\mathbf q_{r_i}(\Pb_i)=\E[\mathbf q_{r_i}(P_i)\mid H]$,
the triangle inequality and Jensen's inequality bound the posterior
spread by twice the error about any $H$-measurable center.
\end{proof}

\subsection{Probes and verified learning}

A current-episode solution gives an immediate reward floor.

\begin{theorem}[Per-episode floor]\label{thm:floor}
Fix episode $e$, assume every remaining budget is at least its cap,
and take a feasible solution of $\LP^{(k)}(\Pb_e)$ of value at least
$(1-\varepsilon_{\rm opt})\LP^{(k)}(\Pb_e)$, where $0\le\varepsilon_{\rm opt}<1$.
Under the conditions of Theorem~\ref{thm:budget} or
Corollary~\ref{cor:delayed}, with execution factor $\kappa$,
\[
 \E[V_e\mid H_{e-1}]\ge\kappa(1-\varepsilon_{\rm opt})\LP^{(k)}(\Pb_e).
\]
\end{theorem}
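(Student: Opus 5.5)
The argument conditions on $H_{e-1}$ and reduces to a known-prior single-episode execution guarantee through Lemma~\ref{lem:posterior}.

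\textbf{Step 1: reduce to a single episode with prior $\Pb_e$.} Given $H_{e-1}$, the posterior factorizes across arms. The episode therefore behaves like a single episode with independent predictive priors $\Pb_{e,i}$, and every episode policy satisfies $\E[V_e\mid H_{e-1}]=V^{\Pb_e}(\pi_e)$. Since every remaining budget is at least its cap, the budget never binds inside episode $e$. The allowed local policies are then exactly those with caps $L_i$, matching the class of $\LP^{(k)}(\Pb_e)$.

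\textbf{Step 2: execute the near-optimal LP solution.} Extract local policies $(\pi_i)_i$ from the feasible solution, completed as in Proposition~\ref{prop:lawfree}. Completion preserves play probabilities, so the extracted policies keep the same reward, workload and caps. They have value
\[
J=\sum_iR^{\Pb_{e,i}}(\pi_i)\ge(1-\varepsilon_{\rm opt})\LP^{(k)}(\Pb_e),
\]
independent coins, caps $L_i\le T/X$, and total expected workload at most $T$. These are exactly the hypotheses of Theorem~\ref{thm:budget} in the undelayed case. In the delayed case they are the hypotheses of Corollary~\ref{cor:delayed}, which transforms each policy by buffered execution and applies admission thinning. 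Running the corresponding priority policy under the true law $\Pb_e$ earns at least $\kappa J$. Here $\kappa=c_X$ in the undelayed case and $\kappa=\frac{1-\gamma}{1+\epsilon}c_{X,\gamma}$ in the delayed case.

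\textbf{Step 3: combine.} Putting the two steps together gives
\[
\E[V_e\mid H_{e-1}]=V^{\Pb_e}(\widehat\pi)\ge\kappa J\ge\kappa(1-\varepsilon_{\rm opt})\LP^{(k)}(\Pb_e).
\]

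\textbf{Main obstacle.} The step needing care is Step 1's claim that the episode policy is evaluated exactly under $\Pb_e$, even though the true priors $P_i^{(e)}$ are random given $H_{e-1}$. Lemma~\ref{lem:posterior} handles this: the tape law is linear in $P_i^{(e)}$, so it averages to $\Pb_{e,i}$. Moreover, reports from earlier episodes arrive only at boundaries, so nothing within episode $e$ depends on quantities other than the current tapes and coins. Reward truncation under the true prior (Lemma~\ref{lem:truncation}) holds for any prior, which removes any mismatch between the design prior and the true law. A secondary check is that the execution guarantees of Theorem~\ref{thm:budget} and Corollary~\ref{cor:delayed} apply to arbitrary feasible solutions, not only optimal ones; their statements allow this.
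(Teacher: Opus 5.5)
Your proposal is correct and follows essentially the paper's own argument. The paper also conditions on $H_{e-1}$, invokes Lemma~\ref{lem:posterior} to reduce to a single episode under the independent predictive priors $\Pb_e$, and then applies the single-episode execution guarantee (Theorem~\ref{thm:budget} or Corollary~\ref{cor:delayed}, i.e.\ Lemma~\ref{lem:execution} for the depth-restricted class) to the supplied feasible solution.
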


\begin{proof}
Conditional on $H_{e-1}$, Lemma~\ref{lem:posterior} gives the
independent predictive priors $\Pb_e$. Apply the single-episode
execution guarantee to the supplied solution.
\end{proof}

The following policy acquires observations through ordinary plays and
plans one episode at a time. It supplies both an executable learner
and a reward comparison for the cumulative learning LP.

\begin{algo}[Probing with episode LPs]\label{alg:probe}
Partition the arms into $J_{\mathrm{grp}}$ probe groups with
$\sum_{i\in\mathrm{group}}r_i\le T$ in each group. One can take
$J_{\mathrm{grp}}\le1+2\sum_i r_i/T$. If all probes fit, use one
group. Fix $\xi\in(0,1)$. In each episode:
\begin{enumerate}
\item With probability $1-\xi$, execute an optimal
$\LP^{(k)}(\Pb_e)$ solution with the known-prior guarantee of
Section~\ref{sec:plan}.
\item Otherwise, choose a group uniformly. Make $r_i$ nonadaptive plays
of each arm in that group, and idle at the remaining steps.
\item Update the posterior when reports arrive. Use reported
outcomes from both branches.
\end{enumerate}
\end{algo}

Policy weights cancel from the posterior
(Proposition~\ref{prop:history}), so the update uses reports from both
branches. Each arm is probed with probability
$p=\xi/J_{\rm grp}$ per episode, independently of its qualities. With
the histogram dimensions $D_i$ from \eqref{eq:experiment-distance}, put
\begin{equation}\label{eq:estimation-coefficient}
  L_\Sigma=\sum_i L_i,\qquad C_{\mathrm{est}}=\sqrt{J_{\mathrm{grp}}}\sum_i
 L_i\sqrt{D_i}.
\end{equation}

The comparison need retain only the reports protected by execution.
Its estimation error then transfers to the cumulative learning LP.

\begin{theorem}[Learning comparison through protected reports]\label{thm:verified-learning}
Assume finite reward alphabets, fixed caps and depths, and nonbinding
comparison budgets $B_i'\ge NL_i$. Choose prediction depths
$r_{e,i}=r_i$ and a window satisfying
$w+(1+\epsilon)\sum_i r_i\le T_{\mathrm p}<T$, with
$X_2=(T-T_{\mathrm p})/\Lbar\ge1$. Put
\[
 c=c_{X_2,0},\qquad
 \kappa_{\rm pl}=c(1-T_{\mathrm p}/T).
\]
Fix $\xi\in(0,1)$. In the planned comparison, every arm is probed for
$r_i$ plays with probability $\xi$; otherwise the arms follow an
optimal current-episode LP at capacity $T-T_{\mathrm p}$. Let $H_{e-1}$ contain
its reported predicted outcomes, and take any $H_{e-1}$-measurable
estimates $\hat{\mathbf q}_{e,i}$. Define
\[
 \mathcal E_N(\xi)=\sum_{e=1}^N\sum_iL_i\E
 \|\mathbf q_{r_i}(P_i^{(e)})-\hat{\mathbf q}_{e,i}\|_{\rm TV}.
\]
If Algorithm~\ref{alg:promiseless} uses a feasible restricted LP
solution within $\zeta\ge0$ of its optimum, then
\begin{equation}\label{eq:verified-learning}
 V_A\ge\kappa_{\rm pl}(1-\xi)
 \left\{\sum_e\E\LP^{(k)}(P^{(e)})-2\mathcal E_N(\xi)\right\}
 -c\zeta.
\end{equation}
\end{theorem}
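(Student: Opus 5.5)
The plan is to chain three earlier results. Theorem~\ref{thm:promise-less} turns the restricted learning LP value into realized reward. A feasible comparison program built from the probing scheme bounds that LP from below. Corollary~\ref{thm:learning-stability} turns the comparison's per-episode values into true-prior values less estimation error.

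\textbf{Step 1: execution.} By Theorem~\ref{thm:promise-less}, applied to the $\zeta$-optimal solution of value $F\ge\LP_N^{[r]}(C_2,B')-\zeta$, we get
\[
V_A\ge cF\ge c\,\LP_N^{[r]}(C_2,B')-c\zeta.
\]
It remains to lower bound $\LP_N^{[r]}(C_2,B')$ by a feasible arm-program family.

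\textbf{Step 2: the comparison program.} In each episode, a coin common to all arms chooses between two branches. With probability $\xi$, each arm makes $r_i$ nonadaptive plays; these are predicted plays and use no unpredicted capacity. With probability $1-\xi$, the arms run an optimal depth-$k$ solution of $\LP^{(k)}(\Pb_e)$ at capacity $T-T_{\mathrm p}=C_2$. Each such policy commits within $r_i-1\le k_i$ adaptive plays, so it is prediction-limited at depth $r_i$. Its expected unpredicted workload is at most its total workload, which is at most $C_2$.

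The common coin couples the arms, but the capacity row holds in expectation. The problem is separability: the LP wants independent arm programs, each conditioning only on its own reports, whereas $\Pb_e$ and the probe coin involve all arms. I would handle this as follows.
\begin{itemize}
\item Lemma~\ref{lem:posterior} factorizes the posterior, so each $\Pb_{e,i}$ depends only on arm $i$'s reports.
\item The current LP's prices form a common scalar. I expect to replace the joint optimum by the price-decomposed solution of Theorem~\ref{lem:perarm}, taking mixtures as needed. Alternatively, since the comparison only needs expected values, I would argue that the LP admits correlated programs via the dual bound (every priced sum is separable).
\end{itemize}
This is the main obstacle. I would first try to show that the value of any such correlated, in-expectation-feasible family is at most $\LP_N^{[r]}$ through weak duality: each arm's priced value is bounded by $U^\lambda$. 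Budgets are nonbinding because $B_i'\ge NL_i$.

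Conditioning on $H_{e-1}$ via Lemma~\ref{lem:posterior}, the comparison program earns, in expectation,
\[
\sum_e(1-\xi)\,\E\,\LP^{(k)}_{C_2}(\Pb_e).
\]
Proposition~\ref{prop:relax}(ii) gives $\LP^{(k)}_{C_2}(\Pb_e)\ge(1-T_{\mathrm p}/T)\LP^{(k)}(\Pb_e)$. The thinning argument there applies at fixed depth class.

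\textbf{Step 3: stability.} Apply \eqref{eq:learning-stability} with centers $\hat{\mathbf q}_{e,i}$ and take expectations:
\[
\E\,\LP^{(k)}(\Pb_e)\ge\E\,\LP^{(k)}(P^{(e)})-2\sum_iL_i\E\|\mathbf q_{r_i}(P_i^{(e)})-\hat{\mathbf q}_{e,i}\|_{\rm TV}.
\]
The stability corollary needs its budget hypothesis, which holds by nonbindingness. Here $H_{e-1}$ contains only reported predicted outcomes, which are exactly what the comparison retains; Theorem~\ref{thm:promise-less} preserves their law in execution. Summing over $e$ gives the bracket in \eqref{eq:verified-learning}. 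Multiplying by $c(1-\xi)(1-T_{\mathrm p}/T)$ and subtracting $c\zeta$ yields the claim.
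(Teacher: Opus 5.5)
Your chain is exactly the paper's: execution via Theorem~\ref{thm:promise-less}, a lower bound on the restricted LP from the probing comparison, thinning to capacity $T-T_{\mathrm p}$, and Corollary~\ref{thm:learning-stability}. But you leave open the step you yourself call the main obstacle, and neither proposed fix closes it as stated.

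\emph{Price decomposition.} This does not decouple the arms. The current-episode price, and hence which maximizing program arm $i$ runs in episode $e$, is a function of the whole vector $\Pb_e$. It therefore depends on the other arms' reports, and also on the common probe coin.

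\emph{Weak duality.} This route does work, but only once you know that each arm's marginal behaviour lies in the class over which $U^\lambda_i$ is a supremum: prediction-limited programs using the arm's own reports plus independent coins. Separability of the priced sum gives the splitting, not the termwise bound by $U^\lambda_i$.

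The missing idea is the projection of Proposition~\ref{prop:relax}(i), which is what the paper uses.
\begin{itemize}
\item Pre-sample the other arms' priors, qualities and tapes, together with the comparison's probe and LP coins. This randomness is independent of arm $i$ by Assumption~\ref{ass:drift}, and the whole comparison run is a deterministic function of it and of arm $i$'s own data.
\item Conditional on that randomness, arm $i$ follows a depth-$r_i$ prediction-limited program that conditions only on its own predicted reports. Unconditionally it is such a program with independent coins: a mixture of feasible flows, hence feasible for the finite LP. It has the same per-episode expected reward and unpredicted workload.
\item Summed over arms, expected unpredicted work is at most $(1-\xi)(T-T_{\mathrm p})\le C_2$, since probe plays are all predicted.
\end{itemize}
So the projected family is directly feasible for $\LP_N^{[r]}(C_2,B')$, with no duality needed, and gives
\[
F^*\ge(1-\xi)(1-T_{\mathrm p}/T)\sum_e\E\LP^{(k)}(\Pb_e).
\]
This is an inequality rather than the equality you wrote, because probe rewards are dropped. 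With this in place, the rest of your argument goes through unchanged.
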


\begin{proof}
The comparison learns only from the first $r_i$ outcomes of each arm
in an episode. Its depth-$k_i$ policies therefore form
prediction-limited arm programs. Project them onto the arms as in
Proposition~\ref{prop:relax}(i). Their unpredicted workload fits
$T-T_{\mathrm p}$ in expectation, so they are feasible for
$\LP_N^{[r]}(T-T_{\mathrm p},B')$.

Thinning a current-episode solution from capacity $T$ to $T-T_{\mathrm p}$
loses at most the factor $1-T_{\mathrm p}/T$. Probe rewards are nonnegative.
If $F^*$ is the restricted learning-LP optimum, then
\[
 F^*\ge(1-\xi)(1-T_{\mathrm p}/T)\sum_e\E\LP^{(k)}(\Pb_e),
\]
where $\Pb_e$ conditions on the retained predicted reports. Apply
Corollary~\ref{thm:learning-stability} to this history and sum.
Theorem~\ref{thm:promise-less} gives $V_A\ge cF$ for the chosen value $F$.
Finally $F\ge F^*-\zeta$, which proves the result.
\end{proof}

\begin{corollary}[Stationary learning with episode LPs, proved in Appendix~\ref{app:stationary-comparisons}]\label{cor:episode-lp-rate}
Assume stationary independent arms, finite reward alphabets, report
lag at most $g$, fixed caps and depths, and nonbinding budgets
$B_i\ge NL_i$. Let $\kappa$ be the execution factor in
Theorem~\ref{thm:floor}. Algorithm~\ref{alg:probe}, with
\[
 \xi=\min\left\{\frac12,
       \left(\frac{C_{\rm est}^2}{T^2N}\right)^{1/3}\right\},
\]
satisfies
\begin{equation}\label{eq:episode-lp-rate}
 \E\sum_{e=1}^N\bigl[\kappa\LP^{(k)}(P)-V_e^{\rm probe}\bigr]
 \le5\kappa(TC_{\rm est}^2)^{1/3}N^{2/3}
       +2\kappa(g-1)L_\Sigma.
\end{equation}
Here $P$ is the stationary recurring-prior vector, and expectation
averages under $\Q$ and the policy's coins. Each planning episode
optimizes a single-episode LP over arm programs.
Under the conditions of Theorem~\ref{thm:verified-learning},
the same bound holds for Algorithm~\ref{alg:promiseless}, with $\kappa$
replaced by $\kappa_{\rm pl}$. A planning gap $\zeta$ adds $c\zeta$.
\end{corollary}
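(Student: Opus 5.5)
The plan is to split each episode's shortfall into an exploration cost and an estimation cost, and then tune $\xi$. Fix episode $e$ and condition on $H_{e-1}$. With probability $1-\xi$ the algorithm executes an optimal $\LP^{(k)}(\Pb_e)$ solution. Budgets are nonbinding ($B_i\ge NL_i$), so every remaining budget is at least its cap, and Theorem~\ref{thm:floor} gives conditional reward at least $\kappa\LP^{(k)}(\Pb_e)$. Probe episodes earn nonnegative reward. Hence
\[
\E[V_e^{\rm probe}\mid H_{e-1}]\ge(1-\xi)\kappa\LP^{(k)}(\Pb_e).
\]
Since $\LP^{(k)}(P)\le T$, this gives the per-episode loss
\[
\kappa\LP^{(k)}(P)-\E[V_e\mid H_{e-1}]\le \kappa\xi T+\kappa\bigl(\LP^{(k)}(P)-\LP^{(k)}(\Pb_e)\bigr).
\]
I would bound the second term through Corollary~\ref{thm:learning-stability}, using \eqref{eq:learning-stability} with $\hat{\mathbf q}_{e,i}$ equal to the empirical histogram frequency of arm $i$'s completed, reported probes. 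Taking expectations, the cumulative estimation term is at most $2\kappa\mathcal E_N(\xi)$.

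The core step is the statistical bound on $\mathcal E_N$. Each arm is probed in an episode with probability $p=\xi/J_{\rm grp}$, independently of the qualities. Under stationarity, each probe yields an i.i.d.\ draw from $\mathbf q_{r_i}(P_i)$, because the histogram of $r_i$ nonadaptive plays from a fresh quality has exactly this law. Given $n$ samples, the multinomial empirical estimate satisfies $\E\|\hat{\mathbf q}-\mathbf q\|_{\rm TV}\le\frac12\sqrt{D_i/n}$ by Cauchy--Schwarz over the $D_i$ cells. When $n=0$ the error is at most $1$. Reports lag by at most $g$ episodes. For the first $g-1$ episodes I would use the trivial bound $\LP^{(k)}(P)-\LP^{(k)}(\Pb_e)\le L_\Sigma$ (each arm contributes at most $L_i$), which gives the additive $2\kappa(g-1)L_\Sigma$ term; the factor $2$ leaves slack. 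After that, the number of available probes is $\mathrm{Bin}(e-g,p)$.

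The main obstacle is to show
\[
\E\Bigl[\min\{1,\tfrac12\sqrt{D_i/\mathrm{Bin}(m,p)}\}\Bigr]\lesssim\sqrt{D_i/(mp)}
\]
with explicit constants. Its standard route is $\E[(1+\mathrm{Bin})^{-1/2}]\le(\E[(1+\mathrm{Bin})^{-1}])^{1/2}\le(mp)^{-1/2}$ by Jensen, together with the identity $\E[1/(1+\mathrm{Bin}(m,p))]\le 1/((m+1)p)$. Summing $\sum_{m\le N}m^{-1/2}\le 2\sqrt N$ then gives
\[
\mathcal E_N\le \sum_i L_i\sqrt{D_i J_{\rm grp}/\xi}\,\sqrt N=C_{\rm est}\sqrt{N/\xi}.
\]
The total bound becomes $\kappa\bigl(\xi TN+2C_{\rm est}\sqrt{N/\xi}\bigr)+2\kappa(g-1)L_\Sigma$. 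With $\xi=(C_{\rm est}^2/(T^2N))^{1/3}$ both terms scale as $(TC_{\rm est}^2)^{1/3}N^{2/3}$, with total constant at most $3$. If the cap $\xi=1/2$ binds, then $N\le 8C_{\rm est}^2/T^2$, so the trivial bound $\kappa TN\le 2\kappa(TC_{\rm est}^2)^{1/3}N^{2/3}$ covers this case. Either way the constant $5$ absorbs the rounding slack.

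For Algorithm~\ref{alg:promiseless}, I would apply Theorem~\ref{thm:verified-learning} with the same $\xi$ and the same empirical estimates built from protected predicted reports. Its right side has the same structure: the factor $(1-\xi)$, the term $2\mathcal E_N(\xi)$, and the extra $c\zeta$. Rerunning the $\mathcal E_N$ bound above with $\kappa_{\rm pl}$ in place of $\kappa$ then gives the stated result.
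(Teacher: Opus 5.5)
Your proposal follows the paper's proof essentially step for step: the $(1-\xi)$ floor from Theorem~\ref{thm:floor} with a $\kappa\xi T$ charge per episode, Corollary~\ref{thm:learning-stability} with empirical probe histograms from episodes up to $e-g$, the binomial count handled by Jensen and $\E[1/(M+1)]\le1/(p(h+1))$, the trivial bound when $\xi$ is clipped at $1/2$, and Theorem~\ref{thm:verified-learning} (with a single probe group, so $p=\xi$) for the protected learner. One arithmetic correction: that route gives $\sqrt{D_i/(p(h+1))}$ per episode, so $\sum_h(h+1)^{-1/2}\le2\sqrt N$ yields $\mathcal E_N\le2C_{\rm est}\sqrt{N/\xi}$ rather than $C_{\rm est}\sqrt{N/\xi}$; the estimation term is then $4\kappa C_{\rm est}\sqrt{N/\xi}$ and the leading constant is exactly $1+4=5$, not ``at most $3$,'' so the stated bound still holds but with no slack.
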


The first bound uses short probes and episode LPs; it does not
require a joint Bellman planner. The protected learner transfers the
same reported evidence through its retained states.
Appendix~\ref{app:moving} gives the estimator comparison
(Theorem~\ref{thm:risk}), its specializations to slow drift and
known affine evolution (Theorems~\ref{thm:variation} and~\ref{thm:affine}),
and its transfer through completion and protected reports
(Corollaries~\ref{cor:fullscale} and~\ref{cor:probe-transfer}).
Posterior sampling gives a separate square-root episode-LP comparison
(Theorem~\ref{thm:ps-episode-lp} in Appendix~\ref{app:oracle-price});
its information-price consequence is stated in Section~\ref{sec:cost-content}.

\section{Pricing and corrections}\label{sec:programs}\label{sec:cost}\label{sec:benchmarks}

An observation is worth the decisions it improves, including those
made after further episodes of learning. We first price that benefit
in the learning LP, then use execution certificates to bound its
feasible value. The same certificates distinguish allocation loss
from discrepancies in the model's reported predictions.

\begin{definition}[Information benchmarks]\label{def:benchmarks}
$V_{\mathrm{learn}}=\OPT_N(B)$ is the optimum with the learner's
information. $V_{\mathrm{prior}}$ is the $\Q$-average optimum when the
recurring priors are revealed before episode~$1$.
$V_{\mathrm{state}}$ also reveals each episode's qualities at its
start; future qualities remain hidden. All three use the same
budgets, caps and report calendar. If purchased reports are available,
they use the same menu and subtract fees from reward.
\end{definition}

Extra information can be ignored, so
$V_{\mathrm{state}}\ge V_{\mathrm{prior}}\ge V_{\mathrm{learn}}$.
Thus
\begin{equation}\label{eq:knowledge-gaps}
 V_{\mathrm{state}}-V_{\mathrm{learn}}
 =(V_{\mathrm{state}}-V_{\mathrm{prior}})
 +(V_{\mathrm{prior}}-V_{\mathrm{learn}})
\end{equation}
separates the fresh-draw uncertainty from the price of learning the
recurring priors.

\subsection{Information at capacity prices}

Fix arm $i$ at the start of episode $e$, its remaining budget and
report queue, and the capacity prices $\lambda$. Let $\chi$ be its
complete hidden record of recurring priors, qualities and potential
outcomes, and let $Q$ be the conditional law of that record.
At the model's incoming state, $Q$ is induced by $\Q_{e,i}$ and the
outcome kernels. For a history-based program $\sigma$, let
$g_\sigma^\lambda(\chi)$ be its remaining reward less the capacity
prices of its plays, averaged over its independent coins. Then
\begin{equation}\label{eq:arm-value-envelope}
 U^\lambda(Q)=\sup_\sigma\langle Q,g_\sigma^\lambda\rangle.
\end{equation}
We suppress the fixed physical state and the indices $e,i$.
The programs use the learning LP's information: immediate feedback
for fast arms, nonadaptive play for slow arms, and the specified
boundary calendar. Programs subject to actual within-episode delays
form a smaller class. A program that acquires information over
$N'$ episodes contributes its entire reward and workload vectors;
these costs are already included in \eqref{eq:arm-value-envelope}.

The value is convex in $Q$, as a supremum of linear functions
\citep{smallwood1973finite}. Let $g_*$ attain the supremum at $Q$ and put
$D_{U^\lambda}(Q',Q;g_*)=
U^\lambda(Q')-U^\lambda(Q)-\langle Q'-Q,g_*\rangle\ge0$.
For an instantaneous report experiment $Y$ with posterior $Q_Y$,
\begin{equation}\label{eq:geometric-information-price}
 p_i^\lambda(Y)=\E U^\lambda(Q_Y)-U^\lambda(Q)
 =\E D_{U^\lambda}(Q_Y,Q;g_*).
\end{equation}
The supporting inequality follows by retaining the old program, and
the equality follows from $\E Q_Y=Q$. This is the classical value of
information at fixed prices. Its link to the coupled plan requires
reoptimizing those prices.

\begin{theorem}[Information value and repricing]\label{thm:geometric-price}\label{thm:report-repricing}
Take a finite learning LP at a fixed incoming boundary state. Let
$\LP_0$ be its value without an additional report and $\LP_Y$ its
value when arm $i$ receives the report experiment $Y$ before further
decisions. The report concerns only arm $i$, uses no resources, and
changes no dates. Both LPs impose the same expected episode
capacities. Let $D_0(\lambda)$ and $D_Y(\lambda)$ be their dual
objectives, minimized at $\lambda^*$ and $\lambda_Y^*$, respectively.
Then
\begin{equation}\label{eq:report-price-sandwich}
 p_i^{\lambda_Y^*}(Y)\le\LP_Y-\LP_0\le p_i^{\lambda^*}(Y),
\end{equation}
and
\begin{equation}\label{eq:report-price-correction}
 \LP_Y-\LP_0
 =p_i^{\lambda^*}(Y)
 -\bigl[D_Y(\lambda^*)-D_Y(\lambda_Y^*)\bigr].
\end{equation}
The subtracted correction is nonnegative.
\end{theorem}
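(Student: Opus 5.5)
The plan rests on the dual representation of Theorem~\ref{lem:perarm}. For every $\lambda\ge0$,
\[
 D_0(\lambda)=\sum_e\lambda_eC_e+\sum_jU^\lambda_j(Q_j),\qquad
 D_Y(\lambda)=\sum_e\lambda_eC_e+\sum_{j\ne i}U^\lambda_j(Q_j)+\E\,U^\lambda_i(Q_Y),
\]
since the report affects only arm $i$, uses no resources and changes no dates. Receiving $Y$ before further decisions means that arm $i$'s local program is rooted at a random incoming state. The LP then splits into a mixture over the realized report: each realization $y$ carries probability $\Pr(Y=y)$ and is rooted at $Q_y$.

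First I will check that the fixed-price arm value of this mixture is $\E U^\lambda_i(Q_Y)$. The local programs for different $y$ are separate blocks that share only the prices, and the backward cascade solves each one at its own root. Subtracting the two dual objectives then gives
\[
 D_Y(\lambda)-D_0(\lambda)=\E U^\lambda_i(Q_Y)-U^\lambda_i(Q)=p_i^\lambda(Y)
\]
for every $\lambda$. This identity is the whole content, and the rest is bookkeeping with the minimizers. By finite LP duality, $\LP_0=D_0(\lambda^*)$ and $\LP_Y=D_Y(\lambda_Y^*)$.

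For the upper bound,
\[
 \LP_Y\le D_Y(\lambda^*)=D_0(\lambda^*)+p_i^{\lambda^*}(Y)=\LP_0+p_i^{\lambda^*}(Y).
\]
For the lower bound,
\[
 \LP_0\le D_0(\lambda_Y^*)=D_Y(\lambda_Y^*)-p_i^{\lambda_Y^*}(Y)=\LP_Y-p_i^{\lambda_Y^*}(Y).
\]
Together these give \eqref{eq:report-price-sandwich}. For the exact correction, write
\[
 \LP_Y-\LP_0=D_Y(\lambda_Y^*)-D_0(\lambda^*)
 =\bigl[D_Y(\lambda^*)-D_0(\lambda^*)\bigr]-\bigl[D_Y(\lambda^*)-D_Y(\lambda_Y^*)\bigr],
\]
and substitute the identity for the first bracket. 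The second bracket is nonnegative because $\lambda_Y^*$ minimizes $D_Y$.

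The main obstacle is the identity for $D_Y$. I must show that the LP with the report is again a finite learning LP of the form \eqref{eq:lp-state}, so that Theorem~\ref{lem:perarm} applies. To do this I will add a root node for arm $i$ with a zero-reward, zero-workload transition to the nodes $Q_y$ with probabilities $\Pr(Y=y)$. This transition acts like a handoff, so $v_i(\mathrm{root})\ge\sum_y\Pr(Y=y)v_i(Q_y)$, and the smallest node value is the mixture.

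I also need the fact that the report does not change the law of future outcomes, only the information. This is $\E Q_Y=Q$, and it ensures that the continuation kernels $H_e$ at each $Q_y$ are the Bayesian ones. The report also leaves the physical state, budget and queue unchanged, so the capacity rows are identical in the two LPs. Once the augmented graph is in place, minimality of the backward-induction values gives $U^\lambda_i$ at the new root equal to $\E U^\lambda_i(Q_Y)$, and the argument closes.
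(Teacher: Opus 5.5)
Your proof is correct and follows the paper's own argument. The key step is the fixed-price identity $D_Y(\lambda)=D_0(\lambda)+p_i^{\lambda}(Y)$, followed by weak duality at $\lambda^*$ and at $\lambda_Y^*$ and the same substitution for the correction term. Your augmented-root construction simply spells out the step the paper compresses into ``independence leaves the other arms' priced values unchanged.''
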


\begin{proof}
Independence leaves the other arms' priced values unchanged.
At every fixed $\lambda$,
$D_Y(\lambda)=D_0(\lambda)+p_i^\lambda(Y)$.
Evaluating the informed dual at $\lambda^*$ gives the upper bound.
At $\lambda_Y^*$, use $D_0(\lambda_Y^*)\ge\LP_0$ for the lower bound.
Finally substitute
$D_Y(\lambda^*)=\LP_0+p_i^{\lambda^*}(Y)$ and
$D_Y(\lambda_Y^*)=\LP_Y$.
\end{proof}

Both price vectors are chosen for the ex-ante LPs. In particular,
$\lambda_Y^*$ is one vector for the report experiment, rather than
a different vector for each realized report. The fixed-price value
predicts the gain; repricing corrects it by the improvement in the
informed dual. The comparison includes future play-based learning
in both branches.

\subsection{Purchased reports and prior buyout}\label{sec:buyout}

Allow an arm's finite DAG to include report requests with specified
fees, resource charges, signal laws and delivery dates. Their laws
are arm-local, conditional on the arm's hidden record. Add any
observation-resource ledger to its state. Write $R_e^\Q(\pi)$,
$W_e^\Q(\pi)$ and $C_e^\Q(\pi)$ for reward, active-slot use and
fees.
\begin{lemma}[Priced observations on a finite DAG]\label{lem:paid-dag}
At fixed capacity prices the recursion is
\begin{equation}\label{eq:bellman-paid}
 U_e^{\mathrm{fee}}(S)=\max_\pi\left\{
 R_e^\Q(\pi)-\lambda_eW_e^\Q(\pi)-C_e^\Q(\pi)
 +\sum_{S'}\mathcal T_e^\pi(S,S')U_{e+1}^{\mathrm{fee}}(S')
 \right\},
\end{equation}
with $U_{N+1}^{\mathrm{fee}}=0$. Each local step is an LP with fixed
continuation values.
\end{lemma}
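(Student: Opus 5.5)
The plan is to reduce Lemma~\ref{lem:paid-dag} to Theorem~\ref{lem:perarm} by treating each report request as one more kind of node transition in arm $i$'s episode DAG. The augmented state at a node records the posterior over the hidden record, the remaining play budget, the report queue (now including purchased reports with their delivery dates), and the observation-resource ledger. A report request at node $u$ has three parts: a fee $c(u)$, a resource charge that is deducted in the ledger, and a signal law $p^{\mathrm{obs}}(u,v)$. The signal law is arm-local, conditional on the hidden record, so the next-node probabilities are fixed by the state exactly as play transitions are. The key point is that Bayes' rule cancels the policy weight, as in Proposition~\ref{prop:history}. Appending the fee and signal likelihood as extra factors keeps the history likelihood in product form, with a policy factor independent of the latent record. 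The state reached after a purchased report is therefore independent of the flow weight, and both $H_e$ and $\mathcal T_e^\pi$ remain fixed kernels.

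Next I will write the local flow constraints. These are \eqref{eq:local-flows} with a third action variable $x(u)$, the probability of requesting at $u$:
\[
 w(u)+x(u)+z(u)=f(u),\qquad f(u)\le\mathbf1_{\{u=u_0\}}+\sum_v[p(v,u)w(v)+p^{\mathrm{obs}}(v,u)x(v)].
\]
Proposition~\ref{prop:lawfree} goes through verbatim by induction over the DAG. Every local policy gives a feasible point, and every feasible point completes to a policy that acts with probabilities $w/\bar f$ and $x/\bar f$. This gives the following linear expressions:
\begin{itemize}
\item reward $R^\Q=\sum r(u)w(u)$;
\item active-slot use $W^\Q=\sum w(u)$, plus any slot charge of requests if the menu specifies one;
\item fees $C^\Q=\sum c(u)x(u)$.
\end{itemize}
Stopping mass still passes through the boundary flow \eqref{eq:boundary-flow}, so the continuation is $\sum_u z(u)c_F(u)$ as in \eqref{eq:continuation-flow}. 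Reports scheduled for later boundaries simply sit in the queue component of the handoff.

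Finally I will dualize as in the proof of Theorem~\ref{lem:perarm}. Price the capacity rows at $\lambda$ and treat fees as negative reward coefficients. The node multipliers then satisfy three inequalities: the play inequality, the handoff inequality, and a new request inequality $v(u)\ge-c(u)+\sum_vp^{\mathrm{obs}}(u,v)v(v)$. The smallest solution is computed by backward induction, first over each local DAG with stopping value $c_e(S,u)=\sum_{S'}H_e(S,u,S')U^{\mathrm{fee}}_{e+1}(S')$ and then over episodes starting from $U_{N+1}^{\mathrm{fee}}=0$. This yields \eqref{eq:bellman-paid}, and with the future values fixed each local step is the LP above.

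The main obstacle is the completion step. A fee makes a request's priced value possibly negative, so the argument ``stopping dominates discarding'' needs care. It still holds because $c_e(S,u)\ge0$: the next-episode values are nonnegative, since a zero-play, zero-request program is always allowed. Requests are optional actions, so no optimal solution needs to discard mass. I will also check that ledger limits on observation resources are enforced by setting $x(u)=0$ at exhausted states, exactly as forbidden plays set $w(u)=0$.
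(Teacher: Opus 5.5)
Your proposal is correct and follows the paper's route. Signal likelihoods cancel the policy weight as in Proposition~\ref{prop:history}, so each history's posterior and the kernels $H_e$, $\mathcal T_e^\pi$ are fixed; reward, slot use, fees and handoffs are linear in the flow probabilities; and the fixed-price cascade of Theorem~\ref{lem:perarm} then yields \eqref{eq:bellman-paid}. Your request variable $x(u)$, request inequality and completion check make explicit what the paper's two-line proof leaves implicit, and only small fixes are needed: the fee enters as a payoff coefficient $-c(u)x(u)$ rather than as a factor of the history likelihood, and a request that uses active slots should also carry $-\lambda_e$ times that charge in its dual inequality.
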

\begin{proof}
The supplied signal likelihoods fix the posterior at each history.
Reward, resource use, fees and boundary transitions are linear in
the history probabilities, as in Proposition~\ref{prop:history}.
\end{proof}
Reports that consume time are priced through their resulting states
and remaining opportunities.

For a feasible purchase decision, let $J_0,J_1$ be the optimal
remaining rewards when the report is declined or requested, before
subtracting its quoted fee. Let $L_b\le J_b\le U_b$ be execution
and LP certificates for each full branch, $b=0,1$. Then
\begin{equation}\label{eq:purchase-interval}
 L_1-U_0\le J_1-J_0\le U_1-L_0.
\end{equation}
A fee below the lower endpoint favors the certified purchase policy
over every decline policy; a fee above the upper endpoint favors
the certified decline policy over every purchase policy.
The correction to the LP's predicted price is
$(U_0-J_0)-(U_1-J_1)$. Each branch includes subsequent learning,
and its certificates must include any fees paid on unfinished paths.
The execution theorems for the original play/report model do not
by themselves bound these additional fees.

\paragraph{The price of the recurring prior.}\label{sec:cost-content}\label{sec:law-draw}
Revealing the recurring priors before episode~$1$ is worth
$p^*=V_{\mathrm{prior}}-V_{\mathrm{learn}}$.
This price can be linear when reports arrive too late to improve
decisions (Example~\ref{ex:paid-knowledge}). Under timely stationary
learning it has the following bound. An exact joint episode planner
serves here as a witness for information value.

\begin{proposition}[Price of the prior by posterior sampling]\label{prop:ps-price}
Assume stationary independent arms, finite reward alphabets of sizes
$d_i$, caps $L_i$, report lag at most $g$, and budgets $B_i\ge NL_i$.
Put $D_i^{\rm tape}=\binom{L_i+d_i-1}{d_i-1}$,
$D_{\rm tape}=\prod_iD_i^{\rm tape}$ and $L_\Sigma=\sum_iL_i$.
Let $\pi^*(P)$ be an optimal delay-respecting episode policy for $P$,
with fixed tie-breaking, in the finite class $\Pi$ of deterministic
feasible episode policies. Then
\begin{equation}\label{eq:prior-price-sqrt}
 p^*\le\min\{T,L_\Sigma\}\sqrt{\tfrac12gD_{\rm tape}NH(\pi^*)}
 \le\min\{T,L_\Sigma\}\sqrt{\tfrac12gD_{\rm tape}N\log|\Pi|}.
\end{equation}
For fixed arms, horizon, caps and lag, the price is $O(\sqrt N)$.
\end{proposition}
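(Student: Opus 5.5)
The bound comes from a Russo--Van Roy information-ratio argument. The learner runs posterior sampling over the recurring priors: at episode $e$ it draws $\tilde P^{(e)}\sim\Q_e$ and plays $\pi^*(\tilde P^{(e)})$. Since $V_{\mathrm{learn}}$ dominates this admissible learner, we get
\[
p^*\le V_{\mathrm{prior}}-V_{\mathrm{PS}}=\sum_e\E\bigl[V^{P}(\pi^*(P))-V^{P}(\pi_e)\bigr].
\]
Budgets $B_i\ge NL_i$ are nonbinding, so each episode optimum is the single-episode optimum $\pi^*(P)$. Hence $V_{\mathrm{prior}}=N\,\E V^P(\pi^*(P))$, and the policies $\pi^*$ lie in the finite class $\Pi$. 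Two facts drive the per-episode analysis. First, by Lemma~\ref{lem:posterior}, $\E[V_e\mid H_{e-1}]$ depends on $P$ only through the law of the episode's outcome tape. Second, the tape is summarized by the per-arm histograms of at most $L_i$ plays, with $D_{\rm tape}$ joint values.

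First, take the case without report lag, $g=1$. Here the standard decomposition holds. Conditional on $H_{e-1}$, the pair $(P,\pi^*(P))$ and the sample $(\tilde P,\tilde\pi)$ are i.i.d. The regret then equals
\[
\sum_\pi \Pr(\pi^*=\pi\mid H)\bigl(\E[V^P(\pi)\mid\pi^*=\pi]-\E[V^P(\pi)]\bigr).
\]
Each value lies in a range of length $\min\{T,L_\Sigma\}$, because an episode makes at most $\min\{T,L_\Sigma\}$ plays. Pinsker's inequality applied to the law of the observed tape histogram $Y_e$ under $\pi$ then gives
\[
\text{regret}_e\le\min\{T,L_\Sigma\}\sqrt{\tfrac12 D_{\rm tape}\,I_e(\pi^*;Y_e)}.
\]
The factor $D_{\rm tape}$ comes from the usual Cauchy--Schwarz step over a $D_{\rm tape}$-dimensional observation (Russo--Van Roy's $|\mathcal A|$ bound with actions replaced by observation cells). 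Alternatively, we can bound the cross terms by the full-information ratio, since the played policy reveals its tape. One point needs care. The policy actually played observes only a prefix of the tape, yet its reward is determined by the reported prefix law, by Corollary~\ref{cor:tree}. So the information term should be the mutual information between $\pi^*$ and the realized reported history. That history is a function of the tape and the coins, and the chain rule gives the bound.

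Second, sum over episodes. By Cauchy--Schwarz,
\[
\sum_e\sqrt{I_e}\le\sqrt{N\sum_e I_e}\le\sqrt{N\,H(\pi^*)}\le\sqrt{N\log|\Pi|}.
\]
The sum of conditional mutual informations telescopes by the chain rule, and $H(\pi^*)$ is finite because tie-breaking is fixed.

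Third, handle lag $g$. Posterior sampling cannot use reports still pending, so it conditions on a history that is missing up to $g-1$ batches. The plan is the standard delayed-feedback reduction: split the episodes into $g$ interleaved subsequences $e\equiv j \pmod g$. Within each subsequence every earlier report has arrived, so the information chain rule applies to each one separately with $N/g$ episodes. Summing the $g$ resulting bounds gives
\[
g\sqrt{(N/g)H}=\sqrt{gNH},
\]
which produces the factor $g$ under the root. This step is the main obstacle. The sample in episode $e$ uses $H_{e-1}$, which includes episodes from other residue classes. The information-ratio bound must therefore be written with respect to the information gained between consecutive members of the same class. The conditional mutual informations are then dominated by $I(\pi^*;\text{all reports})\le H(\pi^*)$ for each class. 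A cleaner alternative is to charge each episode's information against the history available at its start, and to use that each report batch is counted in at most $g$ episodes' posteriors-in-waiting. Either route should yield $\sqrt{\tfrac12 gD_{\rm tape}NH(\pi^*)}$. The final $O(\sqrt N)$ statement then follows because $|\Pi|$ is finite and independent of $N$.
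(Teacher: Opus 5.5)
Your outline follows the paper's proof in structure: posterior sampling with the exact episode planner, an information ratio with reward range $b_0=\min\{T,L_\Sigma\}$, the chain rule down to $H(\pi^*)$, and residue classes modulo $g$. However, the step that produces $D_{\rm tape}$ does not work as you describe it.

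Russo and Van Roy's $|\mathcal A|$ bound is a Cauchy--Schwarz over the actions, here the policies in $\Pi$, so it gives $|\Pi|$ in place of $D_{\rm tape}$. ``Replacing actions by observation cells'' is not a valid inequality, and a policy's reported history is not a $D_{\rm tape}$-valued observation anyway. Your full-information alternative is false: the played policy reveals only the outcomes of its own plays. If it revealed the whole tape, the ratio would be $b_0^2/2$, with no $D_{\rm tape}$ at all.

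What is needed is the linear-bandit form of the argument (their Proposition~5):
\begin{itemize}
\item By exchangeability, every fixed episode policy has $V^P(\pi)=\langle f_\pi,\mathbf q(P)\rangle$. Here $\mathbf q(P)=\bigotimes_i\mathbf q_{L_i}(P_i)\in\mathbb R^{D_{\rm tape}}$ is the law of the whole-tape histograms, and $f_\pi$ is the reward averaged over uniform tape orders.
\item Put $p_\alpha=\Pr(\pi^*=\alpha\mid h)$ and $M_{\alpha\beta}=\sqrt{p_\alpha p_\beta}\,\bigl(\E[V^P(\alpha)\mid\pi^*=\beta,h]-\E[V^P(\alpha)\mid h]\bigr)$. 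This matrix factors through $\mathbb R^{D_{\rm tape}}$, so $\operatorname{rank}M\le D_{\rm tape}$.
\item The episode regret is $\operatorname{tr}M$, so $(\operatorname{tr}M)^2\le D_{\rm tape}\|M\|_F^2$.
\item Pinsker on the law of the episode's complete reports given $\pi_e=\alpha$ gives $\|M\|_F^2\le\tfrac12 b_0^2\,I(\pi^*;\pi_e,Y_e\mid h)$.
\end{itemize}
Your two observations about the tape law are exactly the ingredients. The missing step is the linear representation and the rank bound, which keep $|\Pi|$ out of the constant.

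On the lag, the paper avoids your ``main obstacle'' by choosing a different comparison learner. It runs $g$ independent inference threads, each sampling from the posterior given only its own earlier episodes, all of which have reported. Each thread is then an undelayed run with total information at most $H(\pi^*)$, and $\sum_r\sqrt{n_r}\le\sqrt{gN}$. This choice costs nothing, because $V_{\mathrm{learn}}$ dominates every admissible policy.

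Your full-history sampler can also be completed along your first route. Let $\mathcal G_{e-g}$ be the complete record of episodes through $e-g$, and $W_e$ the record of episodes $e-g+1,\ldots,e$. Then $H_{e-1}$ is $\mathcal G_{e-g}$ plus a function of the records of episodes $e-g+1,\ldots,e-1$. The chain rule and data processing give $I(\pi^*;\pi_e,Y_e\mid H_{e-1})\le I(\pi^*;W_e\mid\mathcal G_{e-g})$. The windows of one residue class partition the episodes, so each class sum is at most $H(\pi^*)$, which gives the same bound.
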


\begin{proof}
Sample the recurring priors and execute an optimal joint episode
policy for the sample, using $g$ inference threads.
Lemma~\ref{lem:policy-sampling} bounds its loss against the
revealed-prior optimum. The Bayes-optimal learner does at least as
well, which bounds $p^*$.
\end{proof}

\begin{corollary}[The learner against the revealed prior]\label{cor:prior-transfer}
Under Proposition~\ref{prop:ps-price}'s conditions, suppose
$B_i'\ge NL_i$ and run Algorithm~\ref{alg:learner} with an optimal
solution and the uniform completion certificate of
Theorem~\ref{thm:meta}. With $\bar p$ the bound in
\eqref{eq:prior-price-sqrt} and $1-\omega=C/T$ as in that theorem,
\[
 V_A\ge(1-\omega)(V_{\mathrm{prior}}-\bar p)-1.
\]
\end{corollary}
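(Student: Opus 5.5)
The plan is to chain the optimal-solution bound of Theorem~\ref{thm:meta} with Proposition~\ref{prop:ps-price}, after checking that the comparison budgets $B'$ of Algorithm~\ref{alg:learner} still satisfy the proposition's hypotheses. Running Algorithm~\ref{alg:learner} with an optimal solution of $\LP_N(C,B')$ and the uniform completion certificate, Theorem~\ref{thm:meta} gives
\[
 V_A=\E\sum_eV_e\ge(1-\omega)\OPT_N(B')-1.
\]
It therefore suffices to show $\OPT_N(B')\ge V_{\mathrm{prior}}-\bar p$, where $V_{\mathrm{prior}}$ is the revealed-prior benchmark with the original budgets $B$.

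\emph{Step 1: budgets are nonbinding at $B'$.} Since $B_i'\ge NL_i$ and at most $L_i$ plays of arm $i$ occur per episode, every admissible policy under budgets $NL_i$ is admissible under $B'$ and under $B$. Hence $\OPT_N(B')$, $\OPT_N(B)$ and the revealed-prior optima at $B'$ and $B$ all equal their versions with budgets removed. In particular, $\OPT_N(B')=V_{\mathrm{learn}}$ and $V_{\mathrm{prior}}(B')=V_{\mathrm{prior}}(B)$. Here I use $B_i\ge B_i'$, which holds by the definition of $B'$ in Algorithm~\ref{alg:learner}.

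\emph{Step 2: apply Proposition~\ref{prop:ps-price}.} The proposition requires budgets at least $NL_i$. These hold at $B'$, so its bound $p^*\le\bar p$ applies and gives $V_{\mathrm{learn}}\ge V_{\mathrm{prior}}-\bar p$ with the same caps and calendar. Combining this with Step~1 and Theorem~\ref{thm:meta} yields $V_A\ge(1-\omega)(V_{\mathrm{prior}}-\bar p)-1$.

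Since $1-\omega\ge0$, multiplying the inequality is legitimate. If $V_{\mathrm{prior}}-\bar p<0$, the claim is trivial because $V_A\ge0$.

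The main obstacle is the budget bookkeeping in Step~1. Theorem~\ref{thm:meta} compares against $\OPT_N(B')$, while $V_{\mathrm{prior}}$ is defined with $B$. I must confirm that the executor allowance does not shrink $B'$ below $NL_i$; this is exactly what the hypothesis $B_i'\ge NL_i$ secures. Once it holds, the pathwise per-episode cap makes all budget constraints vacuous in every benchmark.
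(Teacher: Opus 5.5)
Your proposal is correct and follows the paper's own proof. Nonbinding comparison budgets together with Proposition~\ref{prop:ps-price} give $\OPT_N(B')\ge V_{\mathrm{prior}}-\bar p$, and the optimal-solution bound of Theorem~\ref{thm:meta} then carries this over with factor $1-\omega$ and additive loss $1$; your Step~1 spells out the budget bookkeeping that the paper compresses into ``the comparison budgets do not bind.''
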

\begin{proof}
The comparison budgets do not bind, so
$\OPT_N(B')\ge V_{\mathrm{prior}}-\bar p$.
Apply Theorem~\ref{thm:meta}.
\end{proof}

The execution factor is charged separately from the information
price. Example~\ref{ex:paid-knowledge} also distinguishes buying the
recurring law from buying the current draw; both options enter
\eqref{eq:bellman-paid}.

\subsection{Certificate corrections and predictive checks}\label{sec:certificates}

Use the full learning relaxation with zero-reward waiting. For
dual-feasible prices and node values in \eqref{eq:state-dual}, put
\begin{equation}\label{eq:policy-dual-value}
 D(\lambda,v)=T\sum_e\lambda_e+\sum_iv_i(o_i).
\end{equation}
Its nonnegative play and handoff slacks are
\begin{equation}\label{eq:policy-bellman-slacks}
\begin{aligned}
 d_i^{\mathrm{play}}(u)
 &=v_i(u)-r_i(u)+\lambda_{e(u)}-\sum_vp_i(u,v)v_i(v),\\
 d_i^{\mathrm{stop}}(u)
 &=v_i(u)-\sum_vh_i(u,v)v_i(v).
\end{aligned}
\end{equation}
Set the play slack to zero where playing is forbidden.

\begin{theorem}[A common policy certificate]\label{thm:common-certificate}
Every admissible policy $A$ has projected play and handoff
probabilities $w_i^A,z_i^A$ feasible for the full learning LP. Let
$\Work_e(A)=\sum_i\sum_{u:e(u)=e}w_i^A(u)$. Then
\begin{equation}\label{eq:common-certificate}
 D(\lambda,v)-V_A
 =\sum_e\lambda_e\bigl(T-\Work_e(A)\bigr)
 +\sum_{i,u}\bigl[w_i^A(u)d_i^{\mathrm{play}}(u)
                 +z_i^A(u)d_i^{\mathrm{stop}}(u)\bigr].
\end{equation}
Every term is nonnegative.
\end{theorem}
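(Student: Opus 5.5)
The plan has two parts: show that the projection of $A$ is LP-feasible, then obtain \eqref{eq:common-certificate} as the standard Lagrangian identity for a primal point paired with a dual point.

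\emph{Feasibility.} Project $A$ onto each arm as in Proposition~\ref{prop:relax}(i). The other arms' latent variables and tapes, together with $A$'s coins, act as independent randomization. Arm $i$'s projected program therefore reaches, plays at and stops at each node $u\in\mathcal V_i$ with probabilities $f_i^A,w_i^A,z_i^A$. These satisfy the reach constraints of \eqref{eq:lp-state} with equality, and $w_i^A=0$ wherever playing is forbidden. Since $A$ plays at most one arm per step, $\Work_e(A)\le T$. Hence $(f^A,w^A,z^A)$ is feasible at capacity $T$, and $V_A=\sum_{i,u}r_i(u)w_i^A(u)$. The delayed information of $A$ only shrinks its class relative to the relaxed-feedback programs, so this step carries over unchanged.

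\emph{Identity.} Multiply the play slack by $w_i^A(u)$ and the stop slack by $z_i^A(u)$, sum over $i,u$, and add $\sum_e\lambda_e(T-\Work_e(A))$. Using $w+z=f$ gives
\[
 \sum_{i,u}f_i^A(u)v_i(u)
 -\sum_{i,u}v_i(u)\sum_{v}\bigl[p_i(v,u)w_i^A(v)+h_i(v,u)z_i^A(v)\bigr].
\]
Reach equality turns the bracketed inflow into $f_i^A(u)-\mathbf 1_{\{u=o_i\}}$, so this reduces to $\sum_iv_i(o_i)$. The reward part contributes $-V_A$. The $\lambda_{e(u)}w_i^A(u)$ terms combine with $\lambda_e(T-\Work_e(A))$ to give $T\sum_e\lambda_e$. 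Together these produce $D(\lambda,v)-V_A$.

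\emph{Boundary handling.} Handoffs out of episode~$N$ go to the terminal state of value zero, so they contribute nothing. At forbidden nodes the play slack is set to zero and $w_i^A(u)=0$, so the convention is consistent.

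\emph{Nonnegativity.} Each term is nonnegative for a direct reason: $\lambda_e\ge0$ and $\Work_e(A)\le T$; both slacks are nonnegative by \eqref{eq:state-dual}; and $w^A,z^A\ge0$.

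The main obstacle is the feasibility step for a general admissible policy. The projection must land exactly on the finite state graph, with reach equality and correct handoffs even though $A$ interleaves arms and sees delayed reports. The first thing to try is a single-arm version of the conditioning argument behind Lemma~\ref{lem:posterior}. The policy factor cancels in Bayes' rule, so the arm's state at each node is fixed independently of $A$. The fixed report calendar then makes the boundary transitions from $u$ equal to $h_i(u,\cdot)$.

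If $A$ leaves a non-boundary-aligned residue, for example when delayed reports arrive within an episode, I would represent the arm's retained state at the boundary as a node of $\mathcal V_i$. The sufficient-state graph must be rich enough to contain these nodes. This is a modelling assumption that I would state explicitly.
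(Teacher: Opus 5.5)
Your proposal is correct and follows the paper's own proof: project $A$ onto each arm as in Proposition~\ref{prop:relax}(i), use flow conservation with equality, multiply by the node values so that internal transitions telescope to $\sum_iv_i(o_i)$, then substitute the slacks and add the unused-capacity charges. The extra modelling assumption in your last paragraph is not needed. The model delivers earlier-episode reports only at boundaries, and a fast arm's outstanding outcomes at the next episode start, so every retained boundary state is already a node of $\mathcal V_i$.
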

\begin{proof}
Project the policy as in Proposition~\ref{prop:relax}(i).
Actual policy probabilities satisfy flow conservation with equality.
Multiply by $v_i(u)$ and sum; internal transitions cancel, leaving
the initial values. Substitute \eqref{eq:policy-bellman-slacks}
and add the unused-capacity charges.
\end{proof}

This identity evaluates the learning-LP executor and HierTS with the
same certificate. Decisions of slack at least $d>0$ contribute at
least $d$ times their expected number of uses. For the optional safe
action with reward $a$, use $r_i(u)-a$, add $aTN$ to $D$, and add
$a$ times the expected number of zero-reward waiting steps to the
right side. The base model has $a=0$.

\paragraph{Repricing continuation.}
Let $\mathcal B_e$ be the integrable remaining-value certificate
chosen from history $H_{e-1}$, and let
$\widehat{\mathcal B}_{e+1}$ be its continuation at the state
actually reached. After correction, let $\mathcal B_{e+1}$ be the
new certificate at that state. Define
\[
 s_e=\mathcal B_e-\E[V_e+\widehat{\mathcal B}_{e+1}\mid H_{e-1}],
 \qquad \Delta_e=\mathcal B_{e+1}-\widehat{\mathcal B}_{e+1}.
\]
With terminal certificates zero, cancellation gives
\begin{equation}\label{eq:repriced-certificate}
 \E\mathcal B_1-V_A
 =\E\sum_es_e-\E\sum_{e<N}\Delta_e.
\end{equation}
The correction is charged with its sign. In particular, if
continuation is preserved and
$\E[V_e+\alpha_eF_{e+1}\mid H_{e-1}]\ge\alpha_eF_e$, where
$0\le\alpha_e\le1$ is chosen from the incoming history, then
\begin{equation}\label{eq:repriced-composition}
 V_A\ge\E[\alpha_1F_1]
       +\E\sum_{e=2}^N(\alpha_e-\alpha_{e-1})F_e.
\end{equation}
A fixed execution factor is paid once; changing it reprices the
remaining continuation. The completion and protected-report
executors supply the required transition guarantees. Re-solving
from an actual state must also account for $\Delta_e$.

\paragraph{Checking the reported predictions.}
A large Bellman slack can arise under a correct model. For finite
report alphabets, index arriving batches by $j$ and let
$\mathcal F_{j-}$ be the actual information just before batch $Y_j$,
including the plays made and the pending-report queue.
A working model $\widehat{\mathbb P}$ predicts
$\widehat p_j(y)=\widehat{\mathbb P}(Y_j=y\mid\mathcal F_{j-})$.
Choose a bounded score $\psi_j$ before seeing $Y_j$ and form
\begin{equation}\label{eq:certificate-innovation}
 Z_j=\psi_j(Y_j)-\sum_y\widehat p_j(y)\psi_j(y).
\end{equation}
For an undelayed play, suppose $u$ is sufficient for this history and
the tested law supplies the LP transitions. With score
$\psi_j=R+v_i(u^+)$,
\[
 v_i(u)-R+\lambda_{e(u)}-v_i(u^+)
 =d_i^{\mathrm{play}}(u)-Z_j.
\]
The realized shortfall separates the predicted allocation loss from
the prediction error. With delays, use the conditional law of the
reports actually arriving, including conditional expectations of
hidden outcomes in any continuation score.

\begin{proposition}[Predictive checking under adaptive play]\label{thm:predictive-audit}
For any admissible policy, under the tested model
$\widehat{\E}[Z_j\mid\mathcal F_{j-}]=0$, and
\begin{equation}\label{eq:predictive-test}
 E_m^{\rm test}=\prod_{j=1}^m
 \frac{\exp\{\psi_j(Y_j)\}}
 {\sum_y\widehat p_j(y)\exp\{\psi_j(y)\}}
\end{equation}
is a nonnegative martingale starting at one. For $0<q<1$,
$\widehat{\mathbb P}\{\sup_mE_m^{\rm test}\ge1/q\}\le q$.
If the policy changes only upon rejection, followed by admissible
recovery, the expected reward loss under $\widehat{\mathbb P}$ is
at most $qNT$ in the original bounded-reward model.
\end{proposition}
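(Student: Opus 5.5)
The argument has three parts: the conditional mean-zero property, the martingale property with Ville's inequality, and a coupling bound for the reward loss.

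\emph{Mean zero and the martingale.} The score $\psi_j$ is chosen before $Y_j$ is seen, so it is $\mathcal F_{j-}$-measurable. Under $\widehat{\mathbb P}$ the conditional law of $Y_j$ given $\mathcal F_{j-}$ is $\widehat p_j$ by definition. Hence $\widehat\E[\psi_j(Y_j)\mid\mathcal F_{j-}]=\sum_y\widehat p_j(y)\psi_j(y)$, which gives $\widehat\E[Z_j\mid\mathcal F_{j-}]=0$.

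This holds for any admissible policy, for the following reason. The plays and the pending queue are part of $\mathcal F_{j-}$. The policy's coins are independent of the arms. Proposition~\ref{prop:history} says the policy factor $w_\pi$ does not depend on the latent variables. Together these imply that adaptive play does not change the predictive law of the next batch given $\mathcal F_{j-}$.

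For the martingale property, let $M_j$ denote the $j$-th factor of $E_m^{\rm test}$. It is positive, since $\psi_j$ is bounded and the alphabet is finite, so the denominator is finite and positive. Its conditional expectation given $\mathcal F_{j-}$ is
\[
\sum_y\widehat p_j(y)e^{\psi_j(y)}\Big/\sum_y\widehat p_j(y)e^{\psi_j(y)}=1.
\]
The information after batch $j$ is contained in $\mathcal F_{(j+1)-}$, so $E_m^{\rm test}$ is a nonnegative martingale with $E_0^{\rm test}=1$.

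\emph{Ville's inequality.} Ville's maximal inequality for nonnegative supermartingales gives $\widehat{\mathbb P}\{\sup_mE_m^{\rm test}\ge1/q\}\le q$. To be careful, I would first apply optional stopping at $\tau=\inf\{m:E_m\ge1/q\}\wedge M$. This gives $q^{-1}\widehat{\mathbb P}\{\tau\le M,\,E_\tau\ge1/q\}\le1$. Letting $M\to\infty$ finishes this part.

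\emph{Reward loss.} Couple the modified policy with the unmodified admissible policy on the same qualities, tapes and coins. The two agree until the first rejection time. Let $\mathcal R$ be the event that rejection ever occurs; by the previous step, $\widehat{\mathbb P}(\mathcal R)\le q$. Off $\mathcal R$ the rewards coincide. On $\mathcal R$, both the original and the recovery earn rewards in $[0,1]$ per step, and there are at most $NT$ steps in total, so the reward difference is at most $NT$. This gives an expected loss of at most $qNT$ under $\widehat{\mathbb P}$.

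The step needing the most care is the coupling. The rejection event is measurable with respect to reported information, and the original policy's actions before rejection are unchanged, so the agreement before rejection holds. The point to state explicitly is that the loss is charged against the unmodified policy run under $\widehat{\mathbb P}$ itself.
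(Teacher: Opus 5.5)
Your proof is correct and follows the paper's own argument. Each multiplier has conditional mean one under $\widehat{\mathbb P}$, Ville's inequality gives the crossing bound, and coupling the checked and unchecked policies on the same tapes and coins shows that they agree until rejection and differ by at most $NT$ afterward; your optional-stopping derivation of Ville's inequality and your remark that the policy factor cancels from the predictive law fill in details the paper handles by citation.
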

\begin{proof}
Each multiplier has conditional mean one. Ville's inequality gives
the crossing bound \citep{shafer2011test}. Couple the checked and
unchecked policies on the same tapes and coins: they agree until
rejection and their rewards differ by at most $NT$ thereafter.
\end{proof}

A report assigned zero probability is an immediate rejection.
For HierTS, the tested law is the hierarchical posterior predictive
mixture for the selected arm, averaged over its latent parameters.
Under an actual conditional law $p_j^\star$, the innovation has drift
$\E_\star[Z_j\mid\mathcal F_{j-}]
=\sum_y(p_j^\star(y)-\widehat p_j(y))\psi_j(y)$.
Certificate scores therefore test discrepancies relevant to the
remaining program. A fixed mixture of the tests for $\psi_j$ and
$-\psi_j$ checks both directions with the same false-alarm bound.
Detection requires distinguishing observations;
the cap-based indistinguishability of Corollary~\ref{cor:tree}
applies here too.

The false-alarm guarantee is under the specified predictive law,
averaged over a Bayesian meta-prior. Repeated episodes reveal one
recurring law, not independent draws from that meta-prior.
Rejection concerns the predictive specification; attributing it to
the prior requires holding the kernels, stationarity and calendar
fixed. Correction retains the actual clock, budget and queue and
charges the change of certificate in \eqref{eq:repriced-certificate}.
A regret bound under misspecification would further require control
of detection delay and replacement-model errors.

\clearpage
\appendix
\section{Illustrative Examples}\label{app:examples}

The planning examples use the model of Section~\ref{sec:model}.
In examples specifying $a>0$, an idle step means taking the safe
alternative of Section~\ref{sec:weights}.
Example~\ref{rem:within} changes the within-episode reward law to
show why the fixed-law assumption is needed.
The numerical comparisons take
$K_i(\theta,\cdot)=\operatorname{Bernoulli}(\theta)$.
Available-report HierTS draws the current episode qualities from
their posterior given received reports, then chooses the feasible
arm with largest sampled mean, including the idle option.
Feasibility checks caps and remaining budgets. Sampling the recurring
law and then the episode quality gives the same posterior draw.

\begin{example}[Pricing one test]\label{ex:price-test}
Let idle steps earn $a$, and let one arm have cap $1$, immediate
feedback and a nonbinding budget. Its quality is $1$ in every episode
with probability $q<a$ and $0$ in every episode otherwise, so its first
play reveals which law recurs. Assume $q(N-2)(1-a)>a-q$. No single
episode gains from the arm: its one play earns $q<a$ in expectation,
and what it reveals cannot be used before the episode ends. The
learning LP tests in episode~$1$ and, after a success, plays once in
every later episode, gaining
$G_N=q(N-1)(1-a)-(a-q)$ over idling.

Take $e_0=1$. Without a test in episode~$1$, the late phase tests in
episode~$2$. Testing in episode~$1$ instead changes the early value by
$\Delta R=-(a-q)$ and the late value by
$\Delta C=(a-q)+q(1-a)=a(1-q)$: the late phase saves its own test and
gains one informed episode. Hence the test is made in episode~$1$ when
$\eta>\eta_1=(a-q)/\{a(1-q)\}$, and in episode~$2$ when
$0<\eta<\eta_1$, whatever $N$. For late targets between these two
plans, $\eta_1$ is the shadow price in
\eqref{eq:late-target-price}. At $\eta=0$ episode~$1$ does not
test, and re-solving with $\eta=0$ at every episode never tests.

Posterior sampling can escape this episode-planning lock-in:
HierTS samples quality $1$ with probability $q$ at each step until
the arm is tested, giving it a positive probability of testing in
each untested episode.

\emph{Planning solution.}
With one arm and cap $1$, the capacity rows at $C=T$ are redundant.
The cascade of Theorem~\ref{lem:perarm} is the exact Bayesian program,
with value
\[
 \LP_N(T,B)=V_{\mathrm{learn}}=aTN+G_N,\qquad
 V_{\mathrm{prior}}-V_{\mathrm{learn}}=a(1-q).
\]
The test and subsequent plays fit on every path, so execution has
no loss. Revealing the law gives
$V_{\mathrm{prior}}=aTN+qN(1-a)$. The episode planner always uses
the safe action and loses $G_N=\Theta(N)$ for fixed $q,a$.
\end{example}

\begin{example}[Cumulative versus episode planning on a fast arm]\label{ex:spread}
Let idle steps earn $a=1/2$. Two priors recur, both with mean $\mu=1/4$. Under $\Pflat$, the mean
is $\theta=1/4$ in every episode. Under $\Pspread$, it is $\theta=1$
with probability $1/4$ and $\theta=0$ otherwise.

Here the episode planner and every one-step rule stop learning, while
the cumulative planner buys tests. Give the arm cap $4\le L\le T$,
a nonbinding total budget and immediate feedback ($\delta=1$). Under $\Pflat$ the
best episode policy never plays the arm. Under $\Pspread$ it plays once and, after a
success, makes $L-1$ further plays, gaining
\[
 \Delta_{\mathrm{spread}}=\mu(1-a)(L-1)-(a-\mu)=\frac{L-1}8-\frac14
\]
over $aT$.

\emph{One play cannot tell.} Under both priors a single play is
Bernoulli$(1/4)$ (Corollary~\ref{cor:tree}(iii)). A second play after a
success can tell: under $\Pspread$ it is a sure success, so a failure rules $\Pspread$
out.

\emph{Certainty equivalence locks in.} Let $q$ be the posterior
probability of $\Pspread$. If
\[
 q<q_0=\frac{a-\mu}{\Delta_{\mathrm{spread}}+a-\mu},
\]
the episode policy that is optimal under $\Pb_e$ never plays the
arm. Then no data arrive and $q$ never changes,
although under $\Pspread$ every episode could gain $\Delta_{\mathrm{spread}}$.

To verify the threshold, let $\varrho$ be the probability of making
any play. It is the same under both priors because the first play
precedes any observation. Under $\Pflat$ the loss is at least
$\varrho(a-\mu)$. Under $\Pspread$ the gain is at most
$\varrho\Delta_{\mathrm{spread}}$: a bad quality loses at least $a$,
and a good quality gains at most $L(1-a)$. The mixture's gain is at most
$\varrho\{q\Delta_{\mathrm{spread}}-(1-q)(a-\mu)\}$,
which is negative whenever $\varrho>0$.

\emph{The test.} Play once, and after a success play once more. Under $\Pflat$, this costs $(a-\mu)(1+\mu)=5/16$ in expectation.
It rules out $\Pspread$ with probability $\mu(1-\mu)=3/16$ per episode. Under $\Pspread$, a success
followed by a success multiplies the odds of $\Pspread$ by $4$.
Both plays count toward reward, and their outcomes are reported
before the next episode.

\emph{What buys the test.} After one test, $q$ becomes $0$, stays at
$q$, or becomes $4q/(1+3q)$.
While $4q/(1+3q)<q_0$, no outcome of one test changes the next episode policy.
The one-step value of information of the test is then zero, so a rule
based on it never buys the test \citep{radner1984}.
The obstruction is not specific to this test. The data of one episode
multiply the odds of $\Pspread$ by at most $4^{L-1}$, which happens
after $L$ successes. Under the condition $4^{L-1}q/(1-q+4^{L-1}q)<q_0$, no local policy
within one episode can change the next episode policy. Every one-step rule
therefore stops exploring.

The full belief about the recurring law is needed here. Knowing
$\widetilde P=q\Pspread+(1-q)\Pflat$ and being uncertain between
$\Pspread$ and $\Pflat$ give the same current predictive prior.
Only the second belief can learn which law recurs and change its
future episode policy.

\emph{Planning solution.}
The single-arm learning LP is exact here: its capacity rows are
redundant, and every local policy executes within $T$.
The revealed-prior value is $aTN+qN\Delta_{\mathrm{spread}}$.
For fixed $L$ and $q>0$, Proposition~\ref{prop:ps-price} gives
\[
 \LP_N(T,B)=V_{\mathrm{learn}}
 \ge aTN+qN\Delta_{\mathrm{spread}}-O(\sqrt N).
\]
The episode planner remains at $aTN$ under the stated condition on
$q$, so the advantage is linear in $N$. Direct execution realizes
the LP program without loss and preserves every learning transition.
\end{example}

\begin{example}[Probe, wait, and return]\label{ex:delay-motivation}
Let $T=2d$, where $d\ge3$ is an integer. There is one unknown arm with
known recurring prior
\[
 \Pr(\theta^{(e)}=1)=\tfrac14,
 \qquad \Pr(\theta^{(e)}=0)=\tfrac34,
\]
and idle steps earn $a=1/2$. A play at step $t$ is reported before
the decision at step $t+d$. Caps and total budgets do not bind. Then the
Bayes-optimal reward and the available-report HierTS reward in each
episode are
\[
 V_* = \frac{9d}{8}-\frac14,
 \qquad
 V_{\mathrm{TS}}=
 \frac{17d}{16}-\frac9{16}
       +\frac9{16}\left(\frac34\right)^d.
\]
Consequently,
\[
 V_*-V_{\mathrm{TS}}
 =\frac d{16}+\frac5{16}
       -\frac9{16}\left(\frac34\right)^d
 \ge \frac{T}{32}.
\]
Repeating the instance for $N$ episodes gives an expected-reward gap of
at least $NT/32$.

The optimal policy probes at step~1, idles until the report and
returns to the arm only on success. Its reward is
\[
 \frac14+\frac{d-1}{2}
 +d\left(\frac14+\frac34\cdot\frac12\right)
 =\frac{9d}{8}-\frac14.
\]
Extra probes before the first report have mean $1/4<1/2$ and reveal
nothing sooner. A first probe at step $\tau$ gains
$-1/4+\max\{T-\tau-d+1,0\}/8$ over idling, maximized at $\tau=1$.

Before a report, HierTS plays the arm with probability $1/4$ and
earns $7/16$ per step in expectation; afterward it earns $5/8$.
At step $t$, no report has arrived with probability
$(3/4)^{\max\{t-d,0\}}$, independently of the quality. Hence
\[
 V_{\mathrm{TS}}=\frac58(2d)-\frac3{16}
 \left[d+\sum_{k=1}^d(3/4)^k\right].
\]
This gives the displayed value and gap.

\emph{Unknown recurring prior.}
Let the success probability $p$ be $1/8$ or $3/8$, equally likely,
and $T\ge32$ be even. At every episode start,
$q_e=\E[p\mid H_{e-1}]\in[1/8,3/8]$. In the first $d=T/2$ steps
HierTS loses $q_e(1/2-q_e)\ge3/64$ per step relative to idling.
Even granting it the quality for the remaining steps, its expected
reward is at most $T/2-3d/64+(T-d)/8$, since $\E q_e=1/4$.
Probe--wait--return needs no knowledge of $p$ and earns
$T/2-1/4+(T-d)/8$. Its advantage is at least
$3T/128-1/4\ge T/64$ per episode.

\emph{Planning solution.}
In the known-prior instance, the undelayed local LP probes once and,
after success, continues to the cap. Its relaxed value per episode is
$5T/8-3/8$. To execute this plan, protect the first probe and its
report, then truncate the reward tail to the $d$ remaining steps.
This is precisely probe--wait--return, with
\[
 V_{\mathrm{exec}}=\frac{9T}{16}-\frac14=V_*,
 \qquad \frac{\LP_N(T,B)}{N}-V_*=\frac{T-2}{16}.
\]
The first report reveals the episode quality completely; further
plays supply no additional information, including in the unknown-prior
extension. With nonbinding budgets, truncating those plays leaves
the value of later decisions unchanged. Thus this execution is optimal
even at delay $T/2$, while the immediate-feedback relaxation remains
an upper bound.
\end{example}

\begin{example}[A completely slow arm with known prior]\label{ex:slow}
Let the safe action earn $a=1/2$. One unknown arm has cap $T$,
a nonbinding total budget and known recurring prior
$\Pr(\theta^{(e)}=1)=1/4$, $\Pr(\theta^{(e)}=0)=3/4$.
No report arrives within its episode. Available-report Thompson
sampling therefore plays the unknown arm with probability $1/4$
at every step. Each such play has expected reward $1/4$, so its
expected episode reward is
$T[(1/4)(1/4)+(3/4)(1/2)]=7T/16$.

\emph{Planning solution.}
Without within-episode reports, the number of unknown-arm plays is
independent of its fresh quality. Every play replaces safe reward
$1/2$ by expected reward $1/4$. The slow-arm LP therefore makes no
unknown-arm plays and is exactly executable:
$\LP_N(T,B)=V_{\mathrm{prior}}=V_{\mathrm{learn}}=NT/2$.
Thompson sampling loses $T/16$ per episode, or $NT/16$ in total,
although the recurring prior is known.
\end{example}

\begin{example}[Spending a scarce arm too early]\label{ex:budget-motivation}
Let $N$ be even and $T=3m$, with integer $m\ge1$. Idle steps earn~0.
Two unknown arms have known recurring priors concentrated at means
$1$ and $1/2$, respectively. Both arms have per-episode cap $2m$.
Their total budgets are $B_1=Nm$ and $B_2=2Nm$. Feedback is immediate.
HierTS restricted to feasible arms earns $7Nm/4$, whereas the optimal
feasible reward is $2Nm$. The expected-reward gap is $NT/12$.

HierTS always prefers arm~1. It plays it $2m$ times in each of the
first $N/2$ episodes, exhausting its budget. Arm~2 is played $m$ times
in each of those episodes and $2m$ times in every later one; the last
$m$ steps of those later episodes are idle. Total reward is
$Nm+(1/2)(3Nm/2)=7Nm/4$.
Playing arm~1 $m$ times and arm~2 $2m$ times in every episode spends
both budgets and earns $2Nm$. This is optimal because it equals the
sum of the budgets times their means.

\emph{Unknown recurring priors.}
Independent meta-priors supported on arm~1 laws in $[3/4,1]$ and
arm~2 laws in $[1/4,1/2]$ preserve the same sample ordering.
If $\mu_i$ is arm~$i$'s unconditional mean, the two allocations earn
$(\mu_1+3\mu_2/2)Nm$ and $(\mu_1+2\mu_2)Nm$, respectively.
The gap is $\mu_2Nm/2\ge NT/24$.

\emph{Planning solution.}
In the known-prior instance, the deterministic allocation $(m,2m)$
in every episode is an optimal learning-LP solution. Its value
$2Nm=B_1+B_2/2$ attains the budget upper bound, and its workload is
exactly $T$ on every path. Hence $\LP_N(T,B)=\OPT_N(B)=2Nm$ and execution
has no loss. The same fixed allocation realizes the stated improvement
in the unknown-prior extension.
\end{example}

\begin{example}[A one-step risk separation]\label{ex:risk}
For lower-tail CVaR with tail mass $\beta$, write
\[
 \operatorname{C}_\beta(V)=\sup_z
 \left\{z-\frac1\beta\E(z-V)^+\right\}.
\]
Let $T=1$, with immediate feedback and nonbinding budgets. Idling
earns $a=1/2$. The unknown arm has known recurring prior
$\Pr(\theta^{(e)}=1)=3/4$ and $\Pr(\theta^{(e)}=0)=1/4$.
Let $V$ be the episode's realized reward, including the policy's
randomization. Idling is optimal for each of the following
three episode criteria, with value $1/2$:
\[
 \E V-2\operatorname{Var}(V),\qquad
 \operatorname{C}_{1/4}(V),\qquad
 \E\bigl[V-3(1/2-V)^+\bigr].
\]
HierTS attains, respectively,
\[
 \frac{49}{128},\qquad \frac18,\qquad \frac{13}{32}.
\]
The same failures hold for a rule that samples the episode quality and
then maximizes any of these three criteria conditional on that sample.

If a policy chooses the arm with probability $r$, its reward
probabilities at $0$, $1/2$ and $1$ are $r/4$, $1-r$ and $3r/4$.
Consequently,
\[
 \E V-2\operatorname{Var}(V)=\frac12-\frac r4+\frac{r^2}{8},\qquad
 \operatorname{C}_{1/4}(V)=\frac{1-r}{2},\qquad \E[V-3(1/2-V)^+]=\frac12-\frac r8.
\]
Each is maximized at $r=0$. HierTS takes $r=3/4$, giving the stated
values. Conditional on a sampled quality, rewards are deterministic;
all three criteria therefore make the same sampled-mean choice.

\emph{Unknown recurring prior.}
Let $p$ be $7/10$ or $4/5$, equally likely, with
$\theta^{(e)}\mid p\sim\operatorname{Bernoulli}(p)$.
At any available history, $q=\E[p\mid H]\in[7/10,4/5]$.
HierTS's reward masses at $0$, $1/2$ and $1$ are $q(1-q)$,
$1-q$ and $q^2$. Its conditional lower-quarter CVaR is
$1/2-2q(1-q)\le9/50$, whereas idling gives $1/2$.
Thus its loss is at least $8/25$ in every episode.

\emph{Planning solution.}
The one-step formulations of Subsection~\ref{sec:beyond} retain the
displayed outcome probabilities. The CVaR and downside objectives are
linear in $r$ and attain $1/2$ at $r=0$. Mean--variance has the linear
upper certificate
\[
 \frac12-\frac r4+\frac{r^2}{8}
 \le \frac12-\frac r8\le\frac12
 \qquad(0\le r\le1).
\]
Idling attains this bound as well, so all three known-prior optima are
exact. The one-step decisions require no execution correction.
\end{example}

\begin{example}[Learning the law versus observing the draw]\label{ex:paid-knowledge}
Each benchmark gap can be linear in $N$, and a different paid report
closes each. Consider one unknown arm with cap $L\le T$, total budget $B\ge NL$,
and idle steps earning $a\in(0,1)$. The additional perfect reports
below are idealized finite signals with no active-slot or play-budget
charge; their fees are subtracted from reward.
The three benchmarks first use only ordinary reports;
$V_{\mathrm{fee}}$ denotes the optimum after adding the stated paid offer.

\emph{Learning the recurring law.} Let the unknown prior be
$P_+$, the point mass at $u$, with probability $q\in(0,1)$, and $P_-$,
the point mass at $\ell$, otherwise, where $0\le\ell<a<u\le1$. The arm
is slow, and every ordinary batch is scheduled after episode $N$ (so
$g\ge N$). Thus ordinary plays yield no information during the horizon,
and each play has predictive mean $\bar\mu=qu+(1-q)\ell$. Using the arm
pays exactly when $\bar\mu>a$; with the law known, it pays only under
$P_+$. Hence
\[
\begin{gathered}
 V_{\mathrm{learn}}=aTN+NL(\bar\mu-a)^+,\qquad V_{\mathrm{prior}}=aTN+NLq(u-a),\\
 D_N:=V_{\mathrm{prior}}-V_{\mathrm{learn}} =NL\min\{q(u-a),(1-q)(a-\ell)\}.
\end{gathered}
\]
Here $x^+=\max\{x,0\}$. If a single report offered before episode~1
reveals $P$ for fee $c\ge0$, buying it realizes the law-informed value
less $c$, so
\[
 V_{\mathrm{fee}}=V_{\mathrm{learn}}+(D_N-c)^+,
 \qquad V_{\mathrm{prior}}-V_{\mathrm{fee}}=\min\{D_N,c\}.
\]
The report is worth buying when $c<D_N$: its price is supported by
all $N$ episodes even though it supplies no operating reward itself.
Knowing $P$ also fixes every episode mean here, so
$V_{\mathrm{state}}=V_{\mathrm{prior}}$.
Put $D_1=D_N/N$. With weight $1$ on episode~1 and $\eta$ on later
episodes, the report is bought when $c<D_1\{1+\eta(N-1)\}$.
When $D_1<c<D_N$, optimizing only episode~1 rejects
this one-time offer, whereas the cumulative planner buys it and gains
$D_N-c$.

\emph{Observing the current draw.} Now let $P$ be known, with each
episode mean independently equal to $1$ with probability $p>a$
and $0$ otherwise. Ordinary reports arrive only after their episode,
so every episode starts at success probability $p$. Without a current
report the optimal episode policy uses all $L$ allowed plays, each
gaining $p-a$; with the mean revealed it uses them only when the mean
is $1$, each gaining $1-a$. Consequently,
\[
 V_{\mathrm{learn}}=V_{\mathrm{prior}}=N[aT+L(p-a)],
 \qquad V_{\mathrm{state}}-V_{\mathrm{prior}}=NL(1-p)a.
\]
A perfect report at each episode start is worth buying when its fee is
below $L(1-p)a$. For $p=3/4$ and $a=1/2$, the no-report gap is $NL/8$,
despite complete knowledge of the recurring law.

\emph{Planning solution.}
The single-arm program is exact for each of these information menus.
The cap $L\le T$ makes shared capacity redundant, and the instantaneous
reports use no active slots. Adding their request nodes as in
Lemma~\ref{lem:paid-dag} therefore gives exactly the purchase thresholds
$D_N$ for the law and $L(1-p)a$ for the current draw, with no scheduling
loss. With ordinary reports alone it attains the displayed
$V_{\mathrm{learn}}$.
\end{example}

\begin{example}[The fixed-law boundary]\label{rem:within}
Movement within an episode can invalidate the relaxation. Take $n$
arms with caps at least $T/n$, each of mean $1$ in the same window of
$T/n$ steps and $0$ outside it, and let $0\le a<1$. Each arm alone gains
$(T/n)(1-a)$ over $aT$, so the relaxation predicts a gain of
$T(1-a)$ and total value $T$. A global
policy gains at most $(T/n)(1-a)$, because the good windows coincide.

\emph{Scope.}
This construction violates the fixed within-episode law assumed in
Section~\ref{sec:model}. It establishes why that assumption is needed
for the relaxation with one shared capacity row per episode.
\end{example}

\section{Execution proofs}\label{app:execution-proofs}

\subsection{Prefix bounds for buffered execution}\label{app:buffered-prefix}

\begin{proof}[Prefix bounds in Lemma~\ref{lem:buffered-executor}]
Each block supplies at least one new outcome. There are at most $k$
blocks before commitment. Blocks have size one while
$q\le1/\epsilon$, so this initial phase uses at most
$\lfloor1/\epsilon\rfloor+1$ blocks. Afterwards, every block except
possibly the last has $b\ge\epsilon q$. Completed blocks therefore
increase $q$ by a factor of at least $1+\epsilon$.
Since $\ln(1+\epsilon)\ge\epsilon\ln2$, the number of further
blocks is at most $\ln(\epsilon k)/(\epsilon\ln2)+1$ when
$\epsilon k\ge1$. Since $1/\epsilon+2\le3/\epsilon$ and
$1/\ln2<1.45$, this gives the stated $B_\epsilon(k)$.

A block of $b$ plays starting at step $s$ receives its last report
at step $s+b-1+\delta_i$. It occupies $b+\delta_i-1$ own steps.
The plays made before commitment number at most $(1+\epsilon)k+1$.
This proves the commit-policy prefix bound.

For a general policy, replace $k$ in this count by
$\lceil\delta_i/\epsilon\rceil$. Before switching modes there are
at most $(3+1.45\ln(\delta_i+1))/\epsilon$ blocks and
$2\delta_i/\epsilon+3$ plays. The switch adds at most
$\delta_i$ plays. The total prefix is bounded by
$5(\delta_i+1)(1+\ln(\delta_i+1))/\epsilon$.
In delay-free mode, the oldest unused outcome was generated at least
$\delta_i$ own steps earlier and is available. Each remaining play
can occur at the next own step. No physical plays remain after
the cap is reached.
\end{proof}

\subsection{The reserve lower bound}\label{app:reserve-tail}

\begin{proof}[Proof of Theorem~\ref{thm:reserve-lower}]
Each arm's fresh episode mean is $1$ with probability $p=1/(m+1)$
and $0$ otherwise, independently across arms and episodes. A degenerate
meta-prior at these known priors is allowed by
Assumption~\ref{ass:drift}. Let $\pi^\star$ probe once, continue to
the cap after success, and stop after failure. Its workload and reward
are
\[
 X=1+(L-1)G,\quad G\sim\operatorname{Bernoulli}(p),\qquad
 \E X=m,\quad R(\pi^\star)=\frac{m^2}{m+1}.
\]
Thus the collection uses expected capacity exactly $C=nm$ in each
episode. At price $\lambda_e=1/2$, a success makes every remaining
play profitable and a failure makes every remaining play unprofitable.
Starting has strictly positive priced value
\[
 R(\pi^\star)-\tfrac12\E X
 =\frac{m(m-1)}{2(m+1)}>0.
\]
These choices uniquely maximize the priced arm problem on reachable
histories. Priors are known and budgets do not bind, so this holds in
every episode. The collection is primal feasible and its dual value is
\[
 \tfrac12 CN+Nn\left(\frac{m^2}{m+1}-\frac m2\right)
 =Nn\frac{m^2}{m+1},
\]
equal to its reward. Every optimal solution must therefore attain each
priced arm maximum and prescribe the stated plays.

Let $Z_e$ count the good arms in episode $e$. Then the $Z_e$ are
independent $\operatorname{Binomial}(n,p)$ variables and total
standalone work is
\[
 S_e=n+(L-1)Z_e,\qquad S_e-C=(L-1)(Z_e-np).
\]
All prescribed policies can finish if and only if $S_e\le T$:
serial execution suffices, while excess prescribed work cannot fit
under any scheduler. With
\[
 \beta(s)=\Pr\left(Z_e>np+\frac{s}{L-1}\right),
\]
the largest possible full-completion probability is exactly
$(1-\beta(s))^N$. Hence a probability of at least $1-\delta$ requires
\begin{equation}\label{eq:necessary-episode-tail}
 \beta(s)\le1-(1-\delta)^{1/N}
 \le\frac{-\log(1-\delta)}N\le\frac{2\delta}N=2e^{-h}.
\end{equation}

Put $s_\star=(L-1)\sqrt{2vh}=(m-1)\sqrt{2nmh}$.
The binomial estimate below, Lemma~\ref{lem:binomial-moderate}, gives, for every fixed
$\varepsilon_{\rm res}\in(0,1)$,
\[
 \log\beta((1\pm\varepsilon_{\rm res})s_\star)
 =-(1\pm\varepsilon_{\rm res})^2h(1+o(1)).
\]
The assumptions $h\to\infty$ and $h=o(v)$ put the binomial deviations
between $\sqrt v$ and $v$; integer rounding is negligible. For
$s\le(1-\varepsilon_{\rm res})s_\star$, the lower tail estimate contradicts
\eqref{eq:necessary-episode-tail}. For $s\ge(1+\varepsilon_{\rm res})s_\star$, the
upper estimate gives $\beta(s)\le\delta/N$ eventually, so a union
bound gives full completion with probability at least $1-\delta$.
Letting $\varepsilon_{\rm res}$ tend to zero proves \eqref{eq:necessary-reserve}.

Finally, $s_\star^2/C^2=2(m-1)^2h/(nm)=o(1)$, so $T\sim C$ at
the smallest reserve. Since $L=m^2$, substitution gives the last two
statements.
\end{proof}

\begin{lemma}[Binomial moderate deviations]\label{lem:binomial-moderate}
Let $Z\sim\operatorname{Binomial}(n,p)$, with $p$ allowed to vary
along the sequence, and put $v=np(1-p)$. If $v\to\infty$,
$t/\sqrt v\to\infty$ and $t/v\to0$, then
\[
 \log\Pr(Z\ge np+t)=-(1+o(1))\frac{t^2}{2v}.
\]
A noninteger threshold is rounded upward.
\end{lemma}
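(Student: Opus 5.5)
We prove the two matching bounds on $\log\Pr(Z\ge np+t)$ separately: an exponential (Chernoff) upper bound, and a lower bound obtained from the exact binomial mass at the threshold.

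\emph{Upper bound.} Bernstein's inequality for the centered sum of $n$ independent Bernoulli$(p)$ variables, each with increments bounded by $1$, gives
\[
 \Pr(Z\ge np+t)\le\exp\Bigl\{-\frac{t^2}{2v+\tfrac23t}\Bigr\}.
\]
Since $t/v\to0$, the exponent is $-(1+o(1))t^2/(2v)$. Rounding the threshold upward only decreases the probability, so no extra care is needed here.

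\emph{Lower bound.} We first dispose of the degenerate side. Because $v\to\infty$ and $t=o(v)$, we have $t=o(n\min\{p,1-p\})$, so the threshold $k=\lceil np+t\rceil$ lies strictly between $np$ and $n$ with room to spare. We then bound the tail below by its first term,
\[
 \Pr(Z=k)=\binom nk p^k(1-p)^{n-k}.
\]
Stirling's formula with explicit error bounds gives
\[
 \Pr(Z=k)\ge\frac{c}{\sqrt{v}}\exp\{-n\,\mathrm{KL}(k/n\,\|\,p)\}
\]
for an absolute constant $c>0$. The needed conditions hold because $k$ and $n-k$ are both of order at least $v\to\infty$.

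It remains to expand the Kullback--Leibler term. Writing $x=k/n-p$, a Taylor expansion of $\mathrm{KL}(p+x\,\|\,p)$ gives
\[
 \mathrm{KL}(p+x\,\|\,p)=\frac{x^2}{2p(1-p)}\bigl(1+O(|x|/\min\{p,1-p\})\bigr).
\]
The relative error is $O(t/v)=o(1)$, so $n\,\mathrm{KL}=(1+o(1))t^2/(2v)$. Rounding $k$ up changes $t$ by at most $1$, which changes $t^2/(2v)$ by only $O(t/v)=o(1)$. The prefactor contributes $-\tfrac12\log v+O(1)$. This is $o(t^2/v)$ because $t^2/v\to\infty$, which requires only $\log v=o(t^2/v)$, and that holds since $t/\sqrt v\to\infty$.

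\emph{Main obstacle.} The delicate step is exactly this comparison of $\log v$ with $t^2/v$. The hypothesis $t/\sqrt v\to\infty$ alone does not guarantee $\log v=o(t^2/v)$: the point-mass bound loses a $\sqrt v$ factor, which the condition does not obviously absorb. If that comparison fails, we would instead sum the binomial masses over a window of width about $v/t$ above $k$. Over this window the ratio of consecutive masses is $1-O(t/v)$ per step, so each mass stays within a constant factor of the first. Summing then multiplies the point bound by roughly $v/t$, and the resulting prefactor is of order $\sqrt v/t$ times a constant. Its logarithm is $O(\log(t/\sqrt v))$, which is $o(t^2/v)$ whenever $t^2/v\to\infty$. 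This yields the lower bound in full generality.

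Combining the two bounds proves the lemma.
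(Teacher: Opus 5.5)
Your argument is correct in its final form. Delete the sentence claiming that $\log v=o(t^2/v)$ follows from $t/\sqrt v\to\infty$, because it is false. Take $t=\sqrt v\log\log v$: then $t^2/v=(\log\log v)^2=o(\log v)$. So the single-mass Stirling bound does not suffice on its own, and the window summation is a required step rather than a fallback. With the window as the main argument, the lower bound goes through. Since $v/t\to\infty$ and $v/t=o(t)$, every ratio $\Pr(Z=j+1)/\Pr(Z=j)$ with $k\le j\le k+v/t$ equals $1-(1+o(1))t/v$. Each of the $\lfloor v/t\rfloor$ masses therefore keeps a factor $e^{-1-o(1)}$ of $\Pr(Z=k)$. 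This gives $\log\Pr(Z\ge k)\ge-nD(k/n\Vert p)-\log(t/\sqrt v)+O(1)$, and the Taylor expansion finishes the proof as you describe.

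The paper reaches the lower bound by a different route. For the upper estimate it uses the Chernoff bound $\exp\{-nD(k/n\Vert p)\}$; your Bernstein bound is an inessential variation. For the lower estimate it changes measure to $\operatorname{Binomial}(n,r)$ with $r=(k+2\sqrt v)/n$. Chebyshev then puts probability at least $1/2$ on $\{|Z-nr|\le2\sqrt v\}\subset\{Z\ge k\}$. On that event the likelihood ratio is at least $\exp\{-nD(r\Vert p)-2\sqrt v\,\ell\}$, where $2\sqrt v\,\ell=O(t/\sqrt v+1)=o(t^2/v)$.

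The two methods trade off as follows. The tilting argument needs no Stirling estimate or mass ratios, and it extends directly to sums of independent bounded variables; its cost is an $O(t/\sqrt v)$ loss in the exponent. Your local argument relies on the exact binomial structure, but loses only $O(\log(t/\sqrt v))$, so it recovers the correct order $\sqrt v/t$ of the prefactor up to constants. Both losses are $o(t^2/v)$ precisely because $t/\sqrt v\to\infty$.
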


\begin{proof}
Let $k=\lceil np+t\rceil$ and let $D(\cdot\Vert\cdot)$ be the binary
relative entropy. The Chernoff bound gives the upper estimate
$\Pr_p(Z\ge k)\le\exp\{-nD(k/n\Vert p)\}$. For the lower estimate, put
$r=(k+2\sqrt v)/n$; since $t=o(v)$, $r-p=o(p(1-p))$ and
$nr(1-r)=(1+o(1))v$. Under $\operatorname{Binomial}(n,r)$, the event
$A=\{|Z-nr|\le2\sqrt v\}$ lies above $k$ and has probability at least
$1/2$ by Chebyshev. On $A$ the likelihood ratio
$\Pr_p(Z=z)/\Pr_r(Z=z)=\exp\{-nD(r\Vert p)-(z-nr)\ell\}$, with
$\ell=\log\frac{r(1-p)}{p(1-r)}$, is at least
$\exp\{-nD(r\Vert p)-2\sqrt v\,\ell\}$. Taylor expansion gives
$nD(r\Vert p)$ and $nD(k/n\Vert p)$ both equal to $(1+o(1))t^2/(2v)$,
and $2\sqrt v\,\ell=O(t/\sqrt v+1)=o(t^2/v)$.
\end{proof}

\subsection{Single-episode execution and comparison with GM13}\label{app:cap-execution}

\paragraph{Depth-restricted programs.}
A depth-$k$ commit policy may commit earlier and randomize its tail
length. Committed outcomes do not change that length, but may be
retained for later learning.
Fix a prior $P$, a depth $k$ and the caps $L_i$.
For arm $i$, put
$k_i=\min\{k,L_i\}$ and truncate its posterior graph at depth $k_i$. At
a state $u$ of depth $j$, let $z_u$ be the probability of reaching $u$
before commitment, $x_u$ that of an adaptive play there, $y_u$ that of
committing there, and $c_u$ the expected committed plays. The
constraints are
\[
 z_u=\mathbf1_{\{u=o_i\}}+\sum_vp_i(v,u)x_v,\qquad
 x_u+y_u\le z_u,\qquad 0\le c_u\le(L_i-j)y_u,
\]
with $x_u=0$ at depth $k_i$; unassigned reach mass stops. The objective
is $\sum_{i,u}r_i(u)(x_u+c_u)$ subject to
$\sum_{i,u}(x_u+c_u)\le T$. Playing adaptively with probability
$x_u/z_u$, committing with probability $y_u/z_u$, and randomly rounding
$c_u/y_u$ to a tail length realizes the flows; conditionally i.i.d.\
rewards give each committed play mean $r_i(u)$, so the objective is
exact. For continuation, assign unallocated reach probability to a
zero-length commitment. Index commitment by its tail length
$b$: a prior-free weight $y(h,b)$ contributes
$b\,y(h,b)\sum_{y\in\mathcal Y_i}y\ell_i(P_i;hy)$ to reward and
$b\,y(h,b)\ell_i(P_i;h)$ to workload, and each ordered tail
$t\in\mathcal Y_i^b$ has terminal probability $y(h,b)\ell_i(P_i;ht)$.
For a one-shot arm, commitment after the first outcome uses
$c_u\le(L_i-1)y_u$. Replacing this tail cap by the post-observation
horizon $T-\delta_i$ would exclude feasible nonadaptive plays.

\paragraph{The general guarantee.}
Admission controls expected workload. The cap bounds each arm's
realized workload. We use both facts to bound the reward lost at
the horizon.

For $\nu>0$, let $Z_\nu\sim\operatorname{Poisson}(\nu)$ and define
\[
  H_\nu(x)=\int_0^x\min\{1,(y-\nu+1)^+\}\,dy,\qquad \rho_P(\nu)=1-\E[H_\nu(Z_\nu)]/\nu.
\]
The subscript $P$ denotes the Poisson comparison. For $X\ge1$
and $0\le\gamma<1$, set
\begin{equation}\label{eq:finite-retention}
 c_{X,\gamma}=\max\{(1-\gamma)/2,\rho_P((1-\gamma)X)\}.
\end{equation}
The first term retains GM13's triangle bound. The second gives the
small-cap improvement. Related Poisson supply bounds yield the same
leading constant in sequential posted pricing
\citep{chakraborty2010pricing}.
\begin{lemma}[Single-episode execution guarantee]\label{lem:execution}
Fix feasible local policies $\pi_i$ with
$\sum_i\Work(\pi_i)\le T$ and value
\[
 J=\sum_iR(\pi_i).
\]
Let $\epsilon\in(0,1]$ and $\gamma\in(0,1)$. Suppose
$W_i\le\gamma T$ for every used unknown arm.
Choose a common cap $0<L\le T$ with $L_i\le L$, and put
$\nu=(1-\gamma)T/L$. Admit each transformed policy independently
with probability $\alpha=(1-\gamma)/(1+\epsilon)$.
The priority scheduler of Section~\ref{sec:handoff}, with admission, earns
\[
  V^P(\ALG)\ge\frac{(1-\gamma)\rho}{1+\epsilon}\,J,\qquad
 \rho=\max\{(1-\gamma)/2,\rho_P(\nu)\}.
\]
A $(1+\epsilon')$-approximate solution of $\LP(P)$ therefore earns at least
\[
 \frac{(1-\gamma)\rho}{(1+\epsilon)(1+\epsilon')}
 \LP(P).
\]
For a depth-restricted solution, replace $\LP(P)$ by $\LP^{(k)}(P)$.
Use the prefix bound for the supplied policy class.
For one-shot arms, depth $1$ gives
$\LP^{(1)}(P)=\LP(P)\ge\OPT_\delta(P)$ without a commitment loss.
Direct undelayed execution uses the original policies. The same
guarantee then holds with $\epsilon=\gamma=0$ and $\alpha=1$.
\end{lemma}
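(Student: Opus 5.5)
We will prove Lemma~\ref{lem:execution} by transforming, admitting, and scheduling the policies, then bounding for each arm the probability that it is cut at the horizon. The first step is to replace each $\pi_i$ by its buffered transform $A_\epsilon(\pi_i)$, or keep $\pi_i$ in the undelayed case. By Lemma~\ref{lem:buffered-executor} the transform reproduces $\pi_i$'s decision law, earns at least $R(\pi_i)$, and makes at most $\min\{L_i,(1+\epsilon)m\}$ plays pathwise. Its prefix is at most $W_i\le\gamma T$, and after the prefix every own step is a play. Admission with probability $\alpha=(1-\gamma)/(1+\epsilon)$, independent of everything else, then gives total expected physical workload at most $\alpha(1+\epsilon)T=(1-\gamma)T$. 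Each admitted arm's workload lies in $[0,L]$ with $L\le T$.

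The second step fixes an arm $i$ in the priority order, sorted by reward per expected play, and conditions on its own admission. The higher-priority arms' workload $S_{<i}=\sum_{j<i}X_j$ is independent of arm $i$'s tape, quality and coins. By the scheduling bound \eqref{eq:priority-plays} with $w=\gamma T$, arm $i$ completes at least $\min\{X_i,(T-\gamma T-S_{<i})^+\}$ plays. Applying Lemma~\ref{lem:truncation} conditionally on $S_{<i}$, with $b=\min\{1,((1-\gamma)T-S_{<i})^+/L\}$, retains a fraction $b$ of arm $i$'s reward, since $X_i\le L$. Hence arm $i$ retains at least $\E[\min\{1,((1-\gamma)T-S_{<i})^+/L\}]$ of its reward, times $\alpha$ for admission.

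The third step sums over arms. Rescale time by $L$ and write $s_i=\E X_i/L$, so that $\sum_is_i\le\nu=(1-\gamma)T/L$. The reward weights are decreasing in the priority order, and the per-arm loss $g(S_{<i})$ is nondecreasing in the order. An Abel summation argument therefore reduces the worst case to equal reward density. The total retained fraction is then at least
\[
1-\frac{1}{\nu}\sum_i s_i\,\E\Bigl[\min\Bigl\{1,\Bigl(\tfrac{S_{<i}}{L}-\nu+1\Bigr)^+\Bigr\}\Bigr],
\]
which is a Riemann-type sum for $\E H_\nu(S/L)/\nu$ where $S$ is the total rescaled workload.

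Two comparisons finish the bound. The triangle bound follows by dropping the minimum with $1$: since $S_{<i}/L\le\nu$ in expectation, a linear estimate gives retention at least $1-\nu/(2\nu)\cdot(\text{load fraction})\ge(1-\gamma)/2$ relative to $\alpha J$. For the Poisson comparison, $H_\nu$ is convex and each summand lies in $[0,L]$. Splitting each arm into many small independent pieces can only increase $\E H_\nu$, because convex order for sums of bounded variables with a fixed mean is maximized by Bernoulli extremes. In the limit the total is $\operatorname{Poisson}$-dominated in convex order with mean $\nu$, which yields $\rho_P(\nu)$.

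\textbf{Main obstacle.} The hardest step is making the Poisson comparison rigorous. The bound must hold simultaneously for the prefix-sum structure $S_{<i}$ and for the convex ordering. I would first try the standard fact that a $[0,1]$-valued variable with mean $s$ is dominated in convex order by $\operatorname{Poisson}(s)$ (Hoeffding's Poisson comparison), applied arm by arm and closed under independent sums. The Riemann sum is then bounded by the integral $\E H_\nu(Z_\nu)$, using the monotonicity of the integrand in $y$.

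The closing statements are direct. An approximate LP solution multiplies the bound by $1/(1+\epsilon')$. The depth-restricted and one-shot cases reuse the same argument with $\LP^{(k)}$, together with \eqref{eq:episode-upper} and Corollary~\ref{cor:oneshot}'s observation that $\LP^{(1)}=\LP$. The undelayed case sets $\epsilon=\gamma=0$ and $\alpha=1$.
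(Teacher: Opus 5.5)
Your architecture matches the paper's: buffered transform and admission, then the pathwise bound from \eqref{eq:priority-plays} combined with Lemma~\ref{lem:truncation} applied given the higher-priority tapes. After that come a Chebyshev (your Abel) reduction to equal density, a left-endpoint Riemann sum, and the chord-then-Poisson convex-order comparison. The gap is the normalization in your third step. Write $s_i=\E X_i/L$ and $\nu_0=\sum_is_i\le\nu$. The equal-density reduction gives retained fraction at least $1-\nu_0^{-1}\sum_is_i\E g(S_{<i})$, not $1-\nu^{-1}\sum_is_i\E g(S_{<i})$. Your display is therefore strictly stronger than what the reduction yields, and it is false when $\nu_0<\nu$.

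For a counterexample, take $\gamma=0$ and $L=T$, so $\nu=1$. Use two arms of equal density, each making $T$ plays with probability $q\in(0,1/2)$ and none otherwise. The second arm is blocked exactly when the first plays, so the retained fraction is $1-q/2$, while your display gives $1-q^2$. The same slip recurs when you say the total is Poisson-dominated in convex order with mean $\nu$. Convex order preserves means, so the chord and Bernoulli--Poisson comparisons give only $S/L\le_{\mathrm{cx}}Z_{\nu_0}$.

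The missing ingredient is that a lighter load cannot raise the normalized loss: you need
\[
\E H_\nu(Z_{\nu_0})/\nu_0\le\E H_\nu(Z_\nu)/\nu .
\]
This follows because $u\mapsto\E H_\nu(Z_u)$ vanishes at $0$ and is convex. Its second derivative is the expected second difference of the convex function $H_\nu$ at $Z_u$. The paper inserts exactly this step before its final Chebyshev inequality. Alternatively, in the analysis only, append independent zero-reward dummy jobs with workloads in $[0,L]$ at the bottom of the order, raising the expected load to exactly $\nu L$. They do not affect the real arms' $S_{<i}$, and both Chebyshev sequences stay nonincreasing. With the dummies, your $1/\nu$ normalization and mean-$\nu$ Poisson comparison become legitimate.

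You should also write out the step that makes your sum a Riemann sum. Independence of $X_i$ and $S_{<i}$ gives $\sum_i\E X_i\,\E g(S_{<i})=\E\sum_iX_ig(S_{<i})\le\E\int_0^Sg=L\,\E H_\nu(S/L)$.

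Finally, the triangle bound does not come from dropping the minimum with $1$, which still leaves a ramp of slope $1/L$. It comes from $L\le T$, which gives $b_i\ge1-\gamma-S_{<i}/T$. GM13's triangle argument with total load at most $(1-\gamma)T$ then yields $(1-\gamma)/2$.
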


Order the transformed policies $\pi_i^{\mathrm x}$ by decreasing
$R(\pi_i^{\mathrm x})/\Work(\pi_i^{\mathrm x})$, breaking ties in
favor of smaller delays, and run the priority scheduler of
Section~\ref{sec:handoff} in this order. Admission replaces a local
policy by a null policy with probability $1-\alpha$.

\begin{proof}[Proof of Lemma~\ref{lem:execution}]
Let $\pi_i^{\mathrm x}=A_\epsilon(\pi_i)$ and let
$\pi_i^{\mathrm{adm}}$ be its admitted version. Index the arms in
the scheduler's order. Write $\mathsf N_i$ for the number of plays
made by $\pi_i^{\mathrm{adm}}$ when run alone. Put
\[
 \tau_i=\E\mathsf N_i,\qquad
 R_i=R(\pi_i^{\mathrm{adm}})=\alpha R(\pi_i^{\mathrm x}).
\]
The arm tapes and admission coins are independent. Hence the
$\mathsf N_i$ are independent. Lemma~\ref{lem:buffered-executor} gives
\[
 \mathsf N_i\le L,\qquad
 \sum_i\tau_i\le\alpha(1+\epsilon)\sum_i\Work(\pi_i)
 \le(1-\gamma)T=\nu L.
\]
Discard policies with $\tau_i=0$. If none remain, idling fills
the horizon and proves the claim.

\emph{Pathwise truncation.}
Pre-sample every arm's episode quality, tape and coins. Let
$S_i=\sum_{j<i}\mathsf N_j$. Condition on the tapes and coins
of higher-priority arms. Their actual plays do not depend on arm
$i$ and number at most $S_i$.

Let $M_i$ be the plays of arm $i$ made by the scheduler. Its waiting
steps advance in calendar time. A higher-priority play blocks it
only when it requests a play. After at most $W_i\le\gamma T$ own prefix
steps, every unblocked step is a play until the policy ends. Thus
\[
  M_i\ge\min\{\mathsf N_i,((1-\gamma)T-S_i)^+\} \ge\phi_i\mathsf N_i,\qquad
 \phi_i=\min\{1,(\nu L-S_i)^+/L\}.
\]
If the prefix does not fit, the first lower bound is zero.
The factor $\phi_i$ is fixed by the higher-priority tapes and is
independent of arm $i$. Lemma~\ref{lem:truncation} therefore gives
expected reward at least $R_i\E\phi_i$.

\emph{The cap-independent bound.}
Since $L\le T$, we have
$\phi_i\ge1-\gamma-S_i/T$. Consequently,
\[
 \E\phi_i\ge1-\gamma-\frac{\sum_{j<i}\tau_j}{T}.
\]
The densities $R_i/\tau_i$ are nonincreasing, and
$\sum_i\tau_i\le(1-\gamma)T$.
The triangle argument of \citet[Lemma~3]{GM13} yields
\[
 \sum_iR_i\E\phi_i\ge\frac{1-\gamma}{2}\sum_iR_i.
\]

\emph{The Poisson bound.}
Define
$h(s)=\min\{1,(s-(\nu-1)L)^+/L\}$, so that
$1-\phi_i=h(S_i)$. Put $S=\sum_i\mathsf N_i$.
The function $h$ is nondecreasing and
$S_{i+1}=S_i+\mathsf N_i$. Hence
\[
 \sum_i\mathsf N_ih(S_i)\le\int_0^Sh(y)\,dy=:H(S).
\]
The function $H$ is convex, and $H(Ly)=LH_\nu(y)$.

Every convex function lies below its chord on $[0,L]$.
It follows that $\mathsf N_i$ is dominated in convex order by
$L$ times a Bernoulli variable of mean $\tau_i/L$.
A Bernoulli variable of mean $q$ is dominated by a Poisson variable
of mean $q$. To see this, take a convex function $f$.
For every integer $m\ge0$,
\[
 f(m)\ge f(0)+m(f(1)-f(0)).
\]
Averaging over the Poisson law gives at least
$(1-q)f(0)+qf(1)$, the Bernoulli expectation.
Convex order is preserved under independent sums
\citep[Chapter~3]{shaked2007}. Equivalently, condition on the other
summands and replace one at a time. Thus $S$ is dominated by
$LZ_{\nu_0}$, where
$\nu_0=\sum_i\tau_i/L\le\nu$.

Independence of $\mathsf N_i$ and $S_i$ now gives
\[
  \sum_i\tau_i\E h(S_i) =\E\sum_i\mathsf N_ih(S_i) \le\E H(S)\le L\E H_\nu(Z_{\nu_0}).
\]
The function $u\mapsto\E H_\nu(Z_u)$ vanishes at zero and is
convex. Its second derivative is
\[
 \E\big[H_\nu(Z_u+2)-2H_\nu(Z_u+1)+H_\nu(Z_u)\big]\ge0.
\]
Therefore
\[
 \E H_\nu(Z_{\nu_0})
 \le\frac{\nu_0}{\nu}\E H_\nu(Z_\nu),
\]
which implies
\[
 \sum_i\tau_i\E h(S_i)
 \le(1-\rho_P(\nu))\sum_i\tau_i.
\]
Both $R_i/\tau_i$ and $1-\E h(S_i)$ are nonincreasing in $i$.
Weighted Chebyshev's inequality gives
\[
  \sum_iR_i(1-\E h(S_i)) \ge\left(\sum_iR_i\right) \frac{\sum_i\tau_i(1-\E
 h(S_i))}{\sum_i\tau_i} \ge\rho_P(\nu)\sum_iR_i.
\]

\emph{Combining the bounds.}
The unknown arms earn at least $\rho\sum_iR_i$.
Lemma~\ref{lem:buffered-executor} also gives
\[
 \sum_iR_i=\alpha\sum_iR(\pi_i^{\mathrm x})
 \ge\alpha\sum_iR(\pi_i).
\]
Consequently,
\[
  V^P(\ALG)\ge\rho\alpha\sum_iR(\pi_i)=\rho\alpha J.
\]
The approximate-solution guarantee follows by substituting
$J\ge\LP(P)/(1+\epsilon')$, or the restricted LP value for a
restricted policy class.

For direct undelayed execution, take
$\pi_i^{\mathrm x}=\pi_i$, $W_i=0$ and $\alpha=1$.
The proof then applies verbatim with $\epsilon=\gamma=0$.
\end{proof}
\begin{proof}[Proof of Theorem~\ref{thm:budget} and Corollary~\ref{cor:delayed}]
For direct undelayed execution, apply Lemma~\ref{lem:execution}
with $L=T/X$, $\epsilon=\gamma=0$ and $\alpha=1$. This gives
$c_{X,0}J$. For delayed execution, use the same common cap and the
stated admission probability. Then $\nu=(1-\gamma)X$ and
$\rho=c_{X,\gamma}$, which gives Corollary~\ref{cor:delayed}.

For integer $X$,
\[
  \E H_X(Z_X)=\tfrac12\Pr(Z_X\ge X)+\E(Z_X-X)^+,\qquad \E(Z_X-X)^+=X\Pr(Z_X=X).
\]
Hence $c_{X,0}$ equals the stated $c_X$.
Continuity of $\rho_P$ gives $c_{X,\gamma}\to c_X$ at fixed $X$.
As $\epsilon,\gamma\to0$, the delayed factor therefore tends
to the undelayed factor. Stirling's formula gives
\[
 \Pr(Z_X=X)=(2\pi X)^{-1/2}+O(X^{-3/2}),
\]
and the remaining tail term is $O(X^{-1})$. Taking the reciprocal
gives the expansion of $1/c_X$.
For $X=4,8,16$, the factors are below $1.363$, $1.211$ and $1.132$.

\end{proof}

\paragraph{Integrality gaps.}
\begin{proof}[Proof of Proposition~\ref{prop:gap}]
Let $L=T/X$, take $p=T^{-1/2}$, and set
$n=\lfloor T/(1+p(L-1))\rfloor$.
Each arm has $\theta=1$ with probability $p$ and zero otherwise.
The local policy probes the arm and continues to its cap after success.
The collection is feasible for the relaxation. Its expected work is
$n(1+p(L-1))\le T$ and its reward is
$npL=(1-o(1))T$. Also, $np\to X$.

Let $B_{\mathrm{good}}\sim\operatorname{Binomial}(n,p)$ count
the good arms.

Even a clairvoyant earns at most
$L\min\{B_{\mathrm{good}},X\}$. The bounded variable
$\min\{B_{\mathrm{good}},X\}$ converges in mean to
$\min\{Z_X,X\}$ for $Z_X\sim\operatorname{Poisson}(X)$.
Since
\[
 \E Z_X=X,\qquad \E(Z_X-X)^+=X\Pr(Z_X=X),
\]
division by the relaxation's feasible value proves the claim.
\end{proof}

\begin{proof}[Proof of Proposition~\ref{prop:gap-two}]
Take $n=T-1$ arms, each with $\theta=1$ with probability
$p=1/(T-1)^2$ and $\theta=0$ otherwise. Probing every arm and
continuing to the cap after success uses $n(1+p(T-1))=T$ plays in
expectation, so $\LP(P)\ge npT$. A policy earns nothing before its
first play of a good arm. If that play is at step $\tau$, it earns at
most $T-\tau+1$. The arms are i.i.d.\ and one play reveals each, so $\tau$ is
at least the position of the first good arm in the order in which the
policy probes arms, a geometric variable with parameter $p$ truncated
at $n$. Hence
\[
 \OPT(P)\le\sum_{k=1}^{n}p(1-p)^{k-1}(T-k+1)\le p\sum_{j=2}^{T}j
 =\frac{p(T-1)(T+2)}2,
\]
and $\OPT(P)/\LP(P)\le(T+2)/(2T)$.
\end{proof}

In the instances of
Proposition~\ref{prop:gap-two}, add the safe alternative of
Section~\ref{sec:weights} with reward $a$ just below the common
density of the probing policies. Then
$\OPT(P)/\LP(P)\to1$, but the optimal LP solution is still the search,
and executing it earns about $\LP(P)/2$.

In the future-utilization problem of \citet[Section~7]{GM13}, the
budget is spent before a terminal choice of $L$ arms, and the factor of
\citet{guha2026future} tends to $1-\Pr(Z_L=L)$ in its non-vanishing
regime. With caps $T/X$, cumulative reward loses the same Poisson term
$\Pr(Z_X=X)$, plus $\Pr(Z_X\ge X)/(2X)$ for the local policy cut at the
horizon. At general caps, exploiting a find uses the clock that finding
it used, and Proposition~\ref{prop:gap-two} makes $1/2$ tight; a
terminal choice takes no time, so future utilization keeps $1-e^{-1}$.

\paragraph{Greedy allocation after protected reports.}
\begin{proposition}[Greedy second phase]\label{prop:greedy-phase2}
Suppose phase~2 of Algorithm~\ref{alg:promiseless} plays the remaining
unpredicted plays in decreasing order of their conditional means given
the reports available at the start of phase~2 (for a slow arm, its
predictive mean at the start of the episode). Put
$X_2=(T-T_{\mathrm p})/\Lbar>0$. The reward bounds of
Theorem~\ref{thm:promise-less} and
Corollaries~\ref{cor:oneshot} and~\ref{cor:probe-transfer} hold with
$c=\rho_{X_2}$, where
\[
 \rho_{X_2}=1-\frac{\E(Z-X_2)^+}{X_2},\qquad Z\sim\operatorname{Poisson}(X_2).
\]
These greedy bounds hold for every $X_2>0$, and
$\rho_{X_2}=1-\Pr(Z=\lfloor X_2\rfloor)$.
The function $\nu\mapsto\rho_\nu$ is continuous and nondecreasing
on $(0,\infty)$. For $X_2\ge1$, $\rho_{X_2}\ge c_{X_2,0}$.
\end{proposition}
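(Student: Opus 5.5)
We keep phase~1 and the retained learning states exactly as in the proof of Theorem~\ref{thm:promise-less}, so the only change is phase~2. Fix an episode and condition on everything observed by the start of phase~2. This includes the predicted reports, the committed tail counts $b_i\le\Lbar$, and any unpredicted plays already made early by the executor; the latter are kept in full. For a fast arm, the remaining tail of $b_i$ plays has conditional per-play mean $m_i$, namely the posterior mean of $\mu_i(\theta)$ given the predicted reports. For a slow arm, $m_i$ is its predictive mean at the start of the episode. Tails are nonadaptive, so by the latent-mean identity in Lemma~\ref{lem:truncation}, any subset of tail plays of arm $i$ earns $m_i$ per play in conditional expectation. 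Phase~2 then reduces to a knapsack: $C_2=T-T_{\mathrm p}$ unit slots, items of value $m_i$ with multiplicity $b_i$. The greedy order by decreasing $m_i$ is pathwise optimal for this knapsack, and it earns $\sum_i m_i\min\{b_i,(C_2-\sum_{j<i}b_j)^+\}$.

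Next we compare this with a fractional benchmark. The planned unpredicted reward is $\E\sum_i m_ib_i$, and the expected total tail work is at most $C_2$. Greedy's pathwise loss is the mass of the smallest-mean units lying beyond capacity. I would bound this loss against the planned reward through a density argument. Write the greedy reward as $\int_0^{C_2}\bar m(y)\,dy$, where $\bar m$ is the nonincreasing profile of unit means on the realized path. The loss is then $\int_{C_2}^{S}\bar m(y)\,dy$ with $S=\sum_ib_i$. Because $\bar m$ is nonincreasing, this loss is at most $\frac{(S-C_2)^+}{S}\sum_i m_ib_i$. A cleaner bound holds if we randomize the roles of the units. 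The key intermediate claim I would aim for is
\[
\E[\text{greedy}]\ \ge\ \E\Bigl[\textstyle\sum_i m_ib_i\Bigr]-\E\Bigl[\bar m\cdot(S-C_2)^+\Bigr],
\]
together with a way to decouple the magnitudes of the means from the overflow $(S-C_2)^+$.

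This decoupling is the main obstacle. The means and the counts are correlated: a fast arm with a good report has both a large $m_i$ and a large $b_i$. My plan is a convex-order argument, as in the Poisson part of Lemma~\ref{lem:execution}. Scaled by $\Lbar$, each unit count $b_i/\Lbar\in[0,1]$ is convex-dominated by a Bernoulli variable of the same mean. Summed over arms, the total is dominated by $\Lbar Z$ with $Z\sim\operatorname{Poisson}(\nu_0)$ and $\nu_0\le X_2$. The function $x\mapsto(x-X_2)^+$ is convex, which gives $\E(S-C_2)^+\le\Lbar\,\E(Z-X_2)^+$.

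To handle the means, I would use a threshold (layer-cake) decomposition. Write $m_i=\int_0^1\mathbf 1\{m_i>\tau\}\,d\tau$. For each threshold $\tau$, greedy fills capacity first with the units whose mean exceeds $\tau$. The loss at level $\tau$ is therefore at most $(S_\tau-C_2)^+$, where $S_\tau$ is the work of the arms above $\tau$. Each $S_\tau$ is a sum of independent capped variables, with expected value $\E S_\tau\le\Lbar\nu_\tau$. Applying the Poisson domination at each level gives a loss of at most $\Lbar\E(Z_{\nu_\tau}-X_2)^+$. Since $u\mapsto\E(Z_u-X_2)^+$ is convex and vanishes at $0$, this is at most $(\nu_\tau/X_2)\Lbar\E(Z_{X_2}-X_2)^+$. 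Integrating over $\tau$ recovers the factor $\E(Z-X_2)^+/X_2$ times the planned unpredicted reward. Combining with the preserved predicted reward, exactly as in Theorem~\ref{thm:promise-less}, proves the bound for Theorem~\ref{thm:promise-less} with $c=\rho_{X_2}$, and Corollaries~\ref{cor:oneshot} and~\ref{cor:probe-transfer} then follow verbatim.

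Finally, the closed form and comparisons follow by computation. The Poisson identity $\E(Z-\nu)^+=\nu\Pr(Z=\lfloor\nu\rfloor)$ can be checked through $\E(Z-\nu)^+=\sum_{k>\nu}(k-\nu)e^{-\nu}\nu^k/k!$ and a telescoping sum; it gives $\rho_{X_2}=1-\Pr(Z=\lfloor X_2\rfloor)$. Continuity is immediate from the integral form. For monotonicity, differentiate $\nu\mapsto\E(Z_\nu-\nu)^+/\nu$, or use convexity of $u\mapsto\E(Z_u-a)^+$ at the endpoints. For the comparison $\rho_{X_2}\ge c_{X_2,0}$, note that $H_\nu(x)\ge(x-\nu)^+$ pointwise, which gives $\rho_{X_2}\ge\rho_P(X_2)$. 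It remains to check $\rho_{X_2}\ge1/2$ for $X_2\ge1$, using $\Pr(Z=\lfloor X_2\rfloor)\le e^{-1}$ there.
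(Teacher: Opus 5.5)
Your final argument is essentially the paper's: the threshold decomposition $S_\tau=\sum_i b_i\mathbf 1\{m_i>\tau\}$ with greedy earning $\int_0^1\min\{C_2,S_\tau\}\,d\tau$, convex-order domination of each $S_\tau$ by $\Lbar Z_{\nu_\tau}$, convexity of $u\mapsto\E(Z_u-X_2)^+$, and the same Poisson identity and comparison $H_\nu(x)\ge(x-\nu)^+$. Your bound $\Pr(Z=\lfloor X_2\rfloor)\le e^{-1}$, which follows from monotonicity and $\rho_1=1-e^{-1}$, is a valid substitute for the paper's $\E|Z-X_2|\le\sqrt{X_2}$.

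The one step you blur is that the algorithm sorts by means given \emph{all} reports available at phase~2. These can include excluded unpredicted outcomes whose observation depends on other arms, so those means need not be independent across arms. As the paper does, prove the bound for the allocation by the record-based means $m_i$, which are independent across arms because the records are interleaving-invariant. Then use that greedy by the full-information means maximizes conditional expected phase-2 reward among all allocations determined by that information, so it does at least as well.
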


\begin{proof}[Proof of Proposition~\ref{prop:greedy-phase2}]
Fix $e$ and put $C_2=T-T_{\mathrm p}$. Let $\mathcal R_i$ be arm $i$'s local
record: its incoming retained state, its coins, and the outcomes
supplied to its policy in phase~1.
It determines the number $M_i\le\Lbar$ of remaining unpredicted plays.
Put $p_i=\E[\mu_i(\theta_i)\mid\mathcal R_i]$. By condition~\ref{item:executor-interleaving} these pairs do
not depend on the interleaving, so they are independent across arms,
and $\E\sum_iM_i\le C_2$. First allocate phase~2 greedily by the $p_i$.
Let $S_t=\sum_iM_i\mathbf 1\{p_i>t\}$. The greedy order makes up to $C_2$
plays with the largest $p_i$, so phase~2 earns
$\int_0^1\min\{C_2,S_t\}\,dt$ in conditional expectation. As in
the proof of Lemma~\ref{lem:execution}, $S_t$ is dominated in convex
order by $\Lbar Z_{\nu_t}$, where $\nu_t=\E S_t/\Lbar\le X_2$. Since
$u\mapsto\E(Z_u-X_2)^+$ is convex and vanishes at $0$,
\[
 \E(S_t-C_2)^+\le\Lbar\,\E(Z_{\nu_t}-X_2)^+
 \le\Lbar\,\frac{\nu_t}{X_2}\,\E(Z_{X_2}-X_2)^+
 =(1-\rho_{X_2})\,\E S_t .
\]
Hence $\E\min\{C_2,S_t\}\ge\rho_{X_2}\E S_t$. Integrating over $t$,
phase~2 earns at least $\rho_{X_2}\E\sum_iM_ip_i$. Supplied
outcomes have been reported, and the learner knows its coins and the
incoming retained states, so the information available at the start of
phase~2 determines every $\mathcal R_i$. Greedy allocation by the
conditional means given this information maximizes the conditional
expected value of phase~2 over allocations determined by it, which
include the allocation above; its expected value is therefore at least
as large. The steps of the phase-2 plays that are not made
are also idle. The
accounting in the proof of Theorem~\ref{thm:promise-less} then gives
\[
 \E V_e\ \ge\ J_e-(1-\rho_{X_2})\,\E\sum_iM_ip_i,
\]
and $\E\sum_iM_ip_i\le\sum_iR^{\mathrm u}$. This is the theorem's
bound with $c$ replaced by $\rho_{X_2}$; the corollaries follow as
before. The argument holds for every $X_2>0$.
For $Z_\nu\sim\operatorname{Poisson}(\nu)$ and $m=\lfloor\nu\rfloor$,
\[
 \E(Z_\nu-\nu)^+=\nu\Pr(Z_\nu=m),\qquad
 \rho_\nu=1-e^{-\nu}\nu^m/m!.
\]
On $(m,m+1)$ its derivative is
$\Pr(Z_\nu=m)(1-m/\nu)\ge0$, and the expressions agree at integers.
Thus $\rho_\nu$ is continuous and nondecreasing.
Finally, $H_X(x)\ge(x-X)^+$ gives $\rho_P(X)\le\rho_X$, and
$\E(Z_X-X)^+=\E|Z_X-X|/2\le\sqrt X/2$ gives $\rho_X\ge1/2$ for $X\ge1$;
hence $\rho_{X_2}\ge c_{X_2,0}$ for $X_2\ge1$.
\end{proof}

\begin{corollary}[Sharp factor with short probes]\label{cor:short-probe-gap}
Fix an integer $X\ge1$. For undelayed one-shot single-episode
instances with caps $L_i\le T/X$, budgets $B_i\ge L_i$ and $n=o(T)$, the greedy variant of
Algorithm~\ref{alg:promiseless}, with $\epsilon=w=0$, $T_{\mathrm p}=n$
and an optimal restricted LP solution, earns
\[
 \E V\ge(\rho_X-o(1))\LP(P).
\]
The factor is worst-case tight relative to $\LP(P)$.
\end{corollary}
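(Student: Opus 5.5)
The bound has two parts: a lower bound from Proposition~\ref{prop:greedy-phase2}, and tightness from the construction of Proposition~\ref{prop:gap} carried to caps $T/X$.

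\emph{Lower bound.} For one-shot arms the first report determines the episode quality. Corollary~\ref{cor:oneshot} therefore applies with $r_{e,i}=1$, $\epsilon=w=0$ and $T_{\mathrm p}=n$. In a single episode, the restricted LP at capacity $C_2=T-n$ is at least $(1-n/T)\LP^{(1)}(P)=(1-n/T)\LP(P)$; this uses the thinning argument of that corollary together with $\LP^{(1)}=\LP$ for one-shot arms. Proposition~\ref{prop:greedy-phase2} replaces $c$ by $\rho_{X_2}$ with $X_2=(T-n)/\Lbar\ge (T-n)X/T=X(1-n/T)$. Since $\nu\mapsto\rho_\nu$ is continuous and nondecreasing, $\rho_{X_2}\ge\rho_{X(1-n/T)}=\rho_X-o(1)$ as $n/T\to0$. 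Combining the two factors gives
\[
\E V\ge\rho_{X_2}(1-n/T)\LP(P)\ge(\rho_X-o(1))\LP(P).
\]
One point needs care. $X_2$ may lie just below $X$, possibly below $1$ when $X=1$. Proposition~\ref{prop:greedy-phase2} holds for every $X_2>0$, so this causes no problem. Continuity at integers also handles the fact that $\rho_\nu=1-\Pr(Z_\nu=\lfloor\nu\rfloor)$ changes its formula there.

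\emph{Tightness.} Reuse the instance of Proposition~\ref{prop:gap}: $L=T/X$, $p=T^{-1/2}$, and $n=\lfloor T/(1+p(L-1))\rfloor$ arms. This gives $n=(1-o(1))T$, so the construction violates $n=o(T)$. I would instead take $p$ with $pL\to\infty$ and $np\to X$, for instance $L=T/X$, $n=\lceil X/p\rceil$ and $p=T^{-1/2}$, which gives $n\approx X\sqrt T=o(T)$. Expected work is then about $n+npL\approx T+X\sqrt T$, which is slightly too much. I would therefore lower $n$ so that $n(1+p(L-1))\le T$. This keeps $np\to X$ up to a factor $1-O(T^{-1/2})$ and $n=o(T)$. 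Check the parameters: $n=\Theta(\sqrt T)$ and $n\le T/(pL)$, with $pL=\sqrt T/X$. So $n\approx X\sqrt T$ and $np\to X$.

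The LP value is then at least $npL=(1-o(1))T$. Any policy, even a clairvoyant one, earns at most $L\min\{B_{\mathrm{good}},X\}$, where $B_{\mathrm{good}}$ is the number of good arms. By Poisson convergence, the ratio is at most $\E\min\{Z_X,X\}/X+o(1)=1-\Pr(Z_X=X)+o(1)=\rho_X+o(1)$, using $\E(Z_X-X)^+=X\Pr(Z_X=X)$. Caps are $T/X$, budgets equal the caps, and feedback is undelayed, so the instance meets the hypotheses.

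The main obstacle is making this tightness instance respect $n=o(T)$ while the LP value stays $(1-o(1))T$ and the number of good arms stays Poisson$(X)$. The choice $p=\Theta(T^{-1/2})$ with $n=\Theta(\sqrt T)$ achieves all three. Beyond that, only the bookkeeping of the $o(1)$ terms, including rounding of $n$, remains.
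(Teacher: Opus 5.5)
Your proposal is correct and follows the paper's proof. The guarantee comes from the thinning argument of Corollary~\ref{cor:oneshot}, combined with the greedy factor $\rho_{X_2}$ of Proposition~\ref{prop:greedy-phase2} (valid for all $X_2>0$) and the monotonicity and continuity of $\nu\mapsto\rho_\nu$; tightness comes from the instances of Proposition~\ref{prop:gap}.

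One correction: those instances already satisfy the short-probe condition. With $p=T^{-1/2}$ they have $n=\lfloor T/(1+p(L-1))\rfloor\approx X\sqrt T=o(T)$; it is the reward $npL$, not $n$, that is $(1-o(1))T$. Your ``modified'' instance is therefore exactly the original one, and the detour is unnecessary.
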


\begin{proof}
Put $X_2=(T-n)/\Lbar\ge X(1-n/T)$. Monotonicity and continuity give
$\rho_{X_2}\ge\rho_X-o(1)$. The proof of
Corollary~\ref{cor:oneshot}, using the greedy bound for every
$X_2>0$, gives
$\E V\ge\rho_{X_2}(1-n/T)\LP(P)$, proving the guarantee.
For the instances of Proposition~\ref{prop:gap},
$n=O(X\sqrt T)=o(T)$ and
$\OPT(P)\le(\rho_X+o(1))\LP(P)$. Thus the worst-case ratio
$\OPT/\LP$ in this class is $\rho_X+o(1)$.
\end{proof}

\paragraph{The short-probe condition.}
The restriction to short probes is necessary. Let $T$ be a multiple of
$4$, and take $n=T/4$ one-shot arms with $L_i=L=T/2$, $p=3/(L-1)$ and
$a=0$. Probing every arm and continuing to the cap after success fits in
expectation, $n(1+p(L-1))=T$, and collects every good arm's reward, so
$\LP(P)=npL$. The optimum exhausts each good arm before probing further.
If the second good arm is the $j$th probed, the first contributes $L$
and the second $L-j+2$, so
\[
 \OPT(P)=L\bigl(1-(1-p)^n\bigr)
 +\sum_{j=2}^{n}(L-j+2)(j-1)p^2(1-p)^{j-2},
 \qquad
 \frac{\OPT(P)}{\LP(P)}\to\frac89-\frac{13}{18}e^{-3/2}\approx0.7277,
\]
below $\rho_2=1-2e^{-2}\approx0.7293$. Here probes take a quarter of the
horizon: the time spent finding good arms adds to their competition for
time.

\paragraph{Delays and comparison with GM13.}\label{app:delay-cost}
Table~\ref{tab:gm13} compares the two guarantees.

\begin{table}[htbp]
\centering
\begin{tabular}{@{}lcc@{}}
\toprule
Setting & GM13 & This paper\\
\midrule
Undelayed, any caps & $2$ & $2$\\
Undelayed, caps at most $T/4$ & $2$ & $<1.363$\\
Undelayed, caps at most $T/8$ & $2$ & $<1.211$\\
Undelayed, caps at most $T/16$ & $2$ & $<1.132$\\
Undelayed, caps at most $T/X$, $X\to\infty$ & $2$ & $1+(2\pi X)^{-1/2}+O(X^{-1})$\\
Delay $\delta=\sqrt T$, $T\to\infty$ & $\to66$ & $\to1/c_X$\\
Delay $\delta\log\delta=o(T)$, any caps & $2+32(y+y^2)$ & $\to2$\\
\bottomrule
\end{tabular}
\caption{Approximation factors relative to the single-episode LP; in the
delayed rows, GM13's factor is relative to its delay-respecting
relaxation and ours to the undelayed $\LP(P)$, and $y=\delta/\sqrt T$.
Proposition~\ref{prop:gap-two} shows that the first and last rows are
tight, and Proposition~\ref{prop:gap} that the leading term in the fifth
row is necessary for this relaxation.}
\label{tab:gm13}
\end{table}

Fix $X\ge1$ and use the general-policy prefix bound. Put
\[
 D_\delta=(\delta+1)(1+\ln(\delta+1)),\qquad
 \epsilon=\gamma=\sqrt{5D_\delta/T}<1.
\]
Then $W_i\le5D_\delta/\epsilon=\gamma T$, so
Corollary~\ref{cor:delayed} applies. The function $\rho_P$ is
locally Lipschitz on $(0,\infty)$: on a compact parameter interval,
$H_\nu(z)$ is Lipschitz in $\nu$ with a summable Poisson envelope.
Hence $c_{X,\gamma}=c_{X,0}+O(\gamma)$ at fixed $X$. Since
$c_{X,0}\ge1/2$, the approximation factor is
\[
 \frac{1+\epsilon}{(1-\gamma)c_{X,\gamma}}
 =\frac1{c_{X,0}}+O\bigl(\sqrt{D_\delta/T}\bigr).
\]
An LP approximation with $\epsilon'=O(\epsilon+\gamma)$ has the
same order of additional loss. At $\delta=\sqrt T$, this delay cost
is $O(T^{-1/4}\sqrt{\log T})$ and tends to zero.

GM13's delayed factor is $2(1+\epsilon)+32(y+y^2)$, where
$y=\delta/\sqrt T$ \citep[Theorem~16]{GM13}, relative to its
delay-respecting relaxation; at $\delta=\sqrt T$ it tends to $66$ as
$\epsilon\to0$.

\begin{corollary}[The delay cost vanishes]\label{cor:delay-regret}
Let $\delta\ge1$ be the largest delay of a fast arm, put
$\varepsilon_\delta=\sqrt{5D_\delta/T}$, and let $\varepsilon_\delta<1$.
\textup{(i)} Suppose every unknown arm is fast. The policy of
Corollary~\ref{cor:delayed} with $\epsilon=\gamma=\varepsilon_\delta$, applied to an
optimal solution, earns
\[
 \E V\ \ge\ \frac{1-\varepsilon_\delta}{1+\varepsilon_\delta}\,c_{X,\varepsilon_\delta}\,\LP(P)
 \ \ge\ \frac{(1-\varepsilon_\delta)^2}{2(1+\varepsilon_\delta)}\,\LP(P),
\]
the second bound for arbitrary caps and general bounded rewards. The
factor is $2(1+\varepsilon_\delta)/(1-\varepsilon_\delta)^2=2+O(\sqrt{D_\delta/T})$, and its limit
$2$ is tight by Proposition~\ref{prop:gap-two}.
\textup{(ii)} Run Algorithm~\ref{alg:learner} with
$\epsilon=\varepsilon_\delta\le1$, $w=5D_\delta/\epsilon$ and the uniform
reserve $s<T-w$ of Theorem~\ref{thm:meta}. Then
\[
 \E\sum_eV_e\ \ge\ \LP_N(T,B')-N\Bigl(\sqrt{2\Lbar T\Lambda_{\mathrm{exec}}}
 +\tfrac23\Lbar\Lambda_{\mathrm{exec}}+2\sqrt{5D_\delta T}\Bigr)-1,
\]
where $\LP_N(T,B')\ge\OPT_N(B')$. For $N=1$ and $B_i'\ge L_i$,
$\LP_N(T,B')\ge\OPT_\delta(P)$. It equals $\LP(P)$ when every unknown
arm is fast; with slow arms it is the single-episode LP that restricts
them to nonadaptive local policies.
\end{corollary}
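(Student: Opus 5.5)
Part (i) is a direct specialization of Corollary~\ref{cor:delayed}, via the parameter calculation that precedes the statement. With $\epsilon=\gamma=\varepsilon_\delta$, the general-policy prefix bound of Lemma~\ref{lem:buffered-executor} gives $W_i\le5D_\delta/\varepsilon_\delta=\varepsilon_\delta T=\gamma T$. So the hypothesis $\max_iW_i\le\gamma T$ of the corollary holds. The admission probability $(1-\varepsilon_\delta)/(1+\varepsilon_\delta)$ is admissible because $\varepsilon_\delta<1$. Applying Corollary~\ref{cor:delayed} to an optimal solution, where $J=\LP(P)$, gives the first inequality.

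For the second inequality I use the triangle term in \eqref{eq:finite-retention}, namely $c_{X,\gamma}\ge(1-\gamma)/2$. This term comes from Lemma~\ref{lem:execution} with the common cap $L=T$. It therefore holds for arbitrary caps, and since Lemma~\ref{lem:truncation} is prior-free, it holds for general bounded rewards. The reciprocal factor $2(1+\varepsilon_\delta)/(1-\varepsilon_\delta)^2$ equals $2+O(\varepsilon_\delta)=2+O(\sqrt{D_\delta/T})$. Tightness of the limit $2$ follows from Proposition~\ref{prop:gap-two}, because undelayed instances are the special case $\delta=1$ of the same relaxation.

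Part (ii) is an additive restatement of Theorem~\ref{thm:meta}. With $w=5D_\delta/\epsilon$, Lemma~\ref{lem:buffered-executor} ensures that $w$ bounds every general-policy prefix, so Algorithm~\ref{alg:learner} is well defined. Under the uniform reserve, Theorem~\ref{thm:meta} gives
\[
\E\sum_eV_e\ge\LP_N(C,B')-1,\qquad C=(T-w-s)/(1+\epsilon).
\]
Proposition~\ref{prop:relax}(ii) then gives $\LP_N(C,B')\ge(C/T)\LP_N(T,B')$. Each episode's LP value is at most $T$, since rewards lie in $[0,1]$ and the workload in each layer is at most $T$. Hence $\LP_N(T,B')\le NT$, and
\[
(1-C/T)\LP_N(T,B')\le N(T-C)\le N(w+s+\epsilon T).
\]
Substituting $w=\sqrt{5D_\delta T}$ and $\epsilon T=\sqrt{5D_\delta T}$ gives the term $2\sqrt{5D_\delta T}$, and $s$ contributes the two reserve terms.

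The final claims are comparisons already in the paper. The bound $\LP_N(T,B')\ge\OPT_N(B')$ is Proposition~\ref{prop:relax}(i). For $N=1$ and $B_i'\ge L_i$, the budgets do not bind. The same projection argument as in \eqref{eq:episode-upper} then bounds any delay-respecting policy by the one-episode learning LP. In that LP, fast arms are relaxed to immediate feedback, so the bound is $\LP(P)$ when all unknown arms are fast. Slow arms are nonadaptive in the learning LP, which yields the restricted single-episode LP.

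The only delicate step is the bookkeeping in (ii). I have to check that the mild factor $1/(1+\epsilon)$ in $C$ does not break the bound $T-C\le w+s+\epsilon T$; it does not, since $T-(T-x)/(1+\epsilon)\le x+\epsilon T$. I also have to check that the per-episode bound $\LP\le NT$ is legitimate with the weighted-free objective, which it is because rewards lie in $[0,1]$.
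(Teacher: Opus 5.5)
Your proposal is correct and follows the paper's proof. In (i) you verify $W_i\le5D_\delta/\varepsilon_\delta=\varepsilon_\delta T$ and use the triangle term $c_{X,\gamma}\ge(1-\gamma)/2$. In (ii) you combine Theorem~\ref{thm:meta}, Proposition~\ref{prop:relax}(ii), the bound $\LP_N(T,B')\le NT$, and $w+\epsilon T=2\sqrt{5D_\delta T}$, exactly as the paper does; your treatment of the closing comparisons is a consistent addition that the paper leaves implicit.
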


\begin{proof}
(i) Corollary~\ref{cor:delayed} applies because
$W_i\le5D_\delta/\varepsilon_\delta=\varepsilon_\delta T$. It gives the first bound, and
$c_{X,\varepsilon_\delta}\ge(1-\varepsilon_\delta)/2$ gives the second.
(ii) Lemma~\ref{lem:buffered-executor} bounds every prefix by $w$. For an optimal
solution, Proposition~\ref{prop:relax}(ii) gives
$F\ge(C/T)\LP_N(T,B')$, and $C/T\ge1-(w+s)/T-\epsilon$. Since
$\LP_N(T,B')\le TN$,
Theorem~\ref{thm:meta} gives
$\E\sum_eV_e\ge\LP_N(T,B')-N(w+s+\epsilon T)-1$. The choice of
$\epsilon$ makes $w+\epsilon T=2\sqrt{5D_\delta T}$.
\end{proof}

\subsection{Several plays per step}\label{app:kplays}

We prove Theorem~\ref{thm:k-lanes} and its dependence on $K$.
Each arm remains serial, while different arms share $K$ lanes.

\begin{proof}[Proof of Theorem~\ref{thm:k-lanes}]
If $W=0$, then $J=0$ and the claim is immediate. Otherwise discard
policies with zero workload and index the rest in priority
order. Let $N_j$ be the plays of policy $j$ when run alone,
$\tau_j=\E N_j$, $R_j=R(\pi_j)$ and $S_i=\sum_{j<i}N_j$.

\emph{Blocking.} Higher-priority policies do not depend on policy $i$.
At a step where policy $i$ has not stopped but is not played, $K$
higher-priority policies are played; such steps use $K$ of at most
$S_i$ higher-priority plays, so there are at most $S_i/K$ of them.
Policy $i$ is played at every other step until it stops. Its executed
plays therefore satisfy
\[
 M_i\ge\min\{N_i,(T-S_i/K)^+\}\ge\phi_iN_i,\qquad
 \phi_i=\min\{1,(T-S_i/K)^+/L\}.
\]
The factor $\phi_i$ is determined by the tapes and coins of policies
$j<i$, which are independent of policy $i$, so
Lemma~\ref{lem:truncation} gives expected reward at least
$R_i\E\phi_i$.

\emph{Triangle bound.} Since $L\le T$, $\phi_i\ge1-S_i/(KT)$ and
$\E\phi_i\ge1-\sum_{j<i}\tau_j/(KT)$. The densities $R_i/\tau_i$ are
nonincreasing and $\sum_{j<i}\tau_j$ is nondecreasing in $i$, so
Chebyshev's sum inequality with weights $\tau_i$ gives
\[
 \sum_iR_i\sum_{j<i}\tau_j
 \le\frac JW\sum_i\tau_i\sum_{j<i}\tau_j\le\frac{JW}2 .
\]
Hence $\sum_iR_i\E\phi_i\ge(1-W/(2KT))J$.

\emph{Poisson bound.} Write $1-\phi_i=h(S_i/K)$ with
$h(s)=\min\{1,(s-T+L)^+/L\}$, and put $H(x)=\int_0^xh$ and
$S=\sum_iN_i$. Since $h$ is nondecreasing and $S_{i+1}=S_i+N_i$,
\[
 \sum_iN_ih(S_i/K)\le K\sum_i\int_{S_i/K}^{S_{i+1}/K}h(y)\,dy
 =KH(S/K).
\]
By independence of $N_i$ and $S_i$,
$\sum_i\tau_i\E h(S_i/K)\le K\E H(S/K)$. As in the proof of
Lemma~\ref{lem:execution}, $S\le_{\rm cx}LZ_{\nu_0}$ with
$\nu_0=W/L\le K\nu$. The map $x\mapsto H(x/K)$ is convex and
$H(Lx)=LH_\nu(x)$, so $\E H(S/K)\le L\,\psi(\nu_0)$ with
$\psi(u)=\E H_\nu(Z_u/K)$. The second derivative of $\psi$ is the
expected second difference of the convex function $z\mapsto H_\nu(z/K)$
at $Z_u$, so $\psi$ is convex with $\psi(0)=0$, and
$\psi(\nu_0)\le(\nu_0/K\nu)\,\psi(K\nu)$. Therefore
\[
 \sum_i\tau_i\E h(S_i/K)\le\frac{L\nu_0}{\nu}\,\psi(K\nu)
 =\bigl(1-\rho_{P,K}(\nu)\bigr)W .
\]
Both $R_i/\tau_i$ and $1-\E h(S_i/K)$ are nonincreasing in $i$, and
Chebyshev's sum inequality gives
$\sum_iR_i\bigl(1-\E h(S_i/K)\bigr)\ge\rho_{P,K}(\nu)J$.
\end{proof}

\paragraph{Dependence on $K$.}
Let $\nu\ge1$.
\textup{(i)} $\rho_{P,K}(\nu)$ is nondecreasing in $K$.
\textup{(ii)} For every $K$,
\[
 1-\frac1{2\nu}-\frac{\E(Z_{K\nu}-K\nu)^+}{K\nu}
 \ \le\ \rho_{P,K}(\nu)\ \le\ 1-\frac1{2\nu}.
\]
The subtracted term is at most $(4K\nu)^{-1/2}$, and equals
$\Pr(Z_{K\nu}=K\nu)\le(2\pi K\nu)^{-1/2}$ when $K\nu$ is an integer.
In particular $\rho_{P,K}(\nu)\to1-1/(2\nu)=1-L/(2T)$ as $K\to\infty$.

Let $X_1,X_2,\ldots$ be i.i.d.\ $\operatorname{Poisson}(\nu)$ and
$\bar X_K=K^{-1}\sum_{k\le K}X_k$; then $Z_{K\nu}/K$ has the law of
$\bar X_K$.
(i) For $k\le K+1$ let $\bar X_K^{(k)}$ be the mean of
$X_1,\ldots,X_{K+1}$ without $X_k$. By exchangeability the conditional
expectations $\E[\bar X_K^{(k)}\mid\bar X_{K+1}]$ coincide, and their
average is $\bar X_{K+1}$. Conditional Jensen gives
$\E f(\bar X_{K+1})\le\E f(\bar X_K^{(1)})=\E f(\bar X_K)$ for convex
$f$; take $f=H_\nu$.
(ii) Since $\nu\ge1$, $H_\nu(\nu)=\int_{\nu-1}^\nu(y-\nu+1)\,dy=\frac12$,
and Jensen gives $\E H_\nu(\bar X_K)\ge\frac12$, the upper bound. Also
$H_\nu(x)\le\frac12+(x-\nu)^+$: for $x\le\nu$ because $H_\nu$ is
nondecreasing, and for $x\ge\nu$ with equality. Hence
$\E H_\nu(\bar X_K)\le\frac12+\E(Z_{K\nu}-K\nu)^+/K$, the lower bound.
Finally $\E(Z_m-m)^+=\frac12\E|Z_m-m|\le\frac12\sqrt m$, and for integer
$m$, $\E(Z_m-m)^+=m\Pr(Z_m=m)\le m(2\pi m)^{-1/2}$.

\begin{example}[Finding time with $K$ plays per step]\label{ex:k-finding-time}
For every $K\ge1$ and integers $1\le L\le T$ with $KT\ge2$ there are undelayed one-shot
instances with caps $L$ and
\[
 \frac{\OPT_K(P)}{\LP_K(P)}
 \le\Bigl(1-\frac{L-1}{2T}\Bigr)\frac{KT}{KT-1}.
\]

Take $n=KT-1$ arms whose episode mean is $1$ with probability $p\in(0,1)$ and
$0$ otherwise, where $p(L-1)\le1/(KT-1)$. Probing every arm and
continuing to the cap after a success uses $n(1+p(L-1))\le KT$ plays in
expectation, so $\LP_K(P)\ge npL$. Fix an admissible policy and let
$t_i$ be the step of arm $i$'s first play ($t_i=\infty$ if none). That
decision uses no outcome of arm $i$, so arm $i$'s mean is independent
of $t_i$, and arm $i$ earns at most its mean times
$\min\{L,(T-t_i+1)^+\}$. At most $K$ arms are first played at any step,
and $\min\{L,T-t+1\}$ is nonincreasing in $t$, so
\[
 \OPT_K(P)\le pK\sum_{t=1}^T\min\{L,T-t+1\}
 =pKL\Bigl(T-\frac{L-1}2\Bigr).
\]
Dividing by $npL$ proves the claim.
\end{example}

As $T\to\infty$ with $L/T\to x\in(0,1]$, this bound tends to $1-x/2$,
matching the many-lane limit above. The uniform factor of
Theorem~\ref{thm:k-lanes} is therefore tight relative to $\LP_K$
in the limit of many lanes. For each fixed $K$, taking $L=T\to\infty$
makes the displayed upper bound tend to $1/2$; the theorem attains
that uniform factor for every $K$. For $K=1$ and $L=T$, this recovers
the limiting factor $1/2$ of Proposition~\ref{prop:gap-two}. The common value
$1-x/2=\int_0^1\min\{1,(1-s)/x\}\,ds$ is the fraction a job of length
$xT$ keeps when it starts at a uniform time: GM13's triangle, with the
job's own length in place of the work ahead of it.

\section{Statistical bounds from reported experiments}\label{sec:drift}
\label{app:moving}

We treat two drift models: a known affine path between unknown endpoint
priors (Theorem~\ref{thm:affine}), and general drift that is slow in
the laws of short probes.
Throughout this appendix, rewards have finite alphabets $\mathcal Y_i$
of sizes $d_i$. Fix caps $1\le L_i\le\Lbar$ and adaptation
depths $k_i$ across episodes. Slow arms have $k_i=0$.
Write $k=\max_i k_i$ and use $\LP^{(k)}$ for these arm-specific
classes. Put $r_i=\min\{L_i,k_i+1\}$. The comparison budgets do not
bind: $B_i\ge NL_i$. The drift law is known, as in
Assumption~\ref{ass:drift}. The realized priors are learned from
reports. Earlier reports arrive only at episode starts, with lag at
most $g$.

The supplied single-episode
planner has a uniform completion certificate $\kappa\LP^{(k)}(P)$ for every
product prior $P$. Theorem~\ref{thm:budget} and
Corollary~\ref{cor:delayed} supply such certificates. Any valid
single-episode factor can be used. Restricting a
slow arm to nonadaptive local policies is essential to that
certificate. The oracle knows the current priors, not the realized
qualities $\theta_i^{(e)}$.

\subsection{Stationary comparisons}\label{app:stationary-comparisons}

\begin{proof}[Proof of Corollary~\ref{cor:episode-lp-rate}]

Let $\hat{\mathbf q}_{e,i}$ be the empirical law of arm $i$'s probe
histograms from episodes $1,\ldots,(e-g)^+$, using a fixed probability
vector if there are none. All these reports are available.
Conditional on $P$, arm $i$'s probe histograms are independent with
law $\mathbf q_{r_i}(P_i)$. For $m>0$ samples, the coordinate variances
sum to at most $1/m$, giving expected total-variation error at most
$\tfrac12\sqrt{D_i/m}$. Among $h$ earlier episodes whose reports have
arrived, the sample count is $M\sim\operatorname{Binomial}(h,p)$.
Including $M=0$ and using
$\E[1/(M+1)]\le1/[p(h+1)]$ bounds the error by
$\sqrt{D_i/[p(h+1)]}$. Summing over episodes, with
$\sum_{h=0}^{N-1}(h+1)^{-1/2}\le2\sqrt N$, gives
\[
 \sum_{e,i}L_i\E
 \|\mathbf q_{r_i}(P_i)-\hat{\mathbf q}_{e,i}\|_{\rm TV}
 \le2C_{\rm est}\sqrt{N/\xi}+(g-1)L_\Sigma.
\]
The planning branch has the conditional floor
$(1-\xi)\kappa\LP^{(k)}(\Pb_e)$.
Apply Corollary~\ref{thm:learning-stability}, and charge at most
$\kappa\xi NT$ to restore the factor $\kappa$. The resulting bound is
\[
 \kappa\left[\xi NT+4C_{\rm est}\sqrt{N/\xi}
                   +2(g-1)L_\Sigma\right].
\]
The stated choice of $\xi$ gives \eqref{eq:episode-lp-rate} when
unclipped. If clipping occurs, $(TC_{\rm est}^2)^{1/3}N^{2/3}>NT/2$,
so the trivial bound $\kappa NT$ suffices.

For the protected learner, use the same calculation with one probe
group, $C_{\rm est}=\sum_iL_i\sqrt{D_i}$.
Theorem~\ref{thm:verified-learning} gives
\begin{equation}\label{eq:stationary-verified-rate}
 \kappa_{\rm pl}\sum_e\E\LP^{(k)}(P)-V_A
 \le5\kappa_{\rm pl}(TC_{\rm est}^2)^{1/3}N^{2/3}
       +2\kappa_{\rm pl}(g-1)L_\Sigma.
\end{equation}
Use the empirical probe estimates above, bound $1-\xi$ by one in the
error term, charge $\kappa_{\rm pl}\xi NT$ to restore the benchmark,
and optimize $\xi$ as before. A planning gap $\zeta$ adds $c\zeta$.
\end{proof}

\subsection{Reported probes and estimator comparisons}

Use Algorithm~\ref{alg:probe}, with probe probability
$p=\xi/J_{\rm grp}$ and constants from
\eqref{eq:estimation-coefficient}. The following bounds specialize
Corollary~\ref{thm:learning-stability} to reported probe data.

For general drift, define
\begin{equation}\label{eq:variation-budget}
 \mathcal V=\E\sum_{e=1}^{N-1}\sum_i L_i\,
 d_{r_i}\bigl(P_i^{(e+1)},P_i^{(e)}\bigr).
\end{equation}
We call the drift \emph{slow} when
$\mathcal V=o(N)$, with the other parameters fixed. This definition
allows occasional jumps. Changes that preserve the $r_i$-play probe law contribute zero.
For a slow arm ($\delta_i\ge T$), $r_i=1$ and the one-reward law
controls its current reward and workload. Longer reported batches can still reveal features of the
recurring law that matter to later decisions.

Write $\Delta_{i,u}=d_{r_i}(P_i^{(u+1)},P_i^{(u)})$, so
$\mathcal V=\sum_iL_i\sum_{u<N}\E\Delta_{i,u}$. The proofs below use
the history factorization and conditional predictive prior from
Section~\ref{sec:learn}. The experiment law and its planning-value bound are
\eqref{eq:experiment-distance} and Corollary~\ref{thm:learning-stability}.

Put $\kappa_\xi=(1-\xi)\kappa$ and define the scaled comparison loss
\[
 \Reg_N=\E\sum_{e=1}^N
 \bigl(\kappa_\xi\LP^{(k)}(P^{(e)})-V_e\bigr).
\]
This is an approximation regret in the sense of \citet{chen2013cmab}.
All expectations average under the meta-prior and the policy's coins.

\begin{theorem}[Floor and comparison with any estimator]\label{thm:risk}
Algorithm~\ref{alg:probe} respects every cap and budget, and in every episode
$\E[V_e\mid H_{e-1}]\ge\kappa_\xi\LP^{(k)}(\Pb_e)$. For any
history-measurable vectors $\hat{\mathbf q}_{e,i}\in\mathbb R^{D_i}$,
\[
 \Reg_N\le2\kappa_\xi\sum_iL_i\sum_{e=1}^N
 \E\big\|\mathbf q_{r_i}(P_i^{(e)})-\hat{\mathbf q}_{e,i}\big\|_{\rm TV}.
\]
The estimators need not correspond to realizable priors.
\end{theorem}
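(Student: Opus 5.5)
The plan is to prove the three assertions of Theorem~\ref{thm:risk} in order: feasibility, the per-episode floor, and the comparison with an arbitrary estimator.

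\emph{Feasibility.} Both branches of Algorithm~\ref{alg:probe} play at most $L_i$ times per arm per episode. The planning branch inherits this from the caps of the local policies. The probe branch uses $r_i\le L_i$. Since $B_i\ge NL_i$, no budget can bind, and the probe groups fit the horizon by construction.

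\emph{Floor.} I condition on $H_{e-1}$. By Lemma~\ref{lem:posterior}, the episode has independent predictive priors $\Pb_e$. The branch coin is independent of everything else, and with probability $1-\xi$ the learner executes an optimal $\LP^{(k)}(\Pb_e)$ solution. The uniform completion certificate, Theorem~\ref{thm:floor} with $\varepsilon_{\rm opt}=0$, gives conditional reward at least $\kappa\LP^{(k)}(\Pb_e)$. Probe rewards are nonnegative, so $\E[V_e\mid H_{e-1}]\ge\kappa_\xi\LP^{(k)}(\Pb_e)$. One point needs checking here. Probe episodes must not disturb the predictive prior used by later planning episodes. This holds because policy weights cancel in Bayes' rule (Proposition~\ref{prop:history}), so $\Pb_e$ is the correct conditional law whatever mixture of branches generated $H_{e-1}$.

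\emph{Comparison.} I apply inequality \eqref{eq:learning-stability} of Corollary~\ref{thm:learning-stability} at $H=H_{e-1}$, with the current priors $P^{(e)}$ and centers $\hat{\mathbf q}_{e,i}$. Its hypotheses are history-measurable centers, which are given, and remaining budgets at least the caps, which hold since budgets are nonbinding. This yields
\[
 \LP^{(k)}(\Pb_e)\ge\E[\LP^{(k)}(P^{(e)})\mid H_{e-1}]-2\sum_iL_i\E\bigl[\|\mathbf q_{r_i}(P_i^{(e)})-\hat{\mathbf q}_{e,i}\|_{\rm TV}\mid H_{e-1}\bigr].
\]
I then multiply by $\kappa_\xi$, combine with the floor, take total expectations, and sum over $e$. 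The result is exactly the stated bound on $\Reg_N$.

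\emph{Main obstacle.} The delicate point is that Corollary~\ref{thm:learning-stability} uses the predictive prior of the \emph{current} episode's prior $P^{(e)}$, which under drift differs from earlier priors. I need $\Pb_e=\E[P^{(e)}\mid H_{e-1}]$ with the drift law known. Lemma~\ref{lem:posterior} states this for general prior sequences under Assumption~\ref{ass:drift}. I also need $\mathbf q_{r}(\Pb_e)=\E[\mathbf q_r(P^{(e)})\mid H_{e-1}]$, which holds because $\mathbf q_r$ is linear in the prior. I would verify both in the nonstationary setting. No stationarity is used anywhere else, so the argument should go through unchanged.
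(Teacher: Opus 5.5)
Your proposal is correct and follows the paper's own proof. You get feasibility from the caps and nonbinding budgets, and the floor from the planning branch via Theorem~\ref{thm:floor} together with nonnegative probe rewards. The comparison then comes from applying Corollary~\ref{thm:learning-stability} at $H_{e-1}$, multiplying by $\kappa_\xi$ and summing over episodes. Your extra checks, the cancellation of policy weights and the linearity of $\mathbf q_r$ under drift, are the facts the paper's brief appeal to those results relies on.
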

\begin{proof}
Both branches respect the caps and report calendar, and comparison
budgets do not bind. The planning branch and Theorem~\ref{thm:floor}
give the conditional floor. Apply
Corollary~\ref{thm:learning-stability} to the available history, multiply
by $\kappa_\xi$ and sum over episodes.
\end{proof}

\subsection{Estimating moving experiment laws}

\begin{lemma}[Windowed estimate]\label{lem:window}
Fix an integer $1\le\tau\le N$.
In episode $e$, use the window
$I_e=\{\max\{1,e-g-\tau+1\},\dots,e-g\}$ of
$h_e=|I_e|=\min\{\tau,(e-g)^+\}$ episodes. The window is empty
when $e\le g$. Let
$M_{e,i}$ be the number of probes of arm $i$ in $I_e$. Let
$\hat{\mathbf q}_{e,i}$ be the empirical distribution of their histograms,
or a fixed probability vector if $M_{e,i}=0$. These quantities are
history-measurable. For $e\ge g$,
\[
 \E\big\|\mathbf q_{r_i}(P_i^{(e)})-\hat{\mathbf q}_{e,i}\big\|_{\rm TV}
 \le\sqrt{\frac{D_i}{p\,(h_e+1)}}+b_{e,i},
\qquad
 b_{e,i}=\E\max_{t\in I_e}d_{r_i}(P_i^{(t)},P_i^{(e)}).
\]
Here $b_{e,i}=0$ when $I_e$ is empty, and
$b_{e,i}\le\E\sum_{u=\max\{1,e-g-\tau+1\}}^{e-1}\Delta_{i,u}$.
\end{lemma}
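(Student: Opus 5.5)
The plan is to split the error by the triangle inequality around the window average $\bar{\mathbf q}_{e,i}$. Conditional on the prior sequence and on which window episodes were probed, $\bar{\mathbf q}_{e,i}$ is the average of $\mathbf q_{r_i}(P_i^{(t)})$ over the probed episodes $t\in I_e$. This gives
\[
 \|\mathbf q_{r_i}(P_i^{(e)})-\hat{\mathbf q}_{e,i}\|_{\rm TV}
 \le\|\bar{\mathbf q}_{e,i}-\hat{\mathbf q}_{e,i}\|_{\rm TV}
 +\|\mathbf q_{r_i}(P_i^{(e)})-\bar{\mathbf q}_{e,i}\|_{\rm TV}.
\]
The first term is the sampling error and the second is the drift bias. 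When $M_{e,i}=0$, both terms are bounded by $1$ through the fixed vector; this case is absorbed into the $\E[1/(M+1)]$ step below.

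\textbf{Measurability and independence.} The windows end at $e-g$, so all their reports have arrived by episode $e$. Probe selection uses coins independent of the arms, with probability $p$ per arm per episode. Given $P_i$ and the set of probed episodes, the probe histograms are independent, and the histogram from episode $t$ has law $\mathbf q_{r_i}(P_i^{(t)})$. This uses Assumption~\ref{ass:drift}: qualities are drawn afresh each episode and plays are i.i.d.\ given the quality. The probe plays are nonadaptive, so no selection bias enters.

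\textbf{Sampling term.} This follows the argument in the proof of Corollary~\ref{cor:episode-lp-rate}. For $M=m>0$ independent but non-identical categorical samples, each coordinate of $\hat{\mathbf q}-\bar{\mathbf q}$ has variance at most $\frac1{m^2}\sum_t q_t(s)$. Summing over coordinates gives at most $1/m$. Cauchy--Schwarz over the $D_i$ coordinates then gives $\frac12\sqrt{D_i/m}$. Since $M_{e,i}\sim\operatorname{Binomial}(h_e,p)$ independently of the arms, Jensen together with $\E[1/(M+1)]\le1/(p(h_e+1))$ bounds the sampling term by $\sqrt{D_i/(p(h_e+1))}$. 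The factor $\frac12$ leaves slack to cover the case $M=0$, where $\frac12\sqrt{1/m}$ is replaced by the bound via $1/(M+1)$ and the fixed vector contributes TV at most $1\le\sqrt{D_i/(p\cdot1)}$-type terms. I expect this bookkeeping to be the only delicate step: the empty-sample case must be merged cleanly into the $1/(M+1)$ bound. My first attempt would bound the error by $\sqrt{D_i}/\sqrt{M+1}$ uniformly in $M\ge0$, using $D_i\ge1$ and $\frac12\sqrt{D_i/m}\le\sqrt{D_i/(m+1)}$.

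\textbf{Bias term.} $\bar{\mathbf q}_{e,i}$ is a convex combination of $\mathbf q_{r_i}(P_i^{(t)})$ for $t\in I_e$, so convexity of the norm gives $\|\mathbf q_{r_i}(P_i^{(e)})-\bar{\mathbf q}_{e,i}\|_{\rm TV}\le\max_{t\in I_e}d_{r_i}(P_i^{(t)},P_i^{(e)})$. Taking expectations gives $b_{e,i}$, and $b_{e,i}=0$ when $I_e$ is empty. For the final claim, the triangle inequality along the path $t,t+1,\dots,e$ gives $d_{r_i}(P_i^{(t)},P_i^{(e)})\le\sum_{u=t}^{e-1}\Delta_{i,u}$. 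The earliest $t$ is $\max\{1,e-g-\tau+1\}$, and each summand is nonnegative, so the maximum over $t$ is bounded by the sum from that start. Adding the two terms proves the lemma.
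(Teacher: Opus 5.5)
Your proposal is correct and follows the paper's proof essentially step for step. It uses the same split around the average of the probed episodes' laws, the $1/m$ variance bound with Cauchy--Schwarz, the absorption $\frac12\sqrt{D_i/m}\le\sqrt{D_i/(m+1)}$, Jensen with $\E[1/(M_{e,i}+1)]\le1/(p(h_e+1))$, and the telescoping drift bound. For $M_{e,i}=0$, bound the whole error directly by $1\le\sqrt{D_i/(0+1)}$, which is your fallback and the paper's treatment; the average $\bar{\mathbf q}_{e,i}$ is then undefined, and bounding each of two terms by $1$ would only give $2$.
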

\begin{proof}
Condition on the true prior sequences and probe times. The episode
qualities are conditionally independent, and the probes are nonadaptive.
The probe histograms are therefore independent with categorical laws
$\mathbf q_{r_i}(P_i^{(t)})$.
Suppose $M_{e,i}=m>0$. The coordinate variances of the empirical
distribution sum to at most $1/m$, so by Cauchy--Schwarz it differs from
the average of these laws by at most $\frac12\sqrt{D_i/m}$ in
expected total variation. The average differs from
$\mathbf q_{r_i}(P_i^{(e)})$ by at most
$\max_{t\in I_e}d_{r_i}(P_i^{(t)},P_i^{(e)})$.
Including $m=0$, where the error is at most $1$, the statistical term is
at most $\sqrt{D_i/(m+1)}$. Each episode probes arm $i$ with probability $p$, independently
of the sequences. Thus $M_{e,i}\sim\mathrm{Binomial}(h_e,p)$.
The first bound follows from Jensen's inequality and the identity
$\E\frac1{M_{e,i}+1}=\frac{1-(1-p)^{h_e+1}}{p(h_e+1)}\le\frac1{p(h_e+1)}$. For $t\in I_e$ the triangle inequality gives
$d_{r_i}(P_i^{(t)},P_i^{(e)})\le\sum_{u=t}^{e-1}\Delta_{i,u}$, and
$t\ge e-g-\tau+1$.
\end{proof}

\begin{theorem}[Tracking with a reward floor]\label{thm:variation}\label{thm:rates-summary}
Under the conditions above, use the execution factor $\kappa$
and the scaled comparison $\Reg_N$ defined above.
Algorithm~\ref{alg:probe} satisfies
\[
 \E[V_e\mid H_{e-1}]\ge\kappa_\xi\LP^{(k)}(\Pb_e)
\]
in every episode. For every integer $1\le\tau\le N$,
\[
 \Reg_N\le\kappa_\xi c_\xi
 \left[\frac{N}{\sqrt\tau}+\tau\mathcal V
 +(g-1)(1+\mathcal V)\right],
\]
where
$c_\xi=2\max\{2C_{\mathrm{est}}/\sqrt\xi,L_\Sigma,1\}$.
Thus $c_\xi=O(\xi^{-1/2})$. Its other dependence is explicit; it
does not depend on $N$, $\mathcal V$, $g$ or $\tau$.
\end{theorem}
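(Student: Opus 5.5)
The floor is Theorem~\ref{thm:risk}, so only the regret bound needs work. The plan is to feed the windowed estimator of Lemma~\ref{lem:window} into the estimator comparison of Theorem~\ref{thm:risk} and then sum the error over episodes. Concretely, take $\hat{\mathbf q}_{e,i}$ to be the windowed empirical law with window $\tau$. For every episode Theorem~\ref{thm:risk} gives
\[
 \Reg_N\le2\kappa_\xi\sum_iL_i\sum_e\E\|\mathbf q_{r_i}(P_i^{(e)})-\hat{\mathbf q}_{e,i}\|_{\rm TV}.
\]
I split the episodes into the first $g$ and the rest. For $e\le g$ the window is empty. There I bound the TV error by $1$, but I refine this: at $e=1$ the fixed vector is an arbitrary reference and the error is at most $1$; for $e=2,\dots,g$ the error against the fixed vector is at most $1$ plus nothing better. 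So the first $g$ episodes contribute at most $gL_\Sigma$. To get the stated $(g-1)(1+\mathcal V)$ form, I instead charge episode $1$ to the $N/\sqrt\tau$ term, since that term is at least $1$ per unit constant. The remaining $g-1$ episodes cost at most $(g-1)L_\Sigma$.

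For $e>g$, Lemma~\ref{lem:window} gives two pieces: a statistical term $\sqrt{D_i/(p(h_e+1))}$ and a drift term $b_{e,i}$.

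\emph{Statistical term.} Here $h_e=\min\{\tau,e-g\}$. I sum separately over the episodes with $h_e<\tau$ and those with $h_e=\tau$:
\[
 \sum_e(h_e+1)^{-1/2}\le2\sqrt\tau+N/\sqrt\tau\le3N/\sqrt\tau,
\]
using $\tau\le N$. Multiplying by $L_i\sqrt{D_i/p}$ and summing over arms gives $3\sqrt{J_{\rm grp}/\xi}\sum_iL_i\sqrt{D_i}\,N/\sqrt\tau=3C_{\rm est}N/\sqrt{\xi\tau}$. With the prefactor $2\kappa_\xi$, this is covered by $\kappa_\xi c_\xi N/\sqrt\tau$ provided $c_\xi\ge6C_{\rm est}/\sqrt\xi$. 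That is slightly more than the stated $4C_{\rm est}/\sqrt\xi$. To reach the stated constant I sharpen the sum: I use $\sum_{h=0}^{\tau-1}(h+1)^{-1/2}\le2\sqrt\tau$ together with the fact that the episodes with $h_e=\tau$ number at most $N-\tau$. Then the total is at most $2\sqrt\tau+(N-\tau)/\sqrt\tau\le2N/\sqrt\tau$, giving exactly $4C_{\rm est}N/\sqrt{\xi\tau}$.

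\emph{Drift term.} Lemma~\ref{lem:window} bounds $b_{e,i}$ by the sum of $\E\Delta_{i,u}$ over $u\in[e-g-\tau+1,e-1]$. Exchanging the order of summation, each $u$ is counted for at most $\tau+g-1$ values of $e$. Multiplying by $L_i$ and summing over arms gives at most $(\tau+g-1)\mathcal V$. With the factor $2\kappa_\xi$, this is bounded by $\kappa_\xi c_\xi[\tau\mathcal V+(g-1)\mathcal V]$, since $c_\xi\ge2$.

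Finally I collect the three contributions: the statistical term under $2\cdot2C_{\rm est}/\sqrt\xi$, the early episodes under $2L_\Sigma(g-1)$, and the drift under $2\max\{1,\cdot\}$. Together they give the bracket with $c_\xi$ as stated. The main obstacle is bookkeeping the constants, in particular the factor $2$ in the statistical sum and the handling of the $g$ empty-window episodes. If the stated constant fails by a small factor, I would absorb the slack into the $\max$ defining $c_\xi$. The claim $c_\xi=O(\xi^{-1/2})$ is immediate from its definition.
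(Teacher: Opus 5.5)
Your proposal is correct and follows the paper's proof: the windowed estimator of Lemma~\ref{lem:window} fed into Theorem~\ref{thm:risk}, a trivial bound on the empty-window episodes, the sum $2\sqrt\tau+(N-\tau)/\sqrt\tau\le2N/\sqrt\tau$, and the count of at most $\tau+g-1$ episodes per drift increment. The only difference is bookkeeping. The paper treats episode $g$ as the $h_e=0$ term of the statistical sum, where $\sqrt{D_i/p}\ge1$ covers the trivial error. That is the precise form of your step charging one empty-window episode to the $N/\sqrt\tau$ term, and the step is valid because the $h=0$ summand is used by no $e>g$ and carries weight $C_{\rm est}/\sqrt\xi\ge L_\Sigma$, not merely because the term is ``at least $1$''.
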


\begin{proof}[Proof of Theorem~\ref{thm:variation}]
The floor is Theorem~\ref{thm:risk}. Use $\hat{\mathbf q}_{e,i}$ of
Lemma~\ref{lem:window} in Theorem~\ref{thm:risk}. The first $g-1$
episodes contribute at most $1$ each to the sum of total variations,
hence $(g-1)L_\Sigma$ after weighting. For the others put $e'=e-g+1$, so
that $h_e=\min\{\tau,(e'-1)^+\}$ and
\[
 \sum_{e'=1}^N\frac1{\sqrt{h_e+1}}\le2\sqrt\tau+\frac{N-\tau}{\sqrt{\tau+1}}
 \le\frac{2N}{\sqrt\tau},
\]
which with $p=\xi/J_{\mathrm{grp}}$ bounds the statistical terms by
$2NC_{\mathrm{est}}/\sqrt{\xi\tau}$ after weighting by $L_i$. For the bias, each
increment $\Delta_{i,u}$ appears in $b_{e,i}$ for at most $\tau+g-1$
values of $e$, so the weighted bias sums to at most $(\tau+g-1)\mathcal V$.
Multiplying by $2\kappa_\xi$ gives the explicit window bound
\begin{equation}\label{eq:window-rate}
  \Reg_N\le2\kappa_\xi\Bigl[ \frac{2N C_{\mathrm{est}}}{\sqrt{\xi\tau}}+\tau\mathcal V
 +(g-1)(L_\Sigma+\mathcal V)\Bigr].
\end{equation}
Taking
$c_\xi=2\max\{2C_{\mathrm{est}}/\sqrt\xi,L_\Sigma,1\}$ proves
Theorem~\ref{thm:variation} in its stated form.

Optimizing over $\tau$, with
$\tau\asymp(NC_{\mathrm{est}}/(\sqrt\xi\,\mathcal V))^{2/3}$ rounded and
clipped to $[1,N]$, and $\tau=N$ when $\mathcal V=0$, gives
\[
 \Reg_N=O\Bigl(\kappa_\xi\Bigl[C_{\mathrm{est}}\sqrt{N/\xi}
 +\bigl(N C_{\mathrm{est}}/\sqrt\xi\bigr)^{2/3}\mathcal V^{1/3}
 +(g-1)(L_\Sigma+\mathcal V)\Bigr]\Bigr).
\]
The lower endpoint never binds, since
$\mathcal V\le(N-1)L_\Sigma\le NC_{\mathrm{est}}/\sqrt\xi$; when the
upper endpoint binds, the square-root term covers the bound with
$\tau=N$.
\end{proof}

The window appears only in the proof; the learner uses its full
posterior and needs neither a window nor a variation budget. For fixed
parameters and $\xi$, the bound is $O(\sqrt N)$ under stationarity,
$O(N^{2/3})$ for bounded $\mathcal V$, and sublinear whenever
$\mathcal V=o(N)$.

\paragraph{Known affine evolution.}

Local averaging incurs first-order error even when drift has a
stable direction. A local linear estimate removes this error. We use
that estimate only in the analysis. Algorithm~\ref{alg:probe} continues to use its
full posterior under the known drift law. The realized endpoints
or trajectory remain unknown. Take $g=1$ and
$2\le\tau\le N$.
For $h=\min\{\tau,e-1\}\ge2$ give the last $h$ episodes, of ages
$\ell=1,\dots,h$, the weights for linear extrapolation
$\upsilon_{h,\ell}=(4h+2-6\ell)/(h(h-1))$, which
satisfy
\[
 \sum_\ell \upsilon_{h,\ell}=1,\qquad\sum_\ell\ell\,\upsilon_{h,\ell}=0,\qquad
 \sum_\ell \upsilon_{h,\ell}^2=\frac{2(2h+1)}{h(h-1)}\le\frac{10}h .
\]
For $h<2$, use a fixed probability vector. Let $A_{t,i}$ indicate a
probe of arm $i$ in episode $t$ and $S_{t,i}$ its outcome histogram,
set to an arbitrary fixed histogram when there is no probe. For $h\ge2$, set
$\hat{\mathbf q}_{e,i}=\sum_{\ell=1}^h\upsilon_{h,\ell}\,(A_{e-\ell,i}/p)\,\mathbf e_{S_{e-\ell,i}}$,
with $\mathbf e_s$ a coordinate vector. The estimate need not be a probability vector.
Theorem~\ref{thm:risk} permits such centers.
Given the sequence of priors, its mean is
$\sum_\ell \upsilon_{h,\ell}\mathbf q_{r_i}(P_i^{(e-\ell)})$. One
inverse-probability weighted categorical vector has summed coordinate
variances at most $1/p$, and the episodes are independent given the
sequence. Hence
\[
 \E\big\|\hat{\mathbf q}_{e,i}-\E[\hat{\mathbf q}_{e,i}\mid(P_i^{(t)})_t]\big\|_{\rm TV}
 \le\tfrac12\sqrt{\tfrac{D_i}p\textstyle\sum_\ell \upsilon_{h,\ell}^2}.
\]

\begin{theorem}[Unknown affine mixtures]\label{thm:affine}
Suppose that for each arm there are unknown random endpoint priors
$P_i^{\mathrm L},P_i^{\mathrm R}$ such that
$P_i^{(e)}=(1-e/N)P_i^{\mathrm L}+(e/N)P_i^{\mathrm R}$, with a known
meta-prior on the endpoint pairs, independent across arms. For $g=1$ and
$N\ge2$,
\[
 \Reg_N\le4\kappa_\xi L_\Sigma+2\sqrt{10}\,\kappa_\xi\sqrt{N/\xi}\;C_{\mathrm{est}} .
\]
In particular, for $J_{\mathrm{grp}}=1$ this is $O(\Lbar n\sqrt{D_{\max}N/\xi}+\Lbar n)$, where $D_{\max}=\max_iD_i$.
\end{theorem}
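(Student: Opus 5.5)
The plan is to apply Theorem~\ref{thm:risk} with the local linear extrapolation centers $\hat{\mathbf q}_{e,i}$ just defined, and to bound each episode's error $\E\|\mathbf q_{r_i}(P_i^{(e)})-\hat{\mathbf q}_{e,i}\|_{\rm TV}$ by a bias term plus a fluctuation term. The key observation is that the experiment law is linear in the prior: $\mathbf q_r(P)$ in \eqref{eq:experiment-distance} is an integral against $P$. Under the affine path,
\[
 \mathbf q_{r_i}(P_i^{(t)})=(1-t/N)\,\mathbf q_{r_i}(P_i^{\mathrm L})+(t/N)\,\mathbf q_{r_i}(P_i^{\mathrm R})
\]
is therefore an affine function of $t$. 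The linear extrapolation weights satisfy $\sum_\ell\upsilon_{h,\ell}=1$ and $\sum_\ell\ell\upsilon_{h,\ell}=0$. Hence
\[
 \sum_\ell\upsilon_{h,\ell}\,\mathbf q_{r_i}(P_i^{(e-\ell)})=\mathbf q_{r_i}(P_i^{(e)})
\]
exactly, conditional on the endpoints. So the conditional mean of $\hat{\mathbf q}_{e,i}$ given the prior sequence equals the target, and the bias vanishes identically. This is why the affine case beats the windowed $N^{2/3}$ rate.

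For the fluctuation term, use the displayed variance bound. Given the sequence, the episodes' probe indicators and histograms are independent, and the inverse-probability vector has summed coordinate variances at most $1/p$. Cauchy--Schwarz on the $D_i$ coordinates then gives
\[
 \tfrac12\sqrt{(D_i/p)\sum_\ell\upsilon_{h,\ell}^2}\le\tfrac12\sqrt{10D_i/(p h)}.
\]
Take $\tau=N$, so that $h=\min\{N,e-1\}=e-1$ and the full past is used. The episodes with $h<2$, namely $e=1,2$, contribute at most $1$ each to the total variation, giving at most $2L_\Sigma$ after weighting. This yields the $4\kappa_\xi L_\Sigma$ term after the factor $2\kappa_\xi$. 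For $h\ge2$, sum $h^{-1/2}$ over $h=2,\dots,N-1$, which is at most $2\sqrt N$. Weighting by $L_i$ and substituting $p=\xi/J_{\rm grp}$ gives
\[
 \sum_i L_i\sqrt{D_i J_{\rm grp}}\,\sqrt{10/\xi}\,\sqrt N=\sqrt{10N/\xi}\,C_{\rm est}.
\]
Multiplying by $2\kappa_\xi$ from Theorem~\ref{thm:risk} produces the claimed $2\sqrt{10}\,\kappa_\xi\sqrt{N/\xi}\,C_{\rm est}$. The specialization with $J_{\rm grp}=1$ follows from $C_{\rm est}\le\Lbar n\sqrt{D_{\max}}$ and $L_\Sigma\le n\Lbar$.

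The main point needing care is the conditioning. The estimator must be history-measurable, which holds since $g=1$ makes every earlier probe available. The independence used for the variance requires that probe decisions are made with probability $p$ independently of the arms, as in Algorithm~\ref{alg:probe}. Under the meta-prior the episodes are only conditionally independent given the endpoints, so the variance bound is applied conditionally on $(P_i^{\mathrm L},P_i^{\mathrm R})$ and then averaged. I would also check the exact constants in the sum $\sum_h h^{-1/2}$ and the treatment of $e=2$ against the stated $4\kappa_\xi L_\Sigma$. That term is where off-by-one slack is absorbed.
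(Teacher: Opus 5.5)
Your proposal is correct and follows the paper's argument. Linearity of $\mathbf q_{r_i}$ in the prior, together with $\sum_\ell\upsilon_{h,\ell}=1$ and $\sum_\ell\ell\,\upsilon_{h,\ell}=0$, makes the extrapolated center conditionally unbiased; then taking $\tau=N$, charging episodes $e=1,2$ at total variation one, applying the inverse-probability variance bound with $\sum_{h=2}^{N-1}h^{-1/2}\le2\sqrt N$, and multiplying by $2\kappa_\xi$ from Theorem~\ref{thm:risk} gives exactly the stated constants, including the $4\kappa_\xi L_\Sigma$ term.
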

\begin{proof}
By linearity,
$\mathbf q_{r_i}(P_i^{(e)})=(1-e/N)\mathbf q_{r_i}(P_i^{\mathrm L})+(e/N)\mathbf q_{r_i}(P_i^{\mathrm R})$,
and the first two identities for the weights make the estimator unbiased
for the current outcome law, whatever the endpoint priors. Take
$\tau=N$, bound the first two episodes by total variation one each, and
use the variance bound above. For $e\ge3$, $h=e-1$ and
$\sum_{h=2}^{N-1}h^{-1/2}\le2\sqrt N$, which gives a statistical sum of
at most $\sqrt{10D_iN/p}$ per arm. Multiply by
$2\kappa_\xi L_i$ as in Theorem~\ref{thm:risk} and sum over arms.
\end{proof}

The priors may travel a nonvanishing distance; the rate comes from the
known linear form of the probe laws, not from a vanishing total
distance.

\subsection{Transfer bounds}

Algorithm~\ref{alg:learner} plans for cumulative reward. Its Bayes-optimal comparison
value is at least Algorithm~\ref{alg:probe}'s reward for every $\xi$. To restore
the benchmark factor from $\kappa_\xi$ to $\kappa$, charge at most
$\xi NT$ for exploration. Optimize this charge together with the
estimation and drift costs.

\begin{corollary}[The learning-LP learner under slow drift]\label{cor:fullscale}
Run Algorithm~\ref{alg:learner} with an optimal solution. Suppose the uniform
completion certificate of Theorem~\ref{thm:meta} holds,
$B_i'\ge NL_i$, and
$1-\omega=C/T>0$ as in Theorem~\ref{thm:meta}. Its expected cumulative regret against
$(1-\omega)\kappa\LP^{(k)}(P^{(e)})$ satisfies
\[
 \mathcal R_N^{(1)}\le(1-\omega)\kappa c_{\mathrm{inst}}
 \left[N^{2/3}+\mathcal V^{1/4}N^{3/4}\right]+1.
\]
Constant $c_{\mathrm{inst}}$ depends on horizon, arms, caps,
depths and report lag. It does not depend on $N$ or $\mathcal V$.
For the affine model with $g=1$, the bound improves to
$O(N^{2/3})+1$ for fixed parameters.
\end{corollary}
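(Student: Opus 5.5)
The plan is to chain three comparisons: the realized reward against the learning-LP value through Theorem~\ref{thm:meta}, the learning-LP value against the Bayes-optimal learner, and the Bayes-optimal learner against a suboptimal executable comparison whose regret to $\kappa\LP^{(k)}(P^{(e)})$ is controlled by Theorem~\ref{thm:variation}.

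\textbf{Step 1: reduce to the learning LP.} With an optimal solution and the uniform completion certificate, Theorem~\ref{thm:meta} and Proposition~\ref{prop:relax} give
\[
 \E\sum_eV_e\ge(1-\omega)\LP_N(T,B')-1\ge(1-\omega)\OPT_N(B')-1 .
\]
The main step is therefore
\[
 \OPT_N(B')\ge\kappa\sum_e\E\LP^{(k)}(P^{(e)})-c_{\mathrm{inst}}\bigl[N^{2/3}+\mathcal V^{1/4}N^{3/4}\bigr].
\]
Multiplying by $(1-\omega)$ and subtracting the $1$ then yields the stated regret bound.

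\textbf{Step 2: compare with Algorithm~\ref{alg:probe}.} Algorithm~\ref{alg:probe} is admissible. It respects caps, budgets and the report calendar, and the comparison budgets $B_i'\ge NL_i$ do not bind. Hence $\OPT_N(B')$ is at least its reward for every $\xi$. Theorem~\ref{thm:variation} bounds its shortfall against $\kappa_\xi\LP^{(k)}(P^{(e)})$. Since $\LP^{(k)}\le T$, replacing $\kappa_\xi$ by $\kappa$ costs at most $\kappa\xi NT$. This gives
\[
 \kappa\sum_e\E\LP^{(k)}(P^{(e)})-\OPT_N(B')
 \le\kappa\Bigl[\xi NT+c_\xi\Bigl(\tfrac{N}{\sqrt\tau}+\tau\mathcal V+(g-1)(1+\mathcal V)\Bigr)\Bigr],
\]
with $c_\xi=O(\xi^{-1/2})$. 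Here and below, constants depending on $T,n$, caps, depths and $g$ are absorbed into $c_{\mathrm{inst}}$.

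\textbf{Step 3: optimize $\xi$ and $\tau$.} This is the obstacle: the bound must hold for all $\mathcal V$ with a single constant, and the term $(g-1)\mathcal V$ must be absorbed.
\begin{itemize}
\item When $\mathcal V\le N^{-1/3}$, take $\tau=N$ and $\xi\asymp N^{-1/3}$, as in Corollary~\ref{cor:episode-lp-rate}. Then $\xi N\asymp N^{2/3}$ and $\xi^{-1/2}N^{1/2}\asymp N^{2/3}$, while $\xi^{-1/2}N\mathcal V\lesssim N^{2/3}$.
\item Otherwise, write the statistical and drift terms as $\xi^{-1/2}(N\tau^{-1/2}+\tau\mathcal V)$. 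Taking $\tau\asymp(N/\mathcal V)^{2/3}$ makes this $\xi^{-1/2}N^{2/3}\mathcal V^{1/3}$. Balancing against $\xi N$ gives $\xi\asymp N^{-2/9}\mathcal V^{2/9}$ and total $N^{7/9}\mathcal V^{2/9}$. This exceeds $\mathcal V^{1/4}N^{3/4}$ only when $\mathcal V<N$, so I would instead check directly whether the choice $\xi\asymp(\mathcal V/N)^{1/4}$, $\tau\asymp(N/\mathcal V)^{1/2}$ gives $\xi N=\mathcal V^{1/4}N^{3/4}$ and $\xi^{-1/2}N\tau^{-1/2}=N^{3/4}\mathcal V^{1/4}\cdot(N/\mathcal V)^{-1/8}\cdot$\ldots, adjusting exponents until every term is at most $N^{2/3}+\mathcal V^{1/4}N^{3/4}$.
\end{itemize}
I expect some choice with $\xi$ a power of $\mathcal V/N$, clipped to $[N^{-1/3},1/2]$, to work, since $\mathcal V\le NL_\Sigma$. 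The $(g-1)\mathcal V$ term is at most a constant times $N$, but needs the weight $\xi^{-1/2}$ removed. I would handle it by using the unweighted form \eqref{eq:window-rate}, where $(g-1)(L_\Sigma+\mathcal V)$ carries no $\xi^{-1/2}$. Then use $\mathcal V\le\mathcal V^{1/4}(NL_\Sigma)^{3/4}$.

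\textbf{Step 4: the affine case.} Theorem~\ref{thm:affine} gives shortfall $O(\xi^{-1/2}\sqrt N)$ plus $\xi NT$. Choosing $\xi\asymp N^{-1/3}$ gives $O(N^{2/3})$, and Step 1 adds the final $+1$.
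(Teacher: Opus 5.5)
\textbf{The gap is in Step 3, and it comes from the form of the bound you feed in at Step 2.} Your Steps 1 and 4 match the paper. So do the admissibility comparison with Algorithm~\ref{alg:probe}, the $\kappa\xi NT$ charge for restoring $\kappa$, and the absorption of $(g-1)\mathcal V$ via $\mathcal V\le(N-1)L_\Sigma$.

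In Step 2 you use the coarsened form of Theorem~\ref{thm:variation}, where $c_\xi=O(\xi^{-1/2})$ multiplies the drift term $\tau\mathcal V$ as well. From that bound the stated rate cannot be reached by any choice of $(\xi,\tau)$:
\begin{itemize}
\item Minimizing over $\tau$ gives $\xi^{-1/2}N^{2/3}\mathcal V^{1/3}$.
\item Minimizing $\xi NT+\xi^{-1/2}N^{2/3}\mathcal V^{1/3}$ over $\xi>0$ gives $\Theta(N^{7/9}\mathcal V^{2/9})$, which is the true minimum of that expression.
\item For $N^{-1/3}\le\mathcal V\le N$ this exceeds $N^{2/3}+\mathcal V^{1/4}N^{3/4}$ by a factor of order $(N/\mathcal V)^{1/36}$. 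That factor is unbounded in the slow-drift regime $\mathcal V=o(N)$; take $\mathcal V=1$, where you get $N^{7/9}$ against $N^{3/4}$.
\end{itemize}
So adjusting exponents cannot help. Your trial choice $\xi\asymp(\mathcal V/N)^{1/4}$, $\tau\asymp(N/\mathcal V)^{1/2}$ leaves the sampling term at $N^{7/8}\mathcal V^{1/8}$.

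The same defect breaks your first bullet. With $\tau=N$ and $\xi\asymp N^{-1/3}$, the coarsened drift term is $\xi^{-1/2}N\mathcal V\le N^{1/6}N^{2/3}=N^{5/6}$, not $O(N^{2/3})$.

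\textbf{The fix is the explicit window bound \eqref{eq:window-rate}, which you already invoke for the lag term.} There, $\xi^{-1/2}$ sits only on the sampling term $2NC_{\mathrm{est}}/\sqrt{\xi\tau}$. The drift term $\tau\mathcal V$ and the lag terms carry no power of $\xi$. Use that bound in Step 2 instead of the $c_\xi$ form. Then:
\begin{itemize}
\item The window $\tau\asymp(NC_{\mathrm{est}}/(\sqrt\xi\,\mathcal V))^{2/3}$ makes sampling plus drift of order $(NC_{\mathrm{est}})^{2/3}\mathcal V^{1/3}\xi^{-1/3}$. With $\xi\asymp(\mathcal V/N)^{1/4}$ this is $\tau\asymp(N/\mathcal V)^{3/4}$, not $(N/\mathcal V)^{1/2}$.
\item Balancing against $\xi NT$ gives $\xi_{\mathrm{drift}}=(C_{\mathrm{est}}^2\mathcal V/(T^3N))^{1/4}$ and cost $(TC_{\mathrm{est}}^2)^{1/4}\mathcal V^{1/4}N^{3/4}$.
\item Take $\xi=\min\{1/2,\max\{\xi_{\mathrm{stat}},\xi_{\mathrm{drift}}\}\}$ with $\xi_{\mathrm{stat}}=(C_{\mathrm{est}}^2/(T^2N))^{1/3}$. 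Then each sampling or drift term is at most $\xi NT$.
\end{itemize}
Each clipping case needs one line:
\begin{itemize}
\item If $\xi=1/2$ binds, the rate expression is at least $NT/2$, and the trivial bound $(1-\omega)\kappa NT$ suffices.
\item The constraint $\tau\ge1$ never binds, because $\mathcal V\le(N-1)L_\Sigma\le NC_{\mathrm{est}}/\sqrt\xi$.
\item If $\tau=N$ binds, then $N\mathcal V<C_{\mathrm{est}}\sqrt{N/\xi}$, so the square-root term covers the drift.
\end{itemize}
This is the paper's argument.
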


For every fixed $\xi$ the probing learner is admissible for the budgets
$B'$, so $\OPT_N(B')\ge\E\sum_eV_e^{\text{probe}}$, and with
Theorem~\ref{thm:meta} a regret bound $R_N$ for it gives
\[
 \E\sum_e\Big((1-\omega)\kappa_\xi\LP^{(k)}(P^{(e)})-V_e^{\text{Alg.\,\ref{alg:learner}}}\Big)
 \le(1-\omega)R_N+1 .
\]
Algorithm~\ref{alg:learner} is not modified to use probes or windows.
With
$\mathcal R_N^{(1)}=\E\sum_e\bigl((1-\omega)\kappa\LP^{(k)}(P^{(e)})-V_e\bigr)$,
the exploration/window tradeoff underlying Corollary~\ref{cor:fullscale} is
\begin{equation}\label{eq:fullscale-bound}
  \mathcal R_N^{(1)}\le(1-\omega)\kappa \inf_{\substack{0<\xi<1\\1\le\tau\le N}}
 \Bigl\{\xi NT+2(g-1)L_\Sigma +\frac{4NC_{\mathrm{est}}}{\sqrt{\xi\tau}}
 +2(\tau+g-1)\mathcal V\Bigr\}+1,
\end{equation}
where $\tau$ is an integer. The coefficients remain explicit here; the corollary absorbs them into constants to display the rates.

\begin{proof}[Proof of Corollary~\ref{cor:fullscale}]
The relaxation satisfies $\LP^{(k)}(P)\le T$, since expected reward is
at most expected plays. Restoring
the benchmark from $\kappa_\xi\LP^{(k)}$ to $\kappa\LP^{(k)}$ therefore
costs at most $\kappa\xi NT$. Add this charge to
\eqref{eq:window-rate}, bound $1-\xi$ by one in its nonnegative upper
bound, and transfer through Theorem~\ref{thm:meta}. This proves
\eqref{eq:fullscale-bound}. Every fixed $\xi$ defines an admissible
comparison learner, so the infimum changes only the analysis.

The term $\xi NT$ charges for exploration, and
$N C_{\mathrm{est}}/\sqrt{\xi\tau}$ measures sampling error.
Drift over the window costs $\tau\mathcal V$. The remaining
terms account for report lag. Optimizing the window as above leaves,
apart from lag terms,
\[
 O\left(\xi NT+C_{\mathrm{est}}\sqrt{N/\xi}
 +N^{2/3}C_{\mathrm{est}}^{2/3}\mathcal V^{1/3}\xi^{-1/3}\right).
\]
Define the two exploration scales
\[
 \xi_{\mathrm{stat}}=\left(\frac{C_{\mathrm{est}}^2}{T^2N}\right)^{1/3},
 \qquad
 \xi_{\mathrm{drift}}=\left(\frac{C_{\mathrm{est}}^2\mathcal V}{T^3N}\right)^{1/4},
\]
and take $\xi=\min\{1/2,\max\{\xi_{\mathrm{stat}},\xi_{\mathrm{drift}}\}\}$.
Without clipping, each sampling or drift term is at most $\xi NT$.
The cost is therefore
$O((T C_{\mathrm{est}}^2)^{1/3}N^{2/3}
+(T C_{\mathrm{est}}^2)^{1/4}\mathcal V^{1/4}N^{3/4})$.
If clipping is active, this expression is at least $NT/2$, so the
trivial regret bound suffices.
Consequently, for a universal constant $c$,
\begin{equation}\label{eq:fullscale-parameters}
  \mathcal R_N^{(1)}\le(1-\omega)\kappa c\Bigl[ (T C_{\mathrm{est}}^2)^{1/3}N^{2/3} +(T
 C_{\mathrm{est}}^2)^{1/4}\mathcal V^{1/4}N^{3/4} +(g-1)(L_\Sigma+\mathcal V)\Bigr]+1.
\end{equation}

To obtain the stated form, use $0\le\mathcal V\le(N-1)L_\Sigma$ and
$N\ge1$, so that $(g-1)L_\Sigma\le(g-1)L_\Sigma N^{2/3}$ and
$(g-1)\mathcal V\le(g-1)L_\Sigma^{3/4}\mathcal V^{1/4}N^{3/4}$. All
coefficients in \eqref{eq:fullscale-parameters} are then absorbed into
$c_{\mathrm{inst}}$, which depends only on the fixed horizon, arms,
caps, depths and lag. For a feasible solution with objective $F$, the
feasible-solution form of Theorem~\ref{thm:meta} loses at most
$\LP_N(C,B')-F$ relative to this comparison.
\end{proof}

\paragraph{Affine transfer.}
Under Theorem~\ref{thm:affine}'s conditions and the completion
certificate of Theorem~\ref{thm:meta}, suppose $B_i'\ge NL_i$ and
$\omega<1$. Comparing the LP learner with Algorithm~\ref{alg:probe} gives
\[
 \mathcal R_N^{(1)}\le(1-\omega)\kappa
 \inf_{0<\xi<1}\left\{
 \xi NT+4L_\Sigma+2\sqrt{10}\,C_{\mathrm{est}}\sqrt{N/\xi}
 \right\}+1.
\]
Taking $\xi=\min\{1/2,(C_{\mathrm{est}}^2/(T^2N))^{1/3}\}$ gives
$O((1-\omega)\kappa[(TC_{\mathrm{est}}^2)^{1/3}N^{2/3}+L_\Sigma])+1$; if
clipping is active, the displayed rate is of order $NT$ and the trivial
bound suffices. The $N^{2/3}$ term pays for exploration while restoring
$\kappa$; it is an upper bound for this comparison, not a lower bound.

\paragraph{Protected-report execution.}

Algorithm~\ref{alg:promiseless} needs no reserve. Its comparison runs
through a planned version of the probing policy that learns only from
the first $r_i$ plays of each arm in each episode.

\begin{corollary}[Rates without a reserve]\label{cor:probe-transfer}
Assume the conditions of this appendix, with $B_i'\ge NL_i$. Take
prediction depths $r_{e,i}=r_i$ and a window with
$w+(1+\epsilon)\sum_ir_i\le T_{\mathrm p}<T$ and $X_2=(T-T_{\mathrm p})/\Lbar\ge1$. Suppose
Algorithm~\ref{alg:promiseless} uses an optimal solution. Put
\[
 \kappa_{\mathrm{pl}}
 =c_{X_2,0}\Bigl(1-\frac{T_{\mathrm p}}T\Bigr)>0.
\]
Then, for every $N$,
\[
 \E\sum_e\Bigl(\kappa_{\mathrm{pl}}\LP^{(k)}(P^{(e)})-V_e\Bigr)
 \le\kappa_{\mathrm{pl}}\inf_{\substack{0<\xi<1\\1\le\tau\le N}}
 \Bigl\{\xi NT+2(g-1)L_\Sigma+\frac{4NC_{\mathrm{est}}}{\sqrt{\xi\tau}}
 +2(\tau+g-1)\mathcal V\Bigr\},
\]
with $C_{\mathrm{est}}$ from \eqref{eq:estimation-coefficient} at
$J_{\mathrm{grp}}=1$. This is \eqref{eq:fullscale-bound} with
$(1-\omega)\kappa$ replaced by $\kappa_{\mathrm{pl}}$ and without the
additive $1$. The rates of Corollary~\ref{cor:fullscale}, including the
affine case, therefore hold for every $N$, with $\kappa_{\mathrm{pl}}$
in place of $(1-\omega)\kappa$.
\end{corollary}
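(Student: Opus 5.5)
The plan is to follow the template of Corollary~\ref{cor:fullscale}, with Theorem~\ref{thm:verified-learning} taking the place of Theorem~\ref{thm:meta} as the transfer step. Fix $\xi\in(0,1)$ and an integer $\tau\in[1,N]$. The planned comparison of Theorem~\ref{thm:verified-learning} probes every arm for $r_i$ plays with probability $\xi$ and otherwise plays an optimal current-episode LP at capacity $T-T_{\mathrm p}$. It learns only from predicted reports, so its history is exactly what the protected learner retains. For the estimates $\hat{\mathbf q}_{e,i}$ in that theorem I will take the windowed empirical histograms of Lemma~\ref{lem:window}. These use only the probe reports, and those reports are predicted because each probe has length $r_i$. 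Since every arm is probed together, there is a single probe group, so $J_{\mathrm{grp}}=1$ and the probe probability is $p=\xi$. With an optimal solution we have $\zeta=0$, and the theorem gives
\[
 V_A\ge\kappa_{\mathrm{pl}}(1-\xi)\Bigl\{\sum_e\E\LP^{(k)}(P^{(e)})-2\mathcal E_N(\xi)\Bigr\}.
\]

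Next I bound $\mathcal E_N(\xi)$ exactly as in the proof of Theorem~\ref{thm:variation}. The first $g-1$ episodes contribute at most $(g-1)L_\Sigma$. The statistical terms $\sqrt{D_i/(\xi(h_e+1))}$ sum over episodes to at most $2N\sum_iL_i\sqrt{D_i}/\sqrt{\xi\tau}=2NC_{\mathrm{est}}/\sqrt{\xi\tau}$. Each drift increment $\Delta_{i,u}$ appears in at most $\tau+g-1$ biases, giving at most $(\tau+g-1)\mathcal V$. Doubling these three pieces produces the terms $2(g-1)L_\Sigma+4NC_{\mathrm{est}}/\sqrt{\xi\tau}+2(\tau+g-1)\mathcal V$.

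To restore the benchmark I use $\LP^{(k)}(P)\le T$, which holds because expected reward is at most expected plays. Hence $\kappa_{\mathrm{pl}}\sum_e\E\LP^{(k)}-\kappa_{\mathrm{pl}}(1-\xi)\sum_e\E\LP^{(k)}\le\kappa_{\mathrm{pl}}\xi NT$. In the error term I bound $1-\xi$ by one, which is valid since that term is nonnegative. The resulting regret bound holds for each $(\xi,\tau)$. Both parameters enter only the analysis, because Algorithm~\ref{alg:promiseless} is fixed and its restricted LP optimum dominates every admissible comparison. So I can take the infimum over $(\xi,\tau)$.

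No additive $1$ appears because Theorem~\ref{thm:promise-less} has no completion-failure charge. The rates then follow by repeating the optimization over $\xi$ and $\tau$ from the proof of Corollary~\ref{cor:fullscale}. In the affine case with $g=1$, I replace the windowed estimate by the linear-extrapolation estimator of Theorem~\ref{thm:affine}, whose centers need not be probability vectors. Theorem~\ref{thm:verified-learning} permits such centers, so that bound transfers in the same way.

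The main obstacle is checking that Theorem~\ref{thm:verified-learning} really applies under drift rather than only under stationarity. Its proof uses Corollary~\ref{thm:learning-stability} conditional on the retained history, and the predictive prior $\Pb_e$ must be the posterior of $P^{(e)}$ given only predicted reports. This needs the window estimates to be measurable with respect to that retained history, which holds because they use probe reports delivered by the lag $g$. It also needs the thinning from capacity $T$ to $T-T_{\mathrm p}$ to remain valid episode by episode; this is Proposition~\ref{prop:relax}(ii) applied to each layer.
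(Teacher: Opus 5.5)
Your proposal is correct and follows the paper's proof. As there, you apply Theorem~\ref{thm:verified-learning} with $\zeta=0$, bound $2\mathcal E_N(\xi)$ by the windowed-estimate calculation of \eqref{eq:window-rate} at $J_{\mathrm{grp}}=1$ (the probe reports being predicted plays), charge $\xi NT$ to restore the factor $1-\xi$, take the infimum over $(\xi,\tau)$, and handle the affine case with the extrapolation estimator of Theorem~\ref{thm:affine}; the drift concern you raise is already covered, since Theorem~\ref{thm:verified-learning} is stated under Assumption~\ref{ass:drift} with $\mathcal E_N(\xi)$ defined at the moving priors $P_i^{(e)}$.
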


\begin{proof}
Use Theorem~\ref{thm:verified-learning} with $\zeta=0$.
The window estimates use probe outcomes only, which are predicted
plays of the planned comparison. With $J_{\rm grp}=1$, the calculation
of \eqref{eq:window-rate} gives
\[
 2\mathcal E_N(\xi)\le
 2(g-1)L_\Sigma+\frac{4NC_{\rm est}}{\sqrt{\xi\tau}}
 +2(\tau+g-1)\mathcal V.
\]
Charge at most $\xi NT$ for restoring the factor $1-\xi$ in
\eqref{eq:verified-learning} and take the infimum over $\xi,\tau$.
The affine estimate gives the corresponding affine bound in the same way.
\end{proof}

\section{Proofs for the price of prior uncertainty}\label{app:oracle-price}

The information-price bounds use exact episode optimization to compare
with $V_{\rm prior}$. The posterior-sampling argument below also
applies to a fixed approximate planner and executor. It compares with
that reference policy's value, using a quadratic information bound
and a telescoping entropy budget. Whole-tape histogram dimensions
control the coefficient.

\begin{lemma}[Sampling a reference episode policy]\label{lem:policy-sampling}
Assume the stationary conditions of Proposition~\ref{prop:ps-price}.
Let $\mathcal A(P,U)$ be a fixed map to deterministic feasible
episode policies, with $U$ independent of $P$, and put
$\pi^{\mathcal A}=\mathcal A(P,U_0)$ for an independent reference
seed. Use $g$ threads, each drawing $\widetilde P$ from its own
posterior and executing $\mathcal A(\widetilde P,U_e)$ with fresh
coins. With $D_{\rm tape}=\prod_i\binom{L_i+d_i-1}{d_i-1}$, its reward satisfies
\[
 N\,\E V^P(\pi^{\mathcal A})-V_{\mathcal A}^{\rm PS}
 \le\min\{T,L_\Sigma\}\sqrt{\tfrac12\,gD_{\rm tape}N\,H(\pi^{\mathcal A})}.
\]
\end{lemma}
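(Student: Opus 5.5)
The plan is the information-ratio argument of Russo and Van Roy for posterior sampling. Here the ``action'' is the whole episode policy $\pi^{\mathcal A}=\mathcal A(P,U_0)$, and the ``observation'' is the full reported tape of the episode.

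\emph{Step 1: reduce delay to $g$ undelayed problems.} Partition the episodes into $g$ threads, $e\equiv j \pmod g$. Thread $j$ updates only on reports from its own earlier episodes. With lag at most $g$, every report of an earlier episode in the thread has arrived at that episode's start. Each thread is therefore a posterior-sampling run with immediate inter-episode feedback over about $N/g$ episodes. Its posterior is the true conditional law of $P$ given the thread's data. By Proposition~\ref{prop:history}, policy weights cancel from Bayes' rule, so the conditioning is legitimate under adaptive play.

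\emph{Step 2: per-episode regret decomposition.} In one thread, let $\mathcal G$ be the thread's history at episode $e$. The sampled pair $(\widetilde P,U_e)$ and the reference pair $(P,U_0)$ have the same conditional law given $\mathcal G$. Hence $\widetilde\pi=\mathcal A(\widetilde P,U_e)$ and $\pi^{\mathcal A}$ are conditionally identically distributed, with $\widetilde\pi$ independent of $P$ given $\mathcal G$. Write $\bar V_\pi=\E[V^P(\pi)\mid\mathcal G]$. The episode regret is
\[
 \E[V^P(\pi^{\mathcal A})-V^P(\widetilde\pi)\mid\mathcal G]
 =\sum_\pi\Pr(\pi^{\mathcal A}=\pi\mid\mathcal G)\bigl(\E[V^P(\pi)\mid\mathcal G,\pi^{\mathcal A}=\pi]-\bar V_\pi\bigr).
\]
Fix $\pi$. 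Both conditional values are expectations of the same function of the episode's reported tape histograms under $\pi$: a total reward bounded by $\min\{T,L_\Sigma\}$. Rewards depend only on outcome counts per arm, so by exchangeability this function is determined by the per-arm histograms of at most $L_i$ plays. Total variation times range bounds the difference by $\min\{T,L_\Sigma\}$ times the TV distance between the laws of the outcome $Y$ on the product histogram space of size at most $D_{\rm tape}$. Pinsker converts this to a KL divergence, and averaging over $\pi$ gives mutual information. The step I expect to be most delicate is the dimension: bounding $\sum_\pi p_\pi\|\cdot\|_{TV}$ via Cauchy--Schwarz over the $D_{\rm tape}$ outcome cells, in the style of Russo--Van Roy Proposition 5. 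The goal is
\[
 \text{regret}_e\le\min\{T,L_\Sigma\}\sqrt{\tfrac12 D_{\rm tape}\, I_{\mathcal G}(\pi^{\mathcal A};(\widetilde\pi,Y_{\widetilde\pi}))}.
\]
To make this work, I would realize the observation as the whole tape histogram vector rather than what the sampled policy chose to see. That keeps the cell count at $D_{\rm tape}$ independent of $\widetilde\pi$. The policy actually observes a garbling of that tape, so the information it receives is smaller, and the inequality must be arranged so that this runs in the right direction. This is where care is needed.

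\emph{Step 3: sum and telescope.} Within a thread, the chain rule gives $\sum_e I_{\mathcal G_e}(\pi^{\mathcal A};\text{obs}_e)\le H(\pi^{\mathcal A})$. Apply Cauchy--Schwarz over the $N/g$ episodes of the thread. This yields $\min\{T,L_\Sigma\}\sqrt{\frac12 D_{\rm tape}(N/g)H(\pi^{\mathcal A})}$ per thread. Summing over the $g$ threads multiplies by $g$, giving $\sqrt{g}\sqrt{N}$, which is the stated bound. Finally, stationarity makes $\E V^P(\pi^{\mathcal A})$ the same in every episode, so summing the per-episode benchmark gives $N\,\E V^P(\pi^{\mathcal A})$. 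Budgets $B_i\ge NL_i$ keep every sampled policy feasible.
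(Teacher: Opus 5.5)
Your outer structure matches the paper's proof. You split the episodes into threads by residue mod $g$, so every report of an episode arrives before that thread's next episode. Posterior matching writes the episode regret as $\sum_\pi p_\pi(\E[V^P(\pi)\mid\pi^{\mathcal A}=\pi,\mathcal G]-\bar V_\pi)$. You then target the per-episode bound $\mathrm{regret}_e^2\le\frac12 b_0^2D_{\rm tape}\,I_{\mathcal G}(\pi^{\mathcal A};\widetilde\pi,Y_e)$ with $b_0=\min\{T,L_\Sigma\}$, telescope by the chain rule to $H(\pi^{\mathcal A})$, and finish with Cauchy--Schwarz and $\sum_r\sqrt{n_r}\le\sqrt{gN}$.

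But Step~2 carries all the content, and the route you sketch does not prove it. Bounding each diagonal term by $b_0$ times a total variation and averaging the Pinsker bounds over $\pi$ gives a mutual information only if the observation does not depend on the executed policy. With your whole-tape realization it gives $I_{\mathcal G}(\pi^{\mathcal A};Y_{\rm full})$. This is the full-information case, with ratio $b_0^2/2$ and no $D_{\rm tape}$ factor at all. But the thread never observes $Y_{\rm full}$. These terms are never added to its history, so they do not telescope and their sum is not bounded by $H(\pi^{\mathcal A})$. The data-processing inequality $I_{\mathcal G}(\pi^{\mathcal A};\widetilde\pi,Y_e)\le I_{\mathcal G}(\pi^{\mathcal A};Y_{\rm full})$ runs against you, and no arrangement reverses it.

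Now keep the actual observation, and write $Y_\alpha$ for the complete reports of executing $\alpha$. The information is then $\sum_{\alpha,\beta}p_\alpha p_\beta\,D\bigl(\mathcal L(Y_\alpha\mid\pi^{\mathcal A}=\beta,\mathcal G)\,\big\|\,\mathcal L(Y_\alpha\mid\mathcal G)\bigr)$, a sum over all pairs. Averaging your diagonal divergences with weights $p_\alpha$ does not give this quantity. Comparing the two by Cauchy--Schwarz over policies costs a factor $|\Pi|$, not $D_{\rm tape}$. Cauchy--Schwarz over observation cells plays no role.

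The missing idea is the matrix step of Russo and Van Roy's Proposition~5, applied to values rather than observation laws. Put $M_{\alpha\beta}=\sqrt{p_\alpha p_\beta}\,\bigl(\E[V^P(\alpha)\mid\pi^{\mathcal A}=\beta,\mathcal G]-\E[V^P(\alpha)\mid\mathcal G]\bigr)$, so the regret is $\operatorname{tr}M$. Every entry, off-diagonal ones included, is a change in the mean of a $[0,b_0]$-valued function of $Y_\alpha$. Given $\mathcal G$, $\widetilde\pi$ is independent of $(P,U_0)$, so Pinsker gives $\|M\|_F^2\le\frac{b_0^2}{2}I_{\mathcal G}(\pi^{\mathcal A};\widetilde\pi,Y_e)$ for the reports actually received.

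The dimension enters through rank, not through cells. Exchangeability gives $V^P(\alpha)=\langle f_\alpha,\mathbf q(P)\rangle$ with $\mathbf q(P)=\bigotimes_i\mathbf q_{L_i}(P_i)\in\mathbb R^{D_{\rm tape}}$. Hence $M$ factors through $\mathbb R^{D_{\rm tape}}$, so $\operatorname{rank}M\le D_{\rm tape}$ and $\operatorname{tr}M\le\sqrt{D_{\rm tape}}\,\|M\|_F$. The whole tape thus serves only to represent expected rewards linearly, never as an observation. Relatedly, an adaptive policy's realized reward depends on outcome order, not just counts. Only its conditional mean given $P$ is linear in the histogram law.
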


\begin{proof}
Let $b_0=\min\{T,L_\Sigma\}$. Every episode reward lies in
$[0,b_0]$.
The complete reported trajectory determines that reward.
Given $P$, the arms' tape histograms are independent with laws
$\mathbf q_{L_i}(P_i)$, and their orders given those histograms are
uniform. Thus every fixed episode policy has mean reward
\[
 V^P(\pi)=\langle f_\pi,\mathbf q(P)\rangle,\qquad
 \mathbf q(P)=\bigotimes_i\mathbf q_{L_i}(P_i)\in\mathbb R^{D_{\rm tape}},
\]
where $f_\pi$ averages its reward over the uniform tape orders.

Fix a thread's history $h$ and write
$p_\alpha=\Pr\{\pi^{\mathcal A}=\alpha\mid h\}$ for $\alpha\in\Pi$.
The selected policy $\pi_e$ has these probabilities and is
conditionally independent of $P$ and $U_0$. Posterior matching gives
\[
 \Delta_h:=\E[V^P(\pi^{\mathcal A})-V^P(\pi_e)\mid h]
 =\E[V^{\widetilde P}(\pi_e)-V^P(\pi_e)\mid h].
\]
On policies of positive conditional probability, form the matrix
\[
 M_{\alpha\beta}=\sqrt{p_\alpha p_\beta}
 \left(\E[V^P(\alpha)\mid\pi^{\mathcal A}=\beta,h]
             -\E[V^P(\alpha)\mid h]\right).
\]
Then $\Delta_h=\operatorname{tr}M$. The linear representation gives
$\operatorname{rank}M\le D_{\rm tape}$. If $Y_e$ denotes the episode's complete
reports, Pinsker's inequality and the reward range give
\[
 I(\pi^{\mathcal A};\pi_e,Y_e\mid h)
 \ge\frac{2}{b_0^2}\|M\|_F^2,\qquad
 \Delta_h^2\le D_{\rm tape}\|M\|_F^2
 \le\frac{b_0^2D_{\rm tape}}{2}I(\pi^{\mathcal A};\pi_e,Y_e\mid h).
\]
This is the matrix argument of
\citet[Proposition~5]{russo2016info}, valid for an arbitrary reference
target \citep[Section~6]{russo2022satisficing}; $\Delta_h$ may be signed.

Split episodes by their residue modulo $g$. Each thread uses only
its own earlier reports, all of which have arrived before its next
episode. Conditional on $P$, fresh outcome tapes are independent
across episodes. The information chain rule bounds its total expected information gain
about $\pi^{\mathcal A}$ by $H(\pi^{\mathcal A})$. Cauchy--Schwarz
therefore bounds a thread of $n_r$ episodes by
$b_0\sqrt{D_{\rm tape}n_rH(\pi^{\mathcal A})/2}$. Summing and using
$\sum_r\sqrt{n_r}\le\sqrt{gN}$ proves the result.
\end{proof}

\subsection{Posterior sampling with episode LPs}

Posterior sampling gives a second comparison without prescribed
probing episodes. Sample the recurring priors, solve their episode
LP, and execute it with the known-prior guarantee. Fix the planner,
its tie-breaking and its executor as a map $\mathcal A(P,U)$, where
$U$ supplies all policy coins. Fixing $U$ gives a deterministic
episode policy, feasible under every recurring prior. Its expected
reward guarantee is required only under the supplied prior $P$.

\begin{theorem}[Posterior sampling with episode LPs]\label{thm:ps-episode-lp}
Assume the stationary conditions of
Corollary~\ref{cor:episode-lp-rate}. Let $V^P(\pi)$ be the expected
episode reward of policy $\pi$ under $P$, and suppose
\[
 \E_U V^P(\mathcal A(P,U))\ge\kappa\LP^{(k)}(P)
 \qquad\text{for every }P.
\]
Use $g$ inference threads, each updating only from its own episodes.
In each episode, draw $\widetilde P$ from the thread's posterior and
execute $\mathcal A(\widetilde P,U_e)$ with fresh coins.
Put
\[
 D_{\rm tape}=\prod_i\binom{L_i+d_i-1}{d_i-1},\qquad
 \pi^{\mathcal A}=\mathcal A(P,U_0),
\]
where $U_0$ is an independent reference seed, unused by the learner.
Then this policy's expected total reward $V_{\rm PS\text{-}LP}$ satisfies
\begin{equation}\label{eq:ps-episode-lp}
 \kappa N\,\E_\Q\LP^{(k)}(P)-V_{\rm PS\text{-}LP}
 \le\min\{T,L_\Sigma\}
       \sqrt{\tfrac12\,gD_{\rm tape}N\,H(\pi^{\mathcal A})}.
\end{equation}
The entropy is under $\Q$ and $U_0$, with natural logarithms.
It is at most $\log|\Pi|$, where $\Pi$ is the finite set of
deterministic feasible episode policies.
\end{theorem}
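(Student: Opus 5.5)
The plan is to reduce Theorem~\ref{thm:ps-episode-lp} to Lemma~\ref{lem:policy-sampling}, taking the map $\mathcal A$ there to be the episode-LP planner composed with its executor. The reduction has two steps. First, the hypothesis $\E_UV^P(\mathcal A(P,U))\ge\kappa\LP^{(k)}(P)$ converts the lemma's reference benchmark into the LP benchmark. Second, the lemma's regret bound is exactly the right-hand side of \eqref{eq:ps-episode-lp}.

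The steps, in order, are as follows.

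\emph{Check the lemma's hypotheses.} For each fixed seed $U$, $\mathcal A(P,U)$ must be a deterministic episode policy that is feasible under every recurring prior. The paper's setup stipulates this, and the executors of Section~\ref{sec:unroll} respect caps and the report calendar on every path regardless of the true prior. Hence $\mathcal A(P,U)\in\Pi$, and $\pi^{\mathcal A}$ has finite support.

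\emph{Verify that the learner is the lemma's sampler.} The $g$ threads, each updating only from its own episodes, the posterior draw $\widetilde P$, and the fresh coins $U_e$ coincide with the lemma's construction. Consequently $V_{\rm PS\text{-}LP}=V^{\rm PS}_{\mathcal A}$.

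\emph{Convert the benchmark.} Because $U_0$ is independent of $P$, conditioning on $P$ and using the hypothesis gives
\[
 \E V^P(\pi^{\mathcal A})=\E_\Q\E_{U_0}V^P(\mathcal A(P,U_0))\ge\kappa\E_\Q\LP^{(k)}(P).
\]
Multiplying by $N$ and subtracting $V_{\rm PS\text{-}LP}$, then applying Lemma~\ref{lem:policy-sampling}, yields \eqref{eq:ps-episode-lp}.

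\emph{Bound the entropy.} Since $\pi^{\mathcal A}$ takes values in $\Pi$, we have $H(\pi^{\mathcal A})\le\log|\Pi|$. The set $\Pi$ is finite: the horizon is finite, the alphabets are finite, and the number of arms is finite, so the observable histories within an episode, and hence the deterministic policies on them, are finitely many.

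The point needing most care is that the lemma's matrix argument only requires the reference target to be a random element of $\Pi$ measurable with respect to $(P,U_0)$. It does not require the target to be optimal. The paper notes this ("valid for an arbitrary reference target"). A second point is that the linear representation $V^P(\pi)=\langle f_\pi,\mathbf q(P)\rangle$ must hold for executor-generated policies with delays. This holds because, given $P$, rewards depend only on the tapes, which are exchangeable within each episode. I would spell out that delayed feedback and the executor's extra plays do not break this representation, since the reward remains a fixed function of the tape and the coins.
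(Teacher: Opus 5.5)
Your proposal is correct and follows the paper's own proof. You apply Lemma~\ref{lem:policy-sampling} to the planner--executor map $\mathcal A$, then lower-bound its reference benchmark $N\,\E V^P(\pi^{\mathcal A})$ by $\kappa N\,\E_\Q\LP^{(k)}(P)$ using the known-prior guarantee, the independence of $U_0$ from $P$, and the fact that the lemma never uses optimality of the reference target. Your additional checks (feasibility of $\mathcal A(P,U)$ under every prior, finiteness of $\Pi$, and the whole-tape linear representation for delayed, executor-generated policies) are the details the paper's brief proof leaves implicit.
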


\begin{proof}
The selected policy has the conditional distribution of
$\pi^{\mathcal A}$ given the thread's history. Apply
Lemma~\ref{lem:policy-sampling} to this fixed reference map, then use
its known-prior guarantee. The lemma uses whole-tape histograms to
represent every episode policy's expected reward linearly; it does
not require the reference policy to be optimal.
\end{proof}

Under the completion conditions of Theorem~\ref{thm:meta} and
nonbinding comparison budgets $B_i'\ge NL_i$, the posterior-sampled
comparison gives
\[
 (1-\omega)\kappa N\,\E_\Q\LP^{(k)}(P)-V_A
 \le(1-\omega)\bar R_{\rm PS}+1,
\]
where $\bar R_{\rm PS}$ is the right side of
\eqref{eq:ps-episode-lp}. Indeed, the learning LP at budgets $B'$
upper-bounds this admissible comparison, and
Theorem~\ref{thm:meta} transfers its value. This argument uses full completion.

\begin{proof}[Proof of Proposition~\ref{prop:ps-price}]
Apply Lemma~\ref{lem:policy-sampling} with
$\mathcal A(P,U)=\pi^*(P)$. Nonbinding budgets give
$V_{\rm prior}=N\,\E_\Q V^P(\pi^*(P))$, and $V_{\rm learn}$ is at
least the value of this admissible posterior-sampling policy.
Finally $H(\pi^*)\le\log|\Pi|$.
\end{proof}

The information-price bounds also allow the specified safe
alternative of Section~\ref{sec:weights}. Fill otherwise unused
steps with that action. Episode rewards then lie in
$[a(T-b_0),aT+(1-a)b_0]$, still an interval of length $b_0$,
and remain linear in the whole-tape law. The same range and
information arguments apply to these policies.

The exact reference planner bounds the information price; the LP
planner and executor in Theorem~\ref{thm:ps-episode-lp} give a scaled
LP comparison by the same argument. Both coefficients include the
product of whole-tape histogram dimensions. Even for one unknown
arm, this dimension, the report lag and the reference-policy entropy
remain. If $\Q$ has finite support, then
$H(\pi^*)\le\log|\operatorname{supp}\Q|$.

\begin{samepage}
\subsubsection*{AI Use Statement}
We used multiple generative AI tools for
improving writing and searching related work. We have reviewed all AI-assisted work.

\end{samepage}

\bibliographystyle{plainnat}
\bibliography{metabandits_refs}

\begin{thebibliography}{44}
\providecommand{\natexlab}[1]{#1}
\providecommand{\url}[1]{\texttt{#1}}
\expandafter\ifx\csname urlstyle\endcsname\relax
  \providecommand{\doi}[1]{doi: #1}\else
  \providecommand{\doi}{doi: \begingroup \urlstyle{rm}\Url}\fi

\bibitem[Adelman and Mersereau(2008)]{adelman2008}
Daniel Adelman and Adam~J. Mersereau.
\newblock Relaxations of weakly coupled stochastic dynamic programs.
\newblock \emph{Operations Research}, 56\penalty0 (3):\penalty0 712--727, 2008.
\newblock \doi{10.1287/opre.1070.0445}.

\bibitem[Agarwal and Duchi(2011)]{agarwal2011distributed}
Alekh Agarwal and John~C. Duchi.
\newblock Distributed delayed stochastic optimization.
\newblock In \emph{Advances in Neural Information Processing Systems 24 (NIPS
  2011)}, volume~24, pages 873--881. Curran Associates, Inc., 2011.

\bibitem[Anderson(1964)]{anderson1964}
T.~W. Anderson.
\newblock Sequential analysis with delayed observations.
\newblock \emph{Journal of the American Statistical Association}, 59\penalty0
  (308):\penalty0 1006--1015, 1964.
\newblock \doi{10.1080/01621459.1964.10480746}.

\bibitem[Austin and Panchenko(2014)]{austin2014hierarchical}
Tim Austin and Dmitry Panchenko.
\newblock A hierarchical version of the de {Finetti} and {Aldous--Hoover}
  representations.
\newblock \emph{Probability Theory and Related Fields}, 159\penalty0
  (3):\penalty0 809--823, 2014.
\newblock \doi{10.1007/s00440-013-0521-0}.

\bibitem[Azizi et~al.(2023)Azizi, Kveton, Ghavamzadeh, and
  Katariya]{azizi2023meta}
Mohammadjavad Azizi, Branislav Kveton, Mohammad Ghavamzadeh, and Sumeet
  Katariya.
\newblock Meta-learning for simple regret minimization.
\newblock In \emph{Proceedings of the 37th AAAI Conference on Artificial
  Intelligence (AAAI)}, volume~37, pages 6709--6717, 2023.
\newblock \doi{10.1609/aaai.v37i6.25823}.

\bibitem[Bastani et~al.(2022)Bastani, Simchi-Levi, and Zhu]{bastani2022meta}
Hamsa Bastani, David Simchi-Levi, and Ruihao Zhu.
\newblock Meta dynamic pricing: Transfer learning across experiments.
\newblock \emph{Management Science}, 68\penalty0 (3):\penalty0 1865--1881,
  2022.
\newblock \doi{10.1287/mnsc.2021.4071}.

\bibitem[Basu et~al.(2021)Basu, Kveton, Zaheer, and Szepesv{\'a}ri]{basu2021}
Soumya Basu, Branislav Kveton, Manzil Zaheer, and Csaba Szepesv{\'a}ri.
\newblock No regrets for learning the prior in bandits.
\newblock In \emph{Advances in Neural Information Processing Systems 34
  (NeurIPS)}, 2021.

\bibitem[Baudry et~al.(2021)Baudry, Gautron, Kaufmann, and
  Maillard]{baudry2021cvarts}
Dorian Baudry, Romain Gautron, Emilie Kaufmann, and Odalric Maillard.
\newblock Optimal {Thompson} sampling strategies for support-aware {CVaR}
  bandits.
\newblock In \emph{Proceedings of the 38th International Conference on Machine
  Learning}, volume 139 of \emph{Proceedings of Machine Learning Research},
  pages 716--726. PMLR, 2021.
\newblock URL \url{https://proceedings.mlr.press/v139/baudry21a.html}.

\bibitem[Caro and Yoo(2010)]{caro2010}
Felipe Caro and Onesun~Steve Yoo.
\newblock Indexability of bandit problems with response delays.
\newblock \emph{Probability in the Engineering and Informational Sciences},
  24\penalty0 (3):\penalty0 349--374, 2010.
\newblock \doi{10.1017/S0269964810000021}.

\bibitem[Chakraborty et~al.(2010)Chakraborty, Even-Dar, Guha, Mansour, and
  Muthukrishnan]{chakraborty2010pricing}
Tanmoy Chakraborty, Eyal Even-Dar, Sudipto Guha, Yishay Mansour, and
  S.~Muthukrishnan.
\newblock Approximation schemes for sequential posted pricing in multi-unit
  auctions.
\newblock In \emph{Internet and Network Economics (WINE 2010)}, volume 6484 of
  \emph{Lecture Notes in Computer Science}, pages 158--169. Springer, 2010.
\newblock \doi{10.1007/978-3-642-17572-5_13}.
\newblock URL \url{https://arxiv.org/abs/1008.1616}.

\bibitem[Chen et~al.(2013)Chen, Wang, and Yuan]{chen2013cmab}
Wei Chen, Yajun Wang, and Yang Yuan.
\newblock Combinatorial multi-armed bandit: General framework and applications.
\newblock In \emph{Proceedings of the 30th International Conference on Machine
  Learning (ICML)}, volume~28 of \emph{Proceedings of Machine Learning
  Research}, pages 151--159, 2013.

\bibitem[Eick(1988)]{eick1988}
Stephen~G. Eick.
\newblock The two-armed bandit with delayed responses.
\newblock \emph{The Annals of Statistics}, 16\penalty0 (1):\penalty0 254--264,
  1988.

\bibitem[Ellis et~al.(2021)Ellis, Wong, Nye, Sabl{\'e}-Meyer, Morales, Hewitt,
  Cary, Solar-Lezama, and Tenenbaum]{ellis2021dreamcoder}
Kevin Ellis, Catherine Wong, Maxwell Nye, Mathias Sabl{\'e}-Meyer, Lucas
  Morales, Luke Hewitt, Luc Cary, Armando Solar-Lezama, and Joshua~B.
  Tenenbaum.
\newblock {DreamCoder}: Bootstrapping inductive program synthesis with
  wake-sleep library learning.
\newblock In \emph{Proceedings of the 42nd ACM SIGPLAN International Conference
  on Programming Language Design and Implementation}, PLDI '21, pages 835--850.
  Association for Computing Machinery, 2021.
\newblock \doi{10.1145/3453483.3454080}.

\bibitem[Gittins and Jones(1974)]{gittins1974dynamic}
John~C. Gittins and David~M. Jones.
\newblock A dynamic allocation index for the sequential design of experiments.
\newblock In \emph{Progress in Statistics}, pages 241--266. North-Holland,
  1974.

\bibitem[Guha(2026)]{guha2026future}
Sudipto Guha.
\newblock Budgeted {Bayesian} bandits for future utilization.
\newblock Preprint, 2026.
\newblock arXiv:2610.24156.

\bibitem[Guha and Munagala(2007)]{GM07}
Sudipto Guha and Kamesh Munagala.
\newblock Approximation algorithms for budgeted learning problems.
\newblock In \emph{Proceedings of the 39th Annual ACM Symposium on Theory of
  Computing (STOC)}, pages 104--113, 2007.
\newblock \doi{10.1145/1250790.1250807}.

\bibitem[Guha and Munagala(2013)]{GM13}
Sudipto Guha and Kamesh Munagala.
\newblock Approximation algorithms for {Bayesian} multi-armed bandit problems.
\newblock arXiv:1306.3525, 2013.
\newblock URL \url{https://arxiv.org/abs/1306.3525}.

\bibitem[Guha et~al.(2010)Guha, Munagala, and Shi]{GMS10}
Sudipto Guha, Kamesh Munagala, and Peng Shi.
\newblock Approximation algorithms for restless bandit problems.
\newblock \emph{Journal of the ACM}, 58\penalty0 (1), 2010.
\newblock \doi{10.1145/1870103.1870106}.
\newblock Article 3. Preliminary version in SODA 2009; arXiv:0711.3861.

\bibitem[Hong et~al.(2022)Hong, Kveton, Zaheer, and Ghavamzadeh]{hong2022}
Joey Hong, Branislav Kveton, Manzil Zaheer, and Mohammad Ghavamzadeh.
\newblock Hierarchical {Bayesian} bandits.
\newblock In \emph{Proceedings of the 25th International Conference on
  Artificial Intelligence and Statistics (AISTATS)}, volume 151 of
  \emph{Proceedings of Machine Learning Research}, 2022.

\bibitem[Jiang et~al.(2023)Jiang, Jiang, Li, Lin, Wang, and
  Zhou]{jiang2023restless}
Bowen Jiang, Bo~Jiang, Jian Li, Tao Lin, Xinbing Wang, and Chenghu Zhou.
\newblock Online restless bandits with unobserved states.
\newblock In \emph{Proceedings of the 40th International Conference on Machine
  Learning (ICML)}, volume 202 of \emph{Proceedings of Machine Learning
  Research}, pages 15041--15066, 2023.

\bibitem[Joulani et~al.(2013)Joulani, Gy{\"o}rgy, and
  Szepesv{\'a}ri]{joulani2013}
Pooria Joulani, Andr{\'a}s Gy{\"o}rgy, and Csaba Szepesv{\'a}ri.
\newblock Online learning under delayed feedback.
\newblock In \emph{Proceedings of the 30th International Conference on Machine
  Learning (ICML)}, 2013.

\bibitem[Jung and Tewari(2019)]{jung2019}
Young~Hun Jung and Ambuj Tewari.
\newblock Regret bounds for {Thompson} sampling in episodic restless bandit
  problems.
\newblock In \emph{Advances in Neural Information Processing Systems 32
  (NeurIPS)}, 2019.

\bibitem[Kalkanli and {\"O}zg{\"u}r(2021)]{kalkanli2021}
Cem Kalkanli and Ayfer {\"O}zg{\"u}r.
\newblock Batched {Thompson} sampling.
\newblock In \emph{Advances in Neural Information Processing Systems 34
  (NeurIPS)}, 2021.

\bibitem[Klenske and Hennig(2016)]{klenske2016dual}
Edgar~D. Klenske and Philipp Hennig.
\newblock Dual control for approximate {Bayesian} reinforcement learning.
\newblock \emph{Journal of Machine Learning Research}, 17\penalty0
  (127):\penalty0 1--30, 2016.
\newblock URL \url{https://www.jmlr.org/papers/v17/15-162.html}.

\bibitem[Kveton et~al.(2021)Kveton, Konobeev, Zaheer, Hsu, Mladenov, Boutilier,
  and Szepesv{\'a}ri]{kveton2021meta}
Branislav Kveton, Mikhail Konobeev, Manzil Zaheer, Chih-Wei Hsu, Martin
  Mladenov, Craig Boutilier, and Csaba Szepesv{\'a}ri.
\newblock Meta-{Thompson} sampling.
\newblock In \emph{Proceedings of the 38th International Conference on Machine
  Learning (ICML)}, volume 139 of \emph{Proceedings of Machine Learning
  Research}, pages 5884--5893, 2021.

\bibitem[Liu and Zhao(2010)]{liu2010indexability}
Keqin Liu and Qing Zhao.
\newblock Indexability of restless bandit problems and optimality of {Whittle}
  index for dynamic multichannel access.
\newblock \emph{IEEE Transactions on Information Theory}, 56\penalty0
  (11):\penalty0 5547--5567, 2010.
\newblock \doi{10.1109/TIT.2010.2068950}.
\newblock URL \url{https://arxiv.org/abs/0810.4658}.

\bibitem[Mannor and Tsitsiklis(2011)]{mannor2011meanvariance}
Shie Mannor and John~N. Tsitsiklis.
\newblock Mean-variance optimization in {Markov} decision processes.
\newblock In \emph{Proceedings of the 28th International Conference on Machine
  Learning}, 2011.
\newblock URL \url{https://icml.cc/Conferences/2011/papers/156_icmlpaper.pdf}.

\bibitem[McDonald et~al.(2023)McDonald, Maystre, Lalmas, Russo, and
  Ciosek]{mcdonald2023}
Thomas~M. McDonald, Lucas Maystre, Mounia Lalmas, Daniel Russo, and Kamil
  Ciosek.
\newblock Impatient bandits: Optimizing recommendations for the long-term
  without delay.
\newblock In \emph{Proceedings of the 29th ACM SIGKDD Conference on Knowledge
  Discovery and Data Mining (KDD)}, 2023.
\newblock \doi{10.1145/3580305.3599386}.

\bibitem[Mizuno et~al.(1993)Mizuno, Todd, and Ye]{mizuno1993adaptive}
Shinji Mizuno, Michael~J. Todd, and Yinyu Ye.
\newblock On adaptive-step primal-dual interior-point algorithms for linear
  programming.
\newblock \emph{Mathematics of Operations Research}, 18\penalty0 (4):\penalty0
  964--981, 1993.
\newblock \doi{10.1287/moor.18.4.964}.

\bibitem[Papadimitriou and Tsitsiklis(1999)]{papadimitriou1999}
Christos~H. Papadimitriou and John~N. Tsitsiklis.
\newblock The complexity of optimal queuing network control.
\newblock \emph{Mathematics of Operations Research}, 24\penalty0 (2):\penalty0
  293--305, 1999.
\newblock \doi{10.1287/moor.24.2.293}.

\bibitem[Qiu et~al.(2024)Qiu, Mao, Patke, Cui, Wang, Franke, Kalbarczyk,
  Ba{\c{s}}ar, and Iyer]{qiu2024flash}
Haoran Qiu, Weichao Mao, Archit Patke, Shengkun Cui, Chen Wang, Hubertus
  Franke, Zbigniew~T. Kalbarczyk, Tamer Ba{\c{s}}ar, and Ravishankar~K. Iyer.
\newblock {FLASH}: Fast model adaptation in {ML}-centric cloud platforms.
\newblock In \emph{Proceedings of Machine Learning and Systems}, volume~6,
  2024.

\bibitem[Radner and Stiglitz(1984)]{radner1984}
Roy Radner and Joseph~E. Stiglitz.
\newblock A nonconcavity in the value of information.
\newblock In Marcel Boyer and Richard~E. Kihlstrom, editors, \emph{Bayesian
  Models in Economic Theory}, pages 33--52. North-Holland, 1984.

\bibitem[Rockafellar and Uryasev(2000)]{rockafellar2000cvar}
R.~Tyrrell Rockafellar and Stanislav Uryasev.
\newblock Optimization of conditional value-at-risk.
\newblock \emph{Journal of Risk}, 2\penalty0 (3):\penalty0 21--41, 2000.
\newblock \doi{10.21314/JOR.2000.038}.

\bibitem[Russo and Van~Roy(2016)]{russo2016info}
Daniel Russo and Benjamin Van~Roy.
\newblock An information-theoretic analysis of {Thompson} sampling.
\newblock \emph{Journal of Machine Learning Research}, 17\penalty0
  (68):\penalty0 1--30, 2016.

\bibitem[Russo and Van~Roy(2022)]{russo2022satisficing}
Daniel Russo and Benjamin Van~Roy.
\newblock Satisficing in time-sensitive bandit learning.
\newblock \emph{Mathematics of Operations Research}, 47\penalty0 (4), 2022.

\bibitem[Shafer et~al.(2011)Shafer, Shen, Vereshchagin, and
  Vovk]{shafer2011test}
Glenn Shafer, Alexander Shen, Nikolai Vereshchagin, and Vladimir Vovk.
\newblock Test martingales, {Bayes} factors and $p$-values.
\newblock \emph{Statistical Science}, 26\penalty0 (1):\penalty0 84--101, 2011.
\newblock \doi{10.1214/10-STS347}.

\bibitem[Shaked and Shanthikumar(2007)]{shaked2007}
Moshe Shaked and J.~George Shanthikumar.
\newblock \emph{Stochastic Orders}.
\newblock Springer, 2007.

\bibitem[Simchowitz et~al.(2021)Simchowitz, Tosh, Krishnamurthy, Hsu, Lykouris,
  Dud{\'\i}k, and Schapire]{simchowitz2021}
Max Simchowitz, Christopher Tosh, Akshay Krishnamurthy, Daniel Hsu, Thodoris
  Lykouris, Miroslav Dud{\'\i}k, and Robert~E. Schapire.
\newblock {Bayesian} decision-making under misspecified priors with
  applications to meta-learning.
\newblock In \emph{Advances in Neural Information Processing Systems 34
  (NeurIPS)}, 2021.

\bibitem[Smallwood and Sondik(1973)]{smallwood1973finite}
Richard~D. Smallwood and Edward~J. Sondik.
\newblock The optimal control of partially observable markov processes over a
  finite horizon.
\newblock \emph{Operations Research}, 21\penalty0 (5):\penalty0 1071--1088,
  1973.
\newblock \doi{10.1287/opre.21.5.1071}.

\bibitem[Whittle(1988)]{whittle1988}
Peter Whittle.
\newblock Restless bandits: Activity allocation in a changing world.
\newblock \emph{Journal of Applied Probability}, 25\penalty0 (A):\penalty0
  287--298, 1988.

\bibitem[Zeng et~al.(2026)Zeng, Tang, and Ding]{zeng2026metarouter}
Jiahao Zeng, Ming Tang, and Ningning Ding.
\newblock Learning to route llms from implicit cost-performance preferences via
  meta-learning.
\newblock In \emph{Proceedings of the Thirty-Fifth International Joint
  Conference on Artificial Intelligence}, pages 6045--6053, 2026.
\newblock \doi{10.24963/ijcai.2026/673}.

\bibitem[Zhang et~al.(2026)Zhang, Baldwin-McDonald, Ciosek, Maystre, and
  Russo]{zhang2025}
Kelly~W. Zhang, Thomas Baldwin-McDonald, Kamil Ciosek, Lucas Maystre, and
  Daniel Russo.
\newblock Impatient bandits: Optimizing for the long-term without delay.
\newblock \emph{Journal of Machine Learning Research}, 27\penalty0
  (156):\penalty0 1--58, 2026.

\bibitem[Zhang et~al.(2021)Zhang, Wu, Chang, Jin, Tan, Li, Zhang, and
  Cui]{zhang2021restune}
Xinyi Zhang, Hong Wu, Zhuo Chang, Shuowei Jin, Jian Tan, Feifei Li, Tieying
  Zhang, and Bin Cui.
\newblock {ResTune}: Resource oriented tuning boosted by meta-learning for
  cloud databases.
\newblock In \emph{Proceedings of the 2021 International Conference on
  Management of Data}, SIGMOD '21, pages 2102--2114. Association for Computing
  Machinery, 2021.
\newblock \doi{10.1145/3448016.3457291}.

\bibitem[Zhu and Tan(2020)]{zhu2020mvts}
Qiuyu Zhu and Vincent Tan.
\newblock {Thompson} sampling algorithms for mean-variance bandits.
\newblock In \emph{Proceedings of the 37th International Conference on Machine
  Learning}, volume 119 of \emph{Proceedings of Machine Learning Research},
  pages 11599--11608. PMLR, 2020.
\newblock URL \url{https://proceedings.mlr.press/v119/zhu20d.html}.

\end{thebibliography}

\end{document}